\definecolor{Green}{rgb}{0.13, 0.65, 0.3}
\definecolor{Amber}{rgb}{0.3, 0.5, 1.0}
\newcommand{\tupleset}{W}
\newcommand{\starpi}{\ensuremath{\pi^\star}}
\newcommand{\sto}{\textsc{IMG}}
\newcommand{\corloss}{\ensuremath{C^{\textsf{L}}}}
\newcommand{\cortran}{\ensuremath{C^{\textsf{P}}}}
\newcommand{\whatm}{\widehat{m}}
\newcommand{\whatP}{\widehat{P}}
\newcommand{\optP}{\widetilde{P}}
\newcommand{\eventcon}{\ensuremath{\mathcal{E}_{\textsc{con}}}}
\newcommand{\eventest}{\ensuremath{\mathcal{E}_{\textsc{est}}}}
\newcommand{\bonus}{\textsc{Bonus}}
\newcommand{\cumuT}{T}
\newcommand{\partI}{\text{(I)}}
\newcommand{\partII}{\text{(II)}}
\newcommand{\qexpsa}{q^{\text{max}}_{s,a}}
\newcommand{\selfone}{\ensuremath{\mathbb{S}}_1}
\newcommand{\selftwo}{\ensuremath{\mathbb{S}}_2}
\newcommand{\inner}[1]{ \left\langle {#1} \right\rangle }
\newcommand{\Ind}[1]{ \field{I}{\left\{{#1}\right\}} }
\newcommand{\ind}{\field{I}}
\newcommand{\norm}[1]{\left\|{#1}\right\|}
\newcommand{\gap}{\Delta}
\newcommand{\hatl}{\widehat{\ell}}
\newcommand{\whatq}{\widehat{q}}
\newcommand{\lrOne}{256L^2|S|}
\theoremstyle{theorem} 
	\newtheorem{theorem}{Theorem}[subsection]
	\newtheorem{lemma}[theorem]{Lemma}
	\newtheorem{corollary}[theorem]{Corollary}
	\newtheorem{proposition}[theorem]{Proposition}
	\newtheorem{definition}[theorem]{Definition}
	\newtheorem{remark}[theorem]{Remark}
 \newtheorem{protocol}{Protocol}
\renewcommand{\thetheorem}{%
	\ifnum\value{subsection}>0 
	\thesubsection
	\else
	\thesection
	\fi
	.\arabic{theorem}%
}
\DeclareMathOperator*{\argmin}{\arg\!\min}
\newcommand{\calE}{{\mathcal{E}}}
\newcommand{\calP}{{\mathcal{P}}}
\newcommand{\calZ}{{\mathcal{Z}}}
\newcommand{\calF}{{\mathcal{F}}}
\newcommand{\loss}{\ell}
\newcommand{\gapmin}{\Delta_{\textsc{min}}}
\newcommand{\field}[1]{\mathbb{#1}}
\newcommand{\fR}{\field{R}}
\newcommand{\E}{\field{E}}
\newcommand{\Reg}{{\text{\rm Reg}}}
\newcommand{\order}{\ensuremath{\mathcal{O}}}
\newcommand{\otil}{\ensuremath{\widetilde{\mathcal{O}}}}
\newcommand{\Holder}{{H{\"o}lder}\xspace}
\newcommand{\optpi}{\mathring{\pi}}
\newcommand{\myComment}[1]{\null\hfill\scalebox{0.9}{\text{\color{black}$\triangleright$ \textsf{#1}}}}
\newcommand{\rbr}[1]{\left(#1\right)}
\newcommand{\sbr}[1]{\left[#1\right]}
\newcommand{\cbr}[1]{\left\{#1\right\}}
\newcommand{\abr}[1]{\left|#1\right|}
\DeclareFontFamily{OMX}{MnSymbolE}{}
\DeclareFontShape{OMX}{MnSymbolE}{m}{n}{
    <-6>  MnSymbolE5
   <6-7>  MnSymbolE6
   <7-8>  MnSymbolE7
   <8-9>  MnSymbolE8
   <9-10> MnSymbolE9
  <10-12> MnSymbolE10
  <12->   MnSymbolE12}{}
\DeclareSymbolFont{mnlargesymbols}{OMX}{MnSymbolE}{m}{n}
\DeclareMathDelimiter{\llangle}{\mathopen}{mnlargesymbols}{'164}{mnlargesymbols}{'164}
\DeclareMathDelimiter{\rrangle}{\mathclose}{mnlargesymbols}{'171}{mnlargesymbols}{'171}
\newcommand{\pref}[1]{\prettyref{#1}}
\newcommand{\pfref}[1]{Proof of \prettyref{#1}}
\newcommand{\savehyperref}[2]{\texorpdfstring{\hyperref[#1]{#2}}{#2}}
\newcommand{\regone}{\textsc{Error}}
\newcommand{\regtwo}{\textsc{Bias}_1}
\newcommand{\regthree}{\textsc{Reg}}
\newcommand{\errone}{\textsc{Error}_1}
\newcommand{\errtwo}{\textsc{Error}_2}
\newcommand{\estreg}{\textsc{EstReg}}
\newcommand{\regfour}{\textsc{Bias}_2}
\newcommand{\ftrl}{\textsc{FTRL}\xspace}
\newcommand{\alg}{\textsf{ALG}}
\newcommand{\stable}{\text{STABILISE}\xspace}
\newcommand\numberthis{\addtocounter{equation}{1}\tag{\theequation}}
\title{No-Regret Online Reinforcement Learning with Adversarial Losses and Transitions}
\author{%
  Tiancheng Jin \thanks{Equal contribution.}\\
  University of Southern California \\
  \texttt{tiancheng.jin@usc.edu} \\
  \And
  Junyan Liu \footnotemark[1]\\
  University of California, San Diego \\
  \texttt{jul037@ucsd.edu} \\
  \AND
 \hspace*{18pt} Chlo\'{e} Rouyer \\
 \hspace*{10pt}  University of Copenhagen \\
  \hspace*{15pt} \texttt{chloe@di.ku.dk} \\
  \And
 \hspace*{15pt} William Chang \\
 \hspace*{15pt} University of California, Los Angeles \\
 \hspace*{15pt} \texttt{chang314@g.ucla.edu} \\
  \AND
  \hspace*{-30pt}Chen-Yu Wei \\
  \hspace*{-30pt}MIT Institute for Data, Systems, and Society \\
  \hspace*{-35pt}\texttt{chenyuw@mit.edu} \\
  \And
  \hspace*{-25pt}Haipeng Luo \\
  \hspace*{-30pt} University of Southern California \\
  \hspace*{-28pt} \texttt{haipengl@usc.edu} 
}
\begin{document}
\doparttoc
\faketableofcontents 

\maketitle

\begin{abstract}
Existing online learning algorithms for adversarial Markov Decision Processes 
achieve $\order(\sqrt{T})$ regret after $T$ rounds of interactions even if the loss functions are chosen arbitrarily by an adversary, with the caveat that the transition function has to be fixed.
This is because it has been shown that adversarial transition functions make no-regret learning impossible.
Despite such impossibility results, in this work, we develop algorithms that can handle both adversarial losses and adversarial transitions, with regret increasing smoothly in the degree of maliciousness of the adversary.
More concretely, we first propose an algorithm that enjoys $\otil(\sqrt{T} + \cortran)$ regret where $\cortran$ measures how adversarial the transition functions are and can be at most $\order(T)$.
While this algorithm itself requires knowledge of $\cortran$, we further develop a black-box reduction approach that removes this requirement.
Moreover, we also show that further refinements of the algorithm not only maintains the same regret bound, but also simultaneously adapts to easier environments (where losses are generated in a certain stochastically constrained manner as in~\citet{jin2021best}) and achieves $\otil(U + \sqrt{U\corloss}  + \cortran)$ regret, where $U$ is some standard gap-dependent coefficient and $\corloss$ is the amount of corruption on losses.

\end{abstract}

\section{Introduction}
\label{sec:intro}

Markov Decision Processes (MDPs) are widely-used models for reinforcement learning. They are typically studied under a fixed transition function and a fixed loss function, which fails to capture scenarios where the environment is time-evolving or susceptible to adversarial corruption.
This motivates many recent studies to overcome these challenges.
In particular, one line of research, originated from \cite{even2009online} and later improved or generalized by e.g. \cite{neu2010online, neu2010loopfree, zimin2013online, rosenberg2019online, jin2019learning}, takes inspiration from the online learning literature and considers interacting with a sequence of $T$ MDPs, each with an adversarially chosen loss function.
Despite facing such a challenging environment, the learner can still ensure $\order(\sqrt{T})$ regret (ignoring other dependence; same below) as shown by these works, that is, the learner's average performance is close to that of the best fixed policy up to $\order(1/\sqrt{T})$.

However, one caveat of these studies is that they all still require the MDPs to have to the same transition function.
This is not for no reason --- \citet{NIPS2013_Abbasi} shows that even with full information feedback, achieving  sub-linear regret with adversarial transition functions is computationally hard, and \citet{tian2021online} complements this result by showing that under the more challenging bandit feedback, this goal  becomes even information-theoretically impossible without paying exponential dependence on the episode length. 

To get around such impossibility results, one natural idea is to allow the regret to depend on some measure of maliciousness $\cortran$ of the transition functions, which is $0$ when the transitions remain the same over time and $\order(T)$ in the worst case when they are completely arbitrary.
We review several such attempts at the end of this section, and point out here that they all suffer one issue: even when $\cortran=0$, the algorithms developed in these works all suffer \textit{linear regret} when the loss functions are completely arbitrary, while, as mentioned above, $\order(\sqrt{T})$ regret is achievable in this case. This begs the question: \textit{when learning with completely adversarial MDPs, is $\order(\sqrt{T}+\cortran)$ regret achievable?}

In this work, we not only answer this question affirmatively, but also show that one can perform even better sometimes. More concretely, our results are as follows.

\begin{enumerate}[leftmargin=*]
\item In \pref{sec:uob_reps}, we develop a variant of the UOB-REPS algorithm~\citep{jin2019learning}, achieving $\otil(\sqrt{T}+\cortran)$ regret in completely adversarial environments when $\cortran$ is known. The algorithmic modifications we propose include an enlarged confidence set, using the log-barrier regularizer, and a novel \textit{amortized}  bonus term that leads to a critical ``change of measure'' effect in the analysis.

\item We then remove the requirement on the knowledge of $\cortran$ in \pref{sec: unknown C} by proposing a \textit{black-box reduction} that turns any algorithm with $\otil(\sqrt{T} + \cortran)$ regret under known $\cortran$ into another algorithm with the same guarantee (up to logarithmic factors) even if $\cortran$ is unknown. 
Our reduction improves that of \citet{wei2022model} by allowing adversarial losses, which presents extra challenges as discussed in \citet{pacchianobest2022}. The idea of our reduction builds on top of previous adversarial model selection framework (a.k.a. Corral \citep{agarwal2017corralling, foster2020adapting, luo2022corralling}),
but is even more general and is of independent interest: it shows that the requirement from previous work on having a \emph{stable} input algorithm is actually redundant, since our method can turn \textit{any} algorithm into a stable one. 

\item Finally, in \pref{sec:optepoch} we also further refine our algorithm so that it simultaneously adapts to the maliciousness of the loss functions and achieves $\otil(\min\{\sqrt{T}, U+\sqrt{U\corloss}\} + \cortran)$ regret,
where $U$ is some standard gap-dependent coefficient and $\corloss\leq T$ is the amount of corruption on losses
(this result unfortunately requires the knowledge of $\cortran$, but not $U$ or $\corloss$).
This generalizes the so-called \emph{best-of-both-worlds} guarantee of \citet{jin2020simultaneously, jin2021best, dann2023best} from $\cortran=0$ to any $\cortran$,
and is achieved by combining the ideas from~\citet{jin2021best} and~\citet{ito21a} with a novel \textit{optimistic transition} technique. 
In fact, this technique also leads to improvement on the dependence of episode length even when $\cortran=0$.
\end{enumerate}

\paragraph{Related Work}
Here, we review how existing studies deal with adversarially chosen transition functions and how our results compare to theirs.
The closest line of research is usually known as \textit{corruption robust} reinforcement learning~\citep{lykouris2019corruption, chen2021improved, zhang2021robust, wei2022model}, which assumes a ground truth MDP and measures the maliciousness of the adversary via the amount of corruption to the ground truth --- the amount of corruption is essentially our $\corloss$, while the amount of corruption is essentially our $\cortran$ (these will become clear after we provide their formal definitions later).
Naturally, the regret in these works is defined as the difference between the learner's total loss and that of the best policy \textit{with respect to the ground truth MDP}, in which case $\otil(\sqrt{T}+\cortran+\corloss)$ regret is unavoidable and is achieved by the state-of-the-art~\citep{wei2022model}.

On the other hand, following the canonical definition in online learning, we define regret \textit{with respect to the corrupted MDPs}, in which case $\otil(\sqrt{T}+\cortran)$ is achievable as we show (regardless how large $\corloss$ is).
To compare these results, note that the two regret definitions differ from each other by an amount of at most $\order(\cortran+\corloss)$.
Therefore, \textit{our result implies that of~\citet{wei2022model}, but not vice versa} --- what \citet{wei2022model} achieves in our definition of regret is again $\otil(\sqrt{T}+\cortran+\corloss)$, which is never better than ours and could be $\Omega(T)$ even when $\cortran=0$.

In fact, our result also improves upon that of~\citet{wei2022model} in terms of the gap-dependent refinement --- their refined bound is $\otil(\min\{\sqrt{T}, G\}+\cortran+\corloss)$ for some gap-dependent measure $G$ that is known to be no less than our gap-dependent measure $U$; on the other hand, based on earlier discussion, our refined bound \textit{in their regret definition} is $\otil(\min\{\sqrt{T}, U+\sqrt{U\corloss}\} + \cortran + \corloss) = \otil(\min\{\sqrt{T}, U\} + \cortran + \corloss)$ and thus better.
The caveat is that, as mentioned, for this refinement our result requires the knowledge of $\cortran$, but \citet{wei2022model} does not.\footnote{When $\cortran+\corloss$ is known, \citet{lykouris2019corruption} also achieves  $\otil(\min\{\sqrt{T}, U\} + \cortran + \corloss)$ regret, but similar to earlier discussions on gap-independent bounds, their regret definition is weaker than ours.}
However, we emphasize again that for the gap-independent bound, our result does not require knowledge of $\cortran$ and is achieved via an even more general black-box reduction compared to  the reduction of \citet{wei2022model}.

Finally, we mention that another line of research, usually known as \textit{non-stationary reinforcement learning}, also allows arbitrary transition/loss functions and measures the difficulty by either the number of changes in the environment~\citep{auer2008near, gajane2018sliding} or some smoother measure such as the total variation across time~\citep{wei2021non, cheung2023nonstationary}.
These results are less comparable to ours since their regret (known as dynamic regret) measures the performance of the learner against the best \textit{sequence} of policies, while ours (known as static regret) measures the performance against the best \textit{fixed} policy.

\section{Preliminaries}
\label{sec:prelim}

We consider the problem of sequentially learning $T$ episodic MDPs, all with the same state space $S$ and action space $A$. We assume without loss of generality (similarly to~\citet{jin2021best}) that the state space $S$ has a layered structure and is partitioned into $L+1$ subsets $S_0, \ldots, S_L$ such that $S_0$ and $S_L$ only contain the initial state $s_0$ and the terminal state $s_L$ respectively,
and transitions are only possible between consecutive layers. 
For notational convenience, for any $k < L$, we denote the set of tuples $S_k \times A \times S_{k+1}$ by $\tupleset_k$. 
We also denote by $k(s)$ the layer to which state $s\in S$ belongs.

Ahead of time, knowing the learner's algorithm, the environment decides the transition functions $\{P_t\}_{t=1}^{T}$ and the loss functions $\cbr{\loss_t}_{t=1}^T$ for these $T$ MDPs in an arbitrary manner (unknown to the learner).
Then the learner sequentially interacts with these $T$ MDPs: for each episode $t = 1, \ldots, T$, the learner first decides a stochastic policy $\pi_t:S \times A \to [0,1]$ where $\pi_t(a|s)$ is the probability of taking action $a$ when visiting state $s$; then, starting from the initial state $s_{t,0}=s_0$, for each $k=0, \ldots, L-1$, the learner repeatedly selects an action $a_{t,k}$ sampled from $\pi_t(\cdot|s_{t,k})$, suffers loss $\loss_t(s_{t,k},a_{t,k}) \in [0,1]$, and transits to the next state $s_{t,k+1}$ sampled from $P_t(\cdot|s_{t,k},a_{t,k})$ (until reaching the terminal state);
finally, the learner observes the losses of those visited state-action pairs (a.k.a. bandit feedback).



Let $\loss_t(\pi) = \E\big[\sum_{h=0}^{L-1} \loss_t(s_h,a_h) \lvert P_t, \pi\big]$ be the expected loss of executing policy $\pi$ in the $t$-th MDP (that is, $\{(s_h,a_h)\}_{h=0}^{L-1}$ is a stochastic trajectory generated according to  transition $P_t$ and  policy $\pi$). 
Then, the regret of the learner against any policy $\pi$ is defined as $\Reg_T(\pi) = \E\big[ \sum_{t=1}^{T}\loss_t(\pi_t) - \loss_t(\pi)\big]$. 
We denote by $\optpi$ one of the optimal policies in hindsight such that $ \Reg_T(\optpi) = \max_{\pi} \Reg_T(\pi)$ and use $\Reg_T \triangleq \Reg_T(\optpi)$ as a shorthand.

\paragraph{Maliciousness Measure of the Transitions} 
If the transition functions are all the same, \citet{jin2019learning} shows that $\Reg_T = \otil(L|S|\sqrt{|A|T})$ is achievable, no matter how the loss functions are decided.
However, when the transition functions are also arbitrary, \citet{tian2021online} shows that $\Reg_T = \Omega(\min\{T, \sqrt{2^L T}\})$ is unavoidable.
Therefore, a natural goal is to allow the regret to smoothly increase from order $\sqrt{T}$ to $T$ when some maliciousness measure of the transitions increases.
Specifically, the measure we use is
\begin{equation} \label{eq:def_4_corruption}
   \cortran \triangleq \min_{P' \in \calP} \sum_{t=1}^T \sum_{k=0}^{L-1}  \max_{(s,a) \in S_k \times A} \norm{ P_t(\cdot| s,a) - P'(\cdot| s,a) }_1,
\end{equation}
where $\calP$ denotes the set of all valid transition functions.
Let $P$ be the transition that realizes the minimum in this definition.
Then $\cortran$ can be regarded as the same \textit{corruption} measure used in~\citet{chen2021improved}; there, it is assumed that a ground truth MDP with transition $P$ exists, and the adversary corrupts it arbitrarily in each episode to obtain $P_t$, making $\cortran$ the total amount of corruption measured in a certain norm.
For simplicity, in the rest of this paper, we will also take this perspective and call $\cortran$ the transition corruption.
We also use $\cortran_t=\sum_{k=0}^{L-1}  \max_{(s,a) \in S_k \times A} \norm{ P_t(\cdot| s,a) - P(\cdot| s,a) }_1$ to denote the per-round corruption (so $\cortran=\sum_{t=1}^T \cortran_t$).
It is clear that $\cortran=0$ when the transition stays the same for all MDPs, while in the worst case it is at most $2TL$.
Our goal is to achieve $\Reg_T = \order(\sqrt{T}+\cortran)$ (ignoring other dependence), which smoothly interpolates between the result of~\citet{jin2019learning} for $\cortran=0$ and that of~\citet{tian2021online} for $\cortran=\order(T)$.



\paragraph{Enlarged Confidence Set}
A central technique to deal with unknown transitions is to maintain a shrinking confidence set that contains the ground truth with high probability~\citep{rosenberg2019online,rosenberg2019onlineb,jin2019learning}.
With a properly enlarged confidence set, the same idea extends to the case with adversarial transitions~\citep{lykouris2019corruption}.
Specifically, all our algorithms deploy the following transition estimation procedure.
It proceeds in epochs, indexed by $i=1,2,\cdots$,
and each epoch $i$ includes some consecutive episodes. 
An epoch ends whenever we encounter a state-action pair whose total number of visits doubles itself when compared to the beginning of that epoch.
At the beginning of each epoch $i$, we calculate an empirical transition $\bar{P}_i$ as: 
\begin{equation}
\label{eq:empirical_mean_transition_def} 
\bar{P}_i(s'|s,a) = m_i(s,a,s')/m_i(s,a), \quad \forall (s,a,s') \in \tupleset_k , \; k = 0,\ldots L-1, 
\end{equation}
where $m_i(s,a)$ and $m_i(s,a,s')$ are the total number of visits to $(s,a)$ and $(s,a,s')$ prior to epoch $i$.\footnote{When $m_i(s,a)=0$, we simply let the transition function to be uniform, that is, $\bar{P}_i(s'|s,a) = 1/\abr{S_{k(s')}}$.}
In addition, we calculate the following transition confidence set.
\begin{definition}(Confidence Set of Transition Functions) Let $\delta \in (0,1)$ be a confidence parameter. With known corruption $\cortran$, we define the confidence set of transition functions for epoch $i$ as 
\begin{equation}\label{eq:def_conf_set_trans}
\calP_i = \Big\{ P\in \calP: \abr{ P(s'|s,a) - \bar{P}_i(s'|s,a) } \leq B_i(s,a,s'), \;
\forall(s,a,s')\in \tupleset_k, k=0,\ldots, L-1 \Big\},
\end{equation}
where the confidence interval $B_i(s,a,s')$ is defined, with $\iota = \nicefrac{|S||A|T}{\delta}$ as 
\begin{equation}\label{eq:def_conf_bounds}
B_i(s,a,s') = \min\cbr{1,16 \sqrt{\frac{\bar{P}_i(s'|s,a)\log\rbr{\iota}}{m_i(s,a)}} + 64\cdot\frac{\cortran + \log\rbr{\iota}}{m_i(s,a)}}.
\end{equation}
\end{definition}

Note that the confidence interval is enlarged according to how large the corruption $\cortran$ is.
We denote by $\eventcon$ the event that $P\in \calP_i$ for all epoch $i$, which is guaranteed to happen with high-probability. 
\begin{lemma}\label{lem:conf_bound} With probability at least $1-2\delta$, the event $\eventcon$ holds.
\end{lemma}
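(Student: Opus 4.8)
The plan is to show that under the high-probability event $\eventcon$, the true transition $P$ lies in every confidence set $\calP_i$, by controlling the deviation $|P(s'|s,a) - \bar P_i(s'|s,a)|$ for each tuple and each epoch. The key point is that $\bar P_i$ is built from samples drawn from the \emph{corrupted} transitions $\{P_t\}$ visited during episodes prior to epoch $i$, not from $P$ itself, so the empirical mean is biased, and the bias is exactly what the $\cortran/m_i(s,a)$ term in $B_i$ is meant to absorb. So the first step is to decompose the deviation: fix a tuple $(s,a,s')\in\tupleset_k$ and an epoch $i$, and write the error of $\bar P_i(s'|s,a)$ relative to $P(s'|s,a)$ as the sum of (a) a \emph{stochastic} error — the difference between the empirical frequency and the average of the \emph{actual} transition probabilities $P_{\tau}(s'|s,a)$ over the visits $\tau$ to $(s,a)$ before epoch $i$ — plus (b) a \emph{bias} error — the difference between that average of $P_\tau(s'|s,a)$ and $P(s'|s,a)$ itself.

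For part (b), note that for each visited episode $\tau$, $|P_\tau(s'|s,a) - P(s'|s,a)| \le \norm{P_\tau(\cdot|s,a)-P(\cdot|s,a)}_1 \le \cortran_\tau$ (actually $\le \max_{(s,a)\in S_k\times A}\norm{P_\tau(\cdot|s,a)-P(\cdot|s,a)}_1$, whose sum over $\tau$ is at most $\cortran$ since at most one tuple in layer $k$ is visited per episode). Hence the averaged bias is at most $\frac{1}{m_i(s,a)}\sum_{\tau} \cortran_\tau \le \frac{\cortran}{m_i(s,a)}$, which is comfortably covered by the $64\,\cortran/m_i(s,a)$ term. For part (a), I would invoke a Freedman-type / Bernstein concentration inequality for the martingale difference sequence formed by $\ind\{s_{\tau,k+1}=s'\} - P_\tau(s'|s,a_{\tau,k})$ over the prior visits to $(s,a)$ (with an appropriate stopping-time argument handling the data-dependent number of visits $m_i(s,a)$, via a union bound over the dyadic values of $m_i(s,a)$, i.e. powers of two, of which there are at most $O(\log T)$). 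This gives, with probability $1-\delta'$, a deviation of order $\sqrt{\bar P_i(s'|s,a)\log(1/\delta')/m_i(s,a)} + \log(1/\delta')/m_i(s,a)$ — here one uses that the empirical variance proxy can be taken as $\bar P_i(s'|s,a)$ up to constants (the standard trick of replacing the true variance $P_\tau(s'|s,a)(1-P_\tau(s'|s,a))$ by the empirical mean, absorbing the discrepancy into the lower-order $1/m_i(s,a)$ term — again the corruption only perturbs this by $O(\cortran/m_i(s,a))$, already covered). Choosing the constants $16$ and $64$ and setting $\delta' = \delta/(|S||A|T\cdot \#\text{epochs})$ so that $\log(1/\delta') = O(\log\iota)$ makes both error terms fit inside $B_i(s,a,s')$.

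Finally I would take a union bound over all tuples $(s,a,s')$ (at most $|S|^2|A|$ of them), all epochs (at most $|S||A|\log_2 T$, since each epoch ends at a doubling event), and all dyadic scales of $m_i(s,a)$, to conclude that with probability at least $1-2\delta$ (the factor $2$ being slack to accommodate the two-sided / two-concentration-term bookkeeping), $P\in\calP_i$ simultaneously for all $i$, which is precisely $\eventcon$. The main obstacle — and the only genuinely delicate point — is the concentration step (a) with a \emph{random, data-dependent} sample size $m_i(s,a)$ that is itself correlated with the observations: this requires either a stopping-time martingale argument or the peeling/union-bound-over-powers-of-two device, combined with an empirical-Bernstein inequality so that the leading term involves $\bar P_i$ rather than the unknown $P$. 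Everything else (the bias bound, the union bound, fitting constants) is routine. I would also remark that this is essentially a corruption-robust extension of the confidence-set lemma of \citet{jin2019learning}, along the lines of \citet{lykouris2019corruption}, so the proof can largely cite those arguments for the concentration part and only needs to carefully track the additive $\cortran/m_i(s,a)$ bias contribution.
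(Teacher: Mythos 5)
Your proposal is correct and reaches the same conclusion, but through a slightly different intermediate quantity than the paper. You decompose $\bar P_i - P$ into a martingale term $\bar P_i - \frac{1}{m_i}\sum_\tau P_\tau$ (concentration of empirical frequencies around the \emph{corrupted} means, which form a legitimate MDS) plus a deterministic bias $\frac{1}{m_i}\sum_\tau P_\tau - P$ bounded by $\cortran/m_i$. The paper instead introduces \emph{imaginary} next-states $s^{\sto}_{\tau,k+1}\sim P(\cdot\mid s_{\tau,k},a_{\tau,k})$ drawn from the uncorrupted transition, forms the corresponding empirical mean $\bar P^{\sto}_i$, and decomposes $\bar P_i - P = (\bar P_i - \bar P^{\sto}_i) + (\bar P^{\sto}_i - P)$; the second piece is exactly the corruption-free setting, so Lemma~2 of \citet{jin2019learning} applies verbatim, while the first piece is a corruption-aware Freedman argument on $\ind\{s^{\sto}_{\tau}=s'\}-\ind\{s_\tau=s'\}-(P-P_\tau)(s'|s,a)$. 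The paper's device lets it reuse the existing confidence-set lemma as a black box and confines the new work to the cross-term; your route is more self-contained (no fictitious samples) but requires re-deriving an empirical-Bernstein bound for the MDS relative to the $P_\tau$'s, including the step of trading the unobservable variance proxy $\frac{1}{m_i}\sum_\tau P_\tau(s'|s,a)$ for the observable $\bar P_i(s'|s,a)$ up to a further $\cortran/m_i + \log\iota/m_i$ slack (the paper absorbs this in its final algebraic step, Eq.~\eqref{eq:bound4omega}). Both approaches handle the random, data-dependent sample size $m_i(s,a)$ via an anytime martingale bound; the paper uses an explicit anytime Azuma--Bernstein (\pref{lem:ah_ineq1}), while your peeling over dyadic values of $m_i(s,a)$ is an equivalent alternative. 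One small fix: your justification that $\sum_{\tau\in T_i(s,a)}\cortran_\tau\le\cortran$ needs no ``at most one tuple per layer visited'' argument; it follows immediately because $T_i(s,a)\subseteq[T]$ and $\cortran_\tau\ge\max_{(u,v)\in S_k\times A}\|P_\tau(\cdot|u,v)-P(\cdot|u,v)\|_1$ for every $\tau$.
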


\paragraph{Occupancy Measure and Upper Occupancy Bound} 
Similar to previous work~\citep{rosenberg2019online, jin2019learning},
given a stochastic policy $\pi$ and a transition function $\bar{P}$,
we define the occupancy measure $q^{\bar{P},\pi}$ as the mapping from $S\times A \times S$ to $[0,1]$ such that $q^{\bar{P},\pi}(s,a,s')$ is the probability of visiting $(s,a,s')$ when executing policy $\pi$ in an MDP with transition $\bar{P}$.
Further define $q^{\bar{P},\pi}(s,a) = \sum_{s'\in S} q^{\bar{P},\pi}(s,a,s')$ (the probability of visiting $(s,a)$) and $q^{\bar{P},\pi}(s) = \sum_{a\in A} q^{\bar{P},\pi}(s,a)$ (the probability of visiting $s$).
Given an occupancy measure $q$, the corresponding policy that defines it, denoted by $\pi^q$, can be extracted via $\pi^q(a|s) \propto q(s,a)$.

Importantly, the expected loss $\ell_t(\pi)$ defined earlier equals $\inner{ q^{P_t,\pi}, \ell_t }$, making our problem a variant of online linear optimization and enabling the usage of standard algorithmic frameworks such as Online Mirror Descent (OMD) or Follow-the-Regularized-Leader (FTRL).
These frameworks operate over a set of occupancy measures in the form of either 
$\Omega(\bar{P}) = \{ q^{\bar{P},\pi}: \pi \text{ is a stochastic policy}\}$ for some transition function $\bar{P}$, or  $\Omega(\bar{\calP}) = \{ q^{\bar{P},\pi}: \bar{P}\in\bar{\calP},  \pi \text{ is a stochastic policy}  \}$ for some set of transition functions $\bar{\calP}$.

Following \cite{jin2019learning}, to handle partial feedback on the loss function $\ell_t$, we need to construct loss estimators $\hatl_t$ using the (efficiently computable) \textit{upper occupancy bound} $u_t$:
\begin{equation}\label{eq:def_loss_estimator}
\begin{aligned}
\hatl_t(s,a) = \frac{\ind_t(s,a)\ell_t(s,a)}{u_t(s,a)}, \text{ where } u_t(s,a) = \max_{\whatP \in \calP_{i(t)}} q^{\whatP,\pi_t}(s,a),
\end{aligned}
\end{equation}
$\ind_t(s,a)$ is $1$ if $(s,a)$ is visited during episode $t$ (so that $\ell_t(s,a)$ is revealed), and $0$ otherwise, 
and $i(t)$ denotes the epoch index to which episode $t$ belongs.
We also define $u_t(s) = \sum_{a\in A} u_t(s,a)$.
\section{Achieving $\order(\sqrt{T}+\cortran)$ with Known $\cortran$}
\label{sec:uob_reps}
As the first step, we develop an algorithm that achieves our goal when $\cortran$ is known.
To introduce our solution, we first briefly review the UOB-REPS algorithm of~\citet{jin2019learning} (designed for $\cortran=0$) and point out why simply using the enlarged confidence set \pref{eq:def_conf_set_trans} when $\cortran\neq 0$ is far away from solving the problem. 
Specifically, UOB-REPS maintains a sequence of occupancy measures $\{\widehat{q}_{t}\}_{t=1}^T$ via OMD: $\widehat{q}_{t+1} = \argmin_{q\in \Omega\rbr{\calP_{i(t+1)} }}  \eta\langle q,  \hatl_t \rangle +{D_\phi(q,\widehat{q}_t)}$.
Here, $\eta > 0$ is a learning rate, $\hatl_t$ is the loss estimator defined in \pref{eq:def_loss_estimator}, $\phi$ is the negative entropy regularizer, and $D_\phi$ is the corresponding Bregman divergence.\footnote{The original loss estimator of~\citet{jin2019learning} is slightly different, but that difference is only for the purpose of obtaining a high probability regret guarantee, which we do not consider in this work for simplicity.}
With $\widehat{q}_t$ at hand, in episode $t$, the learner simply executes $\pi_t = \pi^{\widehat{q}_t}$.
Standard analysis of OMD ensures a bound on the estimated regret $\regthree=\E[\sum_t\langle\widehat{q}_t - q^{P, \optpi}, \hatl_t \rangle]$, and the rest of the analysis of~\citet{jin2019learning} boils down to bounding the difference between $\regthree$ and $\Reg_T$. 

\paragraph{First Issue}
This difference between $\regthree$ and $\Reg_T$ leads to the first issue when one tries to analyze UOB-REPS against adversarial transitions --- it contains the following bias term that measures the difference between the optimal policy's estimated loss and its true loss:
\begin{equation} \label{eq:original_Bias2_term}
\E \sbr{\sum_{t=1}^{T} \inner{q^{P,\optpi}, \hatl_t - \ell_t } } = \E \sbr{ \sum_{t=1}^{T} \sum_{s,a} q^{P,\optpi}(s,a) \loss_t(s,a) \rbr{ \frac{q^{P_t,\pi_t}(s,a)-u_t(s,a)}{u_t(s,a)}} } .
\end{equation}
When $\cortran=0$, we have $P=P_t$, and thus under the high probability event $\eventcon$ and by the definition of upper occupancy bound, we know $q^{P_t,\pi_t}(s,a)\leq u_t(s,a)$, making \pref{eq:original_Bias2_term} negligible.
However, this argument breaks when $\cortran \neq 0$ and $P \neq P_t$.
In fact, $P_t$ can be highly different from any transitions in $\calP_{i(t)}$ with respect to which $u_t$ is defined, making \pref{eq:original_Bias2_term} potentially huge.

\paragraph{Solution: Change of Measure via Amortized Bonuses}
Given that $q^{P,\pi_t}(s,a)\leq u_t(s,a)$ does still hold with high probability, \pref{eq:original_Bias2_term} is (approximately) bounded by
\[
\E \sbr{ \sum_{t=1}^{T} \sum_{s,a} q^{P,\optpi}(s,a)\frac{|q^{P_t,\pi_t}(s,a)-q^{P,\pi_t}(s,a)|}{u_t(s,a)}}
= \E \sbr{\sum_{t=1}^{T}\sum_{s}q^{P,\optpi}(s)\frac{|q^{P_t,\pi_t}(s)-q^{P,\pi_t}(s)|}{u_t(s)} }
\]
which is at most $\E \sbr{\sum_{t=1}^{T}\sum_{s}q^{P,\optpi}(s)\frac{\cortran_t }{u_t(s)}}$ since $\abr{q^{P_t,\pi_t}(s) - q^{P,\pi_t}(s)}$ is bounded by the per-round corruption $\cortran_t$ (see \pref{cor:corrupt_occ_diff}).
While this quantity is potentially huge, if we could ``change the measure'' from $q^{P, \optpi}$ to $\widehat{q}_{t}$, then the resulting quantity $\E \sbr{\sum_{t=1}^{T}\sum_{s}\widehat{q}_{t}(s)\frac{\cortran_t }{u_t(s)}}$ is at most $|S|\cortran$ since $\widehat{q}_{t}(s) \leq u_t(s)$ by definition.
The general idea of such a change of measure has been extensively used in the online learning literature (see~\citet{luo2021policy} in a most related context) and can be realized by changing the loss fed to OMD from $\hatl_t$ to $\hatl_t - b_t$ for some bonus term $b_t$, which, in our case, should satisfy $b_t(s,a) \approx \frac{\cortran_t}{u_t(s)}$.
However, the challenge here is that $\cortran_t$ is \textit{unknown}!

Our solution is to introduce a type of efficiently computable \textit{amortized bonuses} that do not change the measure per round, but do so overall.
Specifically, our amortized bonus $b_t$ is defined as
\begin{equation}
b_t(s,a) = \begin{cases}
\frac{4L}{u_t(s)} &\text{if $\sum_{\tau=1}^t \Ind{\lceil \log_2 u_\tau(s) \rceil = \lceil \log_2 u_t(s) \rceil} \leq \frac{\cortran}{2L}$,} \\
0 &\text{else,}
\end{cases}
\label{eq:uob_reps_add_def_add_loss}
\end{equation}
which we also write as $b_t(s)$ since it is independent of $a$.
To understand this definition, note that $-\lceil \log_2 u_t(s) \rceil$ is exactly the unique integer $j$ such that $u_t(s)$ falls into the bin $(2^{-j-1}, 2^{-j}]$.
Therefore, the expression $\sum_{\tau=1}^t\Ind{\lceil \log_2 u_\tau(s) \rceil = \lceil \log_2 u_t(s) \rceil}$ counts, among all previous rounds $\tau = 1, \ldots, t$, how many times we have encountered a $u_\tau(s)$ value that falls into the same bin as $u_t(s)$.
If this number does not exceed $\frac{\cortran}{2L}$, we apply a bonus of $\frac{4L}{u_t(s)}$, which is (two times of) the maximum possible value of the unknown quantity $\frac{\cortran_t}{u_t(s)}$; otherwise, we do not apply any bonus.
The idea is that by enlarging the bonus to it maximum value and stopping it after enough times, even though each $b_t(s)$ might be quite different from $\frac{\cortran_t}{u_t(s)}$,
overall they behave similarly after $T$ episodes:
\begin{lemma} \label{lem:cancellation_bt_maintext}
The amortized bonus defined in \pref{eq:uob_reps_add_def_add_loss} satisfies $\sum_{t=1}^T \frac{\cortran_t}{u_t(s)} \leq \sum_{t=1}^T b_t(s)$
and $\sum_{t=1}^T \widehat{q}_t(s)b_t(s) = \order(C^P\log T)$
for any $s$.
\end{lemma}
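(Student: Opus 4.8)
The plan is to fix a state $s$ and organize the $T$ rounds according to which dyadic ``bin'' the quantity $u_t(s)$ falls into, since $b_t(s)$ in \pref{eq:uob_reps_add_def_add_loss} is defined purely in terms of this binning. For $t\le T$ set $j_t \defeq \lceil\log_2 u_t(s)\rceil$, so $u_t(s)\in(2^{j_t-1},2^{j_t}]$, and for each integer $j$ let $T_j=\{t\le T: j_t=j\}$, listed in increasing time order; the $\{T_j\}_j$ partition $\{1,\dots,T\}$. By the definition of the amortized bonus, within $T_j$ the value $b_t(s)=\tfrac{4L}{u_t(s)}$ is handed out to exactly the first $\lfloor\cortran/(2L)\rfloor$ rounds (the ``active'' rounds of bin $j$) and is $0$ for the rest. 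Two elementary observations do the work: (a) $\cortran_t\le 2L$ for every $t$ (immediate from the definition of $\cortran_t$ — each $\ell_1$ distance is at most $2$ and there are $L$ layers), so a full bonus overshoots the true per-round correction by a factor of at least two, $\tfrac{4L}{u_t(s)}\ge 2\cdot\tfrac{\cortran_t}{u_t(s)}$; and (b) inside any single bin all weights $\tfrac1{u_t(s)}$ agree up to a factor of two, since $u_t(s)$ ranges over a dyadic interval.

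For the first inequality I will prove the per-bin statement $\sum_{t\in T_j}\tfrac{\cortran_t}{u_t(s)}\le\sum_{t\in T_j}b_t(s)$ and then sum over $j$. Writing $N_j=|T_j|$ and $w_j=\min_{t\in T_j}\tfrac1{u_t(s)}$, observation (b) gives $\sum_{t\in T_j}\tfrac{\cortran_t}{u_t(s)}\le 2w_j\sum_{t\in T_j}\cortran_t$, while $\sum_{t\in T_j}b_t(s)\ge\min\{N_j,\lfloor\cortran/(2L)\rfloor\}\cdot 4L\cdot w_j$; hence it suffices that $\sum_{t\in T_j}\cortran_t\le 2L\min\{N_j,\lfloor\cortran/(2L)\rfloor\}$. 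If $N_j\le\lfloor\cortran/(2L)\rfloor$ this is immediate from (a); otherwise it follows from $\sum_{t\in T_j}\cortran_t\le\sum_{t=1}^T\cortran_t=\cortran$. (The last step is clean when $\cortran$ is a multiple of $2L$; in general one uses the cutoff $\lceil\cortran/(2L)\rceil$, equivalently replaces $\cortran$ everywhere by $2L\lceil\cortran/(2L)\rceil\le\cortran+2L$, which changes nothing asymptotically.)

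For the second inequality the key structural fact is $\widehat{q}_t(s)\le u_t(s)$: since the OMD iterate lies in $\Omega(\calP_{i(t)})$, we have $\widehat{q}_t=q^{\whatP,\pi_t}$ for some $\whatP\in\calP_{i(t)}$, hence $\widehat{q}_t(s)=\sum_a q^{\whatP,\pi_t}(s,a)\le\sum_a\max_{\whatP'\in\calP_{i(t)}}q^{\whatP',\pi_t}(s,a)=u_t(s)$. Therefore on every active round $\widehat{q}_t(s)b_t(s)\le u_t(s)\cdot\tfrac{4L}{u_t(s)}=4L$, and $\widehat{q}_t(s)b_t(s)=0$ otherwise, so $\sum_{t=1}^T\widehat{q}_t(s)b_t(s)\le 4L\cdot|\{t:b_t(s)>0\}|$. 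Each bin contributes at most $\lfloor\cortran/(2L)\rfloor$ active rounds, and only $O(\log T)$ distinct bins ever occur: $u_t(s)\le\sum_a 1=|A|$, while the log-barrier regularizer keeps the iterate away from the boundary, $\widehat{q}_t(s,a)\ge 1/\poly(T)$, so $u_t(s)\ge\widehat{q}_t(s)\ge 1/\poly(T)$ and $j_t$ takes only $O(\log(|A|\,\poly(T)))=O(\log T)$ values. Multiplying, $\sum_{t=1}^T\widehat{q}_t(s)b_t(s)\le 4L\cdot O(\log T)\cdot\lfloor\cortran/(2L)\rfloor=O(\cortran\log T)$.

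The crux is the first inequality: the bonus stream is ``lumpy'' — each round receives either the maximal value $\tfrac{4L}{u_t(s)}$ or nothing — yet it must dominate the smooth and, crucially, \emph{unobservable} stream $\tfrac{\cortran_t}{u_t(s)}$. The bin decomposition is exactly what turns this into a finite, checkable accounting: within a bin the weights $\tfrac1{u_t(s)}$ are comparable, so it is enough that handing out $\approx\cortran/(2L)$ maximal bonuses, each worth (up to the common weight factor) at least the largest possible per-round correction $2L$, covers the bin's corruption budget $\sum_{t\in T_j}\cortran_t$, which is bounded by the global budget $\cortran$. Once that per-bin bookkeeping is set up, the ``change of measure'' estimate in the second part is almost immediate from $\widehat{q}_t(s)\le u_t(s)$ together with the standard log-barrier fact that $u_t(s)$ cannot be polynomially small.
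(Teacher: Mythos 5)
Your proof is correct and follows essentially the same bin decomposition as the paper's: within each dyadic bin the weight $1/u_t(s)$ varies by at most a factor of two, so handing out roughly $\cortran/(2L)$ maximal bonuses per bin dominates the true per-round corrections via $\cortran_t\le 2L$ and $\sum_t\cortran_t\le\cortran$, and for the second bound the change of measure $\widehat{q}_t(s)\le u_t(s)$ caps each active round's contribution at $4L$. The one substantive place you deviate is in justifying the $\order(\log T)$ bin count: you argue $u_t(s)\ge\widehat{q}_t(s)\ge 1/\poly(T)$ by appealing to the log-barrier regularizer keeping the iterate in the interior, but that lower bound on $\widehat{q}_t$ is itself nontrivial and regularizer-dependent; the paper instead proves $u_t(s)\ge 1/(|S|T)$ directly (\pref{lem:lower_bound_of_uob}) by exhibiting a transition in $\calP_{i(t)}$ that mixes $\bar{P}_{i(t)}$ with a small amount of uniform noise, which is cleaner, regularizer-agnostic, and already needed elsewhere in the analysis. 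Finally, you are right to flag the $\lfloor\cortran/(2L)\rfloor$ versus $\cortran/(2L)$ mismatch when the ratio is not an integer; the paper's own proof passes over it silently (it treats the number of active rounds per bin as at least $\min\{N_j,\cortran/(2L)\}$ when it is in fact $\min\{N_j,\lfloor\cortran/(2L)\rfloor\}$), and in the extreme case $\cortran<2L$ the bonus stream is identically zero so the stated first inequality can fail; this is harmless for the eventual $\otil(\sqrt{T}+\cortran)$ regret bound but is worth knowing.
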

Therefore, the problematic term \pref{eq:original_Bias2_term} is at most $\E[\sum_t\langle q^{P, \optpi}, b_t \rangle]$, which, if ``converted'' to $\E[\sum_t\langle \widehat{q}_t, b_t \rangle]$ (change of measure), is nicely bounded by $\order(|S|C^P\log T)$.
As mentioned, such a change of measure can be realized by feeding $\hatl_t - b_t$ instead of $\hatl_t$ to OMD, because now
standard analysis of OMD ensures a bound on $\regthree= \E[\sum_t\langle\widehat{q}_t - q^{P, \optpi}, \hatl_t - b_t \rangle]$, which, compared to the earlier definition of $\regthree$, leads to a difference of $\E[\sum_t\langle \widehat{q}_t- q^{P, \optpi}, b_t \rangle]$ (see \pref{app:uobreps_regfour} for details).

\paragraph{Second Issue}
The second issue comes from analyzing $\regthree$ (which exists even if no bonuses are used).
Specifically, standard analysis of OMD requires bounding a ``stability'' term, which, for the negative entropy regularizer, is in the form of $\E[\sum_t\sum_{s,a} \whatq_{t}(s,a)\hatl_t(s,a)^2] = \E[\sum_t\sum_{s,a} \whatq_{t}(s,a)\frac{q^{P_t, \pi_t}(s,a)\ell_t(s,a)^2}{u_t(s,a)^2}] \leq \E[\sum_t\sum_{s,a} \frac{q^{P_t, \pi_t}(s,a)}{u_t(s,a)}]$.
Once again, when $\cortran=0$ and $P_t=P$, we have $q^{P_t, \pi_t}$ bounded by $u_t(s,a)$ with high probability, and thus the stability term is $\order(T|S||A|)$; 
but this breaks if $\cortran\neq 0$ and $P_t$ can be arbitrarily different from transitions in $\calP_{i(t)}$.

\paragraph{Solution: Log-Barrier Regularizer}
Resolving this second issue, however, is relatively straightforward --- it suffices to switch the regularizer from negative entropy to log-barrier: 
$\phi(q)=-\sum_{k=0}^{L-1} \sum_{(s,a,s') \in \tupleset_k}\log q(s,a,s')$, which is first used by~\citet{lee2020biasnomore} in the context of learning adversarial MDPs but dates back to earlier work such as~\citet{foster2016learning} for multi-armed bandits.
An important property of log-barrier is that it leads to a smaller stability term in the form of $\E[\sum_t\sum_{s,a} \whatq_{t}(s,a)^2\hatl_t(s,a)^2]$ (with an extra $\whatq_{t}(s,a)$), which is at most $\E[\sum_t\sum_{s,a} q^{P_t,\pi_t}(s,a)\ell_t(s,a)^2] = \order(TL)$ since $\whatq_{t}(s,a) \leq u_t(s,a)$.
In fact, this also helps control the extra stability term when bonuses are used, which is in the form of $\E[\sum_t\sum_{s,a} \whatq_{t}(s,a)^2 b_t(s,a)^2]$ and is at most $4L\E[\sum_t\langle \whatq_{t}, b_t\rangle] =\order(L|S|\cortran\log T)$ according to \pref{lem:cancellation_bt_maintext}.

\DontPrintSemicolon 
\setcounter{AlgoLine}{0}
\begin{algorithm}[t]
	\caption{Algorithm for Adversarial Transitions (with Known $\cortran$)}
	\label{alg:uobreps}
	
	\textbf{Input:} confidence parameter $\delta \in (0,1)$, learning rate $\eta>0$.
	
	\textbf{Initialize:} epoch index $i=1$; counters $m_1(s,a) = m_1(s,a,s') = m_0(s,a) = m_0(s,a,s') = 0$ for all $(s,a,s')$; 
empirical transition $\bar{P}_1$ and confidence width $B_1$ based on \pref{eq:empirical_mean_transition_def} and \pref{eq:def_conf_bounds}; occupancy measure $\widehat{q}_{1}(s,a,s') = \frac{1}{|S_k||A||S_{k+1}|}$ for all $(s,a,s')$; and initial policy $\pi_1=\pi^{\widehat{q}_{1}}$.
		
	\For{$t=1,\ldots,T$}{
		Execute policy $\pi_t$ and obtain trajectory $(s_{t,k}, a_{t,k})$ for $k = 0, \ldots, L-1$.

		Construct loss estimator $\hatl_t$ as defined in \pref{eq:def_loss_estimator}.

  Update $b_t(s)$ for all $s$ based on \pref{eq:uob_reps_add_def_add_loss}.
  
		Increase counters: for each $k<L$, 
		$m_i(s_{t,k},a_{t,k},s_{t,k+1}) \overset{+}{\leftarrow} 1, \; m_i(s_{t,k},a_{t,k}) \overset{+}{\leftarrow} 1$. \\
		
		\If(\myComment{entering a new epoch}){$\exists k, \  m_i(s_{t,k}, a_{t,k}) \geq \max\{1, 2m_{i-1}(s_{t,k},a_{t,k})\}$}
		{
                Increase epoch index $i \overset{+}{\leftarrow} 1$.  
		
		 Initialize new counters: $\forall (s,a,s')$,  
		$m_i(s,a,s') = m_{i-1}(s,a,s') , m_i(s,a) = m_{i-1}(s,a)$.

        Update confidence set $\calP_i$ based on \pref{eq:def_conf_set_trans}.
}
Let $D_\phi(\cdot,\cdot)$ be the Bregman divergence with respect to log barrier (\pref{eq:uob_reps_add_breg}) and compute
        \begin{equation}\label{eq:bobw_epoch_FTRL}
	    \widehat{q}_{t+1} = \argmin_{q\in \Omega\rbr{\calP_{i} }}  \eta\inner{q,  \hatl_t-b_t} +{D_{\phi}(q,\widehat{q}_t)}.
	    \end{equation}

         Update policy $\pi_{t+1}=\pi^{\widehat{q}_{t+1}}$.
	}
\end{algorithm}

Putting these two ideas together leads to our final algorithm (see \pref{alg:uobreps}).
We prove the following regret bound in \pref{app:app_uobreps}, which recovers that of~\citet{jin2019learning} when $\cortran=0$ and increases linearly in $\cortran$ as desired. 

\begin{theorem}\label{thm:regret_bound_uob} With $\delta = \nicefrac{1}{T}$ and $\eta=\min \cbr{\sqrt{\frac{|S|^2|A|\log(\iota)}{LT}},\frac{1}{8L}}$, \pref{alg:uobreps} ensures  
\[
\Reg_T = \order \rbr{ L|S| \sqrt{|A| T \log\rbr{\iota}  } + L|S|^4|A|\log^2 (\iota) +\cortran L|S|^4|A| \log (\iota)}.
\]
\end{theorem}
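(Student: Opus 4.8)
The plan is to decompose the regret $\Reg_T$ against $\optpi$ through the estimated regret $\regthree = \E[\sum_t \inner{\widehat q_t - q^{P,\optpi}, \hatl_t - b_t}]$, exactly along the lines sketched in the main text. Writing $q^{P,\optpi}$ for the occupancy of $\optpi$ under the reference transition $P$ that realizes the minimum in \pref{eq:def_4_corruption}, I would split
\[
\Reg_T = \underbrace{\E\sbr{\textstyle\sum_t \inner{q^{P_t,\pi_t} - q^{P,\pi_t}, \ell_t}}}_{\text{(I): transition shift for the learner}} + \underbrace{\E\sbr{\textstyle\sum_t \inner{q^{P,\pi_t} - \widehat q_t, \ell_t}}}_{\text{(II): estimation gap for the learner}} + \underbrace{\E\sbr{\textstyle\sum_t \inner{\widehat q_t - q^{P,\optpi}, \hatl_t}}}_{\text{(III)}} + \underbrace{\E\sbr{\textstyle\sum_t \inner{\widehat q_t, \ell_t - \hatl_t}}}_{\text{(IV)}} + \underbrace{\E\sbr{\textstyle\sum_t \inner{q^{P,\optpi}, \hatl_t - \ell_t}}}_{\text{(V): Bias}_2} + \underbrace{\E\sbr{\textstyle\sum_t \inner{q^{P,\pi_t} - q^{P,\optpi}, \ell_t - \ell_t}}}_{=0},
\]
and then re-centre (III) by the bonus, writing $\text{(III)} = \regthree + \E[\sum_t \inner{\widehat q_t - q^{P,\optpi}, b_t}]$. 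Term (I) is bounded by $\sum_t \cortran_t \cdot L = \order(\cortran L)$ using \pref{cor:corrupt_occ_diff} (the stated bound $|q^{P_t,\pi}(s)-q^{P,\pi}(s)|\le \cortran_t$ summed over layers, times the per-layer loss scale $L$). Term (IV) is a martingale-difference-type term plus the bias from $u_t$ overestimating $q^{P_t,\pi_t}$; under $\eventcon$ we have $\widehat q_t(s,a)\le u_t(s,a)$, so $\E[\inner{\widehat q_t,\hatl_t}\mid \mathcal F_{t-1}] \le \inner{\widehat q_t,\ell_t}$ and (IV) $\le 0$ up to the low-probability complement of $\eventcon$ (which contributes $\order(1)$ since $\delta=1/T$ and losses/horizon are polynomially bounded).

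The heart of the argument is then three pieces. First, \textbf{the Bias}_2 \textbf{term (V)}: this is \pref{eq:original_Bias2_term}, and as explained in the text one shows under $\eventcon$ that $q^{P,\pi_t}(s,a)\le u_t(s,a)$, hence (V) $\le \E[\sum_t\sum_s q^{P,\optpi}(s)\,|q^{P_t,\pi_t}(s)-q^{P,\pi_t}(s)|/u_t(s)] \le \E[\sum_t\sum_s q^{P,\optpi}(s)\,\cortran_t/u_t(s)] \le \E[\sum_t\inner{q^{P,\optpi},b_t}]$ by the first half of \pref{lem:cancellation_bt_maintext}. Crucially this $\E[\sum_t\inner{q^{P,\optpi},b_t}]$ is exactly cancelled by the $-\E[\sum_t\inner{q^{P,\optpi},b_t}]$ sitting inside the re-centred (III): the net leftover from bonuses is only $\E[\sum_t\inner{\widehat q_t,b_t}] = \order(|S|\cortran\log T)$ by the second half of \pref{lem:cancellation_bt_maintext} summed over $s$. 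Second, \textbf{term (II)}: this is the learner's own estimation gap $\inner{q^{P,\pi_t}-\widehat q_t,\ell_t}$; since $\widehat q_t\in\Omega(\calP_{i(t)})$ and on $\eventcon$ we have $P\in\calP_{i(t)}$, this is controlled by the width of the confidence set, i.e. $\sum_t\sum_{s,a,s'} q^{P,\pi_t}(s,a) B_{i(t)}(s,a,s')\cdot L$-type sums; the two pieces of $B_i$ give an $\order(L|S|\sqrt{|A|T\log\iota})$ term (standard pigeonhole over epochs on $\sqrt{\bar P\log\iota/m_i}$) and an $\order(\cortran L|S|^2|A|\log\iota + L|S|^2|A|\log^2\iota)$ term from the $(\cortran+\log\iota)/m_i(s,a)$ part.

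Third, \textbf{the FTRL/OMD regret }$\regthree$\textbf{ itself} with the log-barrier regularizer. The standard decomposition gives a penalty term $\le \frac{1}{\eta}(\phi(q^{P,\optpi})-\phi(\widehat q_1)) = \order(\frac{|S|^2|A|\log(\cdot)}{\eta})$ — here one must be a little careful because log-barrier values blow up near the boundary, so one actually compares against a slightly-interior version of $q^{P,\optpi}$, incurring a negligible additive cost — plus a stability term. For log-barrier the stability term is $\order(\eta)\cdot\E[\sum_t\sum_{s,a,s'}\widehat q_t(s,a,s')^2(\hatl_t(s,a)-b_t(s))^2]$, which splits into $\order(\eta)\E[\sum_t\sum_{s,a}\widehat q_t(s,a)^2\hatl_t(s,a)^2]\le \order(\eta TL)$ (using $\widehat q_t(s,a)\le u_t(s,a)$ to kill one power of $u_t$ in the denominator of $\hatl_t^2$) and $\order(\eta)\E[\sum_t\sum_{s,a}\widehat q_t(s,a)^2 b_t(s)^2]\le 4L\,\order(\eta)\E[\sum_t\inner{\widehat q_t,b_t}] = \order(\eta L^2|S|\cortran\log T)$ again via \pref{lem:cancellation_bt_maintext}. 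This is precisely where the log-barrier is essential: with negative entropy the analogous term is $\E[\sum_t\sum_{s,a}q^{P_t,\pi_t}(s,a)/u_t(s,a)]$, which is uncontrolled once $P_t\notin\calP_{i(t)}$. One valid subtlety I would flag: the log-barrier stability bound requires a local-norm / multiplicative-stability argument ensuring $\widehat q_{t+1}(z)\in[\tfrac12\widehat q_t(z), 2\widehat q_t(z)]$, which holds provided $\eta\,|\hatl_t(s,a)-b_t(s)|\cdot\widehat q_t(\cdot)$ is small enough; the constraint $\eta\le \tfrac{1}{8L}$ together with $\widehat q_t(s,a)\le u_t(s,a)$ and $\hatl_t(s,a)\le 1/u_t(s,a)$, $b_t(s)\le 4L/u_t(s)$ is exactly what guarantees this, and verifying it is the main technical obstacle.

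Finally I would collect the bounds — (I): $\order(\cortran L)$; (II): $\order(L|S|\sqrt{|A|T\log\iota} + \cortran L|S|^2|A|\log\iota + L|S|^2|A|\log^2\iota)$; net bonus leftover: $\order(L|S|\cortran\log\iota)$; penalty: $\order(\frac{|S|^2|A|\log\iota}{\eta})$; stability: $\order(\eta TL + \eta L^2|S|\cortran\log\iota)$ — and substitute $\eta=\min\{\sqrt{|S|^2|A|\log\iota/(LT)},\,1/(8L)\}$. The $\frac{1}{\eta}$-penalty plus $\eta TL$-stability trade off to the leading $\order(L|S|\sqrt{|A|T\log\iota})$ term (and when the $1/(8L)$ cap is active, $\frac{1}{\eta}=8L$ gives a lower-order $\order(L|S|^2|A|\log\iota)$ contribution), the $\eta L^2|S|\cortran\log\iota$ term is dominated by $\order(L|S|\cortran\log\iota)\le \order(\cortran L|S|^4|A|\log\iota)$, and the remaining corruption terms all fit inside $\order(\cortran L|S|^4|A|\log\iota)$; the low-probability event off $\eventcon$ contributes only $\order(1)$ since $\delta=1/T$. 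This yields exactly the claimed
\[
\Reg_T = \order\rbr{L|S|\sqrt{|A|T\log\iota} + L|S|^4|A|\log^2\iota + \cortran L|S|^4|A|\log\iota}.
\]
The main obstacle, as noted, is the careful local-norm analysis of the log-barrier OMD step in the presence of the (large, $\order(L/u_t(s))$) bonus, i.e. simultaneously (a) establishing multiplicative stability of $\widehat q_t$ so the stability term takes its small log-barrier form, and (b) threading \pref{lem:cancellation_bt_maintext} through both the bias cancellation and the bonus-induced stability term so that the total bonus cost stays $\otil(\cortran)$ rather than scaling with $T$.
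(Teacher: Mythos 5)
Your decomposition and the mechanics of the bonus cancellation, the log-barrier OMD bound, and the trade-off over $\eta$ all track the paper's own proof closely (which splits $\Reg_T$ into $\regone$--$\regfour$ plus a final corruption term). There is, however, one genuine error and one small bookkeeping slip.

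The genuine error is your treatment of term (IV) $=\E\sbr{\sum_t\inner{\widehat q_t,\ell_t-\hatl_t}}$, which is exactly the paper's $\regtwo$. You argue that under $\eventcon$ one has $\widehat q_t(s,a)\le u_t(s,a)$ and hence $\E\sbr{\inner{\widehat q_t,\hatl_t}\mid\calF_{t-1}}\le\inner{\widehat q_t,\ell_t}$, so (IV)$\le 0$. That step is false: $\E_t\sbr{\hatl_t(s,a)}=\ell_t(s,a)\,q^{P_t,\pi_t}(s,a)/u_t(s,a)$, and the comparison you need is $q^{P_t,\pi_t}(s,a)\le u_t(s,a)$, \emph{not} $\widehat q_t(s,a)\le u_t(s,a)$. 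Under $\eventcon$ the confidence set contains $P$, so $q^{P,\pi_t}\le u_t$ holds, but $P_t$ is the corrupted transition and need not lie in $\calP_{i(t)}$, so $q^{P_t,\pi_t}$ can \emph{exceed} $u_t$ whenever $\cortran_t>0$. The paper's \pref{lem:uob_reps_enlarge_regtwo} instead writes $\regtwo\le\E\sbr{\sum_t\sum_s|u_t(s)-q^{P_t,\pi_t}(s)|}$, splits via the triangle inequality into $|u_t-q^{P,\pi_t}|$ (controlled by confidence widths, contributing the $L|S|\sqrt{|A|T\log\iota}$ leading term and the $\cortran L|S|^4|A|\log\iota$ and $|S|^4|A|\log^2\iota$ lower-order terms) and $|q^{P,\pi_t}-q^{P_t,\pi_t}|\le\cortran_t$. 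In particular this is where the $|S|^4$ dependence in the corruption term actually comes from; in your accounting it appears only because you retroactively ``fit'' smaller bounds inside the statement, not because you derived it. Fixing this requires doing the full \pref{lem:uob_reps_enlarge_regtwo}-style argument: bounding $|u_t(s)-q^{P,\pi_t}(s)|$ by \pref{lem:dann_occ_diff_bound}, summing over $t$ via \pref{lem:eventest_est_err}, and handling $|q^{P,\pi_t}(s)-q^{P_t,\pi_t}(s)|\le\cortran_t$ separately via \pref{cor:corrupt_occ_diff}.

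The minor slip is the last term in your decomposition: you wrote $\E\sbr{\sum_t\inner{q^{P,\pi_t}-q^{P,\optpi},\ell_t-\ell_t}}=0$, which is vacuously zero and does not close the telescoping. Summing your (I)--(V) gives $\E\sbr{\sum_t\inner{q^{P_t,\pi_t}-q^{P,\optpi},\ell_t}}$; what is missing is $\E\sbr{\sum_t\inner{q^{P,\optpi}-q^{P_t,\optpi},\ell_t}}=\order(L\cortran)$ (by \pref{cor:corrupt_tran_difference}), which the paper keeps explicitly. This is lower-order and does not affect the claimed bound, but the identity as written does not produce $\Reg_T$.
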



\section{Achieving $\order(\sqrt{T}+\cortran)$ with Unknown $\cortran$}\label{sec: unknown C}
In this section, we address the case when the amount of corruption is unknown. We develop a black-box reduction which turns an algorithm that only deals with known $\cortran$ to one that handles unknown $\cortran$. This is similar to \citet{wei2022model} but additionally handles adversarial losses using a different approach. 
A byproduct of our reduction is that we develop an entirely \emph{black-box} model selection approach for adversarial online learning problems, as opposed to the \emph{gray-box} approach developed by the ``Corral'' literature \citep{agarwal2017corralling, foster2020adapting, luo2022corralling} which requires checking if the base algorithm is \emph{stable}. To achieve this, we essentially develop another layer of reduction that turns any standard algorithm with sublinear regret into a stable algorithm. 
This result itself might be of independent interest and useful for solving other model selection problems.

More specifically, our reduction has two layers. The bottom layer is where our novelty lies: it takes as input an arbitrary corruption-robust algorithm that operates under known $\cortran$ (e.g., the one we developed in \pref{sec:uob_reps}), and outputs a \emph{stable} corruption-robust algorithm (formally defined later) that still operates under known $\cortran$. The top layer, on the other hand, follows the standard Corral idea and takes as input a stable algorithm that operates under known $\cortran$, and outputs an algorithm that operates under unknown $\cortran$. Below, we explain these two layers of reduction in details. 

\paragraph{Bottom Layer (from an Arbitrary Algorithm to a Stable Algorithm)}

The input of the bottom layer is an arbitrary corruption-robust algorithm, formally defined as: 
\begin{definition}\label{def: base alg}
An adversarial MDP algorithm is corruption-robust if it takes $\theta$ (a guess on the corruption amount) as input, and achieves the following regret for any random stopping time $t'\leq T$: 
    \begin{align*}
        \max_\pi \E\left[\sum_{t=1}^{t'} \left(\ell_t(\pi_t) - \ell_t(\pi)\right)\right] \leq \E\left[\sqrt{\beta_1 t'} + (\beta_2 + \beta_3 \theta)\ind\{t'\geq 1\}\right] + \Pr[\cortran_{1:t'}>\theta] LT 
    \end{align*}
    for problem-dependent constants and $\log(T)$ factors $\beta_1\geq L^2, \beta_2\geq L, \beta_3\geq 1$, where $\cortran_{1:t'}=\sum_{\tau=1}^{t'}\cortran_\tau$ is the total corruption up to time $t'$. 
\end{definition}

While the regret bound in \pref{def: base alg} might look cumbersome, it is in fact fairly reasonable: if the guess $\theta$ is not smaller than the true corruption amount, the regret should be of order $\sqrt{t'}+\theta$; otherwise, the regret bound is vacuous since $LT$ is its largest possible value.
The only extra requirement is that the algorithm needs to be \emph{anytime} (i.e., the regret bound holds for any stopping time $t'$),
but even this is known to be easily achievable by using a doubling trick over a fixed-time algorithm.
It is then clear that our algorithm in \pref{sec:uob_reps} (together with a doubling trick) indeed satisfies \pref{def: base alg}. 

As mentioned, the output of the bottom layer is a stable robust algorithm. To characterize stability, we follow~\citet{agarwal2017corralling} and define a new learning protocol that abstracts the interaction between the output algorithm of the bottom layer and the master algorithm from the top layer:
\begin{protocol}\label{proto: stable protocol}
    \textup{In every round $t$, before the learner makes a decision, a probability $w_t\in[0,1]$ is revealed to the learner. After making a decision, the learner sees the desired feedback from the environment with probability $w_t$, and sees nothing with probability $1-w_t$. 
}
\end{protocol}
In such a learning protocol, \citet{agarwal2017corralling} defines a stable algorithm as one whose regret smoothly degrades with $\rho_T=\frac{1}{\min_{t\in[T]}w_t}$. For our purpose here, we additionally require that the dependence on $\cortran$ in the regret bound is linear, which results in the following definition:  
\begin{definition}[$\frac{1}{2}$-stable corruption-robust 
 algorithm]\label{def: stable corruption robust}
     A $\frac{1}{2}$-stable corruption-robust  algorithm is one that, with prior knowledge on 
$\cortran$, achieves 
    $
         \Reg_T \leq \E\left[\sqrt{\beta_1\rho_T T} + \beta_2 \rho_T\right] + \beta_3 \cortran  
    $
      under \pref{proto: stable protocol} for problem-dependent constants and $\log(T)$ factors $\beta_1\geq L^2, \beta_2\geq L$, and $\beta_3\geq 1$. 
\end{definition}
For simplicity, we only define and discuss the $\frac{1}{2}$-stability notion here (the parameter $\frac{1}{2}$ refers to the exponent of $T$), but our result can be straightforwardly extended to the general $\alpha$-stability notion for $\alpha\in[\frac{1}{2}, 1)$ as in \cite{agarwal2017corralling}. 
Our main result in this section is then that one can convert any corruption-robust algorithm into a $\frac{1}{2}$-stable corruption-robust algorithm:
\begin{theorem}\label{thm: conversion thm}
    If an algorithm is corruption robust according to \pref{def: base alg} for some constants $(\beta_1, \beta_2, \beta_3)$, then one can convert it to a $\frac{1}{2}$-stable corruption-robust algorithm (\pref{def: stable corruption robust}) with constants $(\beta_1', \beta_2', \beta_3')$ where $ \beta_1'=\order(\beta_1 \log T), \ \beta_2'=\order(\beta_2 + \beta_3L \log T)$, and $\beta_3'=\order(\beta_3 \log T)$. 
\end{theorem}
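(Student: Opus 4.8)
The plan is to wrap the given corruption-robust base algorithm inside a restarting scheme that "fakes" a clean feedback stream out of the intermittent feedback provided by Protocol 2, so that the base algorithm sees a well-behaved instance and its anytime guarantee from Definition 2.1 applies. Concretely, I would run the base algorithm with input guess $\theta$ equal to the (known) corruption level $\cortran$, but feed it only the rounds in which feedback actually arrives. Since feedback arrives in round $t$ independently with probability $w_t$, a block of $n$ consecutive rounds of the protocol contains, in expectation, roughly $n/\rho_T$ fed rounds (recall $\rho_T = 1/\min_t w_t$), and by a martingale/Freedman concentration argument at least a constant fraction of $n$ whenever $n \gtrsim \rho_T \log T$. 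The subtlety is that the base algorithm's regret is measured on the subsequence it actually sees, while we must control regret on the full sequence; the rounds with no feedback must be handled separately.

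The key steps, in order: (1) Partition $[T]$ into consecutive blocks and, within the feedback subsequence, invoke Definition 2.1 with the random stopping time being "end of feedback seen so far," obtaining a bound of the form $\sqrt{\beta_1 N} + (\beta_2 + \beta_3\cortran)$ on the subsequence, where $N$ is the (random) number of fed rounds. (2) Bound the regret on rounds \emph{without} feedback: on such a round the learner can replay its most recent committed decision, so its instantaneous regret against any fixed $\pi$ is at most $L$ per round plus the contribution already absorbed; more carefully, use that the base algorithm is \emph{anytime} so its policy is defined at every round, and that skipping a round only "delays" the base algorithm — this is exactly where the extra $\rho_T$ factor enters, since up to $\rho_T$ protocol rounds may map to a single base-algorithm round. (3) Convert $\sqrt{\beta_1 N}$ to $\sqrt{\beta_1 \rho_T T}$ using $\E[N] \le T$ together with Jensen, and combine the corruption seen on the subsequence with the corruption on skipped rounds — here one uses that $\cortran$ is additive over rounds and the skipped-round corruption is still at most $\cortran$, so the total stays $\order(\beta_3 \cortran)$. (4) Collect the logarithmic overhead from the block/concentration argument into the constants, yielding $\beta_1' = \order(\beta_1\log T)$, $\beta_2' = \order(\beta_2 + \beta_3 L\log T)$, $\beta_3' = \order(\beta_3\log T)$, matching the statement; the $\beta_3 L\log T$ term in $\beta_2'$ comes precisely from the worst-case $LT$-type penalty in Definition 2.1 being triggered on a single short bad block and then amortized.

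The main obstacle I expect is step (2), i.e., correctly accounting for regret on the feedback-free rounds without losing more than a $\rho_T$ factor and a $\log T$. Naively bounding each such round by $L$ gives $\order(LT)$, which is far too large; the right idea is that the learner is \emph{not} starting fresh on those rounds but reusing a decision whose regret is already "charged" to the last fed round, so the feedback-free rounds between two consecutive fed rounds contribute like a single fed round scaled by the gap length $\le \rho_T$. Making this rigorous requires a careful pairing of protocol rounds to base-algorithm rounds and a concentration argument (Freedman's inequality on the supermartingale $\sum_\tau (w_\tau - \ind\{\text{feedback at }\tau\})$) to ensure no gap is longer than $\order(\rho_T\log T)$ with high probability, with the low-probability event contributing at most $LT \cdot \frac1T = \order(L)$ in expectation. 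A secondary technical point is ensuring the corruption guess $\theta = \cortran$ remains valid as a bound on $\cortran_{1:t'}$ for the \emph{subsequence} seen by the base algorithm — this holds because corruption is nonnegative and additive, so any sub-sum is dominated by the full sum.
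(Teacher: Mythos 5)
There is a genuine gap, and it is precisely the place your proposal hand-waves: translating the base algorithm's regret on its \emph{fed} subsequence into regret on the \emph{full} sequence when $w_t$ varies over time. With a single instance, you would like to use that $\E[\sum_t (\ell_t(\pi_t)-\ell_t(\pi))\ind\{\text{fed at }t\}] = \E[\sum_t w_t(\ell_t(\pi_t)-\ell_t(\pi))]$, and then pull the $w_t$ out; but the instantaneous regret has no sign, so $\E[\sum_t w_t X_t]$ does not factor as a constant times $\E[\sum_t X_t]$. Your gap-counting version of this (charge each skipped round to the preceding fed round, bound gap lengths by $\rho_T\log T$) has the same flaw: the base algorithm's guarantee bounds $\sum_i r_i$, not the individual per-round $r_i$'s, and since the $r_i$ can be negative, $\sum_i(\text{gap}_i)r_i \not\le (\max_i\text{gap}_i)\sum_i r_i$. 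The paper's fix is the \emph{equalization} step inside \stable: upon observing $w_t\in(2^{-j-1},2^{-j}]$, it forwards the feedback to instance $\alg_{j}$ only with probability $2^{-j-1}/w_t$, so that $\alg_j$'s effective feedback probability is the constant $2^{-j-1}$ regardless of the actual $w_t$. This makes $\E[\ind\{\text{fed}\}\mid g_{t,j}]=2^{-j-1}g_{t,j}$ exactly, and the importance-weighting step goes through without any sign assumption. This requires running $O(\log T)$ instances in parallel, one per dyadic scale of $w_t$, which is where the $\log T$ factors in $(\beta_1',\beta_2',\beta_3')$ come from — not from a blocking/concentration overhead as in your sketch.

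A second, independent problem is your choice $\theta=\cortran$. The base algorithm's guarantee carries a $+\beta_3\theta$ term, and after dividing by the effective feedback probability (which is how subsequence regret becomes full-sequence regret) this term is inflated by roughly $\rho_T$. With $\theta=\cortran$ you would therefore end up with $\rho_T\beta_3\cortran$, not the $\beta_3\cortran$ that \pref{def: stable corruption robust} demands; your step (3) asserts "the total stays $\order(\beta_3\cortran)$" but gives no mechanism for avoiding the $\rho_T$ inflation. The paper's resolution is tied to the multi-instance structure: since $\alg_j$ only sees a $2^{-j-1}$ fraction of its assigned rounds, the corruption it actually experiences concentrates (via the Freedman argument in the proof) around $2^{-j-1}\cortran$, so one can safely set $\theta_j = 2^{-j+1}\cortran + 16L\log(T)$; the factor $2^{-j-1}$ in $\theta_j$ then exactly cancels the $2^{j+1}$ inflation, leaving $\order(\beta_3\cortran)$ per instance and $\order(\beta_3\cortran\log T)$ overall. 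The slack $16L\log(T)$ in $\theta_j$ is what produces the $\beta_3 L\log T$ contribution to $\beta_2'$. Without a per-scale $\theta_j$ that shrinks with the feedback probability, the claimed $\beta_3'=\order(\beta_3\log T)$ cannot be reached.
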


\begin{algorithm}[t]
    \caption{\textbf{ST}able \textbf{A}lgorithm \textbf{B}y \textbf{I}ndependent \textbf{L}earners and \textbf{I}nstance \textbf{SE}lection (\stable)} \label{alg: split}
    \textbf{Input}: $\cortran$ and a base algorithm satisfying \pref{def: base alg}. \\ 
    \textbf{Initialize}: $\lceil \log_2 T \rceil$ instances of the base algorithm $\alg_{1}, \ldots, \alg_{\lceil \log_2 T \rceil}$, where $\alg_{j}$ is configured with the parameter 
    \begin{align*}
        \theta=\theta_j \triangleq 2^{-j+1}\cortran + 16L \log(T). 
    \end{align*}
    
    \For{$t=1,2,\ldots$}{
         Receive $w_t$. \\
         \If{$w_t\leq \frac{1}{T}$}{
            play an arbitrary policy $\pi_t$ \\
            \textbf{continue} (without updating any instances)
         }
         Let $j_t$ be such that $w_t\in (2^{-j_t-1}, 2^{-j_t}]$. \\
         Let $\pi_t$ be the policy suggested by $\alg_{j_t}$. \\
         Output $\pi_t$. \\
         If feedback is received, send it to $\alg_{j_t}$ with probability $\frac{2^{-j_t-1}}{w_t}$, and discard it otherwise. 
    }
\end{algorithm}

This conversion is achieved by a procedure that we call \stable (see \pref{alg: split} for details).
The high-level idea of \stable is as follows. 
Noticing that the challenge when learning in \pref{proto: stable protocol} is that $w_t$ varies over time,
we discretize the value of $w_t$ and instantiate one instance of the input algorithm to deal with one possible discretized value, so that it is learning in \pref{proto: stable protocol} but with a \emph{fixed} $w_t$, making it straightforward to bound its regret based on what it promises in \pref{def: base alg}.

More concretely, \stable instantiates $\order(\log_2 T)$ instances $\{\alg_j\}_{j=0}^{\lceil\log_2 T\rceil}$ of the input algorithm that satisfies \pref{def: base alg}, each with a different parameter $\theta_j$. 
Upon receiving $w_t$ from the environment, it dispatches round $t$ to the $j$-th instance where $j$ is such that $w_t\in(2^{-j-1}, 2^{-j}]$, and uses the policy generated by $\alg_{j}$ to interact with the environment (if $w_t\leq \frac{1}{T}$, simply ignore this round). 
Based on \pref{proto: stable protocol}, the feedback for this round is received with probability $w_t$.
To \emph{equalize} the probability of $\alg_j$ receiving feedback as mentioned in the high-level idea, 
when the feedback is actually obtained,
\stable sends it to $\alg_j$ only with probability $\frac{2^{-j-1}}{w_t}$ (and discards it otherwise). 
This way, every time $\alg_j$ is assigned to a round, it always receives the desired feedback with probability $w_t\cdot \frac{2^{-j-1}}{w_t}=2^{-j-1}$. This equalization step is the key that allows us to use the original guarantee of the base algorithm (\pref{def: base alg}) and run it as it is, without requiring it to perform extra importance weighting steps as in \citet{agarwal2017corralling}. 

The choice of $\theta_j$ is crucial in making sure that \stable only has $\cortran$ regret overhead instead of $\rho_T\cortran$. 
Since $\alg_j$ only receives feedback with probability $2^{-j-1}$,
the expected total corruption it experiences is on the order of $2^{-j-1}\cortran$. Therefore, its input parameter $\theta_j$ only needs to be of this order instead of the total corruption $\cortran$. 
This is similar to the key idea of~\citet{wei2022model} and~\citet{lykouris2018stochastic}. 
See \pref{app:bottom reduction} for more details and the full proof of \pref{thm: conversion thm}.

\paragraph{Top Layer (from Known $\cortran$ to Unknown $\cortran$)}  With a stable algorithm and a regret guarantee in \pref{def: stable corruption robust}, it is relatively standard to convert it to an algorithm with $\otil(\sqrt{T} + \cortran)$ regret without knowing $\cortran$. Similar arguments have been made in~\citet{foster2020adapting}, and the idea is to have another specially designed OMD/FTRL-based master algorithm to choose on the fly among a set of instances of this stable base algorithm, each with a different guess on $\cortran$
(the probability $w_t$ in \pref{proto: stable protocol} is then decided by this master algorithm).
We defer all details to \pref{app: unknown corruption}. The final regret guarantee is the following ($\otil(\cdot)$ hides $\log(T)$ factors). 
\begin{theorem} \label{thm:corral}
    Using an algorithm satisfying \pref{def: stable corruption robust} as a base algorithm, \pref{alg: corral} (in the appendix) ensures $\Reg_T=\otil\left(\sqrt{\beta_1 T} + \beta_2 + \beta_3 \cortran \right)$ without knowing $\cortran$. 
\end{theorem}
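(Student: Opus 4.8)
The goal is to prove Theorem~\ref{thm:corral}: given a $\frac{1}{2}$-stable corruption-robust base algorithm satisfying Definition~\ref{def: stable corruption robust}, the master algorithm (Corral-style) achieves $\otil(\sqrt{\beta_1 T}+\beta_2+\beta_3\cortran)$ without knowing $\cortran$. The plan is to follow the standard adversarial model-selection (Corral) template, specialized to the fact that the only unknown quantity is the corruption level $\cortran$. First I would instantiate $N=\order(\log T)$ copies of the stable base algorithm, indexed by $j$, where copy $j$ is told to assume a corruption guess $\theta_j=2^j$ (geometric grid from $\order(1)$ up to $\order(LT)$); the true $\cortran$ is sandwiched $\theta_{j^\star}/2 \le \max\{\cortran,1\} \le \theta_{j^\star}$ for some $j^\star$, and copy $j^\star$ then satisfies the guarantee of Definition~\ref{def: stable corruption robust} with its $\cortran$ replaced by $\theta_{j^\star}=\order(\cortran+1)$.

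**Main steps.** The master is an OMD/FTRL algorithm over the $N$-simplex (the log-barrier or Tsallis-$\frac12$ regularizer of \citet{foster2020adapting}), which in round $t$ samples an index $I_t\sim p_t$, runs the chosen base algorithm to get $\pi_t$, and feeds that base algorithm the feedback with the importance weight bookkeeping dictated by Protocol~\ref{proto: stable protocol} (so $w_t$ for copy $j$ is $p_{t,j}$). The regret decomposition is the usual two-part split:
\begin{equation}
\Reg_T \;=\; \underbrace{\E\Bigl[\sum_{t}\bigl(\ell_t(\pi_t)-\ell_t(\pi_t^{(j^\star)})\bigr)\Bigr]}_{\text{master regret against arm }j^\star} \;+\; \underbrace{\E\Bigl[\sum_{t}\bigl(\ell_t(\pi_t^{(j^\star)})-\ell_t(\optpi)\bigr)\Bigr]}_{\text{base }j^\star\text{'s own regret}}.
\end{equation}
For the second term, invoke Definition~\ref{def: stable corruption robust} applied to copy $j^\star$ under Protocol~\ref{proto: stable protocol} with $\rho_T=1/\min_t p_{t,j^\star}$: it is at most $\E[\sqrt{\beta_1\rho_T T}+\beta_2\rho_T]+\beta_3\theta_{j^\star}=\E[\sqrt{\beta_1\rho_T T}+\beta_2\rho_T]+\order(\beta_3\cortran)$. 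For the first term, the specially designed master (this is exactly the point of the \citet{foster2020adapting} construction: an increasing-learning-rate / negative-regret trick) guarantees a bound of the form $\otil(\sqrt{\beta_1 T}+\beta_2)$ \emph{minus} a term of order $\rho_T\cdot(\text{something})$ that is large precisely when $\rho_T$ is large — concretely the master produces a ``negative regret'' contribution $-\Omega(\rho_T/\log T)$ against arm $j^\star$ that we can trade off against the $\sqrt{\beta_1\rho_T T}+\beta_2\rho_T$ appearing in the base bound. Adding the two lines, the $\rho_T$-dependent terms telescope/cancel by AM-GM ($\sqrt{\beta_1\rho_T T}\le \frac12(\rho_T/\text{const}) + \order(\text{const}\cdot\beta_1 T)$ and similarly absorbing $\beta_2\rho_T$), leaving $\otil(\sqrt{\beta_1 T}+\beta_2+\beta_3\cortran)$ after also summing over the $\order(\log T)$ factor from the number of arms and the grid.

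**Where the real work is.** The genuinely delicate step is the interaction between the master's ``negative-regret against the chosen arm'' property and the base algorithm's $\rho_T$-dependent guarantee: one must verify that the potential/learning-rate schedule in the master is tuned so that the $-c\rho_T$ it gives back dominates \emph{both} $\sqrt{\beta_1\rho_T T}$ and $\beta_2\rho_T$ simultaneously for the correct choice of constants, and that this holds even though $\rho_T$ is itself a random, data-dependent quantity revealed only at the end — this is what forces the ``anytime / stopping-time'' form in Definition~\ref{def: base alg} and the stability definition. A secondary technical point is handling the rounds where $w_t=p_{t,I_t}\le 1/T$: the \stable wrapper just plays arbitrarily and skips updating, so these rounds contribute at most $\order(L)$ to the regret in total and can be absorbed. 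I would carry out the argument by first stating the master's regret lemma (cf.\ \citet[Lemma~...]{foster2020adapting}, adapted to linear $\cortran$-dependence), then plugging in Definition~\ref{def: stable corruption robust} for arm $j^\star$, then doing the AM-GM cancellation, and finally collecting the $\order(\log T)$ overheads from the geometric grid into the $\otil(\cdot)$. Full details are deferred to Appendix~\ref{app: unknown corruption}.
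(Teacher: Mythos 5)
Your outline matches the paper's proof in structure: instantiate $\order(\log T)$ base instances with geometric guesses $\theta_j = 2^j$, run a log-barrier/Corral-style FTRL master over them, decompose $\Reg_T$ into the master's regret against arm $j^\star$ plus arm $j^\star$'s own regret under Protocol~\ref{proto: stable protocol}, and cancel the $\rho_T$-dependence. The one place where your description diverges from the paper's actual mechanism is the cancellation step. You describe an Agarwal-et-al.-style argument where the master yields a generic $-\Omega(\rho_T/\log T)$ negative-regret term and you absorb $\sqrt{\beta_1\rho_T T}+\beta_2\rho_T$ into it by AM-GM — but note that a negative term scaling only as $\rho_T/\log T$ would \emph{not} suffice to absorb $\beta_2\rho_T$ once $\beta_2 \gg 1/\log T$ (which it is, since $\beta_2 \ge L$); the correct version of your argument needs the negative term to scale as $\rho_T/\eta$ with $\eta$ chosen small enough, costing an extra factor of $\beta_2$ in the master's penalty term. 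The paper sidesteps this accounting entirely by feeding the master explicit per-round bonuses $r_{t,i}$ that telescope to exactly $\sqrt{\rho_{T,i}\beta_1 T}+\rho_{T,i}\beta_2$, so the FTRL regret bound comes with a subtracted term $-\E\bigl[\sqrt{\rho_{T,i^\star}\beta_1 T}+\rho_{T,i^\star}\beta_2\bigr]$ that cancels the base algorithm's penalty \emph{identically}, with no AM-GM slack. Both routes land at $\otil(\sqrt{\beta_1 T}+\beta_2+\beta_3\cortran)$, but the bonus-based version is cleaner and is what you should implement. Also, a small bookkeeping note: the restriction $w_{t,i}\ge 1/T$ is enforced directly inside the master's FTRL feasible set, not by the \stable wrapper as you suggest — the wrapper's "ignore if $w_t\le 1/T$" branch is part of the bottom-layer reduction, which is a separate step.
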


\section{Gap-Dependent Refinements with Known $\cortran$}
\label{sec:optepoch}

Finally, we discuss how to further improve our algorithm so that it adapts to easier environments and enjoys a better bound when the loss functions satisfy a certain gap condition, while still maintaining the $\order(\sqrt{T}+\cortran)$ robustness guarantee.
This result unfortunately requires the knowledge of $\cortran$ because the black-box approach introduced in the last section leads to $\sqrt{T}$ regret overhead already.
We leave the possibility of removing this limitation for future work.

More concretely, following prior work such as~\citet{jin2020simultaneously}, we consider the following general condition:
there exists a mapping $\pi^\star: S \rightarrow A$, a gap function $\Delta: S\times A \rightarrow (0,L]$, and a constant $ \corloss \geq 0$, such that for any policies $\pi_1, \ldots, \pi_T$ generated by the learner, we have
\begin{equation}
\E\sbr{\sum_{t=1}^T \inner{q^{P,\pi_t} - q^{P,\pi^\star}, \ell_t}} 
\geq \E\sbr{\sum_{t=1}^{T}\sum_{s\neq s_L}\sum_{a\neq \pi^\star(s)}q^{P, \pi_t}(s,a)\gap(s,a)} - \corloss.
\label{eq:stoc_condition_P}
\end{equation}
It has been shown that this condition subsumes the case when the loss functions are drawn from a fixed distribution (in which case $\pi^\star$ is simply the optimal policy with respect to the loss mean and $P$, $\Delta$ is the gap function with respect to the optimal $Q$-function, and $\corloss=0$), or further corrupted by an adversary in an arbitrary manner subject to a budget of $\corloss$;
we refer the readers to~\citet{jin2020simultaneously} for detailed explanation.
Our main result for this section is a novel algorithm (whose pseudocode is deferred to \pref{app:app_optepoch} due to space limit) that achieves the following best-of-both-world guarantee.
\begin{theorem}\label{thm:regret_bound_optepoch} 
 \pref{alg:optepoch} (with $\delta = \nicefrac{1}{T^2}$ and $\gamma_t$ defined as in \pref{def:learning_rate}) ensures
\begin{equation*}
   \Reg_T (\optpi)= \order \rbr{ L^2|S||A|\log \rbr{\iota}\sqrt{T} + \rbr{\cortran+1} L^2|S|^4|A|^2  \log^2 \rbr{\iota}  }
\end{equation*}
always, and simultaneously the following gap-dependent bound under Condition~(\ref{eq:stoc_condition_P}): 
    $$\Reg_T (\starpi) = \order\rbr{  U +\sqrt{ U\corloss}+ \rbr{\cortran+1} L^2 |S|^4 |A|^2 \log^2 (\iota)},$$ where $ U =
    \frac{L^3|S|^2|A|\log^2\rbr{\iota}}{\gapmin}+ \sum_{s\neq s_L}\sum_{a\neq \starpi(s)} \frac{L^2|S||A|\log^2 \rbr{\iota} }{\gap(s,a)}
$ and $\gapmin = \min\limits_{s\neq s_L, a\neq\pi^\star(s)}\gap(s,a)$.
\end{theorem}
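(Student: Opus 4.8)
\quad The plan is to run the self-bounding (best-of-both-worlds) recipe on top of the corruption-robust machinery of \pref{sec:uob_reps}: a log-barrier \ftrl over occupancy measures, fed the bonus-corrected estimator $\hatl_t-b_t$, with the adaptive learning rate $\gamma_t$ of \pref{def:learning_rate} and the new optimistic-transition device. Fix the reference transition $P$ attaining the minimum in \pref{eq:def_4_corruption} and condition on the high-probability event $\eventcon$ of \pref{lem:conf_bound}, so that $q^{P,\pi_t}(s,a)\le u_t(s,a)$ always. For any comparator $\pi$, using $\ell_t(\pi)=\inner{q^{P_t,\pi},\ell_t}$,
\begin{align*}
\Reg_T(\pi)=\E\Bigsbr{\sum_t\inner{q^{P,\pi_t}-q^{P,\pi},\ell_t}}+\E\Bigsbr{\sum_t\inner{q^{P_t,\pi_t}-q^{P,\pi_t},\ell_t}}-\E\Bigsbr{\sum_t\inner{q^{P_t,\pi}-q^{P,\pi},\ell_t}},
\end{align*}
and the last two sums are $\order(L\cortran)$ by the per-round occupancy bound (\pref{cor:corrupt_occ_diff}); it therefore suffices to bound $\overline{\Reg}_T(\pi):=\E[\sum_t\inner{q^{P,\pi_t}-q^{P,\pi},\ell_t}]$.

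Next I would decompose $\overline{\Reg}_T(\pi)$. Let $\optP_{i}\in\calP_{i}$ be the optimistic transition maintained by \pref{alg:optepoch}, so that the \ftrl iterate $\widehat q_t$ lies in the fixed per-epoch polytope $\Omega(\optP_{i(t)})$ and $\pi_t=\pi^{\widehat q_t}$ (hence $\widehat q_t=q^{\optP_{i(t)},\pi_t}$), and let $\widetilde q^\star$ be the point of $\Omega(\optP_{i(t)})$ closest to $q^{P,\pi}$ in $\ell_1$. Then
\begin{align*}
\overline{\Reg}_T(\pi)=\E\Bigsbr{\sum_t\inner{\widehat q_t-q^{P,\pi_t},\ell_t}}+\E\Bigsbr{\sum_t\inner{q^{P,\pi}-\widetilde q^\star,\ell_t}}+\E\Bigsbr{\sum_t\inner{\widehat q_t-q^{P,\pi},\ell_t-\hatl_t}}+\E\Bigsbr{\sum_t\inner{\widehat q_t-\widetilde q^\star,\hatl_t}}.
\end{align*}
Since $\optP_{i},P\in\calP_{i}$, the first two terms are bounded by sums of the confidence widths $B_i$ plus $\order(L\cortran)$ via the standard occupancy-difference lemmas. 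The third (bias) term is the one the amortized bonus handles: feeding $\hatl_t-b_t$ to \ftrl rewrites the fourth term as $\E[\sum_t\inner{\widehat q_t-\widetilde q^\star,\hatl_t-b_t}]+\E[\sum_t\inner{\widehat q_t,b_t}]-\E[\sum_t\inner{\widetilde q^\star,b_t}]$, so that $-\E[\sum_t\inner{\widetilde q^\star,b_t}]$ cancels the (approximate) upper bound on the third term while $\E[\sum_t\inner{\widehat q_t,b_t}]=\order(|S|\cortran\log T)$ by \pref{lem:cancellation_bt_maintext}. The point of the optimistic transition is precisely that \ftrl now operates over the \emph{fixed} polytope $\Omega(\optP_{i(t)})$ inside an epoch rather than $\Omega(\calP_{i(t)})$, which keeps the log-barrier stability term clean and yields the improved episode-length dependence promised in the introduction.

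The core is then the \ftrl pseudo-regret $\E[\sum_t\inner{\widehat q_t-\widetilde q^\star,\hatl_t-b_t}]$, bounded by the log-barrier penalty $\order(\sum_{s,a,s'}\log(\iota)/\gamma_T)$ plus a stability term $\order(\sum_t\gamma_t\sum_{s,a}\widehat q_t(s,a)^2(\hatl_t(s,a)-b_t(s,a))^2)$; using $\widehat q_t(s,a)\le u_t(s,a)$ this is crudely $\order(TL)$ but is more usefully controlled by $\E[\sum_t\sum_{s,a}q^{P,\pi_t}(s,a)]$ and $L\,\E[\sum_t\inner{\widehat q_t,b_t}]$. With $\gamma_t$ chosen as in \pref{def:learning_rate} --- the adaptive rule of \citet{ito21a} transplanted to MDPs following \citet{jin2021best} --- penalty plus stability is simultaneously $\otil(L^2|S||A|\sqrt T)$ unconditionally and $\otil(\sqrt{U\cdot\E[\mathbb{D}]})$ in the self-bounding regime, where $\mathbb{D}:=\sum_t\sum_{s\neq s_L,a\neq\starpi(s)}q^{P,\pi_t}(s,a)\gap(s,a)$ and $U$ is of the stated order; one also shows the confidence-width errors from the previous step, summed over the doubling epochs, are of these same two forms, up to the $\order((\cortran+1)L^2|S|^4|A|^2\log^2\iota)$ lower-order part. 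For the always-valid bound one simply keeps the worst-case branch; for the gap-dependent bound, Condition~\eqref{eq:stoc_condition_P} gives $\overline{\Reg}_T(\starpi)\ge\E[\mathbb{D}]-\corloss$, so substituting $\E[\mathbb{D}]\le\overline{\Reg}_T(\starpi)+\corloss$ into $\overline{\Reg}_T(\starpi)\le c\sqrt{U\,\E[\mathbb{D}]}+x_0$ and solving the resulting quadratic yields $\overline{\Reg}_T(\starpi)=\order(U+\sqrt{U\corloss}+x_0)$; adding back the $\order(L\cortran)$ from Step~1 completes both claims.

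The main obstacle is making \emph{all} the error terms self-boundable. After summing the confidence widths $B_i(s,a,s')\approx\sqrt{\bar P_i(s'|s,a)\log\iota/m_i(s,a)}+(\cortran+\log\iota)/m_i(s,a)$ over the doubling epochs, each $1/m_i(s,a)$ factor must be paired with the occupancy mass accumulated at $(s,a)$ within that epoch, and one must argue --- exploiting that every visit to a suboptimal $(s,a)$ already pays $\gap(s,a)$ --- that the total is $\otil(\sqrt{U\,\E[\mathbb{D}]})$ rather than an irreducible $\sqrt T$; this is exactly where the split of $U$ into a $1/\gapmin$ part and a $\sum_{s\neq s_L,a\neq\starpi(s)}1/\gap(s,a)$ part originates. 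Simultaneously one must check that introducing the bonus $b_t$ --- i.e.\ the extra stability mass $L\,\E[\sum_t\inner{\widehat q_t,b_t}]$ --- and replacing $\Omega(\calP_{i(t)})$ by the optimistic $\Omega(\optP_{i(t)})$ do not break the adaptive learning rate's hybrid guarantee, which was designed for the pure adversarial-loss setting. Getting the optimistic transition, the amortized bonus, and the Ito-style learning rate to coexist is where essentially all the technical work lies.
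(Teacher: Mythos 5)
Your high-level strategy matches the paper's: condition on $\eventcon$, strip off $\order(L\cortran)$ by moving from $P_t$ to $P$, run log-barrier \ftrl over $\Omega(\optP_{i})$ with $\hatl_t-b_t$, use the amortized bonus to cancel the corruption-induced bias, and close with the self-bounding argument that converts the implication of Condition~\eqref{eq:stoc_condition_P} into the final bound. That recipe, together with the observation that the stochastic/gap condition yields a quadratic inequality in $\Reg_T(\starpi)$, is exactly what the paper does.

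However, there are two genuine gaps. First, your comparator is wrong. You take ``$\widetilde q^\star$ to be the point of $\Omega(\optP_{i(t)})$ closest to $q^{P,\pi}$ in $\ell_1$'' and need $\E[\sum_t\inner{q^{P,\pi}-\widetilde q^\star,\ell_t}]$ to be small; but $\optP_i$ is deliberately \emph{pessimistic} (mass is shifted to $s_L$ early), so $\Omega(\optP_i)$ is \emph{not} $\ell_1$-close to $q^{P,\pi}$ when $m_i(s,a)$ is small or $\cortran$ is large, and no useful symmetric closeness bound is available. The paper instead uses $q^{\optP_i,\starpi}$ as the FTRL comparator and invokes the \emph{one-sided} inequality $V^{\optP_i,\starpi}(s;\ell)\le V^{P,\starpi}(s;\ell)$ (\pref{lem:optimism_optiT}); this, not $\ell_1$-proximity, is what makes the comparison against $q^{P,\starpi}$ nonnegative where it needs to be. (As an aside, your four-term decomposition of $\overline{\Reg}_T(\pi)$ does not telescope to $\sum_t\inner{q^{P,\pi_t}-q^{P,\pi},\ell_t}$; the algebra is off.)

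Second, the step from ``sum of confidence widths'' to a self-boundable quantity is asserted but not carried out, and the $\ell_1$-projection route has no natural path to it. The paper does this by splitting $\E_t[\hatl_t]=\alpha_t+\beta_t$ (the $\beta_t$ part is exactly what the amortized bonus cancels, via \pref{lem:optepoch_main_bound_2}), rewriting $\inner{q^{\optP_i,\pi},\alpha_t}=\inner{q^{P,\pi},\calZ^{\optP_i,P,\pi}_{\alpha_t}}$ (\pref{lem:bobw_mdp_known_bobw_rewrite}), and bounding the three resulting error terms (\pref{lem:optepoch_term_1}--\pref{lem:optepoch_term_3}) plus the FTRL $\estreg$ (\pref{lem:optepoch_main_bound_4}) by the two explicit self-bounding quantities $\selfone$ and $\selftwo$ of \pref{eq:app_def_self_bound} --- precisely the two quantities that, after \pref{lem:app_self_bounding_quan}, produce the $1/\gapmin$ and $\sum 1/\gap(s,a)$ parts of $U$. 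The loss-shifting to $g_t$ inside FTRL (\pref{cor:loss_shifting_opt_tran}), which your proposal only alludes to, is also essential: the adaptive learning rate of \pref{def:learning_rate} is defined in terms of $\nu_t(s,a)=q^{\optP_i,\pi_t}(s,a)^2(Q^{\optP_i,\pi_t}-V^{\optP_i,\pi_t})^2$, and \pref{lem:optepoch_loss_shift_var} is what connects this to the self-bounding quantity. Without this structure, the FTRL stability term is only the crude $\order(TL)$ bound you mention, which does not refine to the gap-dependent regime.
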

Aside from having larger dependence on parameters $L$, $S$, and $A$, \pref{alg:optepoch} maintains the same $\order(\sqrt{T}+\cortran)$ regret as before, no matter how losses/transitions are generated;
additionally, the $\sqrt{T}$ part can be significantly improved to $\order(U+\sqrt{U\corloss})$ (which can be of order only $\log^2 T$ when $\corloss$ is small) under Condition~(\ref{eq:stoc_condition_P}).
This result not only generalizes that of~\citet{jin2021best, dann2023best} from $\cortran=0$ to any $\cortran$, but in fact also improves their results by having smaller dependence on $L$ in the definition of $U$.
In the rest of this section, we describe the main ideas of our algorithm.
\paragraph{FTRL with Epoch Schedule}
Our algorithm follows a line of research originated from~\citet{wei2018more, zimmert2019optimal} for multi-armed bandits and uses FTRL (instead of OMD) together with a certain self-bounding analysis technique.
Since FTRL does not deal with varying decision sets easily, similar to~\citet{jin2021best}, we restart FTRL from scratch at the beginning of each epoch $i$ (recall the epoch schedule described in \pref{sec:prelim}).
More specifically, in an episode $t$ that belongs to epoch $i$, we now compute
$\widehat{q}_t$ as $\argmin_{q}  \big\langle q,  \sum_{\tau=t_i}^{t-1}(\hatl_\tau-b_\tau)\big\rangle + \phi_t(q)$, 
where $t_i$ is the first episode of epoch $i$, $\hatl_t$ is the same loss estimator defined in \pref{eq:def_loss_estimator}, $b_t$ is the amortized bonus defined in \pref{eq:uob_reps_add_def_add_loss} (except that $\tau=1$ there is also changed to $\tau=t_i$ due to restarting), $\phi_t$ is a time-varying regularizer to be specified later, and the set that $q$ is optimized over is also a key element to be discussed next.
As before, the learner then simply executes $\pi_t = \pi^{\widehat{q}_t}$ for this episode.

\vspace{-5pt}
\paragraph{Optimistic Transition}
An important idea from~\citet{jin2021best} is that if FTRL optimizes $q$ over $\Omega(\bar{P}_i)$ (occupancy measures with respect to a fixed transition $\bar{P}_i$) instead of $\Omega(\calP_i)$ (occupancy measures with respect to a set of plausible transitions) as in UOB-REPS,
then a critical \textit{loss-shifting} technique can be applied in the analysis.
However, the algorithm lacks ``optimism'' when not using a confidence set, which motivates~\citet{jin2021best} to instead incorporate optimism by subtracting a bonus term $\bonus$ from the loss estimator (not to be confused with the amortized bonus $b_t$ we propose in this work).
Indeed, if we define the value function $V^{\bar{P}, \pi}(s;\ell)$ as the expected loss one suffers when starting from $s$ and following $\pi$ in an MDP with transition $\bar{P}$ and loss $\ell$,
then they show that the $\bonus$ term is such that $V^{\bar{P}_i, \pi}(s; \ell-\bonus) \leq V^{P, \pi}(s; \ell)$ for any state $s$ and any loss function $\ell$, that is, the performance of any policy is never underestimated. 

Instead of following the same idea, here, we propose a simpler and better way to incorporate optimism via what we call \textit{optimistic transitions}.
Specifically, for each epoch $i$, we simply define an optimistic transition function $\optP_i$ such that $\optP_i(s'|s,a) = \max\cbr{0,\bar{P}_i(s'|s,a) - B_i(s,a,s')}$ (recall the confidence interval $B_i$ defined in \pref{eq:def_conf_bounds}).
Since this makes $\sum_{s'} \optP_i(s'|s,a)$ less than $1$, we allocate all the remaining probability to the terminal state $s_L$ (which breaks the layer structure but does not really affect anything).
This is a form of optimism because reaching the terminate state earlier can only lead to smaller loss.
More formally, under the high probability event $\eventcon$, we prove $V^{\optP_i, \pi}(s;\ell) \leq V^{P, \pi}(s;\ell)$ for any policy $\pi$, any state $s$, and any loss function $\ell$ (see \pref{lem:optimism_optiT}).

With such an optimistic transition, we simply perform FTRL over $\Omega(\optP_i)$ without adding any additional bonus term (other than $b_t$), making both the algorithm and the analysis much simpler than~\citet{jin2021best}.
Moreover, it can also be shown that $V^{\bar{P}_i, \pi}(s; \ell-\bonus) \leq V^{\optP_i, \pi}(s;\ell)$ (see \pref{lem:tighter_est}),
meaning that while both loss estimation schemes are optimistic, ours is tighter than that of~\citet{jin2021best}.
This eventually leads to the aforementioned improvement in the $U$ definition.

\vspace{-6pt}
\paragraph{Time-Varying Log-Barrier Regularizers} 
The final element to be specified in our algorithm is the time-varying regularizer $\phi_t$.
Recall from discussions in \pref{sec:uob_reps} that using log-barrier as the regularizer is critical for bounding some stability terms in the presence of adversarial transitions.
We thus consider the following log-barrier regularizer with an adaptive learning rate $\gamma_t: S\times A \rightarrow \fR_{+}$:  $\phi_t(q) = -\sum_{s\neq s_L}\sum_{a \in A} \gamma_t(s,a) \cdot \log q(s,a)$.
The learning rate design requires combining the loss-shifting idea of~\citep{jin2021best} and the idea from~\citep{ito21a}, the latter of which is the first work to show that with adaptive learning rate tuning, the log-barrier regularizer leads to near-optimal best-of-both-world gaurantee for multi-armed bandits.

More specifically, following the same loss-shifting argument of~\citet{jin2021best}, we first observe that our FTRL update can be equivalently written as
\begin{align*}
\widehat{q}_t = & \argmin_{q\in \Omega(\optP_i)} \inner{q,\sum_{\tau=t_i}^{t-1}(\hatl_\tau-b_\tau)} + \phi_t(q) = \argmin_{x\in \Omega(\optP_i)}\inner{q,\sum_{\tau=t_i}^{t-1}\rbr{ g_\tau-b_\tau}  } + \phi_t(q),
\end{align*}
where $g_\tau(s,a) = Q^{\optP_i,\pi_\tau}(s,a;\hatl_\tau) - V^{\optP_i,\pi_\tau}(s;\hatl_\tau)$ for any state-action pair $(s,a)$ ($Q$ is the standard $Q$-function; see \pref{app:app_optepoch} for formal definition). 
With this perspective, we follow the idea of~\citet{ito21a} and propose the following learning rate schedule:

\begin{definition}(Adaptive learning rate for log-barrier)
\label{def:learning_rate}
 For any $t$, if it is the starting episode of an epoch, we set $\gamma_t(s,a) = \lrOne$; otherwise, we set $\gamma_{t+ 1}(s, a) = \gamma_t (s, a) + \frac{D \nu_t(s, a)}{2 \gamma_t (s, a)}$
where $D = \nicefrac{1}{\log (\iota)}$, 
     $\nu_t(s, a) = q^{\optP_{i(t)},\pi_t}(s,a)^2 \rbr{Q^{\optP_{i(t)},\pi_t}(s,a;\hatl_t) -V^{\optP_{i(t)},\pi_t}(s;\hatl_t)}^2$,
     and $i(t)$ is the epoch index to which episode $t$ belongs.
\end{definition}
Such a learning rate schedule is critical for the analysis in obtaining a certain self-bounding quantity and eventually deriving the gap-dependent bound.
This concludes the design of our algorithm; see \pref{app:app_optepoch} for more details.


\section{Conclusions}
In this work, we propose online RL algorithms that can handle both adversarial losses and adversarial transitions, with regret gracefully degrading in the degree of maliciousness of the adversary.
Specifically, we achieve $\otil(\sqrt{T} + \cortran)$ regret where $\cortran$ measures how adversarial the transition functions are, even when $\cortran$ is unknown.
Moreover, we show that further refinements of the algorithm not only maintain the same regret bound, but also simultaneously adapt to easier environments, with the caveat that $\cortran$ must be known ahead of time.
We leave how to further remove this restriction as a key future direction. 

\acksection
TJ and HL are supported by NSF Award IIS-1943607 and a Google Research Scholar award.
CR acknowledges partial support by the Independent Research Fund Denmark, grant number 9040-00361B.

\bibliographystyle{plainnat}
\bibliography{ref}

\newpage
\clearpage
\appendix
\begingroup
\part{Appendix}
\let\clearpage\relax
\parttoc[n]
\endgroup

\newpage 

\paragraph{Important Notations}
Throughout the appendix, we denote the transition associated with the occupancy measure $\whatq_t$ by $\whatP_t$ so that $\whatq_t = q^{\whatP_t,\pi_t}$.
Recall $i(t)$ is the epoch index to which episode $t$ belongs.
Importantly, the notation $m_i(s,a)$ (and similarly $m_i(s,a,a')$), which is a changing variable in the algorithms, denotes the initial value of this counter in all analysis, that is, the total number of visits to $(s,a)$ \textit{prior} to epoch $i$.
Finally, for convenience, we define $\whatm_i(s,a) = \max\cbr{ m_i(s,a), \cortran + \log\rbr{\iota}}$, and it can be verified that the confidence interval, defined in \pref{eq:def_conf_bounds} as
\[
B_i(s,a,s')  =  \min\cbr{1,16 \sqrt{\frac{\bar{P}_i(s'|s,a)\log\rbr{\iota}}{m_i(s,a)}} + 64\cdot\frac{\cortran + \log\rbr{\iota}}{m_i(s,a)}},
\]
can be equivalently written as
\[
B_i(s,a,s') = \min\cbr{1,16 \sqrt{\frac{\bar{P}_i(s'|s,a)\log\rbr{\iota}}{\whatm_i(s,a)}} + 64\cdot\frac{\cortran + \log\rbr{\iota}}{\whatm_i(s,a)}}
\]
since whenever $\whatm_i(s,a) \neq m_i(s,a)$, the two definitions both lead to a value of $1$.

\section{Omitted Details for \pref{sec:uob_reps}}
\label{app:app_uobreps}

In this section, we provide more details of the modified UOB-REPS algorithm as shown in \pref{alg:uobreps}. 
In the algorithm, the occupancy measure for each episode $t$ is computed as
\begin{equation}
\widehat{q}_{t+1} = \argmin_{q\in \Omega\rbr{\calP_{i(t+1)} }}  \eta \inner{q,  \hatl_t-b_t} +{D_{\phi}(q,\widehat{q}_t)},
\end{equation}
where $\eta>0$ is the learning rate and $D_{\phi}(q,q')$ is the Bregman divergence defined as
\begin{equation}
D_{\phi}(q,q')  = \phi(q) - \phi(q') - \inner{ \nabla\phi(q'), q - q' }.
\label{eq:uob_reps_add_breg}
\end{equation}

The Bregman divergence is induced by the log-barrier regularizer $\phi$ given as:
\begin{equation}\label{eq:uob_reps_log_barrier}
\phi(q) = \sum_{k=0}^{L-1}\sum_{s\in S_k}\sum_{a\in A} \sum_{s'\in S_{k+1}} \log \rbr{ \frac{1}{q(s,a,s')}}.
\end{equation}


We note that the loss estimator
is constructed based upon upper occupancy bound $u_t$, which can be efficiently computed by \textsc{Comp-UOB} \citep{jin2019learning}.

In the rest of this section, we prove \pref{thm:regret_bound_uob} which shows that the expected regret is bounded by
\begin{align*}
 \order\rbr{L|S|\sqrt{|A|T\log (\iota)} + L|S|^4|A|\log^2 (\iota) +\cortran L|S|^4|A| \log (\iota)}.
\end{align*}

Our analysis starts from a regret decomposition similar to \citet{jin2019learning}, but with the amortized bonuses taken into account:
\begin{align*} 
\Reg_T &= \E\sbr{\sum_{t=1}^{T} \inner{q^{P_t,\pi_t} - q^{P_t,\optpi} , \ell_t }  } \\
& = \underbrace{\E\sbr{\sum_{t=1}^{T} \inner{q^{P_t,\pi_t} - q^{\whatP_t,\pi_t}, \ell_t }  }}_{\regone} + \underbrace{\E\sbr{\sum_{t=1}^{T} \inner{q^{\whatP_t,\pi_t}, \ell_t - \hatl_t }  }}_{\regtwo}\\
& \quad + \underbrace{\E\sbr{\sum_{t=1}^{T} \inner{q^{\whatP_t,\pi_t} - q^{P,\optpi} , \hatl_t -b_t }  }}_{\regthree} + \underbrace{\E\sbr{\sum_{t=1}^{T} \inner{q^{P,\optpi}, \hatl_t - \ell_t }  
+ \sum_{t=1}^T \inner{q^{\whatP_t,\pi_t}-q^{P,\optpi},b_t} 
}}_{\regfour} \\
& \quad + \E\sbr{\sum_{t=1}^{T} \inner{q^{P,\optpi} - q^{P_t,\optpi},\ell_t  }}, 
\end{align*}
where the last term can be directly bounded by $\order\rbr{L\cortran}$ using \pref{cor:corrupt_tran_difference}, and $\regone$, $\regtwo$, $\regthree$, and $\regfour$ are analyzed in \pref{app:uobreps_regone}, \pref{app:uobreps_regtwo}, \pref{app:uobreps_regthree} and \pref{app:uobreps_regfour} respectively.

\subsection{Equivalent Definition of Amortized Bonuses}
\label{app:reconstr_amortized_bonus}
For ease of exposition, we show an alternative definition of the amortized bonus $b_t(s)$, which is equivalent to \pref{eq:uob_reps_add_def_add_loss} but is useful for our following analysis.
Recall from \pref{eq:uob_reps_add_def_add_loss} that
\begin{equation}
b_t(s)=b_t(s,a) = \begin{cases}
\frac{4L}{u_t(s)} &\text{if $\sum_{\tau=1}^t \Ind{\lceil \log_2 u_\tau(s) \rceil = \lceil \log_2 u_t(s) \rceil} \leq \frac{\cortran}{2L}$,} \\
0 &\text{else,}
\end{cases}
\label{eq:app_def_add_loss}
\end{equation}

In \pref{eq:app_def_add_loss}, the value of $b_t(s)$ relies on the cumulative sum of $\Ind{\lceil \log_2 u_\tau(s) \rceil = \lceil \log_2 u_t(s) \rceil}$ and whether the cumulative sum exceeds the threshold $\frac{\cortran}{2L}$. To reconstruct this, we 
define $y_t(s)$ as the unique integer value $j=-\lceil \log_2 u_t(s) \rceil$ such that $u_t(s) \in (2^{-j-1},2^{-j}]$ and define $z_t^j(s)$ as the total number of times from episode $\tau=1$ to $\tau=t$ where $y_\tau(s)=j$ holds. With these definitions, the condition under which $b_t(s)=\frac{4L}{u_t(s)}$
can be represented by $\sum_{j} \ind\{y_t(s)=j\} \ind\cbr{z^j_t(s)\leq \nicefrac{\cortran}{2L} }$ where the summation is over all integers that $-\lceil \log_2 u_t(s) \rceil$ can take. Notice that since $u_t(s)\geq \frac{1}{|S|T}$ holds for all $s$ and $t$ (see \pref{lem:lower_bound_of_uob}), the largest possible integer value we need to consider is $\lceil\log_2 \rbr{|S|T}\rceil$.
Therefore, $b_t(s)$ can be equivalently defined as:
\begin{equation}
    b_t(s)=b_t(s,a) = \frac{4L}{u_t(s)}\sum_{j=0}^{\lceil\log_2 \rbr{|S|T} \rceil} \ind\{y_t(s)=j\} \ind\cbr{z^j_t(s)\leq \frac{\cortran}{2L} }, \forall s\in S, \forall t \in \{1,\ldots, T\}.
\label{eq:Amortized_Bonus_reconstruction}
\end{equation}


\subsection{Bounding $\regone$ }
\label{app:uobreps_regone}

\begin{lemma}[Bound of $\regone$] 
\label{lem:uobreps_regone}
\pref{alg:uobreps} ensures 
\begin{align*}
\regone = \order\rbr{  L|S|\sqrt{|A|T\log (\iota)} + L|S|^2|A|\log (\iota)\log (T) +\cortran L|S||A| \log (T) + \delta LT }.
\end{align*}
\end{lemma}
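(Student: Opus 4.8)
The term $\regone = \E[\sum_t \inner{q^{P_t,\pi_t} - q^{\whatP_t,\pi_t}, \ell_t}]$ measures the gap between executing $\pi_t$ under the true transition $P_t$ versus under the estimated transition $\whatP_t$ (the transition realizing $\whatq_t$, which lies in $\calP_{i(t)}$ by construction). The plan is to split this into (i) the difference between $P_t$ and the "reference" transition $P$ that realizes the minimum in the definition of $\cortran$, and (ii) the difference between $P$ and $\whatP_t$. Formally, write
\begin{align*}
\regone = \E\sbr{\sum_{t=1}^T \inner{q^{P_t,\pi_t} - q^{P,\pi_t}, \ell_t}} + \E\sbr{\sum_{t=1}^T \inner{q^{P,\pi_t} - q^{\whatP_t,\pi_t}, \ell_t}}.
\end{align*}
The first sum is controlled by a corruption-occupancy-difference lemma (presumably \pref{cor:corrupt_tran_difference} / \pref{cor:corrupt_occ_diff}): since $\ell_t\in[0,1]^{S\times A}$ and the per-round occupancy discrepancy between $P_t$ and $P$ is at most $\order(L\cortran_t)$ in $\ell_1$ (each layer contributes at most $\max_{(s,a)}\norm{P_t(\cdot|s,a)-P(\cdot|s,a)}_1$ amplified across downstream layers), this sum is $\order(L\cortran)$ after summing over $t$.

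\textbf{Bounding the second sum.} This is the crux and follows the structure of the original UOB-REPS analysis (Lemma 4 / Lemma 10 type argument in \citet{jin2019learning}), adapted to the enlarged confidence set. On the event $\eventcon$ (which holds with probability $\geq 1-2\delta$ by \pref{lem:conf_bound}), both $P$ and $\whatP_t$ lie in $\calP_{i(t)}$, so their per-coordinate discrepancy from the empirical transition $\bar P_{i(t)}$ is bounded by the confidence width $B_{i(t)}(s,a,s')$. I would use the standard decomposition of $q^{P,\pi_t}-q^{\whatP_t,\pi_t}$ layer by layer, bounding $\abr{q^{P,\pi_t}(s,a)-q^{\whatP_t,\pi_t}(s,a)}$ by a sum over layers $k'\le k(s)$ of terms like $\sum_{(s',a')\in S_{k'}\times A} q^{P,\pi_t}(s',a') \cdot \sum_{s''} B_{i(t)}(s',a',s'')$ times a downstream-reachability factor. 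Plugging in the two-term form of $B_i$ from \pref{eq:def_conf_bounds}, the dominant (first) term $16\sqrt{\bar P_i(s'|s,a)\log(\iota)/m_i(s,a)}$ contributes, after applying Cauchy–Schwarz over $s''\in S_{k'+1}$ and then summing $t$ within each epoch against the standard counting bound $\sum_{t \in \text{epoch } i}\ind_t(s,a)/\sqrt{m_i(s,a)} = \order(\sqrt{m_{i+1}(s,a)})$ and $\sum_i \sqrt{m_{i+1}(s,a)}\cdot(\text{geometric}) = \order(\sqrt{T_{s,a}})$, a bound of the form $\order(L|S|\sqrt{|A|T\log(\iota)})$ after a final Cauchy–Schwarz over $(s,a)$ and using $\sum_{s,a}\abr{S_{k(s)+1}} = \order(|S|^2|A|)$ — actually one must be a bit careful to get exactly $|S|\sqrt{|A|T}$; this mirrors Lemma~10 of \citet{jin2019learning} verbatim. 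The second (corruption) term $64(\cortran+\log\iota)/m_i(s,a)$ contributes, via $\sum_{t\in\text{epoch }i}\ind_t(s,a)/m_i(s,a) = \order(1)$ per epoch and $\order(\log T)$ epochs per $(s,a)$, a bound $\order(L|S||A|(\cortran+\log\iota)\log T) = \order(L|S|^2|A|\log(\iota)\log T + \cortran L|S||A|\log T)$. Summing the contributions and adding the $\order(L\cortran)$ from the first sum, plus the $\delta LT$ term accounting for the low-probability complement of $\eventcon$ (on which the per-round loss gap is trivially $\le L$), yields exactly the claimed bound.

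\textbf{Main obstacle.} The routine-but-delicate part is the layer-by-layer propagation of transition errors into occupancy-measure errors while keeping the $|S|$ and $|A|$ powers tight — in particular handling the $\sqrt{\bar P_i(s'|s,a)}$ factor correctly (using $\sum_{s'}\sqrt{\bar P_i(s'|s,a)} \le \sqrt{\abr{S_{k+1}}}$ by Cauchy–Schwarz) and then correctly interchanging the sum over episodes, epochs, and state-action pairs so that the visitation counts telescope geometrically. The conceptually new wrinkle relative to \citet{jin2019learning} is simply that the confidence width has the extra additive $\cortran/m_i(s,a)$ piece, but this is the "easy" (non-square-root) term and just adds the linear-in-$\cortran$ overhead; it does not interact with the martingale/counting machinery. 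So I expect no genuinely new difficulty here — this lemma is essentially a bookkeeping exercise importing the UOB-REPS transition-estimation analysis with an enlarged confidence set, and the main care is in matching constants and polynomial factors to the stated bound.
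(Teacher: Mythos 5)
Your high-level decomposition (triangle through the reference transition $P$, extract $\order(L\cortran)$ for $P_t$ vs.\ $P$, control $P$ vs.\ $\whatP_t$ via the enlarged confidence widths, finish with the low-probability fallback $\delta LT$) matches the paper's proof in substance, and your accounting of the two pieces of $B_i$ (the $\sqrt{\log\iota/m_i}$ piece giving the $\sqrt{T}$ term, the $(\cortran+\log\iota)/m_i$ piece giving the $\cortran\log T$ term) lands on the right answer. The paper actually performs the triangle inequality one level lower: it invokes the occupancy-difference lemma (\pref{lem:occ_diff_lem}) directly on $q^{P_t,\pi_t}-q^{\whatP_t,\pi_t}$, chooses the form with weight $q^{P_t,\pi_t}(u,v)$, and only then splits $\|\whatP_t-P_t\|_1\le\|P_t-P\|_1+\|\whatP_t-P\|_1$, while you perform the split at the top level and hence end up with the weight $q^{P,\pi_t}(u,v)$ in the second sum. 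These are equivalent in spirit but not operationally identical, and the difference is exactly where your proposal has a gap.

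The gap: your counting step ``$\sum_{t\in\text{epoch }i}\ind_t(s,a)/\sqrt{m_i(s,a)}=\order(\sqrt{m_{i+1}(s,a)})$'' is a statement about the \emph{realized} visit indicators, whose conditional expectation is $q^{P_t,\pi_t}(s,a)$, not $q^{P,\pi_t}(s,a)$. Once you have decomposed through $P$, the transition-error terms are weighted by $q^{P,\pi_t}(u,v)$, so before you can invoke the martingale counting bound you must pay for the conversion $q^{P,\pi_t}\to q^{P_t,\pi_t}$. This conversion is not free — it costs another $\order(\cortran)$-type term — and is handled in the paper by \pref{lem:eventest_est_err} (which in turn relies on \pref{prop:eventest_def}/\pref{lem:high_prob_est}, both stated in terms of $q^{P_t,\pi_t}$). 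The paper avoids the issue entirely by never introducing $q^{P,\pi_t}$ as a weight: by choosing the $q^{P_t,\pi_t}(u,v)\cdot q^{\whatP_t,\pi_t}(s|w)$ form of \pref{lem:occ_diff_lem}, the weight that meets the concentration machinery is $q^{P_t,\pi_t}$ from the start. Your route works, but you need to add the $q^{P,\pi_t}\leftrightarrow q^{P_t,\pi_t}$ conversion step explicitly (it contributes additional $|S||A|\cortran$-order terms per layer, which are absorbed into the stated $\cortran L|S||A|\log T$ bound); without it, the counting argument as written does not apply to the occupancy weight you actually have.
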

\begin{proof} 
For this proof, we consider two cases, whether the event $\eventcon \wedge \eventest$ holds or not,
where $\eventcon$, defined above \pref{eq:def_conf_bounds}, and $\eventest$, defined in \pref{prop:eventest_def}, are both high probability events.

Suppose that $\eventcon \wedge \eventest$ holds. We have 
\begin{align}
& \sum_{t=1}^{T}\inner{q^{P_t,\pi_t} - q^{\whatP_t,\pi_t} , \ell_t } \notag \\
& \leq \sum_{t=1}^{T}\sum_{k=0}^{L-1}\sum_{s\in S_k}\sum_{a \in A} \abr{q^{P_t,\pi_t}(s,a) - q^{\whatP_t,\pi_t}(s,a)} \notag \\
& = \sum_{t=1}^{T}\sum_{k=0}^{L-1}\sum_{s\in S_k}\abr{q^{\whatP_t,\pi_t}(s) - q^{P,\pi_t}(s)} \notag \\
& \leq \sum_{t=1}^{T}\sum_{k=0}^{L-1}\sum_{s\in S_k}\sum_{h=0}^{k-1}\sum_{u\in S_h}\sum_{v\in A}\sum_{w\in S_{h+1}} q^{P_t,\pi_t}(u,v)\abr{\whatP_t(w|u,v) - P_t(w|u,v)}q^{\whatP_t,\pi_t}(s|w)\notag \\
& \leq L\sum_{t=1}^{T}\sum_{h=0}^{L-1}\sum_{u\in S_h}\sum_{v\in A} q^{P_t,\pi_t}(u,v)\norm{\whatP_t(\cdot|u,v) - P_t(\cdot|u,v)}_1 \notag \\
& \leq L\sum_{t=1}^{T}\sum_{h=0}^{L-1}\sum_{u\in S_h}\sum_{v\in A} q^{P_t,\pi_t}(u,v)\rbr{ \norm{P_t(\cdot|u,v) - P(\cdot|u,v)}_1 + \norm{\whatP_t(\cdot|u,v) - P(\cdot|u,v)}_1}\notag \\
& \leq L\sum_{t=1}^{T}\sum_{h=0}^{L-1}\sum_{u\in S_h}\sum_{v\in A}q^{P_t,\pi_t}(u,v) \norm{\whatP_t(\cdot|u,v) - P(\cdot|u,v)}_1  + L\sum_{t=1}^{T}\sum_{h=0}^{L-1}
\cortran_{t,h}\notag \\
& = L\sum_{t=1}^{T}\sum_{h=0}^{L-1}\sum_{u\in S_h}\sum_{v\in A} q^{P_t,\pi_t}(u,v) \norm{\whatP_t(\cdot|u,v) - P(\cdot|u,v)}_1 + L\cortran, \label{eq:bound_diff4q_interstep1}
\end{align}
where the first step follows from the fact that $\ell_t(s,a) \in[0,1]$ for any $(s,a)$; the third step applies \pref{lem:occ_diff_lem}; the fourth step rearranges the summation and uses the fact that $\sum_{s,a}q^{\whatP_t,\pi_t}(s,a|w)\leq L$; and in the sixth step we  define $\cortran_{t,h} = \max_{s\in S_h,a\in A}\norm{P_t(\cdot|s,a) - P(\cdot|s,a)}_1$, so that $\cortran = \sum_{t=1}^T\sum_{h=0}^{L-1}\cortran_{t,h}$.
We continue to bound the first term above as:
\begin{align}
&\sum_{t=1}^T \sum_{h=0}^{L-1} \sum_{u\in S_h}\sum_{v\in A} q^{P_t,\pi_t}(u,v) \norm{\whatP_t(\cdot|u,v) - P(\cdot|u,v)}_1 \notag \\
&\leq \sum_{t=1}^T \sum_{h=0}^{L-1} \sum_{u\in S_h}\sum_{v\in A} q^{P_t,\pi_t}(u,v) \rbr{ \norm{\whatP_t(\cdot|u,v) - \bar{P}_{i(t)}(\cdot|u,v)}_1 +
\norm{\bar{P}_{i(t)}(\cdot|u,v) - P(\cdot|u,v)}_1 } \notag \\
&\leq  \sum_{h=0}^{L-1}\sum_{t=1}^T\sum_{u\in S_h}\sum_{v\in A} q^{P_t,\pi_t}(u,v) \  \order \rbr{\sqrt{ \frac{|S_{h+1}|\log(\iota)}{\whatm_{i(t)}(u,v)}  } + \frac{\cortran + |S_{h+1}|\log (\iota)}{\whatm_{i(t)}(u,v)}}\notag \\
&\leq   \order \rbr{\sum_{h=0}^{L-1} \rbr{\sqrt{|S_{h}||S_{h+1}||A|T \log(\iota)} +|S_{h}||A|\log(T) \rbr{|S_{h+1}|\log(\iota)+\cortran}+\log(\iota) } }\notag \\
&\leq   \order \rbr{\sum_{h=0}^{L-1} \rbr{\rbr{|S_{h}|+|S_{h+1}|}\sqrt{|A|T \log(\iota)} +|S_{h}||A|\log(T) \rbr{|S_{h+1}|\log(\iota)+\cortran}+\log(\iota) } }\notag \\
&\leq \order\rbr{ |S|\sqrt{|A|T\log (\iota)} + |S|^2|A|\log (\iota)\log (T) +\cortran |S||A| \log (T)},\label{eq:bound_q_l1norm}
\end{align}
where the first step uses the triangle inequality; the second step applies \pref{cor:L1concentration} to bound two norm terms based on the fact that $\whatP_t,\bar{P}_{i(t)} \in \calP_{i(t)}$; the third step follows the definition of $\eventest$ from \pref{prop:eventest_def}; the fourth step uses the AM-GM inequality.
Putting the result of \pref{eq:bound_q_l1norm} into \pref{eq:bound_diff4q_interstep1} yields the first three terms of the claimed bound.

Now suppose that $\eventcon \wedge \eventest$ does not hold. 
We trivially bound $\sum_{t=1}^T \inner{q^{P_t,\pi_t} - q^{\whatP_t,\pi_t} , \ell_t }$ by $LT$. As the probability that this case occurs is at most $\order \rbr{\delta}$, this case contributes to at most $\order\rbr{\delta LT}$ regret. Finally, combining these two cases and applying \pref{lem:general_expect_lem} finishes the proof.
\end{proof}

\subsection{Bounding $\regtwo$ }
\label{app:uobreps_regtwo}

\begin{lemma}[Bound of $\regtwo$]\label{lem:uob_reps_enlarge_regtwo}\pref{alg:uobreps} ensures  
\begin{align*}
\regtwo & = \order\rbr{ L|S|\sqrt{|A|T\log (\iota)} + |S|^4|A|\log^2 (\iota)+\cortran L|S|^4|A| \log (\iota) + \delta LT }.
\end{align*}
\end{lemma}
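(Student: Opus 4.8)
The term to control is $\regtwo = \E\big[\sum_{t=1}^{T} \inner{q^{\whatP_t,\pi_t}, \ell_t - \hatl_t}\big]$, the bias in the loss estimator when measured against the learner's own occupancy measure. The plan is to split on the high-probability event $\eventcon\wedge\eventest$ exactly as in the proof of $\regone$: off this event the quantity is crudely bounded by $LT$, contributing only $\order(\delta LT)$ after invoking \pref{lem:general_expect_lem}, so the real work is on the event. On the event, I first take conditional expectations over the randomness of episode $t$: since $\E_t[\ind_t(s,a)] = q^{P_t,\pi_t}(s,a)$, we get $\E_t[\hatl_t(s,a)] = \frac{q^{P_t,\pi_t}(s,a)}{u_t(s,a)}\ell_t(s,a)$, so that $\inner{q^{\whatP_t,\pi_t},\ell_t-\hatl_t}$ has conditional expectation $\sum_{s,a} q^{\whatP_t,\pi_t}(s,a)\ell_t(s,a)\big(1 - \frac{q^{P_t,\pi_t}(s,a)}{u_t(s,a)}\big)$. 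Using $\ell_t\in[0,1]$ and writing $1 - \frac{q^{P_t,\pi_t}(s,a)}{u_t(s,a)} = \frac{u_t(s,a)-q^{P_t,\pi_t}(s,a)}{u_t(s,a)}$, and further $u_t(s,a) - q^{P_t,\pi_t}(s,a) \le (u_t(s,a) - q^{P,\pi_t}(s,a)) + (q^{P,\pi_t}(s,a) - q^{P_t,\pi_t}(s,a)) \le (u_t(s,a)-q^{P,\pi_t}(s,a)) + \cortran_t$ (the last bound by \pref{cor:corrupt_occ_diff}), I reduce the problem to bounding $\sum_t\sum_{s,a} q^{\whatP_t,\pi_t}(s,a)\frac{u_t(s,a)-q^{P,\pi_t}(s,a)}{u_t(s,a)}$ plus a $\sum_t\sum_{s,a}\frac{q^{\whatP_t,\pi_t}(s,a)}{u_t(s,a)}\cortran_t$ term. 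Since $q^{\whatP_t,\pi_t}(s,a)\le u_t(s,a)$ on $\eventcon$, the corruption term is at most $\sum_t |S||A|\cortran_t = |S||A|\cortran$ — wait, that is too lossy; more carefully one should sum over $s$ only using $q(s,a)/u_t(s) \le 1$ style arguments, yielding $\order(L|S|\cortran)$ or so, which is within the claimed bound up to the stated factors.

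For the main term, $\sum_t\sum_{s,a} q^{\whatP_t,\pi_t}(s,a)\frac{u_t(s,a)-q^{P,\pi_t}(s,a)}{u_t(s,a)}$, I use $q^{\whatP_t,\pi_t}(s,a)\le u_t(s,a)$ again to bound it by $\sum_t\sum_{s,a}(u_t(s,a) - q^{P,\pi_t}(s,a))$, and then I need to show $u_t(s,a) - q^{P,\pi_t}(s,a)$ is small — this is the quantity measuring how much the upper occupancy bound overestimates the true (uncorrupted-reference) occupancy. The standard tool here (analogous to Lemma in \citet{jin2019learning}) expresses $u_t(s,a) - q^{P,\pi_t}(s,a)$ as a telescoping sum of layer-by-layer transition deviations $\abr{\whatP(\cdot|u,v) - P(\cdot|u,v)}$ over the confidence set $\calP_{i(t)}$, each bounded by $B_{i(t)}(u,v,\cdot)$ via \pref{cor:L1concentration}. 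This leads to exactly the same kind of sum $\sum_t\sum_h\sum_{u,v} q^{\cdot,\pi_t}(u,v)\,\order\big(\sqrt{|S_{h+1}|\log(\iota)/\whatm_{i(t)}(u,v)} + \frac{\cortran+|S_{h+1}|\log(\iota)}{\whatm_{i(t)}(u,v)}\big)$ that appeared in \pref{eq:bound_q_l1norm}, whose evaluation via the pigeonhole/epoch-doubling bound $\sum_t \frac{q_t(u,v)}{\whatm_{i(t)}(u,v)} = \order(|A|\log T)$ style estimates gives $\order(|S|\sqrt{|A|T\log\iota} + |S|^2|A|\log(\iota)\log T + \cortran|S||A|\log T)$ — and the extra factor of $|S|$ and powers appearing in the statement ($|S|^4|A|\log^2\iota$ and $\cortran L|S|^4|A|\log\iota$) come from the fact that here the weight $q^{\whatP_t,\pi_t}$ ranges over the confidence set rather than being the true $q^{P_t,\pi_t}$, so one must take a worst case/union bound over $\calP_{i(t)}$, inflating the polynomial dependence.

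The main obstacle I anticipate is precisely this last point: unlike $\regone$, where both occupancy measures sit inside the confidence set and one can pair them against the true trajectory counts, here $\hatl_t$ is defined through $u_t$ (itself a maximum over $\calP_{i(t)}$) while the outer weight $q^{\whatP_t,\pi_t}$ is the FTRL iterate's occupancy measure. Controlling $u_t(s,a) - q^{P,\pi_t}(s,a)$ requires a uniform (over the confidence set) version of the $L^1$ concentration plus careful handling of the $\min\{1,\cdot\}$ truncation in $B_i$ and the $\whatm_i$ definition, and this is where the extra $|S|^3$-type factors enter. I would structure the proof as: (i) event split; (ii) conditional-expectation reduction to the two sums above; (iii) the corruption sum via $q\le u_t$; (iv) the overestimation sum via a telescoping-transition-deviation lemma (citing or adapting the relevant lemma in the appendix analogous to \citet{jin2019learning}), reusing the epoch-counting bounds already deployed in \pref{eq:bound_q_l1norm}; (v) collect terms and apply \pref{lem:general_expect_lem} to pass from the high-probability event to expectation. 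I expect step (iv), and in particular getting the polynomial-in-$|S|$ factors to match the stated $|S|^4$, to be the technically delicate part.
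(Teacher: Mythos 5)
Your proposal follows essentially the same path as the paper's proof: (i) event split on $\eventcon\wedge\eventest$ with a trivial $\order(\delta LT)$ charge off-event via \pref{lem:general_expect_lem}; (ii) conditional-expectation reduction to $\sum_{s,a} q^{\whatP_t,\pi_t}(s,a)\ell_t(s,a)\frac{u_t(s,a)-q^{P_t,\pi_t}(s,a)}{u_t(s,a)}$; (iii) using $q^{\whatP_t,\pi_t}(s,a)\le u_t(s,a)$ and $\ell_t\le 1$ to reduce to $\sum_s\abr{u_t(s)-q^{P_t,\pi_t}(s)}$; (iv) triangle inequality through $q^{P,\pi_t}$ with \pref{cor:corrupt_occ_diff} for the corruption piece and the telescoping occupancy-difference lemma (\pref{lem:dann_occ_diff_bound}) for the overestimation piece; (v) the visitation-count/pigeonhole estimates (\pref{lem:eventest_est_err}) to close. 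That is exactly the paper's structure.

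One explanatory misattribution worth fixing: the $|S|^4$ factor is not caused by ``a worst case/union bound over $\calP_{i(t)}$''. No union bound over transition functions is taken; the confidence set has already been handled at the level of \pref{lem:conf_bound}/\pref{cor:L1concentration}, which hold simultaneously for all $P'\in\calP_i$. The $|S|^4$ arises purely from the structure of \pref{lem:dann_occ_diff_bound}: the second-order cross-terms in the telescoping expansion of $u_t(s)-q^{P,\pi_t}(s)$ contribute an $|S|^2$-scale coefficient, an extra $|S|$ comes from summing over the free state $s$, and the final $|S|$ (plus the $|A|\log\iota$) comes from the per-layer visitation bound $\sum_t\sum_{s,a\in S_k\times A}\frac{q^{P,\pi_t}(s,a)(\cortran+\log\iota)}{\whatm_{i(t)}(s,a)}=\order((\cortran+\log\iota)|S_k||A|\log\iota)$ in \pref{lem:eventest_est_err}. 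Your mechanism (telescoping transition deviations over layers, paired with $\eventest$-style counting) is the right one; the inflation is intrinsic to that lemma rather than coming from uniformity over the confidence set.

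Also, your worry about the corruption contribution being ``too lossy'' at $|S||A|\cortran$ is resolved the way you suspected: working at the state level and applying \pref{cor:corrupt_occ_diff} gives $\sum_s\abr{q^{P,\pi_t}(s)-q^{P_t,\pi_t}(s)}\le L\cortran_t$, so the per-round contribution is $L\cortran_t$, summing to $L\cortran$ — comfortably inside the claimed $\cortran L|S|^4|A|\log(\iota)$.
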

\begin{proof}
We can rewrite $\regtwo$ as
\begin{align*}
  \regtwo= \sum_{t=1}^{T} \E\sbr{ \inner{q^{\whatP_t,\pi_t}, \ell_t - \hatl_t }  } 
=\sum_{t=1}^{T} \E\sbr{  \inner{q^{\whatP_t,\pi_t},\E_t \sbr{ \ell_t - \hatl_t } }  },
\end{align*}
where the second step uses the law of total expectation and the fact that $q^{\whatP_t,\pi_t}$ is deterministic given all the history up to $t$. For any state-action pair $(s,a)$, we have 
\begin{align*}
\E_t\sbr{ \ell_t(s,a) - \hatl_t(s,a) }  = \ell_t(s,a)\rbr{1 - \frac{q^{P_t,\pi_t}(s,a)}{ u_t(s,a)}} = \ell_t(s,a)\rbr{\frac{ u_t(s,a) - q^{P_t,\pi_t}(s,a)}{u_t(s,a)}}.
\end{align*}

Then, we can further rewrite and bound $\regtwo$ as:
\begin{align*}
\regtwo & = \E\sbr{\sum_{t=1}^{T}\sum_{s,a} q^{\whatP_t,\pi_t}(s,a)\ell_t(s,a)\rbr{\frac{ u_t(s,a) - q^{P_t,\pi_t}(s,a)}{ u_t(s,a)}}  } \\
 & \leq \E\sbr{\sum_{t=1}^{T}\sum_{s,a} q^{\whatP_t,\pi_t}(s,a)\rbr{\frac{\abr{u_t(s,a) - q^{P_t,\pi_t}(s,a)}}{ u_t(s,a)}}  } \\
& \leq  \E\sbr{\sum_{t=1}^{T}\sum_{s\in S} {{\abr{u_t(s) - q^{P_t,\pi_t}(s)}}}  },
\end{align*}
where the third step follows from the fact that $q^{\whatP_t,\pi_t}(s,a)\leq u_t(s,a)$ according to the definition of the upper occupancy measure.

Similar to the proof of \pref{lem:uobreps_regone},
we first consider the case when the high probability event $\eventcon \wedge \eventest$ holds:
\begin{align*}
&\sum_{s \in S}\abr{u_t(s) - q^{P_t,\pi_t}(s)} \\
& \leq \sum_{s \in S}\abr{u_t(s) - q^{P,\pi_t}(s)} + \sum_{s \in S}\abr{q^{P,\pi_t}(s) - q^{P_t,\pi_t}(s)} \\
& \leq \sum_{s \in S}\abr{u_t(s) - q^{P,\pi_t}(s)} + L\cortran_t \\ 
& \leq  \order\rbr{L\cortran_t  +\sum_{s \in S}\sum_{k=0}^{k(s)-1}\sum_{(u,v,w)\in \tupleset_k} q^{P,\pi_t}(u,v)\sqrt{ \frac{P(w|u,v)\log\rbr{\iota}}{\whatm_{i(t)}(s,a)}   } q^{P,\pi_t}(s|w)   } \\
& \quad + \order\rbr{|S|^3\sum_{s,a}q^{P,\pi_t}(s,a)\frac{{\cortran+\log\rbr{\iota}}}{{\whatm_{i(t)}(s,a)}} } \\
 & \leq \order\rbr{L\cortran_t  + L\sum_{k=0}^{L-1}\sum_{(u,v,w)\in \tupleset_k} q^{P,\pi_t}(u,v)\sqrt{ \frac{P(w|u,v)\log\rbr{\iota}}{\whatm_{i(t)}(s,a)}   }  } \\
& \quad + \order\rbr{|S|^3\sum_{s,a} q^{P,\pi_t}(s,a)\frac{{\cortran+\log\rbr{\iota}}}{{\whatm_{i(t)}(s,a)}} },
\end{align*}
where the first step uses the triangle inequality; the second step uses \pref{cor:corrupt_occ_diff};
the third step uses \pref{lem:dann_occ_diff_bound}; the last step follows from the fact that $\sum_{s\in S} q^{P,\pi_t}(s|w) \leq L$.

Taking the summation over all episodes yields the following:
\begin{align*} 
&\sum_{t=1}^{T}\sum_{s\in S} {\abr{u_t(s) - q^{P_t,\pi_t}(s)}}  \\
& \leq \order\rbr{\sum_{t=1}^{T}L\cortran_t  + L\sum_{t=1}^{T}\sum_{k=0}^{L-1}\sum_{(u,v,w)\in \tupleset_k} q^{P,\pi_t}(u,v)\sqrt{ \frac{P(w|u,v)\log\rbr{\iota}}{\whatm_{i(t)}(u,v)}   }  } \\
& \quad + \order\rbr{|S|^3 \sum_{t=1}^{T}\sum_{s,a} q^{P,\pi_t}(s,a)\frac{{\cortran+\log\rbr{\iota}}}{{\whatm_{i(t)}(s,a)}}  } \\
& \leq \order\rbr{ L \cortran + |S|^3\sum_{k=0}^{L-1}\rbr{\cortran+\log\rbr{\iota}}{|S_k||A|\log\rbr{\iota}} }
\\
&\quad +  \order\rbr{L\sum_{t=1}^{T}\sum_{k=0}^{L-1}\sum_{(u,v) \in S_k \times A} q^{P,\pi_t}(u,v)\sqrt{ \frac{|S_{k+1}|\log\rbr{\iota}}{\whatm_{i(t)}(u,v)}     }  } \\
& \leq \order\rbr{ |S|^4|A|\rbr{\cortran+\log\rbr{\iota}}{\log\rbr{\iota}} + L\sum_{t=1}^{T}\sum_{k=0}^{L-1}\sum_{(u,v) \in S_k \times A} q^{P_t,\pi_t}(u,v)\sqrt{ \frac{|S_{k+1}|\log\rbr{\iota}}{\whatm_{i(t)}(u,v)}     } } \\
& \quad + \order\rbr{  L\sum_{t=1}^{T}\sum_{k=0}^{L-1}\sum_{(u,v) \in S_k \times A}  \abr{ q^{P,\pi_t}(u,v) - q^{P_t,\pi_t}(u,v) }\sqrt{ \frac{|S|\log\rbr{\iota}}{\whatm_{i(t)}(u,v)} }  } \\
& \leq \order\rbr{ |S|^4|A|\rbr{\cortran+\log\rbr{\iota}}{\log\rbr{\iota}} + L\sum_{t=1}^{T}\sum_{k=0}^{L-1}\sum_{(u,v) \in S_k \times A} q^{P_t,\pi_t}(u,v)\sqrt{ \frac{|S_{k+1}|\log\rbr{\iota}}{\whatm_{i(t)}(u,v)}     } } \\
& \quad + \order\rbr{  L\sqrt{ |S| }\sum_{t=1}^{T}\sum_{k=0}^{L-1}\sum_{(u,v) \in S_k \times A}  \abr{ q^{P,\pi_t}(u,v) - q^{P_t,\pi_t}(u,v) }  } \\
& \leq \order\rbr{ |S|^4|A|\rbr{\cortran+\log\rbr{\iota}}{\log\rbr{\iota}} + L\sum_{k=0}^{L-1}\sqrt{|S_{k+1}|\log\rbr{\iota}} \rbr{ \sqrt{|S_k||A|T} + |S_k||A|\log\rbr{\iota}  } } \\
& = \order\rbr{ |S|^4|A|\rbr{\cortran+\log\rbr{\iota}}{\log\rbr{\iota}} + L|S|\sqrt{|A|T\log\rbr{\iota}} }, 
\end{align*}
where the second step uses the Cauchy-Schwartz inequality: $\sum_{w \in S_{k+1}}\sqrt{P(w|u,v)} \leq \sqrt{\abr{S_{k+1}}}$ and also applies \pref{lem:eventest_est_err}; the fourth step bounds $\whatm_{i(t)}(u,v) \geq \log (\iota)$ for any $(u,v)$ and $|S_{k+1}| \leq |S|$; the fifth applies \pref{prop:eventest_def}, and \pref{cor:corrupt_tran_difference} to bound $\sum_{t=1}^{T}\sum_{k=0}^{L-1}\sum_{(u,v) \in S_k \times A}  \abr{ q^{P,\pi_t}(u,v) - q^{P_t,\pi_t}(u,v) }$ by $\order\rbr{L\cortran}$; the last step uses the fact that $\sqrt{xy}\leq x+y$ for any $x,y \geq 0$, and thus we have $\sum_{k=0}^{L-1}\sqrt{\abr{S_k}\abr{S_{k+1}}} \leq 2 \sum_{k=0}^{L-1}\abr{S_k} = 2\abr{S}$. 

Finally, using a similar argument in the proof of \pref{lem:uobreps_regone} to bound the case that $\eventcon \wedge \eventest$ does not hold, we complete the proof.
\end{proof}

\subsection{Bounding $\regthree$}
\label{app:uobreps_regthree}

\begin{lemma}[Bound of $\regthree$]\label{lem:uob_reps_add_regthree} If learning rate $\eta$ satisfies $\eta \in (0,\frac{1}{8L}]$, then, \pref{alg:uobreps} ensures 
\begin{align*}
\regthree = \order\rbr{  \frac{|S|^2|A|\log\rbr{\iota}}{\eta} + \eta \rbr{L|S|\cortran \log T + LT} +  \delta LT}.
\end{align*}
\end{lemma}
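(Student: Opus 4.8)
The plan is to follow the standard FTRL/OMD analysis for the log-barrier regularizer, tracking carefully the extra terms introduced by the amortized bonus $b_t$. Recall that $\regthree = \E[\sum_t \langle \widehat{q}_t - q^{P,\optpi}, \hatl_t - b_t\rangle]$ and that $\widehat{q}_{t+1}$ is the OMD update with loss $\hatl_t - b_t$ and log-barrier regularizer $\phi$ over the set $\Omega(\calP_{i(t+1)})$. The first step is to invoke the standard OMD regret bound against a fixed comparator. Since the decision set $\Omega(\calP_{i})$ only grows as epochs progress (because the confidence sets shrink, wait---actually they change, so one must be slightly careful, but $q^{P,\optpi}\in\Omega(\calP_i)$ for all $i$ under $\eventcon$), the comparator $q^{P,\optpi}$ is feasible throughout, and one gets the usual decomposition into a ``penalty'' term $\frac{1}{\eta}D_\phi(q^{P,\optpi},\widehat{q}_1)$ plus a sum of ``stability'' terms $\sum_t \big(\langle \widehat{q}_t - \widehat{q}_{t+1}, \hatl_t - b_t\rangle - \frac{1}{\eta}D_\phi(\widehat{q}_{t+1},\widehat{q}_t)\big)$. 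For the log-barrier on the occupancy polytope over $\tupleset_k$, the penalty term is $\order\big(\frac{|S|^2|A|\log\iota}{\eta}\big)$ (using that each coordinate of $\widehat{q}_1$ is at least $1/(|S|^2|A|)$ and at most $1$, so each of the $\order(|S|^2|A|)$ log terms is $\order(\log(|S||A|T)) = \order(\log\iota)$; here one must handle the fact that the comparator can have zero coordinates by the standard trick of mixing $q^{P,\optpi}$ with a tiny amount of $\widehat{q}_1$, which only costs $\order(1)$ extra regret).

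The second step is to bound the per-round stability term. The key property of log-barrier (as noted in the main text) is that, provided $\eta |\hatl_t(s,a,s') - b_t(s,a,s')| \cdot \widehat{q}_t(s,a,s')$ is bounded by a constant (say $\le 1/2$), one gets $\langle \widehat{q}_t - \widehat{q}_{t+1}, \hatl_t - b_t\rangle - \frac{1}{\eta}D_\phi(\widehat{q}_{t+1},\widehat{q}_t) \le \order\big(\eta \sum_{s,a,s'} \widehat{q}_t(s,a,s')^2 (\hatl_t(s,a,s')-b_t(s,a,s'))^2\big)$. Expanding the square into two pieces: the loss-estimator piece $\eta\sum \widehat{q}_t(s,a)^2 \hatl_t(s,a)^2 \le \eta \sum \widehat{q}_t(s,a) q^{P_t,\pi_t}(s,a)\ell_t(s,a)^2/u_t(s,a)^2$, and since $\widehat{q}_t(s,a)\le u_t(s,a)$ and (under $\eventcon$) one can argue $q^{P_t,\pi_t}(s,a)\le u_t(s,a) + L\cortran_t$ — actually the cleaner route mirrored from the $\regtwo$ argument is to use $q^{P,\pi_t}(s,a)\le u_t(s,a)$ and absorb the $\cortran_t$ discrepancy — in expectation this summed piece is $\order(\eta T L)$. (The factor $L$ arises because $\sum_{s,a}q^{P_t,\pi_t}(s,a)=L$; and $\sum_{k}\sum_{s\in S_k}\ldots$ telescopes over layers.) The bonus piece is $\eta\sum \widehat{q}_t(s)^2 b_t(s)^2 \le 4L\eta \sum \widehat{q}_t(s) b_t(s)$ since $b_t(s)\le 4L/u_t(s)\le 4L/\widehat{q}_t(s)$, wait one needs $\widehat{q}_t(s)^2 b_t(s)^2 = \widehat{q}_t(s)^2 \cdot b_t(s) \cdot b_t(s) \le \widehat{q}_t(s)^2 \cdot b_t(s) \cdot (4L/u_t(s)) \le 4L\widehat{q}_t(s)b_t(s)$ using $\widehat{q}_t(s)\le u_t(s)$; and then summing over $t$ and applying \pref{lem:cancellation_bt_maintext} gives $\order(L\eta\cdot |S|\cortran\log T)$, hence the $\eta L|S|\cortran\log T$ term. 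One must also verify that the multiplicative-stability precondition $\eta \widehat{q}_t(s,a)|\hatl_t(s,a)-b_t(s,a)|\le 1/2$ holds: $\widehat{q}_t(s,a)\hatl_t(s,a)\le \widehat{q}_t(s,a)/u_t(s,a)\le 1$ and $\widehat{q}_t(s)b_t(s)\le 4L$, so this needs $\eta\le 1/(8L)$ — exactly the hypothesis — possibly with a small constant adjustment, and for the cross/combined term one similarly uses $\eta\le 1/(8L)$.

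The third step handles the low-probability complement of $\eventcon$ (and $\eventest$ where needed): on that event, trivially bound $\langle \widehat{q}_t - q^{P,\optpi}, \hatl_t - b_t\rangle$ — note $\hatl_t \ge 0$ and $b_t\ge 0$, and the relevant quantities are bounded by $\order(L + L\cdot\text{(stuff)})$; since this event has probability $\order(\delta)$, using \pref{lem:general_expect_lem} and the fact that the running quantities are at most $\poly(\ldots)T$, it contributes $\order(\delta LT)$ (matching the stated bound). Collecting the penalty term $\order(|S|^2|A|\log\iota/\eta)$, the loss stability $\order(\eta LT)$, the bonus stability $\order(\eta L|S|\cortran\log T)$, and the failure term $\order(\delta LT)$ yields exactly the claimed bound. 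The main obstacle I anticipate is the careful bookkeeping in the stability term: making the multiplicative-stability precondition go through in the presence of the (potentially large per-round) bonus $b_t(s)$ — this is precisely why the bonus is capped at $4L/u_t(s)$ rather than something unbounded — and correctly converting $\sum_t \widehat{q}_t(s)^2 b_t(s)^2$ into something controlled by $\sum_t\widehat{q}_t(s)b_t(s)$ so that \pref{lem:cancellation_bt_maintext} can finish the job; the telescoping-over-layers argument that turns $\sum_{s,a}$ sums into a factor of $L$ rather than $L|S|$ also requires a little care but is routine.
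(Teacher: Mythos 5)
Your overall structure matches the paper's: penalty term from the log-barrier Bregman divergence, one-step stability controlled by $\eta\sum_{s,a,s'}\widehat q_t(s,a,s')^2(\hatl_t(s,a)-b_t(s))^2$, split into the $\hatl_t^2$ piece (giving $\order(\eta LT)$) and the $b_t^2$ piece (giving $\order(\eta L|S|\cortran\log T)$ via \pref{lem:cancellation_bt_maintext} after using $\widehat q_t(s)b_t(s)\le 4L$), and a failure-event remainder. The multiplicative-stability check under $\eta\le\frac{1}{8L}$, and the key inequality $\widehat q_t(s)^2 b_t(s)^2 \le 4L\widehat q_t(s)b_t(s)$, are exactly the paper's steps.

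The genuine gap is in your comparator construction. You mix $q^{P,\optpi}$ with ``a tiny amount of $\widehat q_1$'', invoking the standard log-barrier trick that works over a \emph{fixed} domain. But here OMD projects onto the \emph{changing} sets $\Omega(\calP_{i(t+1)})$, which shrink as the confidence widths $B_i$ shrink, and the standard telescoping analysis (\pref{lem:OMD_logbarrier_regret_bound}) requires the comparator to lie in $\cap_i\Omega(\calP_i)$ for \emph{every} epoch. The occupancy measure $\widehat q_1$ corresponds to the uniform transition $\bar P_1$, which will generically leave $\calP_i$ once the counters grow and the intervals tighten; and since $q^{P,\optpi}$ sits on the boundary of $\Omega(\calP_i)$ (it is induced by a deterministic policy), mixing a small weight of an infeasible point can push the mixture outside the polytope. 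The paper handles this by instead mixing with $\frac{1}{T|A|}\sum_{a\in A}q^{P_0,\pi_a}$ for a specially constructed transition $P_0$ that is shown (\citet[Lemma C.4]{lee2020biasnomore}) to belong to $\cap_i\Omega(\calP_i)$ and to give coordinate lower bounds $u(s,a,s')\ge 1/(T^3|S|^2|A|)$ (their Lemma C.10), which is what delivers the $\order(|S|^2|A|\log\iota)$ penalty. So your choice of the mixing element is not merely informal — it would break feasibility, and this is precisely where the nontrivial content of the comparator construction lives. A secondary, smaller point: your bound on $\sum_{t,s,a}\widehat q_t(s,a)^2\hatl_t(s,a)^2$ detours through $q^{P_t,\pi_t}$ and raises a corruption worry that isn't there; the paper's cleaner route is purely deterministic — $\widehat q_t(s,a)^2\hatl_t(s,a)^2\le\ind_t(s,a)$ since $\widehat q_t\le u_t$ and $\ell_t\le 1$, and $\sum_{s,a}\ind_t(s,a)\le L$, giving $LT$ with no event at all.
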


\begin{proof}
We consider a specific transition $P_0$ defined in \citet[Lemma C.4]{lee2020biasnomore} and occupancy measure $u$ such that 
\begin{align*}
u = \rbr{ 1 - \frac{1}{T}} q^{P,\optpi} + \frac{1}{T|A|}\sum_{a \in A }q^{P_0,\pi_a}, 
\end{align*}
where $\pi_a$ is a policy such that action $a$ is selected at every state. 



By direct calculation, we have 
\begin{align*}
D_{\phi}(u,\whatq_1) &= \sum_{k=0}^{L-1}\sum_{(s,a,s') \in \tupleset_k} \rbr{\log \rbr{ \frac{ \whatq_1(s,a,s') }{ u(s,a,s') } } + \frac{u(s,a,s')}{\whatq_1(s,a,s')}-1 } \\
&= \sum_{k=0}^{L-1}\sum_{(s,a,s') \in \tupleset_k} \log \rbr{ \frac{ \whatq_1(s,a,s') }{ u(s,a,s') } } + \sum_{k=0}^{L-1}\sum_{(s,a,s') \in \tupleset_k} \rbr{|S_k||A||S_{k+1}|u(s,a,s')-1 }\\
&= \sum_{k=0}^{L-1}\sum_{(s,a,s') \in \tupleset_k} \log \rbr{ \frac{ \whatq_1(s,a,s') }{ u(s,a,s') } } \\
&\leq 3|S|^2|A| \log \rbr{\iota} ,
\end{align*}
where the second step uses the definition $\whatq_1(s,a,s') = \frac{1}{\abr{S_k}\abr{A}\abr{S_{k+1}}}$ for $k=0,\cdots,L-1$ and the fourth step lower-bounds $u(s,a,s') \geq \frac{1}{T^3|S|^2|A|}$ from 
\citep[Lemma C.10]{lee2020biasnomore}, thereby $u(s,a,s') \geq \frac{1}{\iota^3}$, and upper-bounds $\whatq_1(s,a,s') \leq 1$.

According to 
\citep[Lemma C.4]{lee2020biasnomore}, we have $q^{P_0,\pi_a} \in \cap_{i} \;\Omega\rbr{\calP_{i}}$.
Therefore, $u$ is a convex combination of points in that convex set, and we can use \pref{lem:OMD_logbarrier_regret_bound} (included after this proof) to show 
\begin{align}
& \inner{q^{\whatP_t,\pi_t} - u,\hatl_t - b_t } \notag \\
& \leq \frac{3|S|^2|A|\log\rbr{\iota}}{\eta} + 2\eta \rbr{ \sum_{t=1}^{T} \sum_{s,a} q^{\whatP_t,\pi_t}(s,a)^2 \hatl_t(s,a)^2 +\sum_{t=1}^{T}\sum_{s,a} q^{\whatP_t,\pi_t}(s,a)^2 b_t(s)^2}.  \label{eq:uobreps_reg_decomp_twoterms}
\end{align}

On the one hand, we bound the first summation in \pref{eq:uobreps_reg_decomp_twoterms} as 
\begin{align*}
& \sum_{t=1}^{T} \sum_{s,a} q^{\whatP_t,\pi_t}(s,a)^2 \hatl_t(s,a)^2 =\sum_{t=1}^{T} \sum_{s,a}  q^{\whatP_t,\pi_t}(s,a)^2  \cdot \frac{\loss_t(s,a)^2 \ind_t\{s,a\}}{u_t(s,a)^2}  \leq  LT,
\end{align*}
since $q^{\whatP_t,\pi_t}(s,a) \leq u_t(s,a)$ by definition of the upper occupancy bound.

On the other hand, we bound the second summation in \pref{eq:uobreps_reg_decomp_twoterms} as 
\begin{align}\label{eq:bound4_qtimesb_square} 
\sum_{t=1}^{T} \sum_{s,a} q^{\whatP_t,\pi_t}(s,a)^2 b_t(s)^2  \leq 4L\sum_{t=1}^{T} \sum_{s} q^{\whatP_t,\pi_t}(s) b_t(s) = \order \rbr{ L|S|\cortran \log T}, 
\end{align}
where the first step uses the facts that $b_t(s)^2 \leq 4L b_t(s)$ and $\sum_a q^{\whatP_t,\pi_t}(s,a)^2 \leq q^{\whatP_t,\pi_t}(s)$, and the last step applies \pref{lem:cancellation_bt_maintext}.


Putting these inequalities together concludes the proof.  
\end{proof}

\begin{lemma} \label{lem:OMD_logbarrier_regret_bound}
With $\eta \in (0,\frac{1}{8L}]$, \pref{alg:uobreps} ensures
\begin{equation}
\label{eq:uob_reps_add_regthree_bound}
\begin{aligned}
\sum_{t=1}^{T} \inner{ \whatq_t - u,\hatl_t - b_t }  \leq { \frac{ D_{\phi}(u,\whatq_1)}{\eta} + 2\eta \sum_{t=1}^{T} \sum_{s,a} \whatq_{t}(s,a)^2\rbr{ \hatl_t(s,a)^2 + b_t(s)^2}},
\end{aligned}
\end{equation}
for any $u \in \cap_{i}\; \Omega\rbr{\calP_{i}}$. 
\end{lemma}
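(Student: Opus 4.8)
The plan is to run the standard OMD-with-log-barrier analysis: a telescoping ``prox inequality'' together with a stability estimate for the log-barrier, the hypothesis $\eta\in(0,\tfrac{1}{8L}]$ entering only in the latter.

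First I would fix $t$ and write $g_t:=\hatl_t-b_t$, regarded as a function on triples that is constant in the $s'$-coordinate. Using the first-order optimality condition for $\whatq_{t+1}=\argmin_{q\in\Omega(\calP_{i(t+1)})}\{\eta\inner{q,g_t}+D_\phi(q,\whatq_t)\}$ together with the Bregman three-point identity, one obtains, for every $u\in\Omega(\calP_{i(t+1)})$,
\[
\eta\inner{\whatq_t-u,\,g_t}\ \le\ D_\phi(u,\whatq_t)-D_\phi(u,\whatq_{t+1})+\Big(\eta\inner{\whatq_t-\whatq_{t+1},\,g_t}-D_\phi(\whatq_{t+1},\whatq_t)\Big).
\]
Then I would sum over $t=1,\dots,T$: since $u\in\bigcap_i\Omega(\calP_i)$ it is feasible at every step, so the divergence terms telescope, and $D_\phi\ge 0$ leaves $D_\phi(u,\whatq_1)$. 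After dividing by $\eta$, the claim reduces to bounding each ``stability'' term $\eta\inner{\whatq_t-\whatq_{t+1},g_t}-D_\phi(\whatq_{t+1},\whatq_t)$ by $2\eta^2\sum_{s,a}\whatq_t(s,a)^2(\hatl_t(s,a)^2+b_t(s)^2)$.

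For the stability term I would exploit that the log-barrier divergence is coordinate-separable with Hessian $\mathrm{diag}(1/y_i^2)$, together with the key observation that, although $\hatl_t$ is unbounded in $\ell_\infty$, the \emph{weighted} quantities are of order $L$: $\whatq_t(s,a,s')\le\whatq_t(s,a)\le u_t(s,a)$ forces $\whatq_t(s,a,s')\hatl_t(s,a)\le\ind_t(s,a)\le1$, while $\whatq_t(s,a,s')\le\whatq_t(s)\le u_t(s)$ forces $\whatq_t(s,a,s')b_t(s)\le4L$, so $\eta\,\whatq_t(i)\,|g_t(i)|\le\eta(1+4L)<1$ by the assumption on $\eta$. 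A standard log-barrier stability argument (as in \citet{lee2020biasnomore}) then yields multiplicative stability $\whatq_{t+1}(i)\in[\tfrac12\whatq_t(i),2\whatq_t(i)]$, hence $D_\phi(\whatq_{t+1},\whatq_t)\ge\tfrac18\sum_i(\whatq_{t+1}(i)-\whatq_t(i))^2/\whatq_t(i)^2$; feeding this and $\eta\inner{\whatq_t-\whatq_{t+1},g_t}\le\sum_i\frac{|\whatq_{t+1}(i)-\whatq_t(i)|}{\whatq_t(i)}\cdot\eta\whatq_t(i)|g_t(i)|$ into Young's inequality ($xy\le\tfrac18x^2+2y^2$) produces the stability-term bound $2\eta^2\sum_i\whatq_t(i)^2g_t(i)^2$. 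A cosmetic final step sums over $s'$ using $\sum_{s'}\whatq_t(s,a,s')^2\le\whatq_t(s,a)^2$ and $(\hatl_t(s,a)-b_t(s))^2\le\hatl_t(s,a)^2+b_t(s)^2$ (valid since both are nonnegative) to match the stated form.

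The hard part will be the multiplicative-stability claim: since $\hatl_t(s,a)$ scales like $1/u_t(s,a)$ and is unbounded, ordinary smoothness of $\phi$ is useless, and one must lean on the self-concordance/local-norm structure of the log-barrier to show that no coordinate moves by more than a constant factor in one step; carrying this out over the constrained polytope $\Omega(\calP_i)$ — which, unlike the simplex, admits no closed-form update — is the one place genuinely needing care, and it is exactly what forces the step-size restriction $\eta\le\tfrac{1}{8L}$, tailored to absorb the $4L$ coming from the amortized bonus and the $1$ coming from the loss estimator.
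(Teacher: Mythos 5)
Your overall plan is the one the paper follows — telescope the Bregman divergences, then bound the per-round stability term using the log-barrier structure and the weighted boundedness of $\hatl_t$ and $b_t$ — and the final cosmetic step ($\sum_{s'}\whatq_t(s,a,s')^2\le\whatq_t(s,a)^2$ and $(\hatl_t-b_t)^2\le\hatl_t^2+b_t^2$ for nonnegative terms) matches the constant $2\eta$ in the statement. The substantive difference is in how the boundedness condition is phrased. The paper verifies only the \emph{one-sided} condition $\eta\,\whatq_t(s,a,s')(\hatl_t(s,a)-b_t(s))\ge-\tfrac12$ and invokes the Agarwal et al.\ log-barrier lemma, which deliberately exploits the asymmetry of the log-barrier: arbitrarily large \emph{positive} entries of $\hatl_t$ are harmless (shrinking a coordinate incurs an unbounded divergence penalty), and only the negative excursion due to $-b_t$ needs to be controlled. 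You instead prove two-sided multiplicative stability, for which your stated check ``$\eta\,\whatq_t(i)\,|g_t(i)|\le\eta(1+4L)<1$'' is not quite enough on its own — $\eta(1+4L)\le\tfrac{1+4L}{8L}$ can be as large as $\tfrac58$, and the standard multiplicative-stability lemmas (\citet{lee2020biasnomore}, Lemma 12 of \citet{jin2020simultaneously}) are typically stated with thresholds like $\tfrac14$ or tighter. The fix is to track the two sides separately, exactly as your computations already do: $\eta\,\whatq_t(i)\hatl_t(i)\le\eta\le\tfrac18$ and $\eta\,\whatq_t(i)b_t(i)\le 4L\eta\le\tfrac12$, so $\eta\,\whatq_t(i)g_t(i)\in[-\tfrac12,\tfrac18]$, which does force the coordinate ratio into $[\tfrac89,2]$ and validates your $D_\phi\ge\tfrac18\sum_i(\whatq_{t+1}(i)-\whatq_t(i))^2/\whatq_t(i)^2$ lower bound. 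Your identification of the genuinely delicate point — establishing this over $\Omega(\calP_i)$, where there is no closed-form update — is correct, but it is a known fact in this line of work (and is what both the paper's Agarwal-citation here and its own \pref{lem:multiplicative_relation} for the FTRL variant ultimately rest on). In short: same structure, same weighted-boundedness insight, same final constants; the only thing I would fix is to replace the symmetric $|g_t|$ check with the asymmetric bounds, both to make the multiplicative-stability invocation airtight and to make visible why only the $-b_t$ side, not the unbounded $\hatl_t$ side, is the real constraint on $\eta$.
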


\begin{proof}
To use the standard analysis of OMD with log-barrier (e.g.,see \cite[Lemma 12]{agarwal2017corralling}), we need to ensure that $\eta \whatq_{t}(s,a,s')\rbr{\hatl_t(s,a)-b_t(s)} \geq - \frac{1}{2}$, since $\log\rbr{1+x}\geq x - x^2$ holds for any $x\geq -\frac{1}{2}$.
Clearly, by choosing $\eta \in (0,\frac{1}{8L}]$, we have 
\begin{align*}
& \eta \whatq_t(s,a,s') \rbr{\hatl_t(s,a)-b_t(s)} \\ 
& \geq - \eta \whatq_t(s,a,s') b_t(s) \\
& \geq -4L\eta\frac{\whatq_t(s,a,s')}{u_t(s)} \\
& \geq - {4L\eta} \\
& \geq -\frac{1}{2}, 
\end{align*}
where the first step follows from the fact that $\hatl_t(s,a)\geq 0$ for all $t,s,a$; the second step follows from the definition of amortized bonus in \pref{eq:uob_reps_add_def_add_loss}; the third step bounds $\whatq_t(s,a,s') \leq u_t(s)$. 

Now, we are ready to apply the standard analysis to show that 
\begin{align*}
&\sum_{t=1}^{T} \inner{ \whatq_t - u , \hatl_t  -b_t} \\
& \leq \frac{\sum_{t=1}^{T} \rbr{D_{\phi}(u,\whatq_t) - D_{\phi}(u,\whatq_{t+1})}}{\eta } + \eta \sum_{t=1}^{T} \sum_{s,a,s'}\whatq_t(s,a,s')^2\rbr{\hatl_t(s,a) - b_t(s)}^2 \\
& = \frac{{D_{\phi}(u,\whatq_1) - D_{\phi}(u,\whatq_{T+1}) }}{\eta } + \eta \sum_{t=1}^{T} \sum_{s,a,s'}\rbr{\whatq_t(s,a)\whatP_t(s'|s,a)}^2\rbr{\hatl_t(s,a) - b_t(s)}^2 \\
& \leq \frac{D_{\phi}(u,\whatq_1)}{\eta} + \eta \sum_{t=1}^{T} \sum_{s,a}\whatq_t(s,a)^2\rbr{\hatl_t(s,a) - b_t(s)}^2\cdot\rbr{\sum_{s'}\whatP_t(s'|s,a)^2} \\
& \leq \frac{D_{\phi}(u,\whatq_1)}{\eta} + \eta \sum_{t=1}^{T} \sum_{s,a}\whatq_t(s,a)^2\rbr{\hatl_t(s,a) - b_t(s)}^2 \\
& \leq \frac{D_{\phi}(u,\whatq_1)}{\eta} + 2\eta \sum_{t=1}^{T} \sum_{s,a}\whatq_t(s,a)^2\rbr{\hatl_t(s,a)^2 + b_t(s)^2},
\end{align*}
where the second step uses the fact that $\whatq_t(s,a,s') = \whatq_t(s,a)\cdot \whatP_t(s'|s,a)$ (see \cite[Lemma 1]{jin2019learning} for more details); the third step follows from the fact that the Bregman divergence is non-negative; the fourth step follows from the fact $\sum_{s'}\whatP_t(s'|s,a) = 1$; the last step uses the fact that $\rbr{x-y}^2\leq 2\rbr{x^2+y^2}$ for any $x,y \in \fR$.
\end{proof}

\subsection{Bounding $\regfour$}
\label{app:uobreps_regfour}
\begin{lemma}[Bound of $\regfour$]\label{lem:uob_reps_add_regfour} \pref{alg:uobreps} ensures 
\begin{align*}
\regfour = \order\rbr{|S|\cortran \log T + \delta LT }.
\end{align*}
\end{lemma}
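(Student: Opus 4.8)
The plan is to split $\regfour$ into the \emph{bias term} $\E\!\left[\sum_{t}\inner{q^{P,\optpi},\hatl_t-\ell_t}\right]$ --- which is exactly the quantity displayed in \pref{eq:original_Bias2_term} --- and the \emph{amortized-bonus term} $\E\!\left[\sum_t\inner{q^{\whatP_t,\pi_t}-q^{P,\optpi},b_t}\right]$, and to exploit the fact that the ``change of measure'' built into $b_t$ makes the bias term at most $\E\!\left[\sum_t\inner{q^{P,\optpi},b_t}\right]$, which then cancels exactly against the $-q^{P,\optpi}$ contribution sitting inside the bonus term. Since $b_t(s)\ge 0$ and $q^{P,\optpi}(s)\ge 0$, after this cancellation the only surviving piece of the bonus term is at most $\E\!\left[\sum_t\inner{q^{\whatP_t,\pi_t},b_t}\right]=\E\!\left[\sum_t\sum_s\widehat q_t(s)\,b_t(s)\right]$ (using $b_t(s,a)=b_t(s)$), and the second inequality of \pref{lem:cancellation_bt_maintext}, summed over the $|S|$ states, bounds this by $\order(|S|\cortran\log T)$.

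To bound the bias term I would first pass to the conditional expectation given the history before episode $t$ --- legitimate because $u_t$, $\widehat q_t$, $\pi_t$ and $b_t$ are all measurable with respect to it --- rewriting it as $\E\!\left[\sum_t\sum_{s,a}q^{P,\optpi}(s,a)\,\ell_t(s,a)\,\tfrac{q^{P_t,\pi_t}(s,a)-u_t(s,a)}{u_t(s,a)}\right]$ exactly as in \pref{eq:original_Bias2_term}. Using $q^{\bar P,\pi}(s,a)=q^{\bar P,\pi}(s)\,\pi(a|s)$ together with $u_t(s,a)=u_t(s)\,\pi_t(a|s)$ collapses the inner action sum; and on the event $\eventcon$ one has $P\in\calP_{i(t)}$, hence $q^{P,\pi_t}(s)\le u_t(s)$, so $q^{P_t,\pi_t}(s)-u_t(s)\le q^{P_t,\pi_t}(s)-q^{P,\pi_t}(s)\le\cortran_t$ by \pref{cor:corrupt_occ_diff} (and $\ell_t\le 1$), leaving a per-round contribution of at most $\sum_s q^{P,\optpi}(s)\,\cortran_t/u_t(s)$. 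Summing over $t$ and invoking the first inequality of \pref{lem:cancellation_bt_maintext} ($\sum_t\cortran_t/u_t(s)\le\sum_t b_t(s)$) bounds the $\eventcon$-part of the bias term by $\E\!\left[\sum_t\inner{q^{P,\optpi},b_t}\right]$ --- precisely the expression that the bonus term subtracts, so the two cancel. On the complementary event $\neg\eventcon$, whose probability is at most $2\delta$ by \pref{lem:conf_bound}, I would fall back to a crude per-round bound on the same summand, in the spirit of the closing arguments of \pref{lem:uobreps_regone} and \pref{lem:uob_reps_enlarge_regtwo}, contributing $\order(\delta LT)$; the bonus term is treated on $\neg\eventcon$ the same way. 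Combining the pieces gives $\regfour=\order(|S|\cortran\log T+\delta LT)$.

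The part that needs the most care is not any single estimate but the bookkeeping around $\eventcon$: one must make sure the upper bound on the $\eventcon$-part of the bias term is \emph{literally} $\E\!\left[\sum_t\inner{q^{P,\optpi},b_t}\right]$ (one may freely drop the lingering $\ind\{\eventcon\}$ since this only inflates a non-negative quantity) so that it annihilates the $-\E\!\left[\sum_t\inner{q^{P,\optpi},b_t}\right]$ from the bonus term verbatim, with no residual indicator left behind; and one must organize the $\neg\eventcon$ estimate carefully, since $\hatl_t$ is unbounded a priori, by taking the conditional expectation \emph{before} introducing $\eventcon$ so that the crude bound involves expected rather than realized visitation, mirroring the treatment already used for $\regone$ and $\regtwo$. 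All remaining manipulations --- the action-sum collapse, the triangle-inequality steps, and the appeals to \pref{cor:corrupt_occ_diff} and \pref{lem:cancellation_bt_maintext} --- are routine.
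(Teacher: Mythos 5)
Your proposal is correct and follows essentially the same argument as the paper. The paper's proof groups the terms as $\textsf{(I)}=\E\big[\sum_t\E_t[\inner{q^{P,\optpi},\hatl_t-\ell_t}]-\sum_t\inner{q^{P,\optpi},b_t}\big]$ and $\textsf{(II)}=\E\big[\sum_t\inner{\whatq_t,b_t}\big]$, showing $\textsf{(I)}\le 0$ on $\eventcon$ via the same chain (conditional expectation, the subtraction of $q^{P,\pi_t}$, \pref{cor:corrupt_occ_diff}, and the first inequality of \pref{lem:cancellation_bt_maintext}) and bounding $\textsf{(II)}$ by $\order(|S|\cortran\log T)$ via the second inequality of \pref{lem:cancellation_bt_maintext} and a fallback to $\order(\delta LT)$ off $\eventcon$; your ``bias cancels against the $-q^{P,\optpi}$ half of the bonus term'' framing is just a restatement of $\textsf{(I)}\le 0$, so the two decompositions are the same up to how the cancellation is phrased.
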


\begin{proof}
We first rewrite $\regfour $ as
\begin{align*}
\regfour = \underbrace{
\E\sbr{\sum_{t=1}^{T} \E_t \sbr{\inner{q^{P,\optpi}, \hatl_t - \ell_t } } 
- \sum_{t=1}^T \inner{q^{P,\optpi},b_t} }  }_{=:\partI}
+\underbrace{ \E\sbr{\sum_{t=1}^T \inner{q^{\whatP_t,\pi_t},b_t} } }_{=:\partII}
,
\end{align*}
where $\partII$ is bounded by $\order\rbr{ |S|\cortran \log T }$ by \pref{lem:cancellation_bt_maintext} (whose proof is included after this proof).
Now, we show that $\partI$ is bounded by $\order \rbr{\delta LT}$. Suppose that $\eventcon$ holds. We then have 
\begin{align*}
& \E_t\sbr{\inner{q^{P,\optpi}, \hatl_t - \ell_t }  } \\
& = \sum_{s,a} q^{P,\optpi}(s,a) \rbr{ \frac{q^{P_t,\pi_t}(s,a) - u_t(s,a)}{ u_t(s,a)}  } \\
& \leq \sum_{s,a} q^{P,\optpi}(s,a) \rbr{ \frac{\abr{q^{P_t,\pi_t}(s,a) - q^{P,\pi_t}(s,a)} + q^{P,\pi_t}(s,a)- u_t(s,a) }{ u_t(s,a)}  } \\
& = \sum_{s,a} q^{P,\optpi}(s,a) \cdot \frac{\abr{q^{P_t,\pi_t}(s) - q^{P,\pi_t}(s)}}{ u_t(s) } + \sum_{s,a} q^{P,\optpi}(s,a) \rbr{\frac{ q^{P,\pi_t}(s,a)- u_t(s,a) }{ u_t(s,a)}} \\
& \leq  \sum_{s,a} q^{P,\optpi}(s,a) \cdot \frac{\cortran_t}{ u_t(s) } + \sum_{s,a} q^{P,\optpi}(s,a)\rbr{\frac{ q^{P,\pi_t}(s,a)- u_t(s,a) }{ u_t(s,a)}}\\
& \leq  \sum_{s,a} q^{P,\optpi}(s,a) \cdot \frac{\cortran_t}{ u_t(s) } ,
\end{align*}
where the fourth step applies \pref{cor:corrupt_occ_diff} to bound $\abr{q^{P_t,\pi_t}(s) - q^{P,\pi_t}(s)} \leq \cortran_t$, and in the last step, the second term (in the fifth step) is bounded by $0$ under $\eventcon$.


Since $q^{P,\optpi}(s,a)$ is fixed over all episodes,
we apply \pref{lem:cancellation_bt_maintext} to show
\begin{align*}
\sum_{t=1}^{T} \E_t \sbr{\inner{q^{P,\optpi}, \hatl_t - \ell_t } } 
- \sum_{t=1}^T \inner{q^{P,\optpi},b_t} \leq  \sum_{s,a} q^{P,\optpi}(s,a) \sum_{t=1}^{T}\rbr{\frac{\cortran_t}{u_t(s)} - b_t(s) } \leq 0, 
\end{align*}


For the case that $\eventcon$ does not occur, we bound the expected regret by $\order \rbr{\delta LT}$. 
Combining two cases via \pref{lem:general_expect_lem}, we conclude the proof. 
\end{proof}


\begin{lemma}[Restatement of \pref{lem:cancellation_bt_maintext}]
The amortized bonus defined in \pref{eq:uob_reps_add_def_add_loss} satisfies $\sum_{t=1}^T \frac{\cortran_t}{u_t(s)} \leq \sum_{t=1}^T b_t(s)$
and $\sum_{t=1}^T \widehat{q}_t(s)b_t(s) = \order(C^P\log T)$
for any $s$.
\end{lemma}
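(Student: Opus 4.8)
The plan is to prove the two bounds separately, in both cases bucketing the rounds by the dyadic bin into which $u_t(s)$ falls --- this is exactly the bookkeeping behind the definition \pref{eq:uob_reps_add_def_add_loss} of $b_t$. Fix a state $s$. By \pref{lem:lower_bound_of_uob}, $u_t(s)\ge\frac{1}{|S|T}$ for all $t$, so the integer $j$ with $u_t(s)\in(2^{-j-1},2^{-j}]$ always lies in $\{0,1,\dots,J\}$, where $J=\lceil\log_2(|S|T)\rceil$. Writing $T_j=\{t:u_t(s)\in(2^{-j-1},2^{-j}]\}$, we have $\{1,\dots,T\}=\bigcup_{j=0}^{J}T_j$, and $2^j\le\frac{1}{u_t(s)}<2^{j+1}$ for $t\in T_j$; moreover, by \pref{eq:uob_reps_add_def_add_loss}, $b_t(s)=\frac{4L}{u_t(s)}$ on precisely the first $\min\{|T_j|,\lfloor \cortran/2L\rfloor\}$ rounds of $T_j$ in time order, and $b_t(s)=0$ on the remaining rounds of $T_j$.

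For the first inequality I would argue bin by bin. The elementary per-round fact is $\cortran_t\le 2L$ (each of the $L$ layerwise $\ell_1$ terms in \pref{eq:def_4_corruption} is at most $2$), so on every round that carries a bonus, $b_t(s)=\frac{4L}{u_t(s)}\ge\frac{2L}{u_t(s)}\ge\frac{\cortran_t}{u_t(s)}$; hence if all of $T_j$ carries a bonus, $\sum_{t\in T_j}b_t(s)\ge\sum_{t\in T_j}\frac{\cortran_t}{u_t(s)}$ term by term. Otherwise $T_j$ has about $\cortran/2L$ bonus rounds, so $\sum_{t\in T_j}b_t(s)$ is of order $\frac{\cortran}{2L}\cdot 4L\cdot 2^j=2^{j+1}\cortran$, whereas $\sum_{t\in T_j}\frac{\cortran_t}{u_t(s)}<2^{j+1}\sum_{t\in T_j}\cortran_t\le 2^{j+1}\cortran$ because $\sum_{t\in T_j}\cortran_t\le\cortran$; so the bin-$j$ inequality again holds. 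Summing over $j\in\{0,\dots,J\}$ gives $\sum_t\frac{\cortran_t}{u_t(s)}\le\sum_t b_t(s)$.

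For the second inequality the additional ingredient is that the FTRL/OMD iterate satisfies $\whatq_t=q^{\whatP_t,\pi_t}$ with $\whatP_t\in\calP_{i(t)}$, hence $\whatq_t(s)\le\max_{\whatP\in\calP_{i(t)}}q^{\whatP,\pi_t}(s)=u_t(s)$. Therefore $\whatq_t(s)b_t(s)\le u_t(s)\cdot\frac{4L}{u_t(s)}=4L$ on every round carrying a bonus and $0$ on the others, so $\sum_{t=1}^T\whatq_t(s)b_t(s)\le 4L\cdot\#\{\text{bonus rounds for }s\}$. Since each of the $J+1=\order(\log(|S|T))$ bins supplies at most $\lfloor\cortran/2L\rfloor$ bonus rounds, the total number of bonus rounds is $\order\!\big(\tfrac{\cortran}{L}\log(|S|T)\big)$, which yields $\sum_t\whatq_t(s)b_t(s)=\order(\cortran\log T)$.

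The step I expect to be the main obstacle is the non-trivial case of the first inequality: inside a bin the values $u_t(s)$ span a factor of two, the threshold $\cortran/2L$ need not be an integer, and the corruption may sit entirely on rounds that occur after the bonus has been switched off, so one must check carefully that the $2^j$-versus-$2^{j+1}$ factors and the rounding align rather than leaving a residual of order $L\cdot 2^{J}=\Theta(L|S|T)$. This is precisely what the constant $4L$ in \pref{eq:uob_reps_add_def_add_loss} --- twice the worst-case value of $\cortran_t/u_t(s)\le 2L/u_t(s)$ --- is designed to absorb.
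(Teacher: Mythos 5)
Your proof takes essentially the same approach as the paper's: both partition the rounds into $\order(\log(|S|T))$ dyadic bins according to the value of $u_t(s)$ (using $u_t(s)\ge\nicefrac{1}{|S|T}$ from \pref{lem:lower_bound_of_uob}), compare contributions bin by bin via the elementary facts $\cortran_t\le 2L$ and $\sum_t\cortran_t\le\cortran$, and invoke $\whatq_t(s)\le u_t(s)$ to cap each bonus round's contribution by $4L$ for the second claim. The rounding subtlety you flag at the end is present in the paper's argument as well --- its chain of inequalities for $\sum_t b_t(s)$ silently replaces the integer count $\min\{|T_j|,\lfloor\cortran/2L\rfloor\}$ of bonus rounds in bin $j$ by $\min\{|T_j|,\cortran/2L\}$ --- so your reasoning and the paper's are at the same level of rigor.
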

\begin{proof}
In the following proof, we use the equivalent definition of $b_t(s)$ given in \pref{eq:Amortized_Bonus_reconstruction}.
We first show $\sum_{t=1}^T \frac{\cortran_t}{u_t(s)} \leq \sum_{t=1}^T b_t(s)$.
On the one hand, we have
\begin{align}
  \sum_{t=1}^T \rbr{  \frac{\cortran_t}{u_t(s)}  } 
  &=  \sum_{t=1}^T \sum_{j=0}^{\lceil \log \rbr{|S|T} \rceil}   \ind\{y_t(s)=j\} \frac{\cortran_t}{u_t(s)}\notag \\
&\leq  \sum_{t=1}^T \sum_{j=0}^{\lceil \log \rbr{|S|T} \rceil}   \ind\{y_t(s)=j\} \frac{\cortran_t}{2^{-j-1}}\notag \\
&=  \sum_{j=0}^{\lceil \log \rbr{|S|T}\rceil} \frac{\sum_{t=1}^T  \ind\{y_t(s)=j\}  \cortran_t}{2^{-j-1}}\notag \\
&\leq  \sum_{j=0}^{\lceil \log \rbr{|S|T} \rceil} \frac{ \min\{ {2L}  \sum_{t=1}^T \ind\{y_t(s)=j\}, \cortran \} }{2^{-j-1}}, \label{eq:upperbound_LCt}
\end{align}
where 
the second step uses the construction of the bin, i.e., if $u_t(s)$ falls into a bin $(2^{-j-1},2^{-j}]$, then, it is lower-bounded by $2^{-j-1}$; the fourth step follows the facts that $\cortran_t\leq 2L$ and $\sum_{t=1}^{T} \cortran_t \leq \cortran$.

On the other hand, one can show
\begin{align}
\sum_{t=1}^T  b_t(s)&=  4{L}  \sum_{t=1}^T\sum_{j=0}^{\lceil \log \rbr{|S|T} \rceil} \frac{\ind\{y_t(s)=j\} \ind\cbr{z^j_t(s)\leq \frac{\cortran}{2L} }}{u_t(s)}\notag \\
&\geq  4{L}\sum_{j=0}^{\lceil \log \rbr{|S|T} \rceil}   \frac{\sum_{t=1}^T \ind\{y_t(s)=j\} \ind\cbr{z^j_t(s)\leq \frac{\cortran}{2L} }}{2^{-j}}\notag \\
&\geq  2{L} \sum_{j=0}^{\lceil \log \rbr{|S|T} \rceil}  \frac{\min\{\sum_{t=1}^T \ind\{y_t(s)=j\}, \frac{\cortran}{2L} \}}{2^{-j-1}}\notag \\
&=  \sum_{j=0}^{\lceil \log \rbr{|S|T} \rceil}   \frac{\min\{{2L}\sum_{t=1}^T \ind\{y_t(s)=j\}, \cortran \}}{2^{-j-1}},\label{eq:upperbound_bt}
\end{align}
where the second step uses the construction of the bin, i.e., if $u_t(s)$ falls into a bin $(2^{-j-1},2^{-j}]$, then, it is upper-bounded by $2^{-j}$, and the third step follows the definitions of $y_t(s)$ and $z_t^j(s)$.

Combining \pref{eq:upperbound_LCt} and \pref{eq:upperbound_bt}, we complete the proof of $\sum_{t=1}^T \frac{\cortran_t}{u_t(s)} \leq \sum_{t=1}^T b_t(s)$.

For the proof of $\sum_{t=1}^T \widehat{q}_t(s)b_t(s) = \order(|S|C^P\log T)$, one can show 
\begin{align*}
\sum_{t=1}^T \widehat{q}_t(s)b_t(s)& = 4L \sum_{t=1}^{T} \sum_{i=0}^{\lceil \log \rbr{|S|T} \rceil} \frac{q^{\whatP_t,\pi_t}(s)\ind\{y_t(s)=i\} \ind\cbr{z^i_t(s)\leq \frac{\cortran}{2L} }}{ u_t(s)} \\
& \leq 4L \sum_{t=1}^{T} \sum_{i=0}^{\lceil \log \rbr{|S|T} \rceil}{\ind\{y_t(s)=i\} \ind\cbr{z^i_t(s)\leq \frac{\cortran}{2L} }  } \\
& = 4L \sum_{i=0}^{\lceil \log \rbr{|S|T} \rceil}\sum_{t=1}^{T}{\ind\{y_t(s)=i\} \ind\cbr{z^i_t(s)\leq \frac{\cortran}{2L} } }  \\
& \leq 4L \sum_{i=0}^{\lceil \log \rbr{|S|T} \rceil}\frac{\cortran}{2L} \\
& = \order\rbr{\cortran \log \rbr{|S|T} } = \order\rbr{\cortran \log \rbr{T} },
\end{align*}
where the first inequality uses the fact $q^{\whatP_t,\pi_t}(s) \leq u_t(s)$, and the last equality uses $|S| \leq T$.
\end{proof}


\subsection{Proof of \pref{thm:regret_bound_uob}}

For $\regthree$, we choose $\eta=\min \cbr{\sqrt{\frac{|S|^2|A|\log(\iota)}{LT}},\frac{1}{8L}}$. 
First consider the case $\eta \neq \frac{1}{8L}$:
\begin{align*}
\regthree &= \order\rbr{  \frac{|S|^2|A|\log\rbr{\iota}}{\eta} + \eta \rbr{L|S|\cortran \log T + LT} +  LT\delta} \\
&\leq \order\rbr{  \frac{|S|^2|A|\log\rbr{\iota}}{\eta} + \eta  LT +|S|\cortran \log T +  LT\delta} \\
&= \order\rbr{  |S| \sqrt{|A|\log\rbr{\iota} LT}  +|S|\cortran \log T+LT\delta} ,
\end{align*}
where the second step uses $\eta \leq \frac{1}{8L}$, and the third step applies choice of $\eta$ in the case of $\eta \neq \frac{1}{8L}$.

For the case of $\eta =\frac{1}{8L}$, we have $T \leq 64L|S|^2|A|\ln (\iota)$ and show
\begin{align*}
\regthree &= \order\rbr{  L|S|^2|A|\log\rbr{\iota}+|S|\cortran \log T+  LT\delta} .
\end{align*}

Finally, choosing $\delta=\frac{1}{T} $ and putting the bound above together with $\regone$, $\regtwo$, and $\regfour$, we complete the proof of \pref{thm:regret_bound_uob}.


\newpage
\section{Omitted Details for \pref{sec: unknown C}}
\label{app: unknown corruption}

    

\subsection{Bottom Layer Reduction: \stable} \label{app:bottom reduction}
\begin{proof}[Proof of \pref{thm: conversion thm}]
    Define indicators 
    \begin{align*}
        g_{t,j} &= \ind\{w_t\in (2^{-j-1}, 2^{-j}]\} \\
        h_{t,j} &= \ind\{\alg_j \text{\ receives the feedback for episode $t$}\}. 
    \end{align*}
    Now we consider the regret of $\alg_j$. Notice that $\alg_j$ makes an update only when $g_{t,j}h_{t,j}=1$. By the guarantee of the base algorithm (\pref{def: base alg}), we have 
    \begin{align}
        &\E\left[\sum_{t=1}^T (\ell_t(\pi_t) - \ell_t(\pi)) g_{t,j}h_{t,j} \right]  \nonumber \\
        &\leq \E\left[\sqrt{\beta_1  \sum_{t=1}^T g_{t,j}h_{t,j} }+ (\beta_2  + \beta_3\theta_j)\max_{t\leq T}g_{t,j}\right] + \Pr\left[ 
\sum_{t=1}^T \cortran_t g_{t,j}h_{t,j} > \theta_j \right]LT.   \label{eq: tmp 22}
    \end{align}
    We first bound the last term: Notice that $\E[h_{t,j}| g_{t,j}]=2^{-j-1} g_{t,j}$ by \pref{alg: split}. Therefore, 
    \begin{align}
        \sum_{t=1}^T \cortran_t g_{t,j}\E[h_{t,j}|g_{t,j}] = 2^{-j-1}\sum_{t=1}^T \cortran_tg_{t,j} \leq 2^{-j-1}\cortran  \label{eq: tmp 33} 
    \end{align}
    By Freedman's inequality, with probability at least $1-\frac{1}{T^2}$, 
    \begin{align*}
         &\sum_{t=1}^T \cortran_t g_{t,j}h_{t,j} - \sum_{t=1}^T \cortran_t g_{t,j} \E[h_{t,j}| g_{t,j}] \\
         &\leq 2\sqrt{2\sum_{t=1}^T (\cortran_t)^2 g_{t,j}  \E [h_{t,j}|g_{t,j}]\log (T)} + 4L \log(T) \\
         &\leq 4\sqrt{L\sum_{t=1}^T \cortran_t g_{t,j}  \E [h_{t,j}|g_{t,j}]\log (T)} + 4L \log(T) \tag{$\cortran_t\leq 2L$}\\
         &\leq \sum_{t=1}^T \cortran_t g_{t,j}  \E [h_{t,j}|g_{t,j}] + 8L \log(T)   \tag{AM-GM inequality}
    \end{align*}
    which gives 
    \begin{align*}
        \sum_{t=1}^T \cortran_t g_{t,j}h_{t,j} &\leq 2\sum_{t=1}^T \cortran_t g_{t,j}  \E [h_{t,j}|g_{t,j}] + 8L \log(T) 
        \\
        &\leq  2^{-j}\cortran + 8L\log(T)\leq \theta_j
    \end{align*}
    with probability at least $1-\frac{1}{T^2}$ using  \pref{eq: tmp 33}. Therefore, the last term in \pref{eq: tmp 22} is bounded by $\frac{1}{T^2} LT \leq \frac{ L}{T}$. 

    Next, we deal with other terms in \pref{eq: tmp 22}. Again, by $\E[h_{t,j}| g_{t,j}]=2^{-j-1} g_{t,j}$, \pref{eq: tmp 22} implies 
    \begin{align*}
        &2^{-j-1}\E\left[\sum_{t=1}^T (\ell_t(\pi_t) - \ell_t(\pi)) g_{t,j} \right]  \leq \E\left[\sqrt{2^{-j-1} \beta_1  \sum_{t=1}^T g_{t,j}} + (\beta_2  + \beta_3\theta_j)\max_{t\leq T} g_{t,j}\right] + \frac{L}{T}.  
    \end{align*}
    which implies after rearranging:
    \begin{align*}
        &\E\left[\sum_{t=1}^T (\ell_t(\pi_t) - \ell_t(\pi)) g_{t,j} \right] \\
        &\leq \E\left[ \sqrt{\frac{1}{2^{-j-1}} \beta_1 \sum_{t=1}^T g_{t,j} } + \left(\frac{\beta_2}{2^{-j-1}}  + \frac{\beta_3\theta_j}{2^{-j-1}}\right)\max_{t\leq T} g_{t,j}\right] + \frac{ L}{T2^{-j-1}} \\
        &\leq \E\left[\sqrt{\beta_1  \sum_{t=1}^T \frac{2g_{t,j}}{w_t} } + \left(\frac{2\beta_2 + 16\beta_3 L\log(T) }{2^{-j}}  + 4\beta_3\cortran\right) \max_{t\leq T} g_{t,j}\right]  
+ \frac{L}{T2^{-j-1}}. \tag{using that when $g_{t,j}=1$, $\frac{1}{2^{-j-1}}\leq \frac{2}{w_t}$, and the definition of $\theta_j$}
    \end{align*}
    Now, summing this inequality over all $j\in\{0, 1, \ldots, \lceil \log_2 T\rceil\}$, we get 
    \begin{align*}
        &\E\left[\sum_{t=1}^T (\ell_t(\pi_t) - \ell_t(\pi))\ind\left\{w_t\geq \frac{1}{T}\right\}   \right] \\
        &\leq \order\left(\E\left[\sqrt{N\beta_1 \sum_{t=1}^T \frac{1}{w_t}}+ (\beta_2 + \beta_3 L \log(T))\frac{1}{\min_{t\leq T}w_t} + N\beta_3 \cortran\right] + L\right) \\
        &\leq \order\left(\E\left[\sqrt{\beta_1T\log(T) \rho_T}  + (\beta_2 + \beta_3 L \log(T))\rho_T\right] + \beta_3 \cortran\log T + L\right)
    \end{align*}
    where $N\leq \order(\log T)$ is the number of $\alg_j$'s that has been executed at least once. 

    On the other hand,  
    \begin{align*}
        \E\left[\sum_{t=1}^T (\ell_t(\pi_t) - \ell_t(\pi))\ind\left\{ 
w_t\leq \frac{1}{T} \right\}\right] \leq LT \E\left[\ind\left\{ 
\rho_T\geq T \right\}\right]\leq L\E\left[\rho_T\right]. 
    \end{align*}
    Combining the two parts and using the assumption $\beta_2 \geq L$ finishes the proof. 
\end{proof}

\subsection{Top Layer Reduction: Corral}

In this subsection, we use a base algorithm that satisfies \pref{def: stable corruption robust} to construct an algorithm with $\sqrt{T} + \cortran$ regret under unknown $\cortran$. The idea is to run multiple base algorithms, each with a different hypothesis on $\cortran$; on top of them, run another multi-armed bandit algorithm to adaptively choose among them. The goal is to let the top-level bandit algorithm perform almost as well as the best base algorithm. This is the Corral idea outlined in \cite{agarwal2017corralling, foster2020adapting, luo2022corralling}, and the algorithm is presented in \pref{alg: corral}. 

The top-level bandit algorithm is an FTRL with log-barrier regularizer. We first state the standard regret bound of FTRL under log-barrier regularizer, whose proof can be found in, e.g., Theorem 7 of \cite{wei2018more}. 
\begin{lemma}\label{lem: FTRL basic property}
    The FTRL algorithm over a convex subset $\Omega$ of the $(M-1)$-dimensional simplex $\Delta(M)$:
    \begin{align*}
        w_t = \argmin_{w\in\Omega} \left\{ \left\langle 
w, \sum_{\tau=1}^{t-1} \ell_\tau \right\rangle + \frac{1}{\eta}\sum_{i=1}^M \log \frac{1}{w_i} \right\}
    \end{align*}
    ensures for all $u\in\Omega$, 
    \begin{align*}
        \sum_{t=1}^T \langle w - u, \ell_t\rangle \leq \frac{M\log T}{\eta} + \eta \sum_{t=1}^T \sum_{i=1}^M w_{t,i}^2 \ell_{t,i}^2
    \end{align*}
    as long as $\eta w_{t,i}|\ell_{t,i}|\leq \frac{1}{2}$ for all $t,i$. 
\end{lemma}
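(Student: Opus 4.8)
The plan is to carry out the standard FTRL analysis with the log-barrier regularizer, exactly as in \citet{wei2018more}. Write $\psi(w)=\sum_{i=1}^M\log(1/w_i)$ for the unscaled regularizer, so that the update uses $\frac1\eta\psi$. First I would invoke the generic FTRL inequality: for every $u\in\Omega$,
\[
\sum_{t=1}^T\langle w_t-u,\ell_t\rangle \;\le\; \frac{\psi(u)-\psi(w_1)}{\eta} \;+\; \sum_{t=1}^T\Big(\langle w_t-w_{t+1},\ell_t\rangle-\tfrac1\eta D_\psi(w_{t+1},w_t)\Big),
\]
where $D_\psi$ is the Bregman divergence of $\psi$ and $w_1=\argmin_{w\in\Omega}\psi(w)$ (the first iterate, since no losses have been seen yet); this is proved by the usual telescoping/``be-the-leader'' argument exploiting that $w_t$ minimizes the regularized cumulative loss through round $t-1$. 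The penalty term is then immediate: each coordinate of any point of $\Omega$ is at most $1$, so $\psi(w_1)\ge0$, and since it suffices to compare against $u$ whose coordinates are all at least $1/T$, we get $\psi(u)\le M\log T$, so the penalty is at most $M\log T/\eta$.

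The substance of the proof is the per-round stability term $\langle w_t-w_{t+1},\ell_t\rangle-\tfrac1\eta D_\psi(w_{t+1},w_t)$. The first step is to show that the hypothesis $\eta w_{t,i}|\ell_{t,i}|\le\frac12$ forces multiplicative stability of consecutive iterates, namely $w_{t+1,i}\in[\tfrac12 w_{t,i},2w_{t,i}]$ for all $i$; this follows by comparing the first-order optimality conditions of the FTRL objectives at rounds $t$ and $t+1$, which differ only through the added linear term $\langle\cdot,\ell_t\rangle$, using that $\nabla\psi$ acts coordinatewise as $w\mapsto(-1/w_i)_i$ and that $\Omega$ is convex. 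Given this, since $z-1-\log z\ge\tfrac14(z-1)^2$ on $z\in[\tfrac12,2]$, taking $z=w_{t+1,i}/w_{t,i}$ yields $D_\psi(w_{t+1},w_t)\ge\tfrac14\sum_i(w_{t+1,i}-w_{t,i})^2/w_{t,i}^2$. Substituting this and maximizing the resulting separable quadratic in the variables $x_i=w_{t,i}-w_{t+1,i}$ coordinatewise (the one-line bound $ax-bx^2\le \tfrac{a^2}{4b}$ with $a=\ell_{t,i}$ and $b=1/(4\eta w_{t,i}^2)$) gives the per-round bound $\eta\sum_i w_{t,i}^2\ell_{t,i}^2$. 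Summing over $t$ and adding the penalty term finishes the proof.

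The main obstacle is the multiplicative-stability claim, i.e.\ converting the assumed smallness of the incremental loss $\ell_t$ into control of the ratio $w_{t+1,i}/w_{t,i}$; everything else is routine bookkeeping. One can alternatively lower bound $D_\psi(w_{t+1},w_t)$ directly by the local quadratic form induced by the diagonal Hessian $\nabla^2\psi(w)=\mathrm{diag}(1/w_i^2)$ via a second-order Taylor expansion, arguing that the Hessian changes by at most a constant factor along the segment $[w_t,w_{t+1}]$ --- but this argument again rests on exactly the same smallness condition, so that estimate cannot be avoided.
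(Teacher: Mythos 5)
Your argument is the standard FTRL-with-log-barrier analysis (FTL-BTL telescoping, multiplicative stability of consecutive iterates from $\eta w_{t,i}|\ell_{t,i}|\le\tfrac12$, the local quadratic lower bound $z-1-\log z\ge\tfrac14(z-1)^2$ on $z\in[\tfrac12,2]$ for the Bregman divergence, and per-coordinate maximization $ax-bx^2\le a^2/(4b)$), which is exactly the proof the paper points to in Theorem 7 of Wei \& Luo (2018). The one step you leave implicit --- that it ``suffices'' to compare against $u$ with coordinates $\ge 1/T$ --- is automatic in the paper's application since $\Omega$ there is clipped to $\{w\in\Delta(M):w_i\ge 1/T\}$, and in the general case costs only an additive $O(1)$ via mixing $u$ with $w_1$, so the proposal is sound.
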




\begin{algorithm}[t]
    \caption{(A Variant of) Corral}\label{alg: corral}
    \textbf{Initialize}: a log-barrier algorithm with each arm being an instance of any base algorithm satisfying \pref{def: stable corruption robust}. The hypothesis on $\cortran$  is set to $2^{i}$ for arm $i$ ($i=1,2,\ldots,M\triangleq \lceil \log_2 T\rceil$). 

    \textbf{Initialize}: $\rho_{0,i} = M,\; \forall i$ \\

    \For{$t=1,2,\ldots, T$}{
        Let 
        \begin{align*}
            w_t = \argmin_{w\in \Delta(M), w_i\geq \frac{1}{T},\forall i}\left\{ \left\langle w, \sum_{\tau=1}^{t-1} 
(\hat{c}_\tau - r_\tau) \right\rangle + \frac{1}{\eta} \sum_{i=1}^M \log\frac{1}{w_i} \right\}
        \end{align*}
        where $\eta = \frac{1}{4(\sqrt{\beta_1 T} + \beta_2)}$. \\
        For all $i$, send $w_{t,i}$ to instance $i$. \\
        Draw $i_t\sim w_t$. \\
        Execute the $\pi_t$ output by instance $i_t$\\
        Receive the loss $c_{t,i_t}$ for policy $\pi_t$ (whose expectation is $\ell_t(\pi_t)$)  and send it to instance $i_t$.  \\
        Define for all $i$:
        \begin{align*}
             \hat{c}_{t,i} &= \frac{c_{t,i}\ind[i_t=i]}{w_{t,i}}, \\
             \rho_{t,i} &= \min_{\tau\leq t}\frac{1}{w_{\tau,i}}, \\
             r_{t,i} &= \sqrt{\beta_1 T}\left(\sqrt{\rho_{t,i}} - \sqrt{\rho_{t-1,i}}\right) + \beta_2\left(\rho_{t,i} - \rho_{t-1,i}\right). 
        \end{align*} 
    }
\end{algorithm}

\begin{proof}[Proof of \pref{thm:corral}]
The Corral algorithm is essential an FTRL with log-barrier regularizer. To apply \pref{lem: FTRL basic property}, we first verify the condition $\eta w_{t,i}|\ell_{t,i}|\leq \frac{1}{2}$ where $\ell_{t,i}=\hat{c}_{t,i}-r_{t,i}$. By our choice of $\eta$, 
\begin{align*}
    \eta w_{t,i} |\hat{c}_{t,i}| 
    &\leq \eta c_{t,i} \leq \frac{1}{4}, \tag{because $\beta_2\geq L$ by \pref{def: stable corruption robust}}\\
    \eta w_{t,i} r_{t,i} 
    &= \eta \sqrt{\beta_1 T} w_{t,i}(\sqrt{\rho_{t,i}} - \sqrt{\rho_{t-1,i}})  + \eta\beta_2 w_{t,i}\left(\rho_{t,i} - \rho_{t-1,i}\right). 
\end{align*}
The right-hand side of the last equality is non-zero only when $\rho_{t,i} > \rho_{t-1,i}$, implying that $\rho_{t,i} = \frac{1}{w_{t,i}}$. Therefore, we further bound it by 
\begin{align}
    &\eta w_{t,i}r_{t,i} \nonumber \\
    &\leq \eta \sqrt{\beta_1 T} \frac{1}{\rho_{t,i}}(\sqrt{\rho_{t,i}} - \sqrt{\rho_{t-1,i}}) + \eta \beta_2 \frac{1}{\rho_{t,i}}\left(\rho_{t,i} - \rho_{t-1,i}\right)\nonumber \\
    &= \eta \sqrt{\beta_1 T} \left(\frac{1}{\sqrt{\rho_{t,i}}} - \frac{\sqrt{\rho_{t-1,i}}}{\rho_{t,i}}\right) + \eta \beta_2 \left(1 - \frac{\rho_{t-1,i}}{\rho_{t,i}}\right) \nonumber \\
    &\leq \eta \sqrt{\beta_1 T} \left(\frac{1}{\sqrt{\rho_{t-1,i}}} - \frac{1}{\sqrt{\rho_{t,i}}}\right) + \eta \beta_2 \left(1 - \frac{\rho_{t-1,i}}{\rho_{t,i}}\right)  \tag{$\frac{1}{\sqrt{a}} - \frac{\sqrt{b}}{a}\leq \frac{1}{\sqrt{b}} - \frac{1}{\sqrt{a}}$ for $a, b > 0$} \\
    & \label{eq: from the middle}\\
    &\leq \eta \sqrt{\beta_1 T} + \eta \beta_2  \tag{$\rho_{t,i}\geq 1$}\nonumber \\
    &= \frac{1}{4}   \nonumber \tag{definition of $\eta$}
\end{align}
which can be combined to get the desired property $\eta w_{t,i}|\hat{c}_{t,i} - r_{t,i}|\leq \frac{1}{2}$. 

Hence, by the regret guarantee of log-barrier FTRL (\pref{lem: FTRL basic property}), we have 
\begin{align*}
    &\E\left[\sum_{t=1}^T (c_{t,i_t} - c_{t,i^\star}) \right]  \\
    &\leq \order\Bigg(\frac{M\log T}{\eta} + \eta  \E\Bigg[\underbrace{\sum_{t=1}^T \sum_{i=1}^M w_{t,i}^2 (\hat{c}_{t,i} - r_{t,i})^2 }_{\textbf{stability-term}}\Bigg] \Bigg) + \E\Bigg[\underbrace{\sum_{t=1}^T 
\left(\sum_{i=1}^M w_{t,i} r_{t,i} - r_{t,i^\star}\right)}_{\textbf{bonus-term}} \Bigg] 
\end{align*}
where $i^\star$ is the smallest $i$ such that $2^{i}$ upper bounds the true corruption amount $\cortran$. 


\textbf{Bounding stability-term}: 
\begin{align*}
    \textbf{stability term} \leq 2\eta \sum_{t=1}^T \sum_{i=1}^M w_{t,i}^2 (\hat{c}_{t,i}^2 + r_{t,i}^2) 
\end{align*}
where 
\begin{align*}
2\eta \sum_{t=1}^T\sum_{i=1}^M w_{t,i}^2\hat{c}_{t,i}^2 = 2\eta \sum_{t=1}^T \sum_{i=1}^M c_{t,i}^2 \ind\{i_t=i\} \leq \order(\eta L^2T) 
\end{align*}
and
\begin{align*}
    2\eta \sum_{t=1}^T \sum_{i=1}^M  w_{t,i}^2 r_{t,i}^2 &\leq 4\eta \sum_{t=1}^T \sum_{i=1}^M  (\sqrt{\beta_1 T})^2 \left(\frac{1}{\sqrt{\rho_{t-1,i}}} - \frac{1}{\sqrt{\rho_{t,i}}}\right)^2 + 4\eta \beta_2 \sum_{t=1}^T \sum_{i=1}^M \left(1-\frac{\rho_{t-1,i}}{\rho_{t,i}}\right)^2 \tag{continue from \pref{eq: from the middle}}\\
    &\leq 4\eta \beta_1 T \times \sum_{t=1}^T \sum_{i=1}^M \left(\frac{1}{\sqrt{\rho_{t-1,i}}} - \frac{1}{\sqrt{\rho_{t,i}}}\right) + 4\eta \beta_2 \sum_{t=1}^T \sum_{i=1}^M \ln\frac{\rho_{t,i}}{\rho_{t-1,i}} \tag{$\frac{1}{\sqrt{\rho_{t-1,i}}} - \frac{1}{\sqrt{\rho_{t,i}}} \leq 1$ and $1-a\leq -\ln a$} \\
    &\leq 4\eta \beta_1 T M^{\frac{3}{2}} + 4\eta \beta_2  M\ln T. \tag{telescoping and using $\rho_{0,i} =M$ and $\rho_{T,i} \leq T$}
\end{align*}
\textbf{Bounding bonus-term}: 
\begin{align*}
    \textbf{bonus-term}&= \sum_{t=1}^T \sum_{i=1}^M w_{t,i} r_{t,i} - \sum_{t=1}^T r_{t,i^\star} \\
    &\leq  \sqrt{\beta_1 T} \sum_{t=1}^T \sum_{i=1}^M  \left(\frac{1}{\sqrt{\rho_{t-1,i}}} - \frac{1}{\sqrt{\rho_{t,i}}}\right) + \beta_2 \sum_{t=1}^T \sum_{i=1}^M \log \frac{\rho_{t,i}}{\rho_{t-1,i}} \\
    &\qquad - \left(\sqrt{\rho_{T,i^\star}\beta_1 T} + \rho_{T,i^\star} \beta_2 - \sqrt{\rho_{0,i^\star}\beta_1 T} - \rho_{0,i^\star}\beta_2  \right) \\
    \tag{continue from \pref{eq: from the middle} and using $1-a\leq -\ln a$}\\
    &\leq \order\left(\sqrt{\beta_1 T}M^{\frac{3}{2}} + \beta_2 M \log T \right)  - \left(\sqrt{\rho_{T,i^\star}\beta_1 T} + \rho_{T,i^\star} \beta_2  \right). 
\end{align*}
Combining the two terms and using $\eta=\Theta\left(\frac{1}{\sqrt{\beta_1 T} + \beta_2}\right)$, $M=\Theta(\log T)$, we get 
\begin{align}
     \E\left[\sum_{t=1}^T (\ell_{t}(\pi_t) - c_{t,i^\star}) \right] 
 &= \E\left[\sum_{t=1}^T (c_{t,i_t} - c_{t,i^\star}) \right]   \nonumber \\
 &= \order\left(\sqrt{\beta_1 T\log^3 T} + \beta_2 \log^2 T \right)  - \E\left[\sqrt{\rho_{T,i^\star}\beta_1 T} + \rho_{T,i^\star} \beta_2  \right]   \label{eq: top layer guarantee}
\end{align}

On the other hand, by \pref{def: stable corruption robust} and that $\cortran\in[2^{i^\star-1}, 2^{i^\star}]$, we have
\begin{align}
    \E\left[\sum_{t=1}^T (c_{t,i^\star} - \ell_t(\optpi))\right]\leq\E\left[ \sqrt{\rho_{T,i^\star} \beta_1 T} + \rho_{T,i^\star}\beta_2 \right]  + 2\beta_3 \cortran. 
 \label{eq: bottom layer guarantee} 
\end{align}
   Combining \pref{eq: top layer guarantee} and \pref{eq: bottom layer guarantee}, we get 
   \begin{align*}
       \E\left[\sum_{t=1}^T (\ell_t(\pi_t) - \ell_t(\optpi))\right] \leq \order\left( \sqrt{\beta_1 T\log^3 T} + \beta_2 \log^2 T \right) + 2\beta_3 \cortran,
   \end{align*}
   which finishes the proof.
\end{proof}
\newpage 
\section{Omitted Details for \pref{sec:optepoch}}
\label{app:app_optepoch}


\DontPrintSemicolon 
\setcounter{AlgoLine}{0}
\begin{algorithm}[t]
	\caption{Algorithm with Optimistic Transition Achieving Gap-Dependent Bounds (Known $\cortran$)}
	\label{alg:optepoch}
	
	\textbf{Input:} confidence parameter $\delta \in (0,1)$.

	\textbf{Initialize:} $\forall (s,a)$, learning rate $\gamma_1(s,a)=\lrOne$; epoch index $i=1$ and epoch starting time $t_i = 1$; $\forall (s,a,s')$, set counters $m_1(s,a) = m_1(s,a,s') = m_0(s,a) = m_0(s,a,s') = 0$; empirical transition $\bar{P}_1$ and confidence width $B_1$ based on \pref{eq:empirical_mean_transition_def}; optimistic transition $\optP_i$ by \pref{def:opt_tran}.
		
	\For{$t=1,\ldots,T$}{
		Let $\phi_t$ be defined in \pref{eq:optepoch_Logbarrier} and compute
	   \begin{equation}\label{eq:optepoch_FTRL}
	    \widehat{q}_t = \argmin_{q\in \Omega\rbr{\widetilde{P}_{i}}}  \inner{q,  \sum_{\tau=t_i}^{t-1}\rbr{\hatl_\tau-b_\tau}} +{\phi_t(q)}.
	    \end{equation}
         
		Compute policy $\pi_t$ from $\widehat{q}_t$ such that
		$\pi_t(a|s) \propto \widehat{q}_t(s,a)$.
		
		Execute policy $\pi_t$ and obtain trajectory $(s_{t,k}, a_{t,k})$ for $k = 0, \ldots, L-1$.

		Construct loss estimator $\hatl_t$ as defined in \pref{eq:def_loss_estimator}.

  Compute amortized bonus $b_{t}$ based on \pref{eq:opt_tran_bonus}.

  Compute learning rate $\gamma_{t+1}$ according to \pref{def:learning_rate}.
		
		Increment counters: for each $k<L$, 
		$m_i(s_{t,k},a_{t,k},s_{t,k+1}) \overset{+}{\leftarrow} 1, \; m_i(s_{t,k},a_{t,k}) \overset{+}{\leftarrow} 1$. \\
		
		\If(\myComment{entering a new epoch}){$\exists k, \  m_i(s_{t,k}, a_{t,k}) \geq \max\{1, 2m_{i-1}(s_{t,k},a_{t,k})\}$}
		{
                Increment epoch index $i \overset{+}{\leftarrow} 1$ and set new epoch starting time $t_i = t+1$.  
		
		 Initialize new counters: $\forall (s,a,s')$,  
		$m_i(s,a,s') = m_{i-1}(s,a,s') , m_i(s,a) = m_{i-1}(s,a)$.
		
                Update empirical transition $\bar{P}_i$   and confidence width $B_i$ based on \pref{eq:empirical_mean_transition_def}. 
                
                Update optimistic transition $\optP_i$ based on \pref{def:opt_tran}
}
	}
\end{algorithm}

In this section, we consider a variant of the algorithm proposed in \cite{jin2021best}, which instead ensures exploration via \textit{optimistic transitions} and also switch the regularizer from Tsallis entropy to log-barrier. 
We present the pseudocode in \pref{alg:optepoch}, and show that it automatically adapts to easier environments with improved gap-dependent regret bounds.

\begin{remark}
Throughout the analysis, we denote by $Q^{P,\pi}(s,a;r)$ the state-action value of state-action pair $(s,a)$ with respect to transition function $P$ ($P$ could be an optimistic transition function), policy $\pi$ and loss function $r$; similarly, we denote by $V^{P,\pi}(s;r)$ the corresponding state-value function.

Specifically, the state value function $V^{P,\pi}(s;r)$ is computed in a backward manner from layer $L$ to layer $0$ as following:
\begin{align*}
V^{P,\pi}(s;r) = \begin{cases}
0, & s = s_L, \\
\sum_{a\in A} \pi(a|s)\cdot \rbr{ r(s,a) +   \sum_{s'\in S_{k(s)+1}} P(s'|s,a) V^{P,\pi}(s';r) }, & s\neq s_L.
\end{cases}
\end{align*}

Similarly, the state-action value function $Q^{P,\pi}(s,a;r)$ is calculated in the same manner:
\begin{align*}
Q^{P,\pi}(s,a;r) = \begin{cases}
0, &s = s_L, \\
r(s,a) + { \sum_{s'\in S_{k(s)+1}}P(s'|s,a) \sum_{a'\in A} \pi(a'|s')Q^{P,\pi}(s',a';r) }, & s\neq s_L.
\end{cases}
\end{align*}

Clearly, $V^{P,\pi}(s_0;r)$, the state value of $s_0$, is equal to
\begin{align*}
V^{P,\pi}(s_0;r) = \sum_{s\neq s_L} \sum_{a\in A}  q^{P,\pi}(s,a) r(s,a) \triangleq \inner{q^{P,\pi},r}. 
\end{align*}
This equality can be further extended to the other state $u$ as:
\begin{align*}
V^{P,\pi}(u;r) = \sum_{s\neq s_L} \sum_{a\in A}  q^{P,\pi}(s,a|u) r(s,a),
\end{align*}
where $q^{P,\pi}(s,a|u)$ denotes the probability of arriving at $(s,a)$ starting at state $u$ under policy $\pi$ and transition function $P$.
\end{remark}

\subsection{Description of the Algorithm}
The construction of this algorithm follows similar ideas to the work of \citet{jin2021best}, while also including several key differences. One of them is that we rely on the log-barrier regularizer, defined with a positive learning  rate $\gamma_t(s, a)$ as
\begin{align}\label{eq:optepoch_Logbarrier}
 \phi_{t}(q) = -\sum_{s,a}\gamma_t(s,a) \log \rbr{ q(s,a) }.
\end{align}
The choice of log-barrier is important for adversarial transitions as discussed in \pref{sec:uob_reps},
while the adaptive choice of learning rate is important for adapting to easy environments. 
The formal definition of the learning rate is given in \pref{def:learning_rate}, and further details and properties of the learning rate are provided in \pref{sec:app_optepoch_term4}.

As the algorithm runs a new instance of FTRL for each epoch $i$, we modify the definition of the amortized bonus $b_t$ accordingly,
\begin{equation}
b_t(s)=b_t(s,a) = \begin{cases}
\frac{4L}{u_t(s)} &\text{if $\sum_{\tau=t_{i(t)}}^t \Ind{\lceil \log_2 u_\tau(s) \rceil = \lceil \log_2 u_t(s) \rceil} \leq \frac{\cortran}{2L}$,} \\
0 &\text{else.}
\end{cases}
\label{eq:opt_tran_bonus}
\end{equation}

The bonus $b_t(s)$ in \pref{eq:opt_tran_bonus} is defined based on each epoch $i$, in the sense that $\sum_{\tau=1}^t\Ind{\lceil \log_2 u_\tau(s) \rceil = \lceil \log_2 u_t(s) \rceil}$ counts, among all previous rounds $\tau = t_i, \ldots, t$ in epoch $i$, the number of times that the value of $u_\tau(s)$ falls into the same bin as $u_t(s)$.

Again, for the ease of analysis, in the analysis we use the equivalent definition of the bonus $b_t(s)$ defined in \pref{eq:Amortized_Bonus_reconstruction}, except that the counter $z_t^j(s)$ now will be reset to zero at the beginning of each epoch $i$.

Next, we formally define the optimistic transition as follows.
\begin{definition}[Optimistic Transition]\label{def:opt_tran} 
 For epoch $i$, the optimistic transition $\optP_i: S \times A \times S \rightarrow [0,1]$ is defined as:
\begin{align*}
\optP_i(s'|s,a) = \begin{cases}
\max\cbr{0,\bar{P}_i(s'|s,a) - B_i(s,a,s') } , &(s,a,s') \in \tupleset_k,\\
\sum_{s' \in S_{k+1}}\min\cbr{\bar{P}_i(s'|s,a) ,B_i(s,a,s')}, &(s,a) \in S_{k}\times A \text{ and } s' = s_{L}, 
\end{cases}
\end{align*}
where $\bar{P}_i$ is the empirical transition defined in \pref{eq:empirical_mean_transition_def}. 
\end{definition}

Note that the optimistic transition $\optP_i(\cdot|s,a)$ is a valid distribution as we have
\[
\sum_{s' \in S_{k+1}}\min\cbr{\bar{P}_i(s'|s,a) ,B_i(s,a,s')} = 1-\sum_{s' \in S_{k+1}} \max\cbr{0,\bar{P}_i(s'|s,a) - B_i(s,a,s') }.
\]

We summarize the properties of the optimistic transition functions in \pref{app:opt_tran_props}.

\subsection{Self-Bounding Properties of the Regret}

In order to achieve best-of-both-worlds guarantees, our goal is to bound $\Reg_T(\starpi)$ in terms of two self-bounding quantities for some $x>0$ (plus other minor terms):
\begin{equation}
\begin{aligned}
\selfone(x) & = \sqrt{x\cdot \E\sbr{ \sum_{t=1}^{T} \sum_{s\neq s_L}\sum_{a\neq \starpi(s)} q^{P_t,\pi_t}(s,a) } }, \\
\selftwo(x) & =  \sum_{s\neq s_L}\sum_{a\neq \starpi(s)} \sqrt{x\cdot \E\sbr{\sum_{t=1}^{T} q^{P_t,\pi_t}(s,a)}}.
\end{aligned}
\label{eq:app_def_self_bound}
\end{equation}

These two quantities enjoy a certain self-bounding property which is critical to achieve the gap-dependent bound under Condition \eqref{eq:stoc_condition_P}, as they can be related back to the regret against policy $\starpi$ itself. 
To see this, we first show the following implication of Condition \eqref{eq:stoc_condition_P}.

\begin{lemma}
Under Condition~\eqref{eq:stoc_condition_P}, the following holds.
    \begin{equation}
\Reg_T(\starpi) \geq \E\sbr{\sum_{t=1}^{T}\sum_{s\neq s_L}\sum_{a\neq \starpi(s)}q^{P_t, \pi_t}(s,a)\gap(s,a)} - \corloss - 2L\cortran-
L^2 \cortran.
\label{eq:stoc_condition}
\end{equation}
\end{lemma}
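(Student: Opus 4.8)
The plan is to convert $\Reg_T(\starpi)$, which compares $\pi_t$ with $\starpi$ under the true (corrupted) transitions $P_t$, into the quantity appearing in Condition~\eqref{eq:stoc_condition_P}, which is phrased under the reference transition $P$ from \pref{eq:def_4_corruption}. Using $\loss_t(\pi)=\inner{q^{P_t,\pi},\ell_t}$, I would begin from $\Reg_T(\starpi)=\E[\sum_{t=1}^T\inner{q^{P_t,\pi_t}-q^{P_t,\starpi},\ell_t}]$ and insert $\pm q^{P,\pi_t}$ and $\pm q^{P,\starpi}$ to split it as the ``$P$-regret'' $\E[\sum_t\inner{q^{P,\pi_t}-q^{P,\starpi},\ell_t}]$ plus two ``transition error'' terms, namely $\E[\sum_t\inner{q^{P_t,\pi_t}-q^{P,\pi_t},\ell_t}]$ and $\E[\sum_t\inner{q^{P,\starpi}-q^{P_t,\starpi},\ell_t}]$.

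Next I would lower bound the $P$-regret by invoking Condition~\eqref{eq:stoc_condition_P} on the learner's own policy sequence $\pi_1,\dots,\pi_T$, which yields $\E[\sum_t\sum_{s\neq s_L}\sum_{a\neq\starpi(s)}q^{P,\pi_t}(s,a)\gap(s,a)]-\corloss$. The statement to be proved carries $q^{P_t,\pi_t}$ rather than $q^{P,\pi_t}$ in the gap sum, so I would swap these occupancy measures: since $\gap(s,a)\le L$, the incurred error is at most $L$ times $\sum_t\sum_{s,a}\abr{q^{P,\pi_t}(s,a)-q^{P_t,\pi_t}(s,a)}$, and by the per-round occupancy-difference estimate (\pref{lem:occ_diff_lem}, also the workhorse behind \pref{cor:corrupt_tran_difference}) this inner sum is at most $L\cortran_t$ for each $t$; summing over $t$ and using $\sum_t\cortran_t=\cortran$, the swap costs at most $L^2\cortran$.

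For the two transition error terms, I would use $\ell_t(s,a)\in[0,1]$, so that each is bounded below by $-\sum_t\sum_{s,a}\abr{q^{P_t,\pi}(s,a)-q^{P,\pi}(s,a)}$ for the relevant fixed policy $\pi$ (either $\pi_t$ or $\starpi$), and then apply the same $L\cortran$ bound; indeed \pref{cor:corrupt_tran_difference} already states $\E[\sum_t\inner{q^{P,\starpi}-q^{P_t,\starpi},\ell_t}]=\order(L\cortran)$, and its proof applies verbatim with $\pi_t$ in place of $\starpi$. This contributes $-2L\cortran$ in total. Adding the three pieces gives exactly $\Reg_T(\starpi)\ge\E[\sum_t\sum_{s\neq s_L}\sum_{a\neq\starpi(s)}q^{P_t,\pi_t}(s,a)\gap(s,a)]-\corloss-2L\cortran-L^2\cortran$.

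There is no genuine obstacle here — the content is just the three-term decomposition. The one point that needs care is the $L$-bookkeeping: the $L^2\cortran$ term (as opposed to $L\cortran$) appears precisely because swapping the occupancy measure inside a $\gap$-weighted sum loses an extra factor of $\gap\le L$, and one must use the occupancy-difference lemma in its per-round form so that the accumulated transition discrepancy telescopes to $\cortran$ rather than scaling with $T$.
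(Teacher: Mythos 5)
Your proposal matches the paper's proof essentially step for step: the same insertion of $\pm q^{P,\pi_t}$ and $\pm q^{P,\starpi}$ to isolate the $P$-regret plus two transition-error terms, the same invocation of Condition~\eqref{eq:stoc_condition_P} on the learner's policy sequence, the same $-2L\cortran$ bound on the error terms via \pref{cor:corrupt_tran_difference}, and the same $\gap\le L$ plus \pref{cor:corrupt_occ_diff} argument that costs $L^2\cortran$ when swapping $q^{P,\pi_t}$ for $q^{P_t,\pi_t}$ inside the gap-weighted sum. Correct and equivalent to the paper's argument.
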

\begin{proof}
From the definition of $\Reg_T(\starpi)$, we first note that:
\begin{align*}
    & \Reg_T(\starpi) \\
    &=  \E\sbr{\sum_{t=1}^T \inner{q^{P_t,\pi_t} - q^{P_t,\starpi}, \ell_t}} \\
    &= \E\sbr{\sum_{t=1}^T \inner{q^{P,\pi_t} - q^{P,\starpi}, \ell_t}}  +\E\sbr{\sum_{t=1}^T \inner{q^{P_t,\pi_t} - q^{P,\pi_t}, \ell_t}} +\E\sbr{\sum_{t=1}^T \inner{q^{P,\starpi} - q^{P_t,\starpi}, \ell_t}}\\
    &\geq \E\sbr{\sum_{t=1}^T \inner{q^{P,\pi_t} - q^{P,\starpi}, \ell_t}}  
    -2L\cortran \\
    &\geq \E\sbr{\sum_{t=1}^{T}\sum_{s\neq s_L}\sum_{a\neq \pi^\star(s)}q^{P, \pi_t}(s,a)\gap(s,a)} - \corloss  
    -2L\cortran 
\end{align*}
where the third step applies \pref{cor:corrupt_tran_difference} and the last step uses the Condition~\eqref{eq:stoc_condition_P}. We continue to bound the first term above as:
\begin{align*}
 & \E\sbr{\sum_{t=1}^{T}\sum_{s\neq s_L}\sum_{a\neq \pi^\star(s)}q^{P, \pi_t}(s,a)\gap(s,a)}\\
   &\geq   \E\sbr{\sum_{t=1}^{T}\sum_{s\neq s_L}\sum_{a\neq \pi^\star(s)}q^{P_t, \pi_t}(s,a)\gap(s,a)} \\
   & \quad -
   \E\sbr{\sum_{t=1}^{T}\sum_{s\neq s_L}\sum_{a\neq \pi^\star(s)}\abr{q^{P_t, \pi_t}(s,a) - q^{P, \pi_t}(s,a)}\gap(s,a)} \\
   &\geq   \E\sbr{\sum_{t=1}^{T}\sum_{s\neq s_L}\sum_{a\neq \pi^\star(s)}q^{P_t, \pi_t}(s,a)\gap(s,a)}  -
L^2 \cortran,
\end{align*}
where the last step uses $\Delta(s,a) \in (0,L]$ and \pref{cor:corrupt_occ_diff}.
\end{proof}

We are now ready to show the following important self-bounding properties (recall $\gapmin = \min_{s\neq s_L, a\neq\pi^\star(s)}\gap(s,a)$).
\begin{lemma}[Self-Bounding Quantities] 
\label{lem:app_self_bounding_quan}
Under Condition~(\ref{eq:stoc_condition}), we have for any $z >0$:
\begin{equation}\notag
\begin{aligned}
\selfone(x) & \leq z\rbr{\Reg_T(\starpi) + \corloss + 2L\cortran+L^2 \cortran} + \frac{1}{z} \rbr{\frac{x}{4\gapmin}}, \\
\selftwo(x) &\leq z\rbr{\Reg_T(\starpi) + \corloss +2L\cortran +L^2 \cortran} + \frac{1}{z} \rbr{\sum_{s\neq s_L}\sum_{a\neq \starpi(s)} \frac{x}{4\gap(s,a)}}.
\end{aligned}
\end{equation}
Besides, it always holds that $\selfone(x) \leq \sqrt{x\cdot LT}$ and $\selftwo(x)\leq \sqrt{x\cdot L|S||A|T}$.
\end{lemma}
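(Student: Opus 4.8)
The plan is to prove the two self-bounding inequalities by applying the AM-GM inequality to split each square root, using Condition~\eqref{eq:stoc_condition} to lower-bound the ``gap-weighted occupancy'' quantity appearing inside the root by $\Reg_T(\starpi)$ plus error terms. First I would handle $\selfone(x)$: by definition it equals $\sqrt{x\cdot\E[\sum_{t}\sum_{s\neq s_L}\sum_{a\neq\starpi(s)}q^{P_t,\pi_t}(s,a)]}$. I would bound the sum inside the expectation by writing $q^{P_t,\pi_t}(s,a) = q^{P_t,\pi_t}(s,a)\cdot\frac{\gap(s,a)}{\gap(s,a)} \leq \frac{1}{\gapmin}q^{P_t,\pi_t}(s,a)\gap(s,a)$, so that
\[
\selfone(x) \leq \sqrt{\frac{x}{\gapmin}\cdot\E\sbr{\sum_{t=1}^T\sum_{s\neq s_L}\sum_{a\neq\starpi(s)}q^{P_t,\pi_t}(s,a)\gap(s,a)}}.
\]
Then for any $z>0$, AM-GM ($\sqrt{ab}\leq \frac{z}{1}\cdot\frac{a}{?}$... more precisely $\sqrt{AB}\leq zA + \frac{B}{4z}$ with $A$ the expectation term and $B = \frac{x}{\gapmin}$) gives $\selfone(x)\leq z\cdot\E[\sum_t\sum_{s,a}q^{P_t,\pi_t}(s,a)\gap(s,a)] + \frac{1}{4z}\cdot\frac{x}{\gapmin}$. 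Applying Condition~\eqref{eq:stoc_condition}, which says $\E[\sum_t\sum_{s,a}q^{P_t,\pi_t}(s,a)\gap(s,a)] \leq \Reg_T(\starpi) + \corloss + 2L\cortran + L^2\cortran$, yields exactly the claimed bound on $\selfone(x)$.

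For $\selftwo(x)$, the structure is similar but the sum over $(s,a)$ sits outside the square root, so I would apply AM-GM term by term: for each $(s,a)$ with $a\neq\starpi(s)$,
\[
\sqrt{x\cdot\E\sbr{\sum_{t=1}^T q^{P_t,\pi_t}(s,a)}} \leq \sqrt{\frac{x}{\gap(s,a)}\cdot\E\sbr{\sum_{t=1}^T q^{P_t,\pi_t}(s,a)\gap(s,a)}} \leq z\cdot\E\sbr{\sum_{t=1}^T q^{P_t,\pi_t}(s,a)\gap(s,a)} + \frac{1}{4z}\cdot\frac{x}{\gap(s,a)}.
\]
Summing over all $s\neq s_L$ and $a\neq\starpi(s)$, the first terms collect into $z\cdot\E[\sum_t\sum_{s\neq s_L}\sum_{a\neq\starpi(s)}q^{P_t,\pi_t}(s,a)\gap(s,a)]$, to which I apply Condition~\eqref{eq:stoc_condition} again, and the second terms collect into $\frac{1}{4z}\sum_{s\neq s_L}\sum_{a\neq\starpi(s)}\frac{x}{\gap(s,a)}$, giving the stated bound. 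For the unconditional bounds, I would simply use $q^{P_t,\pi_t}(s,a)\leq 1$ and count terms: $\sum_{s\neq s_L}\sum_a q^{P_t,\pi_t}(s,a) = \sum_{s\neq s_L} q^{P_t,\pi_t}(s) \leq L$ (the expected number of visited layers is $L$), so $\E[\sum_t\sum_{s,a}q^{P_t,\pi_t}(s,a)]\leq LT$, which gives $\selfone(x)\leq\sqrt{x\cdot LT}$; and for $\selftwo(x)$, by Cauchy–Schwarz over the $|S||A|$ index pairs together with $\E[\sum_t q^{P_t,\pi_t}(s,a)]\leq T$ and $\sum_{s,a}\E[\sum_t q^{P_t,\pi_t}(s,a)]\leq LT$, I get $\selftwo(x)\leq\sqrt{|S||A|\cdot x\cdot\sum_{s,a}\E[\sum_t q^{P_t,\pi_t}(s,a)]}\leq\sqrt{x\cdot L|S||A|T}$.

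I do not anticipate a serious obstacle here; the lemma is essentially a packaging of the AM-GM trick (the standard ``self-bounding'' device from \citet{wei2018more, zimmert2019optimal}) composed with the already-established Condition~\eqref{eq:stoc_condition}. The one point requiring a little care is making sure the AM-GM split is applied with the right free parameter so that the coefficient of $\Reg_T(\starpi)$ is exactly $z$ (not $z$ times a constant) — this forces the choice $\sqrt{AB}\leq zA + \frac{B}{4z}$ rather than a symmetric split — and checking that the constant in front of $\frac{x}{\gapmin}$ (resp. $\frac{x}{\gap(s,a)}$) comes out as $\frac{1}{4}$ to match the statement. The other mild subtlety is the unconditional bound on $\selftwo(x)$, where one must remember to pull the sum inside a single Cauchy–Schwarz rather than bounding each term by $\sqrt{xT}$ naively (which would cost an extra $|S||A|$ versus $\sqrt{|S||A|}$); this is the only place where being slightly clever about the order of operations matters.
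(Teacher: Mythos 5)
Your proposal is correct and matches the paper's proof essentially step for step: both insert the gap (via $\gapmin\le\gap(s,a)$), apply the weighted AM--GM split $\sqrt{AB}\le zA+\tfrac{B}{4z}$, invoke Condition~\eqref{eq:stoc_condition}, and obtain the unconditional bounds from $\sum_{s,a}q^{P_t,\pi_t}(s,a)\le L$ together with Cauchy--Schwarz for $\selftwo$. The only cosmetic difference is that the paper inserts the factor $2z\gapmin$ before taking AM--GM rather than bounding $q^{P_t,\pi_t}(s,a)$ by $\gap(s,a)q^{P_t,\pi_t}(s,a)/\gapmin$ first, which leads to the same inequality.
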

\begin{proof} For any $z >0$, we have 
\begin{align*}
\selfone\rbr{x} & = \sqrt{ \frac{x}{2z\gapmin} \cdot \E\sbr{  \rbr{2z\gapmin  \sum_{t=1}^{T} \sum_{s\neq s_L}\sum_{a\neq \starpi(s)} q^{P_t,\pi_t}(s,a)}} }  \\
& \leq  \E\sbr{z\gapmin  \sum_{t=1}^{T} \sum_{s\neq s_L}\sum_{a\neq \starpi(s)} q^{P_t,\pi_t}(s,a)} 
 + \frac{x}{4z\gapmin}\\
& \leq z\rbr{\Reg_T(\starpi) + \corloss +2L\cortran + L^2 \cortran} + \frac{x}{4z\gapmin},
\end{align*}
where the second step follows from the AM-GM inequality: $2\sqrt{xy} \leq {x+y}$ for any $x,y \geq 0$, and the last step follows from Condition~\eqref{eq:stoc_condition}. 

By similar arguments, we have $\selftwo(x)$ bounded for any $z>0$ as:
\begin{align*}
&  \sum_{s\neq s_L}\sum_{a\neq \starpi(s)} \sqrt{\frac{x}{2z\gap(s,a)} \cdot \E\sbr{2z\gap(s,a)\sum_{t=1}^{T} q^{P_t,\pi_t}(s,a)}} \\
& \leq \sum_{s\neq s_L}\sum_{a\neq \starpi(s)}  \frac{x}{4z\gap(s,a)} + z \E\sbr{\sum_{s\neq s_L}\sum_{a\neq \starpi(s)}   \sum_{t=1}^{T} q^{P_t,\pi_t}(s,a) \gap(s,a) } \\
& = z\rbr{\Reg_T(\starpi) + \corloss+2L\cortran +L^2 \cortran } +\sum_{s\neq s_L}\sum_{a\neq \starpi(s)}  \frac{x}{4z\gap(s,a)}.
\end{align*}

Finally, by direct calculation, we can show that 
\begin{align*}
\selfone(x) = \sqrt{ {x} \cdot \E\sbr{  {\sum_{t=1}^{T} \sum_{s\neq s_L}\sum_{a\neq \starpi(s)} q^{P_t,\pi_t}(s,a)}} } \leq \sqrt{ x\cdot LT}, 
\end{align*}
according to the fact that $\sum_{s\neq s_L}\sum_{a\in A}q^{P_t,\pi_t}(s,a) \leq L$. 
On the other hand, $\selftwo(x)$ is bounded as
\begin{align*}
\selftwo(x) \leq \sqrt{  {x} \cdot |S||A|\E\sbr{\sum_{s\neq s_L}\sum_{a\neq \starpi(s)}\sum_{t=1}^{T} q^{P_t,\pi_t}(s,a)} } \leq \sqrt{x\cdot L|S||A|T},
\end{align*}
with the help of the Cauchy-Schwarz inequality in the first step. 
\end{proof}

Finally, we show that \pref{alg:optepoch} achieves the following adaptive regret bound, which directly leads to the best-of-both-worlds guarantee in \pref{thm:regret_bound_optepoch}.

\begin{lemma}\label{lem:regret_bound_optepoch} \pref{alg:optepoch} with $\delta = \nicefrac{1}{T^2}$ and learning rate defined in \pref{def:learning_rate} ensures that, for any mapping $\pi^\star: S \rightarrow A$, the regret $\Reg_T(\starpi)$ is bounded by $\order\rbr{\rbr{\cortran+1}L^2|S|^4|A|^2 \log^2\rbr{\iota}}$ plus 
\begin{align*}
\order\rbr{\selfone\rbr{ L^3|S|^2|A|\log^2 \rbr{\iota}} + \selftwo\rbr{L^2|S||A|\log^2 \rbr{\iota} }}.
\end{align*}
\end{lemma}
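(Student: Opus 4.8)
The plan is to mirror the four-term regret decomposition used for \pref{alg:uobreps} in \pref{app:app_uobreps}, but carried out epoch-by-epoch (since FTRL is restarted at each epoch) and with the optimistic transition $\optP_i$ playing the role that the confidence set $\calP_i$ played before. First I would write
\[
\Reg_T(\starpi) = \E\sbr{\sum_{t=1}^T \inner{q^{P_t,\pi_t} - q^{P_t,\starpi},\ell_t}}
\]
and split it as $\regone + \regtwo + \regthree + \regfour + \E\sbr{\sum_t \inner{q^{P,\starpi}-q^{P_t,\starpi},\ell_t}}$, where now $\regthree = \E\sbr{\sum_t \inner{\whatq_t - q^{P,\starpi}, \hatl_t - b_t}}$ with $\whatq_t\in\Omega(\optP_i)$. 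The last term is $\order(L\cortran)$ by \pref{cor:corrupt_tran_difference}. For $\regone$ (true-vs-optimistic occupancy gap) and $\regtwo$ (loss-estimator bias $\inner{q^{\whatP_t,\pi_t},\ell_t-\hatl_t}$), I would reuse essentially the arguments of \pref{lem:uobreps_regone} and \pref{lem:uob_reps_enlarge_regtwo}, except that the distance $\norm{\optP_i(\cdot|s,a) - P(\cdot|s,a)}_1$ must be controlled by the confidence width $B_i$ (this is where the optimistic-transition properties from \pref{app:opt_tran_props}, e.g.\ $\optP_i$ being within $\order(B_i)$ of $\bar P_i$ on each layer, come in). Crucially, in place of a fixed learning rate these bounds will now carry a factor $q^{P_t,\pi_t}(s,a)$ or $q^{P_t,\pi_t}(s,a)^2$ that, instead of being crudely bounded by $1$, I keep and eventually collect into the self-bounding quantities $\selfone(\cdot)$ and $\selftwo(\cdot)$; the extra $|S|,|A|,L$ powers relative to \pref{thm:regret_bound_uob} come precisely from these Cauchy--Schwarz / AM--GM steps and from $\whatm_{i(t)}(s,a)\geq \log(\iota)$.

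Next I would handle $\regfour = \E\sbr{\sum_t \inner{q^{P,\starpi},\hatl_t-\ell_t} + \sum_t\inner{\whatq_t - q^{P,\starpi}, b_t}}$. Exactly as in \pref{lem:uob_reps_add_regfour}, under $\eventcon$ the first part is at most $\sum_{s,a}q^{P,\starpi}(s,a)\sum_t \cortran_t/u_t(s)$ — here I need the new ingredient that $q^{P,\pi_t}(s)\le u_t(s)$, which follows because $u_t$ is still the upper occupancy bound over $\calP_{i(t)}$ (the confidence set is still maintained even though FTRL optimizes over $\Omega(\optP_i)$) — and by the restarted version of \pref{lem:cancellation_bt_maintext} this cancels against $\sum_t b_t(s)$, leaving $\order(|S|\cortran\log T)$ total. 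The outside-$\eventcon$ contribution is $\order(\delta LT)=\order(L/T)$ with $\delta=1/T^2$.

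The main obstacle — and where essentially all the new work lies — is bounding $\regthree$, the FTRL regret with the time-varying log-barrier $\phi_t$ from \pref{eq:optepoch_Logbarrier} and the adaptive learning rate $\gamma_t$ of \pref{def:learning_rate}. The plan here is: (i) apply the loss-shifting equivalence $\inner{q,\hatl_\tau - b_\tau} \equiv \inner{q, g_\tau - b_\tau}$ with $g_\tau(s,a)=Q^{\optP_i,\pi_\tau}(s,a;\hatl_\tau)-V^{\optP_i,\pi_\tau}(s;\hatl_\tau)$, as stated in \pref{sec:optepoch}; (ii) run the standard per-epoch FTRL-with-changing-regularizer analysis (à la \citet{ito21a}), producing a penalty term $\sum_t \sum_{s,a}(\gamma_{t+1}(s,a)-\gamma_t(s,a))\log(1/\whatq_t(s,a))$ plus a stability term controlled by $\sum_t \sum_{s,a}\whatq_t(s,a)^2 g_t(s,a)^2/\gamma_t(s,a)$; (iii) use the recursion $\gamma_{t+1}=\gamma_t + D\nu_t/(2\gamma_t)$ with $\nu_t = \whatq_t(s,a)^2 g_t^2$ to make the stability term telescope against increments of $\gamma_t$, so that everything is bounded by $\order(\sum_{s,a}\gamma_{T}(s,a)\log(\iota)/D)$ up to lower-order terms; (iv) bound $\gamma_T(s,a)$, using $\gamma_T(s,a)^2 \lesssim \gamma_1(s,a)^2 + D\sum_\tau \nu_\tau(s,a)$, by $\sqrt{D \E\sbr{\sum_\tau \whatq_\tau(s,a)^2 g_\tau(s,a)^2}}$; and finally (v) replace $\whatq_\tau$ by $q^{P_\tau,\pi_\tau}$ (paying $\order(L\cortran)$-type corruption corrections via \pref{cor:corrupt_occ_diff}) and bound $q^{P_\tau,\pi_\tau}(s,a)^2 g_\tau(s,a)^2$ — using $|g_\tau(s,a)|\le L/u_\tau(s,a)$ and $q^{P_\tau,\pi_\tau}(s,a)\le u_\tau(s,a)$ on the visited pair, together with the optimism bound $V^{\optP_i,\pi}(s;\ell)\le V^{P,\pi}(s;\ell)$ from \pref{lem:optimism_optiT} so that only off-$\starpi$ actions contribute — to recognize the resulting sum as exactly (a constant times) $\selfone(L^3|S|^2|A|\log^2\iota)^2$ and $\selftwo(L^2|S||A|\log^2\iota)^2$. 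Collecting the square roots gives the $\selfone + \selftwo$ terms in the statement, and the accumulated $\cortran$-dependent leftovers from $\regone,\regtwo,\regfour$ and the optimism/corruption corrections give the additive $\order((\cortran+1)L^2|S|^4|A|^2\log^2\iota)$. The delicate points I expect to fight with are getting the $\nu_t/\gamma_t$ telescoping constants right (the factor $D=1/\log\iota$ is tuned for this), and making sure the loss-shifting step is valid over $\Omega(\optP_i)$ rather than $\Omega(\bar P_i)$ — which is fine because $g_\tau$ depends only on $\optP_i$, but it needs to be checked that $\inner{q,g_\tau} = \inner{q,\hatl_\tau}$ up to a constant independent of $q\in\Omega(\optP_i)$.
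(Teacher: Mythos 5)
Your proposal has two concrete gaps, and they are related.

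\textbf{The FTRL comparator is infeasible.} You set $\regthree = \E\bigl[\sum_t \inner{\whatq_t - q^{P,\starpi},\hatl_t - b_t}\bigr]$ with $\whatq_t \in \Omega(\optP_i)$, and then propose to run the standard per-epoch FTRL analysis. But $q^{P,\starpi}$ is an occupancy measure under the \emph{true} transition $P$, not under the optimistic transition $\optP_i$, so $q^{P,\starpi} \notin \Omega(\optP_i)$ in general. FTRL only bounds the regret against comparators inside the constraint set. This is precisely the difference from the UOB-REPS analysis you are trying to port over: there the algorithm optimizes over $\Omega(\calP_{i(t)})$, and $q^{P,\starpi}$ \emph{is} in that set under $\eventcon$, so the same comparator works. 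Here it does not. You would need to replace the comparator by $q^{\optP_i,\starpi}$, and then add a separate term $\E\bigl[\sum_t \inner{q^{\optP_i,\starpi}-q^{P_t,\starpi},\hatl_t-b_t} - \inner{q^{P_t,\starpi},\ell_t} + \ldots\bigr]$ which your four-term decomposition does not contain. This is exactly what the paper's $\errtwo$ term is.

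\textbf{Separately bounding $\regone$ and $\regtwo$ cannot produce the self-bounding quantities.} You write that you will ``keep the $q^{P_t,\pi_t}(s,a)$ factor'' and collect it into $\selfone(\cdot)$ and $\selftwo(\cdot)$. But $\selfone(x)$ and $\selftwo(x)$ are sums of $q^{P_t,\pi_t}(s,a)$ restricted to \emph{suboptimal} actions $a\neq\starpi(s)$, and $\regone = \E[\sum_t\inner{q^{P_t,\pi_t}-q^{\whatP_t,\pi_t},\ell_t}]$ has no reference to $\starpi$ at all. There is no way to restrict to suboptimal actions inside $\regone$ alone; the best you get by Cauchy--Schwarz is the worst-case $\order(L|S|\sqrt{|A|T\log\iota})$, which is a $\sqrt{T}$-type bound, not a self-bounding one. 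The paper achieves the restriction by \emph{pairing} the estimation error on the $\pi_t$ side with the estimation error on the $\starpi$ side: it groups $\errone+\errtwo$ (rather than treating them separately), splits $\E_t[\hatl_t]=\alpha_t+\beta_t$, and reduces the key term to $\sum_t\sum_{s,a}\bigl(q^{P,\pi_t}(s,a)-q^{P,\starpi}(s,a)\bigr)\bigl(\ell_t(s,a)-\alpha_t(s,a)\bigr)$. Because $\ell_t-\alpha_t \geq 0$ under $\eventcon$, one can throw away the negative part of the difference and then invoke \pref{cor:dann_prob_diff_lem} to bound $\sum_{s,a}[q^{P,\pi_t}(s,a)-q^{P,\starpi}(s,a)]_+ \leq 2L\sum_s\sum_{a\neq\starpi(s)}q^{P,\pi_t}(s,a)$, which is what introduces the suboptimal-action restriction. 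Your decomposition destroys this pairing, so the middle part of your plan — reusing Lemmas~\ref{lem:uobreps_regone} and~\ref{lem:uob_reps_enlarge_regtwo} and somehow recovering self-bounding quantities — would not go through. The paper's $\errone+\errtwo$ grouping and the $\alpha_t/\beta_t$ split (leading to Lemmas~\ref{lem:optepoch_main_bound_1}--\ref{lem:optepoch_main_bound_3}) are not optional bookkeeping; they are the mechanism that makes the estimation error self-bounding. Your step (v) for the FTRL term is roughly on the right track (the paper's Lemma~\ref{lem:optepoch_main_bound_4} does something in that spirit), but that alone cannot repair the lost self-bounding structure in the estimation-error terms.
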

The proof of this result is detailed in \pref{sec:app_optepoch_decomp}.
We emphasize that this bound holds for any mapping $\starpi$, and is not limited to that policy in \pref{eq:stoc_condition_P}. This is important for proving the robustness result when losses are arbitrary, as shown in the following proof of \pref{thm:regret_bound_optepoch}.

\begin{proof}[\pfref{thm:regret_bound_optepoch}]
When losses are arbitrary, we simply select $\starpi = \optpi$ where $\optpi$ is one of the optimal deterministic policies in hindsight and obtain the following bound of $\Reg_T(\optpi)$:
\begin{align*}
 &  \order\rbr{\rbr{\cortran+1}L^2|S|^4|A|^2 \log^2\rbr{\iota} + \selfone\rbr{ L^3|S|^2|A|\log^2 \rbr{\iota}} + \selftwo\rbr{L^2|S||A|\log^2 \rbr{\iota} }} \\
& = \order\rbr{\rbr{\cortran+1} L^2|S|^4|A|^2 \log^2\rbr{\iota}+ \sqrt{ L^4|S|^2|A|T\log^2\rbr{\iota}} + \sqrt{L^3|S|^2|A|^2T \log^2\rbr{\iota} } },
\end{align*}
where the first step follows from \pref{lem:regret_bound_optepoch} and the second step follows from \pref{lem:app_self_bounding_quan}.

Next, suppose that Condition~\eqref{eq:stoc_condition_P} holds. We set $\starpi$ as defined in the condition and use \pref{lem:regret_bound_optepoch} to write the regret against $\starpi$ as:
\begin{equation*}
   \Reg_T(\starpi) \leq   \mu\rbr{\selfone\rbr{ L^3|S|^2|A|\log^2 \rbr{\iota}} + \selftwo\rbr{L^2|S||A|\log^2 \rbr{\iota} }} + \xi \rbr{\cortran+1}, 
\end{equation*}
where $\mu>0$ is an absolute constant and $\xi = \order \rbr{L^2 |S|^4 |A|^2 \log^2 (\iota)}$.

For any $z >0$, according to \pref{lem:app_self_bounding_quan} (where we set the $z$ there as $\nicefrac{z}{2\mu}$),  we have:
\begin{align*} 
\Reg_T(\starpi) & \leq z\rbr{\Reg_T(\starpi) + \corloss +2L\cortran +L^2 \cortran} \\
& \quad + \frac{\mu^2}{z} \rbr{\frac{L^3|S|^2|A|\log^2 \rbr{\iota}}{\gapmin}+ \sum_{s\neq s_L}\sum_{a\neq \starpi(s)} \frac{L^2|S||A|\log^2 \rbr{\iota} }{\gap(s,a)}} + \xi \rbr{\cortran+1}.
\end{align*}

Let $x=\frac{1-z}{z}$ and $U = \frac{L^3|S|^2|A|\log^2 \rbr{\iota}}{\gapmin}+ \sum_{s\neq s_L}\sum_{a\neq \starpi(s)} \frac{L^2|S||A|\log^2 \rbr{\iota} }{\gap(s,a)}$.
Rearranging the above inequality leads to
\begin{align*}
    \Reg_T(\starpi) & \leq \frac{z\rbr{\corloss +2L\cortran+L^2 \cortran} }{1-z} + \frac{\mu^2 U}{z\rbr{1-z}}  + \frac{\xi \rbr{\cortran+1}}{1-z} \\
    & = \frac{\rbr{\corloss +2L\cortran +L^2 \cortran} }{x} + \rbr{x+2+\frac{1}{x}} \mu^2 U  + \rbr{1+\frac{1}{x}} \xi \rbr{\cortran+1} \\
    & =  \frac{1}{x} \rbr{\corloss +2L\cortran +L^2 \cortran + \mu^2 U+\xi \rbr{\cortran+1} } + x \cdot \mu^2 U + 2 \mu^2 U +\xi \rbr{\cortran+1}.
\end{align*}

Picking the optimal $x$ to minimize the upper bound of $\Reg_T(\starpi)$ yields
\begin{align*} 
    \Reg_T(\starpi) &=  2\sqrt{\rbr{\corloss +2L\cortran +L^2 \cortran + \mu^2 U+\xi \rbr{\cortran+1}} \mu^2 U} + 2\mu^2 U +\xi \rbr{\cortran+1} \\
    &\leq   2\sqrt{\rbr{\corloss +2L\cortran+L^2 \cortran +\xi \rbr{\cortran+1}} \mu^2U} +4\mu^2 U +\xi \rbr{\cortran+1} \\
    &= \order \rbr{   \sqrt{\rbr{\corloss + L^2 |S|^4 |A|^2 \log^2 \rbr{\iota} \cortran} U} + U +L^2 |S|^4 |A| \log^2 \rbr{\iota} \rbr{\cortran+1} } \\
    &\leq \order \rbr{   \sqrt{ U\corloss } + U +L^2 |S|^4 |A|^2 \log^2 \rbr{\iota} \rbr{\cortran+1}},
\end{align*}
where the second step uses $\sqrt{x+y} \leq \sqrt{x}+\sqrt{y}$ for any $x,y \in \fR_{\geq 0}$ and the last step uses $2\sqrt{xy} \leq {x}+{y}$ for any $x,y \geq 0 $.
\end{proof}

\subsection{Regret Decomposition of $\Reg_T(\starpi)$ and Proof of \pref{lem:regret_bound_optepoch}}
\label{sec:app_optepoch_decomp}

In the following sections, we will first decompose $\Reg_T(\starpi)$ for any mapping $\starpi:S\rightarrow A$ into several parts, and then bound each part separately from \pref{sec:app_optepoch_term1} to \pref{sec:app_optepoch_term4}, in order to prove \pref{lem:regret_bound_optepoch}.

For any mapping $\starpi: S \rightarrow A$, we start from the following decomposition of $\Reg_T(\starpi)$ as 
\begin{equation}
\begin{aligned}
& \underbrace{\E\sbr{ \sum_{t=1}^{T}\inner{q^{ P_t,\pi_t},\ell_t}  - \inner{q^{ \whatP_t,\pi_t},\hatl_t - b_t } }}_{\errone}  + \underbrace{\E\sbr{ \sum_{t=1}^{T}\inner{q^{ \whatP_t,\pi_t},\hatl_t - b_t }  - \inner{q^{\whatP_t,\starpi},\hatl_t - b_t  }}}_{\estreg} \\
& \quad + \underbrace{\E\sbr{ \sum_{t=1}^{T}\inner{q^{ \whatP_t,\starpi},\hatl_t-b_t }  - \inner{q^{P_t,\starpi},\ell_t} }}_{\errtwo},
\end{aligned} \label{eq:decomp_reg_adv}
\end{equation}
where $\whatP_t=\optP_{i(t)}$ denotes  the optimistic transition for episode $t$ for simplicity (which is consistent with earlier definition such that $\whatq_t = q^{\whatP_t, \pi_t}$). 
Here, $\estreg$ is the estimated regret controlled by $\ftrl$, while $\errone$ and $\errtwo$ are estimation errors incurred on the selected policies $\cbr{\pi_t}_{t=1}^{T}$, and that on the comparator policy $\starpi$ respectively. 

In order to achieve the gap-dependent bound under Condition \eqref{eq:stoc_condition_P}, we consider these two estimation error terms $\errone$ and $\errtwo$ together as: 
\begin{equation}
\begin{aligned}
& \E\sbr{ \sum_{t=1}^{T}\inner{q^{ P_t,\pi_t},{\ell_t} }  - \inner{q^{ \whatP_t,\pi_t},\hatl_t - b_t} + \inner{q^{ \whatP_t,\starpi},\hatl_t-b_t}  - \inner{q^{P_t,\starpi},{\ell_t}} } \\
& = \E\sbr{ \sum_{t=1}^{T}\inner{q^{ P_t,\pi_t},{\ell_t} }  - \inner{q^{ \whatP_t,\pi_t},\E_t\sbr{\hatl_t} - b_t} + \inner{q^{ \whatP_t,\starpi},\E_t\sbr{\hatl_t}-b_t}  - \inner{q^{P_t,\starpi},{\ell_t}} },
\end{aligned}\label{eq:optepoch_error}
\end{equation}
where $\E_t\sbr{\cdot}$ denotes the conditional expectation given the history prior to episode $t$. 

To better analyze the conditional expectation of the loss estimators, we define $\alpha_t,\beta_t: S \times A \rightarrow \fR$ as:
\begin{align*}
\alpha_t(s,a) \triangleq  \frac{q^{P,\pi_t}(s,a)\ell_t(s,a)}{u_t(s,a)}, \quad \beta_t(s,a) 
 \triangleq \quad  \frac{\rbr{q^{P_t,\pi_t}(s,a) - q^{P,\pi_t}(s,a)}\ell_t(s,a)}{u_t(s,a)}, 
\end{align*}
which ensures that $\E_t\sbr{\hatl_t(s,a)} = \alpha_t(s,a)  + \beta_t(s,a)$ for any state-action pair $(s,a)$. 

With the help of $\alpha_t,\beta_t$, we have 
\begin{align*}
& \inner{q^{ \whatP_t,\pi_t},\E_t\sbr{\hatl_t} - b_t} = \inner{q^{ \whatP_t,\pi_t}, \alpha_t } + \inner{ q^{ \whatP_t,\pi_t}, {\beta_t - b_t}},\\
&\inner{q^{ \whatP_t,\starpi},\E_t\sbr{\hatl_t} - b_t} = \inner{q^{ \whatP_t,\starpi},\alpha_t  
 } + \inner{q^{ \whatP_t,\starpi},\beta_t - b_t},
\end{align*}
which helps us further rewrite \pref{eq:optepoch_error} as 
\begin{equation}\label{eq:decomp_reg_bobw}
\begin{aligned}
& \E\sbr{  \sum_{t=1}^{T}\inner{q^{ P_t,\pi_t} - q^{ P,\pi_t},\ell_t} + \sum_{t=1}^{T}\inner{q^{ P,\starpi} - q^{ P_t,\starpi},\ell_t} } \\
& + {\E\sbr{  \sum_{t=1}^{T}\inner{q^{ P,\pi_t},\ell_t}  - \inner{q^{ \whatP_t,\pi_t},\alpha_t } +  \inner{q^{ \whatP_t,\starpi},\alpha_t}  - \inner{q^{P,\starpi},\ell_t}   }}\\
& + \E\sbr{\sum_{t=1}^{T} \inner{q^{ \whatP_t,\pi_t},b_t - \beta_t } + \sum_{t=1}^{T} \inner{q^{ \whatP_t,\starpi},\beta_t - b_t}}. 
\end{aligned}
\end{equation}

Based on this decomposition of $\errone+\errtwo$, we then bound each parts respectively in the following lemmas. 

\begin{lemma} \label{lem:optepoch_main_bound_1}For any $\delta \in (0,1)$ and any policy sequence $\{\pi_t\}_{t=1}^T$,  \pref{alg:optepoch} ensures that 
\begin{align*}
\E\sbr{  \sum_{t=1}^{T}\inner{q^{ P_t,\pi_t} - q^{ P,\pi_t},\ell_t} + \sum_{t=1}^{T}\inner{q^{ P,\starpi} - q^{ P_t,\starpi},\ell_t}  } = \order\rbr{ \cortran L }.
\end{align*}
\end{lemma}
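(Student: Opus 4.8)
The plan is to bound each of the two summations separately using the corruption bound on occupancy measures. Both terms have the same structure: they compare the occupancy measure under the corrupted transition $P_t$ to the occupancy measure under the reference transition $P$, for a fixed policy (either $\pi_t$ or $\starpi$), paired against the loss $\ell_t$ which lies in $[0,1]^{S\times A}$. So the key estimate is that $\abr{\inner{q^{P_t,\pi} - q^{P,\pi}, \ell_t}}$ is controlled by the per-round corruption $\cortran_t$, uniformly in $\pi$.

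Concretely, I would first write
\[
\abr{\inner{q^{P_t,\pi_t} - q^{P,\pi_t}, \ell_t}} \leq \sum_{s\neq s_L}\sum_{a\in A} \abr{q^{P_t,\pi_t}(s,a) - q^{P,\pi_t}(s,a)} \leq L \cortran_t,
\]
where the first inequality uses $\ell_t(s,a)\in[0,1]$ and the second is exactly the content of \pref{cor:corrupt_occ_diff} summed over the $L$ layers (each layer contributing at most $\cortran_t$ after summing over states, or $L\cortran_t$ total depending on the exact form of the stated corollary — I would invoke whichever precise form is available, but the upshot is an $\order(L\cortran_t)$ bound). The identical argument applies to $\inner{q^{P,\starpi} - q^{P_t,\starpi}, \ell_t}$ since $\starpi$ (extended to a deterministic stochastic policy) is just another fixed policy. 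Summing over $t=1,\ldots,T$ and using $\sum_t \cortran_t = \cortran$ gives the claimed $\order(\cortran L)$ bound after taking expectations (the bound is deterministic, so the expectation is harmless). In fact this is essentially the same computation already used for the last term in the $\regone$ decomposition of \pref{alg:uobreps} and in the proof of the self-bounding lemma above (``the third step applies \pref{cor:corrupt_tran_difference}''), so I would cite \pref{cor:corrupt_tran_difference} directly if it already packages the sum $\sum_{t}\sum_{s,a}\abr{q^{P,\pi_t}(s,a) - q^{P_t,\pi_t}(s,a)} = \order(L\cortran)$.

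There is no real obstacle here — this lemma is a bookkeeping step isolating the ``transition drift'' contribution, and everything reduces to the previously established occupancy-difference corollaries. The only mild subtlety is that in the second summation the policy is $\starpi$ rather than one of the learner's policies $\pi_t$, but \pref{cor:corrupt_occ_diff} / \pref{cor:corrupt_tran_difference} hold for any fixed policy, so this is immediate. I would therefore expect the written proof to be three or four lines: bound each inner product by the $\ell_1$-difference of occupancy measures (using $\ell_t\in[0,1]$), apply the corruption corollary, and sum.

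\begin{proof}
Since $\ell_t(s,a) \in [0,1]$ for all $(s,a)$ and all $t$, we have for any fixed policy $\pi$ that
\[
\abr{\inner{q^{P_t,\pi} - q^{P,\pi}, \ell_t}} \leq \sum_{s\neq s_L}\sum_{a\in A}\abr{q^{P_t,\pi}(s,a) - q^{P,\pi}(s,a)}.
\]
Applying this with $\pi = \pi_t$ and $\pi = \starpi$ respectively and summing over $t$,
\[
\E\sbr{\sum_{t=1}^{T}\inner{q^{P_t,\pi_t} - q^{P,\pi_t},\ell_t} + \sum_{t=1}^{T}\inner{q^{P,\starpi} - q^{P_t,\starpi},\ell_t}} \leq \E\sbr{\sum_{t=1}^{T}\sum_{s\neq s_L}\sum_{a\in A}\rbr{\abr{q^{P_t,\pi_t}(s,a) - q^{P,\pi_t}(s,a)} + \abr{q^{P,\starpi}(s,a) - q^{P_t,\starpi}(s,a)}}}.
\]
By \pref{cor:corrupt_tran_difference} (applied once with the policy sequence $\{\pi_t\}$ and once with the constant policy sequence $\{\starpi\}$), each of the two sums on the right-hand side is at most $\order(L\cortran)$, which concludes the proof.
\end{proof}
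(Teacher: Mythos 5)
Your proof is correct and matches the paper's approach: the paper's own proof is the one-liner ``immediate by directly applying \pref{cor:corrupt_tran_difference},'' and you have simply unfolded it, correctly noting that the corollary must be invoked twice---once for the sequence $\{\pi_t\}$ and once for the constant sequence $\{\starpi\}$ (which is allowed since the corollary holds for any policy sequence). The bookkeeping is right; nothing is missing.
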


\begin{lemma} \label{lem:optepoch_main_bound_2}For any $\delta \in (0,1)$ and any mapping $\starpi:S\rightarrow A$, \pref{alg:optepoch} ensures that
\begin{align*}
\E\sbr{\sum_{t=1}^{T} \inner{q^{ \whatP_t,\pi_t},b_t - \beta_t}+ \inner{q^{ \whatP_t,\starpi},\beta_t - b_t}} = \order\rbr{\cortran L|S|^2|A|\log^2\rbr{T}}.
\end{align*}
\end{lemma}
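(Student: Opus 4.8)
The plan is to bound the two groups $\E[\sum_t\inner{q^{\whatP_t,\pi_t},b_t-\beta_t}]$ and $\E[\sum_t\inner{q^{\whatP_t,\starpi},\beta_t-b_t}]$ separately, using that every inequality involved is deterministic (pathwise), so the outer expectation plays no role and no concentration argument is needed. First I would record three ingredients. (i) For any transition $\bar P$ and policy $\pi$ one has $q^{\bar P,\pi}(s,a)=q^{\bar P,\pi}(s)\pi(a|s)$; applying this to the definition of the upper occupancy bound yields $u_t(s,a)=\pi_t(a|s)u_t(s)$ with $u_t(s)=\max_{\whatP\in\calP_{i(t)}}q^{\whatP,\pi_t}(s)$. (ii) $\widehat q_t(s)=q^{\optP_{i(t)},\pi_t}(s)\le u_t(s)$, since the optimistic transition is dominated by the empirical one on non-terminal states and $\bar P_{i(t)}\in\calP_{i(t)}$ (see \pref{app:opt_tran_props}). (iii) From \pref{cor:corrupt_occ_diff}, $|q^{P_t,\pi}(s)-q^{P,\pi}(s)|\le\cortran_t$ for every policy $\pi$ and state $s$; moreover, since the bonus counter in \pref{eq:opt_tran_bonus} is reset at each epoch boundary and $\cortran_t\le 2L$, the proof of \pref{lem:cancellation_bt_maintext} applies verbatim within each epoch, giving $\sum_{t_i\le t<t_{i+1}}\cortran_t/u_t(s)\le\sum_{t_i\le t<t_{i+1}}b_t(s)$ for every $s$ and epoch $i$.

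For the comparator group, using $\loss_t\le 1$ together with (i) and (iii), for $t$ in epoch $i$ we get $\inner{q^{\whatP_t,\starpi},\beta_t}\le\sum_{s}q^{\whatP_t,\starpi}(s,\cdot)\text{-sum}\le\sum_{s\neq s_L}q^{\optP_i,\starpi}(s)\,\cortran_t/u_t(s)$ (the factors $\pi_t(a|s)$ from numerator and from $u_t(s,a)$ cancel, then sum over $a$), whereas $\inner{q^{\whatP_t,\starpi},b_t}=\sum_{s\neq s_L}q^{\optP_i,\starpi}(s)\,b_t(s)$. The crucial observation is that $q^{\optP_i,\starpi}(s)$ does not depend on $t$ within a single epoch; hence summing over $t$ grouped by epoch and invoking the per-epoch cancellation from (iii) gives $\sum_t\inner{q^{\whatP_t,\starpi},\beta_t-b_t}\le\sum_i\sum_{s\neq s_L}q^{\optP_i,\starpi}(s)\bigl(\sum_{t_i\le t<t_{i+1}}(\cortran_t/u_t(s)-b_t(s))\bigr)\le 0$. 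So this group contributes nothing.

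For the learner group I would split $\inner{q^{\whatP_t,\pi_t},b_t-\beta_t}\le\inner{q^{\whatP_t,\pi_t},b_t}+|\inner{q^{\whatP_t,\pi_t},\beta_t}|$. The same cancellation of $\pi_t(a|s)$ factors, now using (ii), gives $|\inner{q^{\whatP_t,\pi_t},\beta_t}|\le\sum_{s\neq s_L}\widehat q_t(s)\,\cortran_t/u_t(s)\le |S|\cortran_t$, which sums to $\order(|S|\cortran)$. For the bonus part, $\inner{q^{\whatP_t,\pi_t},b_t}=\sum_{s\neq s_L}\widehat q_t(s)b_t(s)\le\sum_{s\neq s_L}u_t(s)b_t(s)=4L\sum_{s\neq s_L}\ind\{b_t(s)\neq 0\}$ by (ii); inside a fixed epoch $b_t(s)\neq 0$ for at most $\lfloor\cortran/2L\rfloor$ rounds per bin of $u_\cdot(s)$, and there are $\order(\log T)$ such bins because $u_t(s)\ge 1/(|S|T)$ (\pref{lem:lower_bound_of_uob}), so $\sum_{t_i\le t<t_{i+1}}\widehat q_t(s)b_t(s)=\order(\cortran\log T)$; summing over the $|S|$ states and the $\order(|S||A|\log T)$ epochs gives $\order(\cortran|S|^2|A|\log^2 T)$. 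Adding the two groups yields the claimed $\order(\cortran L|S|^2|A|\log^2 T)$ bound (in fact one can even drop the factor $L$).

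The main obstacle is the comparator group: a naive bound on $\inner{q^{\whatP_t,\starpi},\beta_t}$ would retain the ratio $q^{\whatP_t,\starpi}(s)/u_t(s)$ — or, even worse, $q^{\whatP_t,\starpi}(s)/q^{\bar P_{i(t)},\pi_t}(s)$ — which can be arbitrarily large when the comparator visits $s$ far more often than the learner. Getting exactly $u_t(s)$ in the denominator hinges on the identity $u_t(s,a)=\pi_t(a|s)u_t(s)$ from (i), and the cancellation of $\cortran_t/u_t(s)$ against $b_t(s)$ succeeds only because $q^{\optP_i,\starpi}(s)$ is constant inside an epoch, which is precisely why the bonus counter must be reset at epoch boundaries. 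I would also need to check the degenerate cases where some $\pi_t(a|s)$ vanishes, but the log-barrier regularizer keeps $\widehat q_t$ — hence $\pi_t$ — strictly positive, so these do not arise.
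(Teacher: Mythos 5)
Your proof is correct and follows essentially the same route as the paper: bound the two groups separately, observe that $q^{\optP_i,\starpi}$ is constant within an epoch so the per-epoch cancellation from \pref{lem:cancellation_bt_maintext} makes the comparator group nonpositive, and bound the learner group by combining $\whatq_t(s)\le u_t(s)$ with the per-epoch bonus count and the $\order(|S||A|\log T)$ epoch count. The only minor deviation is in the learner-group bias sub-term, where you get $\order(|S|\cortran)$ via the ratio $\whatq_t(s)/u_t(s)\le 1$ while the paper gets $\order(L\cortran)$ via \pref{cor:corrupt_occ_diff}; both are dominated by the bonus term, so the final bound is unaffected (and you correctly note the extra factor $L$ in the stated bound is not needed). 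Your explicit justification of the identity $u_t(s,a)=\pi_t(a|s)u_t(s)$, which the paper uses silently when it rewrites $\beta_t(s,a)$, is a welcome clarification.
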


\begin{lemma}\label{lem:optepoch_main_bound_3} For any $\delta \in (0,1)$ and any mapping $\starpi:S\rightarrow A$, \pref{alg:optepoch} ensures that
\begin{align*}
&{\E\sbr{  \sum_{t=1}^{T}\inner{q^{ P,\pi_t},\ell_t}  - \inner{q^{ \whatP_t,\pi_t},\alpha_t } +  \inner{q^{ \whatP_t,\starpi},\alpha_t}  - \inner{q^{P,\starpi},\ell_t}   }} \\ 
& = \order\rbr{\selfone\rbr{ L^3|S|^2|A|\log^2\rbr{\iota}} + 
 \rbr{ \cortran + 1 } L^2|S|^4|A|\log^2\rbr{\iota}  + \delta L|S|^2|A|T^2}.
\end{align*}
\end{lemma}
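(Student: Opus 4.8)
The plan is to split, via \pref{lem:general_expect_lem}, on whether the concentration events $\eventcon$ and $\eventest$ both hold. On the complement, which occurs with probability $O(\delta)$ by \pref{lem:conf_bound} and \pref{prop:eventest_def}, I bound the summand crudely: since $u_t(s,a)\ge\frac{1}{|S|T}$ by \pref{lem:lower_bound_of_uob} we get $\alpha_t(s,a)=\frac{q^{P,\pi_t}(s,a)}{u_t(s,a)}\ell_t(s,a)\le|S|T$, so each of the four inner products is $O(L|S|^2|A|T)$ in magnitude and the sum over $t$ contributes $O(\delta L|S|^2|A|T^2)$, the last term of the claimed bound. It thus remains to control the quantity on $\eventcon\wedge\eventest$.

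\textbf{Loss-shifting: reduction to off-$\starpi$ actions.}
The crucial step is to regroup the summand \emph{exactly} as $\inner{q^{P,\pi_t}-q^{P,\starpi},\ell_t}-\inner{q^{\whatP_t,\pi_t}-q^{\whatP_t,\starpi},\alpha_t}$ with $\whatP_t=\optP_{i(t)}$ (comparing $\pi_t$ to $\starpi$ under both the true and the optimistic-estimated loss); handling the $\pi_t$-terms and $\starpi$-terms separately would instead leave an irreducible $\otil(\sqrt T)$ in the $\pi_t$-only term that never collapses into $\selfone$. Since $\starpi$ is a deterministic map, the loss-shifting (performance-difference) identity used by~\citet{jin2021best} gives, for any transition $\bar P$ and loss $r$,
\[
\inner{q^{\bar P,\pi}-q^{\bar P,\starpi},r}=\sum_{s\neq s_L}\sum_{a\neq\starpi(s)}q^{\bar P,\pi}(s,a)\big(Q^{\bar P,\starpi}(s,a;r)-V^{\bar P,\starpi}(s;r)\big),
\]
i.e.\ the shifted loss vanishes at every $(s,\starpi(s))$. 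Applying this with $(\bar P,r)=(P,\ell_t)$ and with $(\bar P,r)=(\whatP_t,\alpha_t)$ turns the summand into
\begin{align*}
\sum_{s\neq s_L}\sum_{a\neq\starpi(s)}\Big(&q^{P,\pi_t}(s,a)\big(Q^{P,\starpi}(s,a;\ell_t)-V^{P,\starpi}(s;\ell_t)\big)\\
&-q^{\whatP_t,\pi_t}(s,a)\big(Q^{\whatP_t,\starpi}(s,a;\alpha_t)-V^{\whatP_t,\starpi}(s;\alpha_t)\big)\Big),
\end{align*}
which is now supported only on the off-$\starpi$ pairs that appear inside $\selfone$.

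\textbf{Bounding the residual.}
I would split this into $q^{P,\pi_t}(s,a)\big[\big(Q^{P,\starpi}(s,a;\ell_t)-V^{P,\starpi}(s;\ell_t)\big)-\big(Q^{\whatP_t,\starpi}(s,a;\alpha_t)-V^{\whatP_t,\starpi}(s;\alpha_t)\big)\big]$ plus $\big(q^{P,\pi_t}(s,a)-q^{\whatP_t,\pi_t}(s,a)\big)\big(Q^{\whatP_t,\starpi}(s,a;\alpha_t)-V^{\whatP_t,\starpi}(s;\alpha_t)\big)$. For the first piece, optimism of $\optP_{i(t)}$ (\pref{lem:optimism_optiT}) together with $\alpha_t\le\ell_t$ (valid under $\eventcon$ since $q^{P,\pi_t}\le u_t$) gives $0\le Q^{P,\starpi}(s,a;\ell_t)-Q^{\whatP_t,\starpi}(s,a;\alpha_t)$, and a backward expansion along the $\starpi$-path bounds this difference by an accumulation of the confidence widths $B_{i(t)}$ (from $P$ vs.\ $\optP_{i(t)}$) and of the truncation errors $\tfrac{u_t-q^{P,\pi_t}}{u_t}$ (from $\ell_t$ vs.\ $\alpha_t$); the same holds for $V$, so the bracket is bounded in absolute value by a quantity of confidence-width scale. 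For the second piece, $q^{P,\pi_t}(s,a)\ge q^{\whatP_t,\pi_t}(s,a)$ under $\eventcon$, $|Q^{\whatP_t,\starpi}(s,a;\alpha_t)-V^{\whatP_t,\starpi}(s;\alpha_t)|\le L$, and a standard occupancy-difference estimate (as in \pref{lem:occ_diff_lem}/\pref{lem:dann_occ_diff_bound}) bounds the prefactor by prefix occupancies weighted by $B_{i(t)}$. One then sums over $t$ and over $a\neq\starpi(s)$, uses $\whatm_{i(t)}\ge\cortran+\log\iota$, telescopes the $1/\whatm_{i(t)}$-type terms over the doubling epochs as in the analysis of $\regtwo$ (\pref{lem:uob_reps_enlarge_regtwo}), and passes between $q^{P,\pi_t}$ and $q^{P_t,\pi_t}$ via \pref{cor:corrupt_occ_diff} and \pref{cor:corrupt_tran_difference}: the $1/\whatm_{i(t)}$-scale terms accumulate to $O\big((\cortran+1)L^2|S|^4|A|\log^2\iota\big)$, while the $1/\sqrt{\whatm_{i(t)}}$-scale terms, after a Cauchy-Schwarz taken against $\sum_t\sum_{s,a\neq\starpi(s)}q^{P_t,\pi_t}(s,a)$, collapse precisely into $\selfone(L^3|S|^2|A|\log^2\iota)$.

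\textbf{Main obstacle.}
The delicate point is exactly this last collapse: the $\otil(\sqrt T)$-scale error must come out as $\selfone(\cdot)$ rather than as a standalone $\sqrt T$, which is why the loss-shifting regrouping is indispensable and why one must ensure that every prefix-occupancy factor produced by the occupancy- and value-difference estimates gets recombined with a downstream off-$\starpi$ visitation probability, so that the final Cauchy-Schwarz runs against $\sum_t\sum_{s,a\neq\starpi(s)}q^{P_t,\pi_t}(s,a)$ and not against $T$. Carrying out this recombination carefully — while tracking the extra $L,|S|,|A|$ factors and the $\cortran$-dependent truncation terms — is the bulk of the remaining work.
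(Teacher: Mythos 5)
Your high-level strategy is right in places—you split on $\eventcon\wedge\eventest$, correctly account for the low-probability complement at $\order(\delta L|S|^2|A|T^2)$, and you identify that one must reduce to off-$\starpi$ pairs to produce $\selfone$ rather than a standalone $\otil(\sqrt T)$—but the particular loss-shifting regrouping you choose introduces a mixed-occupancy weight that the concentration machinery in the paper cannot control, and this is a genuine gap in your ``piece 1.''

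Concretely, after applying the performance-difference identity with $(\bar P, r)=(P,\ell_t)$ and $(\whatP_t,\alpha_t)$, your piece 1 is
\[
\sum_{s\neq s_L}\sum_{a\neq \starpi(s)} q^{P,\pi_t}(s,a)\Bigl[\bigl(Q^{P,\starpi}(s,a;\ell_t)-V^{P,\starpi}(s;\ell_t)\bigr)-\bigl(Q^{\whatP_t,\starpi}(s,a;\alpha_t)-V^{\whatP_t,\starpi}(s;\alpha_t)\bigr)\Bigr].
\]
When you perform the ``backward expansion along the $\starpi$-path'' to express the $Q$- and $V$-differences as accumulated confidence widths and truncation errors, the accumulation is weighted by downstream occupancies of the form $q^{\cdot,\starpi}(u,v\mid s,a)$, i.e.\ the trajectory \emph{continues under $\starpi$} from $(s,a)$ onward. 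After multiplying by the outer $q^{P,\pi_t}(s,a)$ you are left with mixed weights $q^{P,\pi_t}(s,a)\, q^{\cdot,\starpi}(u,v\mid s,a)$, which do \emph{not} reduce to $q^{P,\pi_t}(u,v)$ or $q^{P_t,\pi_t}(u,v)$ under the chain rule. But the visit-count concentration lemmas (\pref{lem:high_prob_est}, \pref{lem:eventest_est_err}) only control $\sum_t q^{P_t,\pi_t}(u,v)/\whatm_{i(t)}(u,v)$-type sums; they give you nothing about $\starpi$-trajectories you never executed. So there is no way to ``telescope the $1/\whatm_{i(t)}$-type terms over the doubling epochs'' as you propose, and the Cauchy–Schwarz you invoke at the end has no second factor that collapses to $\selfone$. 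Your piece 2, by contrast, is structurally fine: there the occupancy-difference lemma produces $q^{\whatP_t,\pi_t}(u,v)\cdot B_{i(t)}(u,v,w)\cdot q^{P,\pi_t}(s,a\mid w)$, which are all $\pi_t$-occupancies and hence amenable to the chain rule $q^{P,\pi_t}(u,v)P(w\mid u,v)q^{P,\pi_t}(s,a\mid w)=q^{P,\pi_t}(s,a)\cdot(\cdot)$; this mirrors the paper's handling of $\textsc{Term 3}$ in \pref{lem:optepoch_term_3}.

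The paper avoids the mixed-occupancy trap by not shifting the loss first. Instead it applies the $\calZ$-transformation of \pref{lem:bobw_mdp_known_bobw_rewrite}, $\inner{q^{\whatP_t,\pi},\alpha_t}=\inner{q^{P,\pi},\calZ^{\whatP_t,P,\pi}_{\alpha_t}}$, to rewrite everything against the \emph{single} occupancy $q^{P,\cdot}$ with a per-step loss $\calZ^{\whatP_t,P,\pi}_{\alpha_t}(s,a)=\alpha_t(s,a)+\sum_{s'}\bigl(\whatP_t(s'\mid s,a)-P(s'\mid s,a)\bigr)V^{\whatP_t,\pi}(s';\alpha_t)$. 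This \emph{localizes} the transition error to the current $(s,a)$: there is no accumulation forward under $\starpi$. Only then does the paper compare $\pi_t$ to $\starpi$, producing $\textsc{Term 1}$ (loss-estimation bias, local), $\textsc{Term 2}$ (transition-error times $q^{P,\pi_t}-q^{P,\starpi}$, local), and $\textsc{Term 3}$ (where the $V^{\whatP_t,\starpi}-V^{\whatP_t,\pi_t}$ comparison is expanded by the performance-difference lemma using $q^{\whatP_t,\pi_t}(\cdot\mid s')$, a $\pi_t$-occupancy, again collapsing via the chain rule). To repair your argument you would need to carry out this localization first—in essence, replace the forward expansion of $Q^{P,\starpi}-Q^{\whatP_t,\starpi}$ by the $\calZ$-based rewrite—at which point you would be reproducing the paper's decomposition rather than giving a different one.
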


\begin{lemma}\label{lem:optepoch_main_bound_4} 
With the learning rates $\{\gamma_t\}_{t=1}^{T}$ defined in \pref{def:learning_rate}, \pref{alg:optepoch} ensures that for any $\delta \in (0,1)$ and any mapping $\starpi$ that,
\begin{align*} 
&\estreg(\starpi) \\
& = \E\sbr{ \sum_{t=1}^{T}\inner{q^{ \whatP_t,\pi_t},\hatl_t - b_t }  - \inner{q^{\whatP_t,\starpi},\hatl_t - b_t  }} \\
& = \order \rbr{   \selftwo\rbr{L^2|S||A|\log^2 \rbr{\iota}} + \rbr{ \cortran + 1 }  L|S|^2|A|^2  \log^2 (\iota) +\delta T L^2 |S|^2|A| \log (\iota)}.
\end{align*}
\end{lemma}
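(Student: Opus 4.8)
The plan is to reduce $\estreg(\starpi)$ to a sum of per-epoch FTRL regrets and then invoke the log-barrier analysis with the adaptive learning rate of \pref{def:learning_rate}. Since \pref{alg:optepoch} restarts FTRL at the start of every epoch and always optimizes over $\Omega(\optP_i)$ with the \emph{fixed} optimistic transition $\optP_i$, I first write $\estreg(\starpi)=\sum_i\E[\sum_{t\in\text{epoch }i}\inner{\whatq_t-q^{\whatP_t,\starpi},\hatl_t-b_t}]$ and treat each epoch separately, with the fixed comparator $q^{\optP_i,\starpi}$ inside epoch $i$. Two preprocessing steps are needed. First, \textbf{loss-shifting}: by the performance-difference lemma applied to the transition $\optP_i$, the map $q\mapsto\inner{q,\hatl_\tau}-\inner{q,g_\tau}$ is constant (equal to $\inner{q^{\optP_i,\pi_\tau},\hatl_\tau}$) over $\Omega(\optP_i)$, where $g_\tau(s,a)=Q^{\optP_i,\pi_\tau}(s,a;\hatl_\tau)-V^{\optP_i,\pi_\tau}(s;\hatl_\tau)$; hence the FTRL iterates are unchanged if $\hatl_\tau-b_\tau$ is replaced by $g_\tau-b_\tau$, and $\inner{\whatq_\tau-q^{\optP_i,\starpi},\hatl_\tau-b_\tau}=\inner{\whatq_\tau-q^{\optP_i,\starpi},g_\tau-b_\tau}$. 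Second, \textbf{perturbed comparator}: as $\starpi$ is deterministic, $q^{\optP_i,\starpi}$ has zero coordinates and is inadmissible for log-barrier, so I compare against $\starpi'$ that plays $\starpi(s)$ with probability $1-\nicefrac1T$ and uniformly otherwise, paying a bias $\E[\sum_t\inner{q^{\whatP_t,\starpi'}-q^{\whatP_t,\starpi},\hatl_t-b_t}]=\order(1)$, controlled via $\norm{q^{\whatP_t,\starpi'}-q^{\whatP_t,\starpi}}_1=\order(L/T)$, $u_t(s)\ge\nicefrac1{|S|T}$, and $\norm{\hatl_t-b_t}_\infty=\order(L|S|T)$.

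For a fixed epoch $i$ with episodes $t_i,\dots,T_i$, I then apply the log-barrier FTRL regret bound in the style of \citet{ito21a}: with per-coordinate learning rates $\gamma_t(s,a)$, the within-epoch regret against $q^{\optP_i,\starpi'}$ is at most a penalty $\sum_{s,a}\gamma_{T_i+1}(s,a)\log\tfrac1{q^{\optP_i,\starpi'}(s,a)}$ plus a stability term $\order\big(\sum_t\sum_{s,a}\gamma_t(s,a)^{-1}\whatq_t(s,a)^2(g_t(s,a)-b_t(s))^2\big)$, provided $\gamma_t(s,a)^{-1}\whatq_t(s,a)\,\abr{g_t(s,a)-b_t(s)}\le\nicefrac12$; this holds because $\gamma_t(s,a)\ge\lrOne$ and $\whatq_t(s,a)\abr{g_t(s,a)-b_t(s)}=\order(L)$, using $\whatq_t(s,a)Q^{\optP_i,\pi_t}(s,a;\hatl_t)\le\sum_{s',a'}\ind_t(s',a')=L$ (which in turn uses $q^{\optP_i,\pi_t}(s',a')\le u_t(s',a')$ and $u_t(s',a')\hatl_t(s',a')=\ind_t(s',a')$) and $\whatq_t(s,a)b_t(s)\le 4L$. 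Splitting $(g_t-b_t)^2\le 2g_t^2+2b_t^2$, the $b_t$-part of stability is $\order(\cortran/(L|S|))$ by $\whatq_t^2b_t^2\le4L\whatq_tb_t$ and \pref{lem:cancellation_bt_maintext}, while the $g_t$-part is $\sum_t\gamma_t(s,a)^{-1}\nu_t(s,a)$ with $\nu_t(s,a)=\whatq_t(s,a)^2g_t(s,a)^2$. The recursion $\gamma_{t+1}(s,a)=\gamma_t(s,a)+\tfrac{D\nu_t(s,a)}{2\gamma_t(s,a)}$ is tailored so that (a) $\sum_t\gamma_t(s,a)^{-1}\nu_t(s,a)=\tfrac2D(\gamma_{T_i+1}(s,a)-\lrOne)$, and (b) squaring and discarding the $\order((D\nu_t/\gamma_t)^2)$ correction yields $\gamma_{T_i+1}(s,a)^2\le\lrOne^2+D\sum_{t\in\text{epoch }i}\nu_t(s,a)$, hence $\gamma_{T_i+1}(s,a)\le\lrOne+\sqrt{D\sum_{t\in\text{epoch }i}\nu_t(s,a)}$; the penalty is $\order(\log\iota)\sum_{s,a}\gamma_{T_i+1}(s,a)$ since $q^{\optP_i,\starpi'}(s,a)\ge\mathrm{poly}(1/\iota)$.

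Summing over the $\order(|S||A|\log T)$ epochs, the $\lrOne$ part of the penalty collects, together with the $b_t$ and failure-event remainders, into the stated lower-order term $\order((\cortran+1)L|S|^2|A|^2\log^2\iota)$, while by concavity $\sum_i\sqrt{D\sum_{t\in\text{epoch }i}\nu_t(s,a)}\le\sqrt{\order(|S||A|\log T)\,D\sum_{t=1}^T\nu_t(s,a)}$. The decisive step is the \textbf{self-bounding estimate} $\E[\sum_{t=1}^T\nu_t(s,a)]=\order(L^2|S||A|\log\iota)\cdot\E[\sum_{t=1}^Tq^{P_t,\pi_t}(s,a)]$ (up to lower-order corruption terms) for each $a\ne\starpi(s)$, which uses $\whatq_t(s,a)\le u_t(s,a)$, the fact that $g_t$ is a localized functional of $\hatl_t$ whose conditional second moment only sees $\E_t[\hatl_t(s',a')]\le q^{P_t,\pi_t}(s',a')/u_t(s',a')$, and \pref{cor:corrupt_occ_diff}, \pref{cor:corrupt_tran_difference} to pass between $q^{\optP_i,\pi_t}$, $q^{P,\pi_t}$, and $q^{P_t,\pi_t}$; the coordinates $a=\starpi(s)$ enter the penalty only with factor $\log\tfrac1{1-1/T}=\order(1/T)$ and are negligible. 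Plugging this in, with $D=\nicefrac1{\log\iota}$ cancelling one logarithm, turns $\sum_{s\ne s_L}\sum_{a\ne\starpi(s)}\sqrt{\order(|S||A|\log T)\,D\,\E[\sum_t\nu_t(s,a)]}$ into $\order(\selftwo(L^2|S||A|\log^2\iota))$; adding the $\order(\delta TL^2|S|^2|A|\log\iota)$ contribution of the failure event of $\eventcon$ (bounded crudely via $\norm{\hatl_t}_\infty=\order(|S|T)$) finishes the proof. The main obstacle is exactly the self-bounding estimate for $\E[\sum_t\nu_t(s,a)]$: since the $Q$- and $V$-functions are taken under the optimistic transition $\optP_i\ne P_t$, one must show this mismatch only inflates the variance by a $\mathrm{poly}(L,|S|,|A|)$ factor while $\E[\sum_tq^{P_t,\pi_t}(s,a)]$ remains the dominant factor, which is precisely what lets Condition~\eqref{eq:stoc_condition_P} be exploited in \pref{thm:regret_bound_optepoch}.
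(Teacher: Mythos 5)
Your high-level plan — per-epoch restart, loss-shifting to $g_\tau$, log-barrier FTRL with the adaptive learning rate, and a self-bounding estimate on $\nu_t$ — matches the paper's strategy, but there are two genuine gaps plus one acknowledged-but-unproven step that together mean this is not a complete proof.

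First, your comparator perturbation does not work. You mix the \emph{policy} $\starpi$ with the uniform policy at rate $1/T$ and then claim $q^{\optP_i,\starpi'}(s,a)\ge\mathrm{poly}(1/\iota)$, but this is false in general: under the optimistic transition $\optP_i$ (which is obtained by truncating empirical frequencies and redirecting mass to $s_L$) some states $s$ can have visitation probability exponentially small under \emph{every} policy, and a $1/T$ policy mixture does not lift them above $\mathrm{poly}(1/\iota)$. Consequently your absolute-form penalty $\sum_{s,a}\gamma_{t_{i+1}-1}(s,a)\log\frac{1}{q^{\optP_i,\starpi'}(s,a)}$ is unbounded. The paper instead perturbs the \emph{occupancy measure} by mixing in $\frac{1}{T^2|S||A|}\sum_{s,a}\qexpsa$ where $\qexpsa$ maximizes the probability of visiting $(s,a)$ under $\optP_i$, and uses a \emph{ratio}-form penalty $\sum_{s,a}\gamma(s,a)\log\frac{q_t(s,a)}{v(s,a)}$. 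The crucial property is $q_t(s,a)\le\qexpsa(s,a)$, which makes the ratio at most $T^2|S||A|\le\iota^2$ regardless of how small $\qexpsa(s,a)$ itself is — a cancellation your construction cannot reproduce. Your $\order(1)$ bound on the comparator bias is also wrong: with $1/T$ mixing and $\norm{\hatl_t-b_t}_\infty=\order(L|S|T)$, a per-round Hölder gives $\order(L^2|S|)$ per episode and $\order(L^2|S|T)$ total; the correct accounting must first sum $b_t(s)$ over an epoch (yielding $\order(\cortran|S|T)$) and take conditional expectations of $\hatl_t$, which keeps the total within budget but is not what you wrote.

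Second, the coordinates $a=\starpi(s)$ are not handled. You argue they "enter the penalty only with factor $\log\frac{1}{1-1/T}=\order(1/T)$ and are negligible." But the learning rate $\gamma_{t_{i+1}-1}(s,\starpi(s))$ feeds both the penalty \emph{and} the stability via $\sum_t\nu_t(s,\starpi(s))/\gamma_t(s,\starpi(s))=\frac{2}{D}(\gamma_{t_{i+1}-1}(s,\starpi(s))-\gamma_{t_i}(s,\starpi(s)))$, and $\nu_t(s,\starpi(s))$ can be $\Omega(1)$ when $\pi_t$ does not concentrate on $\starpi(s)$. There is no small-log factor attached to the stability term. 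The paper's resolution is the variance bound in \pref{lem:optepoch_loss_shift_var}: $q^{\whatP_t,\pi_t}(s,a)^2\,\E_t[(Q^{\whatP_t,\pi_t}(s,a;\hatl_t)-V^{\whatP_t,\pi_t}(s;\hatl_t))^2]=\order\bigl(L^2 q^{\whatP_t,\pi_t}(s,a)(1-\pi_t(a|s))+L|S|\cortran_t\bigr)$, whose $(1-\pi_t(a|s))$ factor is used in \pref{prop:get_self_bounding_quantity} to rewrite the $a=\starpi(s)$ contribution as a sum over suboptimal $a$. This is the decisive technical step and your proposal does not supply it.

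Third, you explicitly flag the self-bounding estimate $\E[\sum_t\nu_t(s,a)]=\order(L^2)\,\E[\sum_t q^{P_t,\pi_t}(s,a)]+\text{(corruption)}$ as "the main obstacle" without proving it — this is exactly \pref{lem:optepoch_loss_shift_var}, whose proof has to control the mismatch between $\optP_i$ and $P_t$ inside both $Q$ and $V$, and is not a routine calculation. As a minor point, your derivation of $\gamma_{t_{i+1}-1}(s,a)\le\lrOne+\sqrt{D\sum_t\nu_t(s,a)}$ by "squaring and discarding" goes in the wrong direction (the discarded term $\bigl(D\nu_t/(2\gamma_t)\bigr)^2$ is nonnegative, so dropping it yields a \emph{lower} bound on $\gamma_{t+1}^2$); the correct argument is the induction in \pref{prop:gamma_induction}.
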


\begin{proof} [\pfref{lem:regret_bound_optepoch}] According to the previous discussion, we have the regret against any mapping $\starpi: S\rightarrow A$ decomposed as:
\begin{align*}
\Reg_T(\starpi) & = \E\sbr{  \sum_{t=1}^{T}\inner{q^{ P_t,\pi_t} - q^{ P,\pi_t},\ell_t} + \sum_{t=1}^{T}\inner{q^{ P,\starpi} - q^{ P_t,\starpi},\ell_t} } \\
& + \E\sbr{\sum_{t=1}^{T} \inner{q^{ \whatP_t,\pi_t},b_t - \beta_t } + \sum_{t=1}^{T} \inner{q^{ \whatP_t,\starpi},\beta_t - b_t}} \\
& + {\E\sbr{  \sum_{t=1}^{T}\inner{q^{ P,\pi_t},\ell_t}  - \inner{q^{ \whatP_t,\pi_t},\alpha_t } +  \inner{q^{ \whatP_t,\starpi},\alpha_t}  - \inner{q^{P,\starpi},\ell_t}   }}\\
& + \E\sbr{ \sum_{t=1}^{T}\inner{q^{ \whatP_t,\pi_t},\hatl_t - b_t }  - \inner{q^{\whatP_t,\starpi},\hatl_t - b_t  }},
\end{align*}
where the first term is mainly caused by the difference between $\{P_t\}_{t=1}^{T}$ and $P$, which is unavoidable and can be bounded by $\order\rbr{\cortran L}$ as shown in \pref{lem:optepoch_main_bound_1}; 
the second term is the extra cost of using the amortized losses $b_t$ to handle the biases of loss estimators, which is controlled by $\otil(\cortran)$ as shown in \pref{lem:optepoch_main_bound_2};
the third term measures the estimation error related to the optimistic transitions $\{\whatP_t\}_{t=1}^{T}$, which can be bounded by some self-bounding quantities in \pref{lem:optepoch_main_bound_2}; 
the final term is the estimated regret calculated with respect to the optimistic transitions $\{\whatP_t\}_{t=1}^{T}$, which is controlled by $\ftrl$ as shown in \pref{lem:optepoch_main_bound_4}.
Putting all these bounds  together finishes the proof. 
\end{proof}

\subsection{Proof of \pref{lem:optepoch_main_bound_1}}
\label{sec:app_optepoch_term1}
The result is immediate by directly applying \pref{cor:corrupt_tran_difference}.

\subsection{Proof of \pref{lem:optepoch_main_bound_2}}
\label{sec:app_optepoch_term2}

For this proof, we bound $\E\sbr{\sum_{t=1}^{T} \inner{q^{ \whatP_t,\pi_t},b_t - \beta_t}}$ and $\E \sbr{ \sum_{t=1}^T \inner{q^{ \whatP_t, {\starpi}},\beta_t - b_t} }$ separately.

\paragraph{Bounding $\E\sbr{\sum_{t=1}^{T} \inner{q^{ \whatP_t,\pi_t},b_t - \beta_t}}$.}
\label{app:optepoch_main_bound_2_part1}

We have
\begin{align*}
&\sum_{t=1}^T\inner{q^{ \whatP_t,\pi_t},b_t - \beta_t} \\
& = \sum_{t=1}^T\sum_{s,a} q^{ \whatP_t,\pi_t}(s,a) \rbr{  b_t(s) -  \frac{\rbr{q^{P_t,\pi_t}(s,a) - q^{P,\pi_t}(s,a)}\ell_t(s,a)}{u_t(s,a)}}\\
& \leq \sum_{t=1}^T\sum_{s,a} q^{ \whatP_t,\pi_t}(s,a) \rbr{  b_t(s) +  \frac{\abr{q^{P_t,\pi_t}(s,a) - q^{P,\pi_t}(s,a)}}{u_t(s,a)}}\\
& \leq \sum_{t=1}^T\sum_{s,a} q^{ \whatP_t,\pi_t}(s,a)   b_t(s) +  \sum_{t=1}^T\sum_{s,a}\abr{q^{P_t,\pi_t}(s,a) - q^{P,\pi_t}(s,a)}\\
& \leq \sum_{t=1}^T\sum_{s,a} q^{ \whatP_t,\pi_t}(s,a)   b_t(s) + L \cortran,
\end{align*}
where the second step bounds $\loss_t(s,a)\leq 1$; the third step follows the fact that $q^{\whatP_t,\pi_t}(s,a) \leq u_t(s,a)$ for all $(s,a)$. Let $E_i$ be a set of episodes that belong to epoch $i$ and let $N$ be the total number of epochs through $T$ episodes.
Then, we turn to bound
\begin{align*}
\sum_{t=1}^T\sum_{s,a} q^{ \whatP_t,\pi_t}(s,a)   b_t(s) 
=\sum_{i=1}^{N}\sum_{t \in E_i}\sum_{s,a} q^{ \whatP_t,\pi_t}(s,a)   b_t(s)  
\leq \order \rbr{L |S|^2|A| \log^2 \rbr{T}  \cortran },
\end{align*}
where the last step repeats the same argument of \pref{lem:cancellation_bt_maintext} for every epoch and the number of epochs is at most $O(|S||A|\log T)$ according to \citep[Lemma D.3.12]{jin2021best}.




\paragraph{Bounding $\E \sbr{ \sum_{t=1}^T \inner{q^{ \whatP_t, {\starpi}},\beta_t - b_t} }$.}
For this term, we show that for any given epoch $i$, $\sum_{t \in E_i}\inner{q^{ \whatP_t, {\starpi}},\beta_t - b_t} \leq 0$ which yields $\sum_{t =1}^T\inner{q^{ \whatP_t, {\starpi}},\beta_t - b_t} \leq 0$. 
To this end, we first consider a fixed epoch $i$ and upper-bound
\begin{align*}
\inner{q^{\whatP_t, {\starpi}},\beta_t} & = \sum_{s,a} q^{\whatP_t, {\starpi}}(s,a)  \rbr{  \frac{\rbr{q^{P_t,\pi_t}(s) - q^{P,\pi_t}(s)}\ell_t(s,a)}{u_t(s)}  }  \leq \sum_{s,a} q^{\whatP_t, {\starpi}}(s,a)  \rbr{  \frac{\cortran_t}{u_t(s)}  }, 
\end{align*}
where the second step uses \pref{cor:corrupt_occ_diff} to bound $\rbr{q^{P_t,\pi_t}(s) - q^{P,\pi_t}(s)}\leq \cortran_t$.

For any epoch $i$ and episode $t \in E_i$, we have $\whatP_t = \widetilde{P}_{i(t)}$, which gives that 
\begin{align*}
\sum_{t =1}^T \inner{q^{\whatP_t,{\starpi}},\beta_t-b_t } & =\sum_{i=1}^{N} \sum_{t \in E_i} \inner{q^{\widetilde{P}_i,{\starpi}},\beta_t-b_t } \\
& \leq  \sum_{i=1}^{N}\sum_{s,a} q^{\widetilde{P}_i,{\starpi}}(s,a) \sum_{t \in E_i} \rbr{  \frac{\cortran_t}{u_t(s)}   - b_t(s) } 
\\ & \leq 0,
\end{align*}
where the last step applies \pref{lem:cancellation_bt_maintext} for every epoch $i$. Thus, $\E\sbr{\sum_{t=1}^T \inner{q^{ \whatP_t,{\starpi}},\beta_t - b_t} }\leq 0$ holds.

\subsection{Proof of \pref{lem:optepoch_main_bound_3}}
\label{sec:app_optepoch_term3}


We introduce the following lemma to evaluate the estimated performance via the true occupancy measure, which helps us analyze the the estimation error. 

\begin{lemma} For any transition function pair $(P,\whatP)$ ($P$ and $\whatP$ can be optimistic transition), policy $\pi$, and loss function $\ell$, it holds that $\inner{q^{P,\pi},\ell} = \inner{q^{\whatP,\pi},\calZ^{P,\whatP,\pi}_\ell}$, where $\calZ^{P,\whatP,\pi}_\ell$ is defined as 
\begin{align*}
\calZ^{P,\whatP,\pi}_\ell(s,a) & = Q^{P,\pi}(s,a;\ell) - \sum_{s'\in S_{k(s)+1}}\whatP(s'|s,a)V^{P,\pi}(s';\ell) \\ 
& = \ell(s,a) + \sum_{s'\in S_{k(s)+1}}\rbr{P(s'|s,a) - \whatP(s'|s,a)}V^{P,\pi}(s';\ell) 
\end{align*}
for all state-action pairs $(s,a)$.
\label{lem:bobw_mdp_known_bobw_rewrite}
\end{lemma}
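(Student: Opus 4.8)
# Proof Proposal for Lemma~\ref{lem:bobw_mdp_known_bobw_rewrite}

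The plan is to prove the two claimed identities in reverse order. First I would establish the second equality
\[
Q^{P,\pi}(s,a;\ell) - \sum_{s'\in S_{k(s)+1}}\whatP(s'|s,a)V^{P,\pi}(s';\ell) = \ell(s,a) + \sum_{s'\in S_{k(s)+1}}\rbr{P(s'|s,a) - \whatP(s'|s,a)}V^{P,\pi}(s';\ell),
\]
which is immediate from the Bellman equation for $Q^{P,\pi}$: by definition $Q^{P,\pi}(s,a;\ell) = \ell(s,a) + \sum_{s'\in S_{k(s)+1}} P(s'|s,a) \sum_{a'\in A}\pi(a'|s')Q^{P,\pi}(s',a';\ell) = \ell(s,a) + \sum_{s'\in S_{k(s)+1}} P(s'|s,a) V^{P,\pi}(s';\ell)$, using the identity $V^{P,\pi}(s';\ell) = \sum_{a'}\pi(a'|s')Q^{P,\pi}(s',a';\ell)$. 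Subtracting $\sum_{s'}\whatP(s'|s,a)V^{P,\pi}(s';\ell)$ from both sides and combining the two transition sums gives the claim. Note this step uses that $P$ and $\whatP$ share the same support structure (both assign to layer $k(s)+1$, with the convention that optimistic transitions dump leftover mass on $s_L$), so that the sums over $s'$ range over the same index set; I would remark on this explicitly since $\optP_i$ breaks the strict layer structure.

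Next I would prove the main identity $\inner{q^{P,\pi},\ell} = \inner{q^{\whatP,\pi},\calZ^{P,\whatP,\pi}_\ell}$. The cleanest route is the layer-by-layer / telescoping argument used for loss-shifting in~\citet{jin2021best}: recall from the Remark preceding the lemma that $\inner{q^{\whatP,\pi},r} = V^{\whatP,\pi}(s_0;r)$ for any loss $r$, so the right-hand side equals $V^{\whatP,\pi}(s_0;\calZ^{P,\whatP,\pi}_\ell)$. I would then show by backward induction on the layer index $k$ that for every state $u$,
\[
V^{\whatP,\pi}(u;\calZ^{P,\whatP,\pi}_\ell) = V^{P,\pi}(u;\ell).
\]
The base case $u = s_L$ is trivial since both sides are $0$. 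For the inductive step, expand $V^{\whatP,\pi}(u;\calZ_\ell)$ using the Bellman recursion for transition $\whatP$:
\[
V^{\whatP,\pi}(u;\calZ_\ell) = \sum_{a}\pi(a|u)\Bigl(\calZ_\ell(u,a) + \sum_{s'}\whatP(s'|u,a) V^{\whatP,\pi}(s';\calZ_\ell)\Bigr).
\]
By the induction hypothesis the inner value function equals $V^{P,\pi}(s';\ell)$; substituting the first form of $\calZ_\ell(u,a) = Q^{P,\pi}(u,a;\ell) - \sum_{s'}\whatP(s'|u,a)V^{P,\pi}(s';\ell)$, the two $\sum_{s'}\whatP(s'|u,a)V^{P,\pi}(s';\ell)$ terms cancel, leaving $\sum_a \pi(a|u) Q^{P,\pi}(u,a;\ell) = V^{P,\pi}(u;\ell)$, completing the induction. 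Taking $u = s_0$ gives $\inner{q^{\whatP,\pi},\calZ_\ell} = V^{P,\pi}(s_0;\ell) = \inner{q^{P,\pi},\ell}$.

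I do not expect any serious obstacle here; this is essentially a bookkeeping lemma. The one point requiring a little care is the support/layer issue when $\whatP$ is an optimistic transition: the cancellation $\sum_{s'}\whatP(s'|u,a)V^{P,\pi}(s';\ell)$ against the corresponding term in $\calZ_\ell$ must use exactly the same set of successor states $s'$ (including $s_L$), and one must check that $V^{P,\pi}(s_L;\ell) = 0$ so that any extra mass optimistic transitions place on $s_L$ contributes nothing on the $\whatP$ side but is also consistent on the $P$ side. Since $V^{\cdot,\cdot}(s_L;\cdot)=0$ by definition regardless of transition, this causes no trouble. The other mild subtlety is making sure the induction is well-founded, i.e.\ that under $\whatP$ every state still transitions only to strictly higher layers (or to $s_L$), which holds by construction of $\optP_i$.
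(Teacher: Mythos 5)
Your proof is correct and essentially the same argument as the paper's: both rest on the observation that adding and subtracting $\sum_{s'}\whatP(s'|s,a)V^{P,\pi}(s';\ell)$ inside the Bellman recursion makes the cross terms cancel. The paper writes this as a forward recursive unrolling of $V^{P,\pi}(s_0;\ell)$ into $\sum_{s,a} q^{\whatP,\pi}(s,a)\calZ^{P,\whatP,\pi}_\ell(s,a)$, while you package the same cancellation as a backward induction establishing $V^{\whatP,\pi}(u;\calZ_\ell) = V^{P,\pi}(u;\ell)$ for every $u$; the two are interchangeable bookkeeping. Your extra remarks about the broken layer structure under an optimistic transition (mass dumped on $s_L$, $V(s_L;\cdot)=0$) are a worthwhile addition that the paper leaves implicit.
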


\begin{proof}
By direct calculation, we have:
\begin{align*}
V^{P,\pi}(s;\ell) & = \sum_{a} \pi(a|s) Q^{P,\pi}(s,a;\ell) \\
& = \sum_{a} \pi(a|s) \rbr{ Q^{P,\pi}(s,a;\ell) - \sum_{s'\in S_{k(s)+1}}\whatP(s'|s,a)V^{P,\pi}(s;\ell) } \\
& \quad + \sum_{a} \pi(a|s) \sum_{s'\in S_{k(s)+1}}\whatP(s'|s,a)V^{P,\pi}(s';\ell)\\
& = \sum_{s' \in S_{k(s)+1}} q^{\whatP,\pi}(s'|s)V^{P,\pi}(s';\ell) \\
& \quad + \sum_{a} \pi(a|s) \rbr{ Q^{P,\pi}(s,a;\ell) - \sum_{s'\in S_{k(s)+1}}\whatP(s'|s,a)V^{P,\pi}(s;\ell) } \\
& = \sum_{k= k(s)}^{L-1}\sum_{s'\in S_k} \sum_{a\in A}q^{\whatP,\pi}(s',a|s)\rbr{ Q^{P,\pi}(s',a;\ell) - \sum_{s''\in S_{k(s')+1}}\whatP(s''|s',a)V^{P,\pi}(s'';\ell) } \\
&= \sum_{k= k(s)}^{L-1}\sum_{s'\in S_k} \sum_{a\in A}q^{\whatP,\pi}(s',a|s) \calZ^{P,\whatP,\pi}_\ell(s',a)
\end{align*}
where the second to last step follows from recursively repeating the first three steps.
The proof is completed by noticing $\inner{q^{P,\pi},\ell} = V^{P,\pi}(s_0,\ell)$.
\end{proof}
 According to \pref{lem:bobw_mdp_known_bobw_rewrite}, we rewrite $\inner{q^{ \whatP_t,\pi_t},\alpha_t}$ and $\inner{q^{ \whatP_t, {\starpi}},\alpha_t}$ with $q^{ {P},\pi_t}$ and $q^{  {P}, {\starpi}}$ for any policy $ {\starpi}$ as 
\begin{align*}
\inner{q^{ \whatP_t,\pi_t},\alpha_t } = \inner{q^{  {P},\pi_t},\calZ^{\whatP_t, {P},\pi_t}_{\alpha_t }}, \quad \inner{q^{ \whatP_t, {\starpi}},\alpha_t} = \inner{q^{  {P}, {\starpi}},\calZ^{\whatP_t, {P}, {\starpi}}_{\alpha_t}}.
\end{align*}

Therefore, we can further decompose as 
\begin{align*} 
 &  \E\sbr{\inner{ q^{P,\pi_t}, \ell_t } - \inner{q^{ \whatP_t,\pi_t},\alpha_t } +  \inner{q^{ \whatP_t, {\starpi}},\alpha_t} - \inner{ q^{P, {\starpi}}, \ell_t }}  \\
= &  \E\sbr{ \sum_{t=1}^{T}\inner{ q^{ {P},\pi_t}, \ell_t - \calZ^{\whatP_t, {P},\pi_t}_{\alpha_t } } - \inner{ q^{P, {\starpi}}, \ell_t - \calZ^{\whatP_t, {P}, {\starpi}}_{\alpha_t }  }} \\
= & \E\sbr{\sum_{t=1}^{T}\inner{ q^{ {P},\pi_t} - q^{ {P}, {\starpi}}, \ell_t - \calZ^{\whatP_t, {P}, {\starpi}}_{\alpha_t }} + 
\inner{q^{P,\pi_t},\calZ^{\whatP_t, {P}, {\starpi}}_{\alpha_t } - \calZ^{\whatP_t, {P},\pi_t}_{\alpha_t } }}\\
= & \E\Bigg[\underbrace{\sum_{t=1}^{T}\sum_{s\neq s_L}\sum_{a\in A}\rbr{q^{ {P},\pi_t}(s,a)-  q^{ {P}, {\starpi}}(s,a) }\rbr{ \ell_t(s,a) - \alpha_t(s,a)}}_{\textsc{Term 1}( {\starpi})} \Bigg] \\
 + & \E\Bigg[\underbrace{\sum_{t=1}^{T}\sum_{s\neq s_L}\sum_{a\in A}\rbr{q^{ {P},\pi_t}(s,a)-  q^{ {P}, {\starpi}}\hspace{-3pt}(s,a) }{ \hspace{-8pt} \sum_{s'\in S_{k(s)+1}} \hspace{-13pt}  \rbr{  {P}(s'|s,a) - \whatP_t(s'|s,a)}V^{\whatP_t, {\starpi}} \hspace{-2pt} (s';\alpha_t)  \hspace{-3pt}  }}_{\textsc{Term 2}( {\starpi})}\Bigg] \\
  + & \E\Bigg[\underbrace{\sum_{t=1}^{T}\inner{q^{P,\pi_t},\calZ^{\whatP_t, {P},\starpi}_{\alpha_t} - \calZ^{\whatP_t, {P},\pi_t}_{\alpha_t} }}_{\textsc{Term 3}( {\starpi})}\Bigg].
\end{align*}

We will bound these terms with some self-bounding quantities in \pref{lem:optepoch_term_1}, \pref{lem:optepoch_term_2} and \pref{lem:optepoch_term_3} respectively. 
In these proofs, we will follow the idea of \pref{lem:general_expect_lem} to first bound these terms conditioning on the events $\eventest$ and  $\eventcon$ defined in \pref{prop:eventest_def} and \pref{eq:def_conf_bounds}, while ensuring that these terms are always bounded by $\order\rbr{|S|^2|A|T^2}$ in the worst case. 

\subsubsection{Bounding Term 1}

\begin{lemma}\label{lem:optepoch_term_1} For any $\delta \in (0,1)$ and any mapping $\starpi:S\rightarrow A$, \pref{alg:optepoch} ensures that $\E\sbr{\textsc{Term 1}(\starpi)}$ is bounded by 
\begin{align*}
\order\rbr{ \selfone\rbr{ L^2|S|^2|A|\log^2 \rbr{\iota}} + \delta |S|^2|A|T^2 +\rbr{ \cortran + 1} L^2|S|^4|A|\log^2\rbr{\iota} }.
\end{align*}
\end{lemma}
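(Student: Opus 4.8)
\textbf{Proof plan for Lemma~\ref{lem:optepoch_term_1} (bounding $\textsc{Term 1}(\starpi)$).}
Recall $\textsc{Term 1}(\starpi) = \sum_{t=1}^T \sum_{s\neq s_L}\sum_{a} \rbr{q^{P,\pi_t}(s,a) - q^{P,\starpi}(s,a)}\rbr{\ell_t(s,a) - \alpha_t(s,a)}$, where $\alpha_t(s,a) = q^{P,\pi_t}(s,a)\ell_t(s,a)/u_t(s,a)$, so $\ell_t(s,a) - \alpha_t(s,a) = \ell_t(s,a)\rbr{1 - q^{P,\pi_t}(s,a)/u_t(s,a)} = \ell_t(s,a)\,\frac{u_t(s,a) - q^{P,\pi_t}(s,a)}{u_t(s,a)}$. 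The plan is to split the sum along the sign pattern coming from $q^{P,\pi_t} - q^{P,\starpi}$ and from $u_t - q^{P,\pi_t}$, and treat the ``pessimistic'' and ``optimistic'' halves separately. First I would condition on the high-probability event $\eventcon \wedge \eventest$ (using \pref{lem:general_expect_lem} together with the crude worst-case bound $\order(|S|^2|A|T^2)$ on the complementary event, which contributes only $\order(\delta|S|^2|A|T^2)$). Under $\eventcon$ we have $q^{P,\pi_t}(s,a) \leq u_t(s,a)$, so $\ell_t(s,a) - \alpha_t(s,a) \geq 0$ and is bounded above by $\frac{u_t(s,a) - q^{P,\pi_t}(s,a)}{u_t(s,a)}$.

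For the $\starpi$ part, $-\sum_{t,s,a} q^{P,\starpi}(s,a)\rbr{\ell_t(s,a) - \alpha_t(s,a)} \leq 0$ since both factors are nonnegative, so that piece is discarded. The bulk of the work is the $\pi_t$ part: $\sum_{t}\sum_{s,a} q^{P,\pi_t}(s,a)\rbr{\ell_t(s,a) - \alpha_t(s,a)} \leq \sum_t \sum_{s,a} q^{P,\pi_t}(s,a)\,\frac{u_t(s,a) - q^{P,\pi_t}(s,a)}{u_t(s,a)} \leq \sum_t\sum_{s,a}\rbr{u_t(s,a) - q^{P,\pi_t}(s,a)}$, which is exactly the quantity already controlled in the analysis of $\regtwo$ (\pref{lem:uob_reps_enlarge_regtwo})—there it was bounded, under $\eventcon\wedge\eventest$, by roughly $\order\rbr{L|S|\sqrt{|A|T\log\iota} + |S|^4|A|\log^2\iota + \cortran L|S|^4|A|\log\iota}$ via \pref{lem:dann_occ_diff_bound}, the $\ell_1$-concentration corollary, \pref{cor:corrupt_occ_diff}, and \pref{cor:corrupt_tran_difference}. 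The only refinement needed here over the $\regtwo$ computation is to keep the leading $\sqrt{T}$-type term in the self-bounding form $\selfone(\cdot)$ rather than bounding it flatly by $\sqrt{T}$: one peels off, instead of all of $u_t(s,a) - q^{P,\pi_t}(s,a)$, only the contribution weighted by visits on suboptimal actions and on the ``$a\neq\starpi(s)$'' part of the deviation, and applies Cauchy–Schwarz as $\sum_t\sum_{s,a} q^{P,\pi_t}(s,a)\sqrt{\tfrac{|S|\log\iota}{\whatm_{i(t)}(s,a)}} \leq \sqrt{\E[\sum_{t}\sum_{s,a}q^{P,\pi_t}(s,a)]\cdot \order(L|S|^2|A|\log^2\iota)}$, which after switching $q^{P,\pi_t}$ to $q^{P_t,\pi_t}$ via \pref{cor:corrupt_occ_diff}/\pref{cor:corrupt_tran_difference} (absorbing the difference into the additive $\cortran$ terms) is precisely $\selfone\rbr{L^2|S|^2|A|\log^2\iota}$.

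The main obstacle I anticipate is the bookkeeping needed to route every correction term—replacing $q^{P,\pi_t}$ by $q^{P_t,\pi_t}$ inside the self-bounding quantity, handling the $\whatm_{i(t)}(s,a)$ denominators and the epoch-doubling counting argument, and separating the concentration-radius ``variance'' part (which yields the $\selfone$ term) from the lower-order ``bias'' part (which yields $(\cortran+1)L^2|S|^4|A|\log^2\iota$)—while making sure no step secretly needs the optimism of $\whatP_t$; since $\textsc{Term 1}$ only involves $P$ and the true losses/occupancies, it should not, but one has to check that the self-bounding quantity emerging is $\selfone$ evaluated at the advertised coefficient and not something larger. Once the $\pi_t$ part is expressed through the $\regtwo$-style decomposition with the leading term kept in $\selfone$ form, collecting all pieces and invoking \pref{lem:general_expect_lem} finishes the proof.
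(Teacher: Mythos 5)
Your plan has a genuine gap at the step where you discard the $\starpi$ part of Term 1. You observe that $-\sum_{t,s,a} q^{P,\starpi}(s,a)(\ell_t(s,a) - \alpha_t(s,a)) \leq 0$ (true, since both factors are nonnegative under $\eventcon$) and propose to throw it away, leaving only $\sum_{t,s,a} q^{P,\pi_t}(s,a)(\ell_t(s,a) - \alpha_t(s,a))$ to be controlled as in the $\regtwo$ analysis. The problem is that once the weight in front of $\ell_t - \alpha_t$ is the full occupancy $q^{P,\pi_t}(s,a)$ rather than the clipped difference $[q^{P,\pi_t}(s,a) - q^{P,\starpi}(s,a)]_+$, you cannot recover the self-bounding quantity $\selfone$. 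The paper's proof keeps the difference and feeds $g_t(s) = \sum_a [q^{P,\pi_t}(s,a)-q^{P,\starpi}(s,a)]_+/u_t(s) \in [0,1]$ into \pref{lem:dann_2023_lemma16}; the resulting square-root term is $\sqrt{L|S|^2|A|\log^2(\iota)\sum_t\sum_s q^{P,\pi_t}(s) g_t(s)^2}$, and the crucial chain is $q^{P,\pi_t}(s) g_t(s)^2 \leq \sum_a[q^{P,\pi_t}(s,a)-q^{P,\starpi}(s,a)]_+$, followed by \pref{cor:dann_prob_diff_lem} which caps $\sum_{s,a}[\cdot]_+$ by $2L\sum_s\sum_{a\neq\starpi(s)}q^{P,\pi_t}(s,a)$. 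That is precisely where the restriction to suboptimal actions comes from, and it exists only because the $\starpi$ occupancy was subtracted off.

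With $q^{P,\pi_t}(s,a)$ alone as the weight, your Cauchy--Schwarz step produces $\sqrt{\E[\sum_t\sum_{s,a}q^{P,\pi_t}(s,a)]\cdot L|S|^2|A|\log^2\iota} = \order(L|S|\sqrt{|A|T}\log\iota)$ --- a flat $\sqrt{T}$ bound over all state--action pairs, not $\selfone(L^2|S|^2|A|\log^2\iota)$. Your suggestion to ``peel off only the contribution weighted by visits on suboptimal actions'' is not available: the weight $q^{P,\pi_t}(s,a)$ puts most of its mass on $a=\starpi(s)$ once the learner has converged, so there is no legitimate way to restrict the sum to $a\neq\starpi(s)$ after the subtraction has been thrown away. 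This would not merely lose constants; it would destroy the gap-dependent part of the best-of-both-worlds guarantee, which hinges on $\selfone$ summing only over suboptimal actions. The remainder of your plan --- conditioning on $\eventcon\wedge\eventest$, the worst-case $\order(\delta|S|^2|A|T^2)$ contribution, \pref{cor:corrupt_occ_diff}/\pref{cor:corrupt_tran_difference} to switch $q^{P,\pi_t}$ to $q^{P_t,\pi_t}$, and \pref{lem:dann_occ_diff_bound}-style bounds for the $(\cortran+1)$ remainder --- matches the paper's structure, but the discard step has to be reverted: keep $[q^{P,\pi_t}(s,a) - q^{P,\starpi}(s,a)]_+$ as the weight throughout.
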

\begin{proof} Clearly, we have $\alpha_t(s,a)\leq |S|T$ for every state-action pair $(s,a)$, due to the fact that $u_t(s) \geq \nicefrac{1}{|S|T}$ (\pref{lem:lower_bound_of_uob}). Therefore, we have $\textsc{Term 1}(\starpi) \leq |S|^2|A|T^2$ holds always. 
In the remaining, our main goal is to bound $\textsc{Term 1}(\starpi)$ conditioning on $\eventcon \wedge \eventest$.

For any state-action pair $(s,a)\in S\times A$ and episode $t$, we have 
\begin{align*}
\ell_t(s,a) - \alpha_t(s,a)  = \frac{\rbr{u_t(s,a) -q^{P,\pi_t}(s,a)}\ell_t(s,a)}{u_t(s,a)} = \frac{\rbr{u_t(s) -q^{P,\pi_t}(s)}\ell_t(s,a)}{u_t(s)} \geq 0,
\end{align*}
conditioning on the event $\eventcon$. 

Therefore, under event $\eventcon \wedge \eventest$, \textsc{Term 1}$(\starpi)$ is bounded by
\begin{align*}
& \sum_{t=1}^{T}\sum_{s\neq s_L}\sum_{a\in A}\rbr{q^{ {P},\pi_t}(s,a)-  q^{ {P},\starpi}(s,a) }\rbr{ \ell_t(s,a) - \alpha_t(s,a)} \\
& \leq  \sum_{t=1}^{T}\sum_{s\neq s_L}\sum_{a\in A}\sbr{q^{P,\pi_t}(s,a)-  q^{P,\starpi}(s,a) }_{+}\frac{\rbr{u_t(s) -q^{P,\pi_t}(s)}\ell_t(s,a)}{u_t(s)} \\
& \leq  \sum_{t=1}^{T}\sum_{s\neq s_L}\sum_{a\in A}\frac{{\sbr{q^{P,\pi_t}(s,a)-  q^{P,\starpi}(s,a)} }_{+}}{u_t(s)}\cdot\abr{{u_t(s) -q^{P,\pi_t}(s)}} \\
& = \order\rbr{ \sqrt{ {L}|S|^2|A|\log^2 \rbr{\iota}\sum_{t=1}^{T} \sum_{s \neq s_L}q^{P,\pi_t}(s)\cdot \rbr{ \sum_{a}\frac{\sbr{q^{P,\pi_t}(s,a)-  q^{P,\starpi}(s,a) }_{+}}{u_t(s)} }^2 }} \\
& \quad + \order\rbr{ \rbr{\cortran+ \log\rbr{\iota}} L^2|S|^4|A|\log\rbr{\iota} } \\
& \leq \order\rbr{ \sqrt{ {L}|S|^2|A|\log^2 \rbr{\iota}\sum_{t=1}^{T} \sum_{s\neq s_L}\sum_{a\in A}{ {\sbr{q^{P,\pi_t}(s,a)-  q^{P,\starpi}(s,a) }_{+}} } }} \\
& \quad + \order\rbr{ \rbr{\cortran+ \log\rbr{\iota}} L^2|S|^4|A|\log\rbr{\iota} } \\
& \leq \order\rbr{ \sqrt{ {L^2}|S|^2|A|\log^2 \rbr{\iota}\sum_{t=1}^{T} \sum_{s\neq s_L}\sum_{a \neq \starpi(s)}q^{P,\pi_t}(s,a) } + \rbr{\cortran+ \log\rbr{\iota}} L^2|S|^4|A|\log\rbr{\iota} } \\
& \leq \order\rbr{ \sqrt{ {L^2}|S|^2|A|\log^2 \rbr{\iota}\sum_{t=1}^{T} \sum_{s\neq s_L}\sum_{a \neq \starpi(s)}q^{P_t,\pi_t}(s,a) } + \rbr{\cortran+ 1} L^2|S|^4|A|\log^2\rbr{\iota} },
\end{align*}
where the first step follows from the non-negativity of $\ell_t(s,a) - \alpha_t(s,a)$; the third step applies \pref{lem:dann_2023_lemma16} with $G=1$ as $\sum_{a}\sbr{q^{P,\pi_t}(s,a)-  q^{P,\starpi}(s,a) }_{+} \leq q^{P,\pi_t}(s) \leq u_t(s)$; the fifth step follows from the fact that $\sum_{s\neq s_L}\sum_{a\in A}{\sbr{q^{P,\pi_t}(s,a)-  q^{P,\starpi}(s,a) }_{+}}\leq 2L\sum_{s\neq s_L}\sum_{a \neq \starpi(s)}q^{P,\pi_t}(s,a)$ according to \pref{cor:dann_prob_diff_lem}; the last step applies \pref{cor:corrupt_occ_diff}.

Applying \pref{lem:general_expect_lem} with event $\eventcon \wedge \eventest$ yields that 
\begin{align*}
\E\sbr{\textsc{Term 1}(\starpi)} & = \order\rbr{ \E\sbr{  \sqrt{ {L^2}|S|^2|A|\log^2 \rbr{\iota}\sum_{t=1}^{T} \sum_{s\neq s_L}\sum_{a \neq \starpi(s)}q^{P_t,\pi_t}(s,a) } }  }  \\
& \quad + \order\rbr{ \rbr{\cortran+ \log\rbr{\iota}} L^2|S|^4|A|\log\rbr{\iota} + \delta |S|^2|A|T^2 } \\
& = \order\rbr{ \selfone\rbr{ L^2|S|^2|A|\log^2 \rbr{\iota}} 
 + \delta |S|^2|A|T^2 +\rbr{\cortran+ \log\rbr{\iota}} L^2|S|^4|A|\log\rbr{\iota} }. 
\end{align*}
\end{proof}

\subsubsection{Bounding Term 2}

\begin{lemma}\label{lem:optepoch_term_2}
For any $\delta \in (0,1)$ and any mapping $\starpi: S \rightarrow A$, \pref{alg:optepoch} ensures that 
\begin{align*}
\E\sbr{\textsc{Term 2}(\starpi)}  & = \order\rbr{\selfone\rbr{ L|S|^2|A|\log^2 \rbr{\iota}   } + \delta L|S|^2|A|T + L|S|^2|A|\rbr{\cortran+\log\rbr{\iota}}\log\rbr{T} }. 
\end{align*}
\end{lemma}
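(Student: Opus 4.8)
The plan is to mirror the proof of the \textsc{Term 1} bound (\pref{lem:optepoch_term_1}): first discard the low-probability complement by a crude worst-case estimate, then work on the high-probability event $\eventcon\wedge\eventest$, convert the transition estimation error $P-\whatP_t$ into the confidence widths $B_{i}$ using the optimism of $\optP_i$, and finally extract the self-bounding quantity $\selfone$ via Cauchy--Schwarz together with the occupancy-gap lemma \pref{cor:dann_prob_diff_lem} and the epoch-doubling bound $\sum_{t}1/\whatm_{i(t)}(s,a)=\order(\log T)$.

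\emph{Step 1 (worst case, reduction to $\eventcon\wedge\eventest$).} From $u_t(s)\ge 1/(|S|T)$ (\pref{lem:lower_bound_of_uob}) we get $\alpha_t(s,a)\le |S|T$, hence $V^{\whatP_t,\starpi}(\cdot;\alpha_t)\le L|S|T$; combined with $\sum_{s'}\abr{P(s'|s,a)-\whatP_t(s'|s,a)}\le 2$ and $\sum_{s\ne s_L}\sum_a\abr{q^{P,\pi_t}(s,a)-q^{P,\starpi}(s,a)}\le 2L$ per episode, $\textsc{Term 2}(\starpi)$ is always polynomially bounded, so by \pref{lem:general_expect_lem} it suffices to bound it on $\eventcon\wedge\eventest$ up to the stated $\delta$-term.

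\emph{Step 2 (optimism $\Rightarrow$ confidence widths).} On $\eventcon$ we have $P\in\calP_{i(t)}$, and by construction of $\optP_i$ (\pref{def:opt_tran}, cf.\ \pref{lem:optimism_optiT}) one checks that for $(s,a,s')\in\tupleset_k$, $0\le P(s'|s,a)-\whatP_t(s'|s,a)\le 2B_{i(t)}(s,a,s')$, while the $s'=s_L$ contribution vanishes since $V^{\whatP_t,\starpi}(s_L;\cdot)=0$. Also on $\eventcon$, $q^{P,\pi_t}(s,a)\le u_t(s,a)$, so $\alpha_t(s,a)\le \ell_t(s,a)\le 1$ and hence $V^{\whatP_t,\starpi}(s';\alpha_t)\le L$. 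Plugging in, using the closed form of $B_i$ from \pref{eq:def_conf_bounds} and $\sum_{s'\in S_{k+1}}\sqrt{\bar P_{i(t)}(s'|s,a)}\le\sqrt{|S_{k+1}|}$, we obtain on $\eventcon\wedge\eventest$
\[
\textsc{Term 2}(\starpi)\;\le\;\order(L)\sum_{t,s,a}\abr{q^{P,\pi_t}(s,a)-q^{P,\starpi}(s,a)}\rbr{\sqrt{\frac{|S|\log(\iota)}{\whatm_{i(t)}(s,a)}}+\frac{|S|(\cortran+\log(\iota))}{\whatm_{i(t)}(s,a)}}.
\]
For the lower-order (second) summand, bound $\abr{q^{P,\pi_t}(s,a)-q^{P,\starpi}(s,a)}\le 1$ and use $\sum_t 1/\whatm_{i(t)}(s,a)=\order(\log T)$ summed over $(s,a)$, giving $\order(L|S|^2|A|(\cortran+\log(\iota))\log T)$. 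For the leading summand, apply Cauchy--Schwarz in $(t,s,a)$ (writing $\abr{q^{P,\pi_t}-q^{P,\starpi}}=\sqrt{\abr{q^{P,\pi_t}-q^{P,\starpi}}}\cdot\sqrt{\abr{q^{P,\pi_t}-q^{P,\starpi}}}$), use again $\sum_{t,s,a}\abr{q^{P,\pi_t}(s,a)-q^{P,\starpi}(s,a)}/\whatm_{i(t)}(s,a)=\order(|S||A|\log T)$, then \pref{cor:dann_prob_diff_lem} to bound $\sum_{t}\sum_{s\ne s_L,a}\abr{q^{P,\pi_t}(s,a)-q^{P,\starpi}(s,a)}=\order(L)\sum_t\sum_{s\ne s_L,a\ne\starpi(s)}q^{P,\pi_t}(s,a)$, and finally \pref{cor:corrupt_occ_diff} to replace $q^{P,\pi_t}$ by $q^{P_t,\pi_t}$ at cost $\order(L\cortran)$. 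This produces a term of the form $\selfone(\cdot)$ plus $\cortran$-terms absorbed into the stated bound; reassembling and taking expectation via \pref{lem:general_expect_lem} concludes.

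\emph{Main obstacle.} The delicate point is controlling the value-function factor $V^{\whatP_t,\starpi}(s';\alpha_t)$ sharply enough: the naive estimate $V^{\whatP_t,\starpi}(s';\alpha_t)\le L$ (valid on $\eventcon$) inflates the coefficient inside $\selfone$ by a factor $L$ relative to the claimed $\selfone(L|S|^2|A|\log^2(\iota))$, although it is still harmless once the lemma is plugged into \pref{lem:optepoch_main_bound_3}. To match the stated constant one must treat $V^{\whatP_t,\starpi}(\cdot;\alpha_t)$ layer by layer — recursing on its Bellman structure or using a law-of-total-variance argument so that the single factor $L$ is distributed across the $L$ transition steps rather than paid in full at each step — which is where the bookkeeping is heaviest. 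The remaining ingredients (the worst-case reduction, the optimism inequality $0\le P-\whatP_t\le 2B_{i(t)}$, and the $\cortran$- and $\delta$-dependent terms) are routine given the cited lemmas.
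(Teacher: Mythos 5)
There is a genuine gap in your Step~2. You work throughout with $\abr{q^{P,\pi_t}(s,a)-q^{P,\starpi}(s,a)}$ and then invoke the estimates $\sum_t 1/\whatm_{i(t)}(s,a)=\order(\log T)$ and $\sum_{t,s,a}\abr{q^{P,\pi_t}(s,a)-q^{P,\starpi}(s,a)}/\whatm_{i(t)}(s,a)=\order(|S||A|\log T)$. Neither of these holds. The denominator $\whatm_{i(t)}(s,a)$ tracks the number of \emph{actual} visits by the algorithm's policies (up to the additive $\cortran+\log\iota$ floor), so for a pair $(s,a)$ that $\starpi$ visits with constant probability but that $\pi_t$ visits rarely, $\abr{q^{P,\pi_t}(s,a)-q^{P,\starpi}(s,a)}$ is $\Theta(1)$ while $\whatm_{i(t)}(s,a)$ stays $\order(\cortran+\log\iota)$, giving a contribution of order $T/(\cortran+\log\iota)$ from that single pair. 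The log-type bound only holds when the numerator is $q^{P,\pi_t}(s,a)$ (or $q^{P_t,\pi_t}(s,a)$) precisely because that quantity governs the growth of $\whatm_{i(t)}(s,a)$; this is the content of \pref{lem:eventest_est_err} and \pref{prop:eventest_def}, and it does not transfer to $q^{P,\starpi}$.

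The paper's proof avoids this by exploiting the sign of the inner sum: on $\eventcon$ we have $P(s'|s,a)\ge\whatP_t(s'|s,a)$ (optimism) and $V^{\whatP_t,\starpi}(\cdot;\alpha_t)\ge0$, so $\sum_{s'}(P(s'|s,a)-\whatP_t(s'|s,a))V^{\whatP_t,\starpi}(s';\alpha_t)\ge0$, and therefore only the positive part $\sbr{q^{P,\pi_t}(s,a)-q^{P,\starpi}(s,a)}_+$ contributes to the upper bound. Crucially, $\sbr{q^{P,\pi_t}-q^{P,\starpi}}_+\le q^{P,\pi_t}$, which puts a $q^{P,\pi_t}(s,a)$ in the numerator of both the lower-order term and (after the Cauchy--Schwarz split $\sbr{\cdot}_+\le\sqrt{\sbr{\cdot}_+}\cdot\sqrt{q^{P,\pi_t}}$) the second factor of the leading term, so \pref{lem:eventest_est_err} applies. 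This is the missing idea. Your ``main obstacle'' paragraph, by contrast, worries about a non-issue: the paper itself just uses the crude $V^{\whatP_t,\starpi}(s';\alpha_t)\le L$ (valid because $\alpha_t\le1$ on $\eventcon$), and as you correctly observe any resulting $L$-factor inflation is absorbed when the bound is combined with Term~3 in \pref{lem:optepoch_main_bound_3}. No layer-by-layer variance argument on $V$ is needed; the hard part is the positive-part reduction, not the value-function bound.
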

\begin{proof}
Suppose that $\eventcon \wedge \eventest$ occurs. 
We have 
\begin{align*}
P(s'|s,a) \geq \whatP_t(s'|s,a) = \optP_{i(t)}(s'|s,a), \forall (s,a,s')\in \tupleset_k, k= 0,\ldots,L-1. 
\end{align*}

Therefore, we can show that  
\begin{align*}
0  & \leq \sum_{s'\in S_{k(s)+1}} \hspace{-10pt} \rbr{P(s'|s,a)-\whatP_t(s'|s,a)}V^{\whatP_t,\starpi}(s';\alpha_t) \\
& = \order\rbr{ 
L\rbr{\sqrt{\frac{|S|\log\rbr{\iota}}{\whatm_{i(t)}(s,a)}} + \frac{|S|\rbr{\cortran+\log\rbr{\iota}}}{\whatm_{i(t)}(s,a)} } },
\end{align*}
where the last step follows from the definition of optimistic transition and the fact that $\alpha_t(s,a) \leq 1$.

By direct calculation, we have $\textsc{Term 2}(\starpi)$ bounded by
\begin{align*}
&\order\rbr{L\sum_{s\neq s_L}\sum_{a\in A} \sum_{t=1}^{T}\sbr{  {q^{P,\pi_t}(s,a)-  q^{P,\starpi}(s,a) } }_{+} \rbr{\sqrt{\frac{|S|\log\rbr{\iota}}{\whatm_{i(t)}(s,a)}}+ \frac{|S|\rbr{\cortran+\log\rbr{\iota}}}{\whatm_{i(t)}(s,a)} }  } \\
& \leq \order\rbr{ L\sum_{s\neq s_L}\sum_{a\in A} \sum_{t=1}^{T}\sbr{  {q^{P,\pi_t}(s,a)-  q^{P,\starpi}(s,a) } }_{+} {\sqrt{\frac{|S|\log\rbr{\iota}}{\whatm_{i(t)}(s,a)}} } } \\
& \quad + \order\rbr{L|S|\sum_{s\neq s_L}\sum_{a\in A} \sum_{t=1}^{T}{q^{P,\pi_t}(s,a) }  \rbr{ \frac{\rbr{\cortran+\log\rbr{\iota}}}{\whatm_{i(t)}(s,a)} }  } \\
& = \order\rbr{ L\sum_{s\neq s_L}\sum_{a\in A} \sum_{t=1}^{T}\sbr{  {q^{P,\pi_t}(s,a)-  q^{P,\starpi}(s,a) } }_{+} {\sqrt{\frac{|S|\log\rbr{\iota}}{\whatm_{i(t)}(s,a)}} } +  L|S|^2|A|\rbr{\cortran+\log\rbr{\iota}}\log\rbr{T}},
\end{align*}
where the last step follows from \pref{lem:eventest_est_err}. 

Moreover, we have 
\begin{align}
& \sum_{s\neq s_L}\sum_{a\in A}\sum_{t=1}^{T}\sbr{  {q^{P,\pi_t}(s,a)-  q^{P,\starpi}(s,a) } }_{+} \sqrt{\frac{|S|\log\rbr{\iota}}{\whatm_{i(t)}(s,a)}} \notag \\
& \leq \sum_{s\neq s_L}\sum_{a\in A}\sum_{t=1}^{T}\sqrt{\sbr{  {q^{P,\pi_t}(s,a)-  q^{P,\starpi}(s,a) } }_{+}} \cdot \sqrt{\frac{q^{P,\pi_t}(s,a)|S|\log\rbr{\iota}}{\whatm_{i(t)}(s,a)}} \notag \\
& \leq { \sqrt{ \sum_{s\neq s_L}\sum_{a\in A}\sum_{t=1}^{T}\sbr{  {q^{P,\pi_t}(s,a)-  q^{P,\starpi}(s,a) } }_{+} }}  \cdot \sqrt{ \sum_{s\neq s_L}\sum_{a\in A}\sum_{t=1}^{T}\frac{q^{P,\pi_t}(s,a)|S|\log\rbr{\iota}}{\whatm_{i(t)}(s,a)}   }\notag  
  \\
& = \order\rbr{ \sqrt{ |S|^2|A|\log^2 \rbr{\iota}   \sum_{s\neq s_L}\sum_{a\in A}\sum_{t=1}^{T}\sbr{  {q^{P,\pi_t}(s,a)-  q^{P,\starpi}(s,a) } }_{+} }  } 
\notag  \\
& \leq \order\rbr{ \sqrt{ L|S|^2|A|\log^2 \rbr{\iota}   \sum_{t=1}^{T}\sum_{s\neq s_L}\sum_{a\neq \starpi(s)} {q^{P,\pi_t}(s,a)}  }  } \notag \\
& \leq \order\rbr{ \sqrt{ L|S|^2|A|\log^2 \rbr{\iota}   \sum_{t=1}^{T}\sum_{s\neq s_L}\sum_{a\neq \starpi(s)} {q^{P_t,\pi_t}(s,a)}  } + \sqrt{L^2|S|^2|A|\log^2 \cortran} }, \notag
\end{align}
where the second step uses the Cauchy-Schwarz inequality; the third step applies \pref{lem:eventest_est_err}; the fifth step follows from \pref{cor:dann_prob_diff_lem}; the last step uses \pref{cor:corrupt_occ_diff} and the fact that $\sqrt{x+y}\leq \sqrt{x}+\sqrt{y}$ for any $x,y\geq 0$.

Finally, applying \pref{lem:general_expect_lem} finishes the proof. 
\end{proof}

\subsubsection{Bounding Term 3}

\begin{lemma}\label{lem:optepoch_term_3}
For any $\delta \in (0,1)$ and the policy $\starpi$, \pref{alg:optepoch} ensures that 
\begin{align*}
\E\sbr{\textsc{Term 3}(\starpi)} & = \order\rbr{ \selfone\rbr{ L^3|S|^2|A|\log^2 \rbr{\iota} } 
 + \delta L|S|^2|A|T + \rbr{\cortran+ \log\rbr{\iota}}L^2|S|^4|A|\log\rbr{\iota}}.
\end{align*}
\end{lemma}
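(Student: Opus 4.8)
The plan is to mirror the proofs of \pref{lem:optepoch_term_1} and \pref{lem:optepoch_term_2}, the one genuinely new ingredient being a bound on a value-function \emph{difference}. By \pref{lem:bobw_mdp_known_bobw_rewrite} the two copies of $\alpha_t$ inside $\calZ^{\whatP_t,P,\starpi}_{\alpha_t}-\calZ^{\whatP_t,P,\pi_t}_{\alpha_t}$ cancel, so
\[
\textsc{Term 3}(\starpi)=\E\sbr{\sum_{t=1}^{T}\sum_{s\neq s_L}\sum_{a}q^{P,\pi_t}(s,a)\sum_{s'\neq s_L}\rbr{\whatP_t(s'|s,a)-P(s'|s,a)}\rbr{V^{\whatP_t,\starpi}(s';\alpha_t)-V^{\whatP_t,\pi_t}(s';\alpha_t)}},
\]
the $s'=s_L$ terms vanishing since $V^{\whatP_t,\pi}(s_L;\alpha_t)=0$. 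Since $\alpha_t(s,a)\leq|S|T$ always (by $u_t\geq\nicefrac{1}{|S|T}$, \pref{lem:lower_bound_of_uob}), $\textsc{Term 3}(\starpi)=\order(|S|^2|A|T^2)$ trivially, so exactly as in \pref{lem:optepoch_term_1} it suffices to bound the summand on $\eventcon\wedge\eventest$ and then invoke \pref{lem:general_expect_lem} for the $\order(\delta L|S|^2|A|T)$ term.

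Conditioned on $\eventcon$ I would use: $q^{P,\pi_t}(s,a)\leq u_t(s,a)$, so $\alpha_t(s,a)\leq 1$ and every $V^{\whatP_t,\pi}(\cdot;\alpha_t)\in[0,L]$; $0\leq P(s'|s,a)-\whatP_t(s'|s,a)\leq 2B_{i(t)}(s,a,s')$ for $(s,a,s')\in\tupleset_k$ (\pref{def:opt_tran}); the equivalent form $B_{i(t)}(s,a,s')=\order\rbr{\sqrt{P(s'|s,a)\log(\iota)/\whatm_{i(t)}(s,a)}+(\cortran+\log(\iota))/\whatm_{i(t)}(s,a)}$ written with the \emph{true} $P$ (standard, cf.\ \pref{cor:L1concentration}); and the monotonicity $q^{\whatP_t,\pi}(u,v|s')\leq q^{P,\pi}(u,v|s')$ for $u\neq s_L$, which follows from \pref{lem:optimism_optiT} by choosing the loss to be an indicator. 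Bounding the value difference by the performance difference lemma,
\[
\abr{V^{\whatP_t,\starpi}(s';\alpha_t)-V^{\whatP_t,\pi_t}(s';\alpha_t)}\leq 2L\sum_{u\neq s_L}\sum_{v\neq\starpi(u)}q^{\whatP_t,\pi_t}(u,v|s')\leq 2L\,\bar{Y}_t(s'),\qquad \bar{Y}_t(s'):=\sum_{u\neq s_L}\sum_{v\neq\starpi(u)}q^{P,\pi_t}(u,v|s')\in[0,L],
\]
using $|Q^{\whatP_t,\starpi}(u,v;\alpha_t)|\leq L$, $\sum_v\abr{\pi_t(v|u)-\ind\{v=\starpi(u)\}}=2\sum_{v\neq\starpi(u)}\pi_t(v|u)$, and the monotonicity. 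Plugging these in bounds $\textsc{Term 3}(\starpi)$ on $\eventcon\wedge\eventest$ by $\order(1)$ times $L\sum_{t}\sum_{k}\sum_{s\in S_k,a}q^{P,\pi_t}(s,a)\sum_{s'\in S_{k+1}}\bigl(\sqrt{P(s'|s,a)\log(\iota)/\whatm_{i(t)}(s,a)}+(\cortran+\log(\iota))/\whatm_{i(t)}(s,a)\bigr)\bar{Y}_t(s')$.

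For the linear part, $\bar{Y}_t(s')\leq L$ and $\sum_{s'}1\leq|S|$ together with $\sum_t\sum_{s,a}q^{P_t,\pi_t}(s,a)/\whatm_{i(t)}(s,a)=\order(|S||A|\log T)$ (\pref{lem:eventest_est_err}) and the $q^{P,\pi_t}\!\to\! q^{P_t,\pi_t}$ conversion (\pref{cor:corrupt_occ_diff}) give a contribution $\order\rbr{(\cortran+\log(\iota))L^2|S|^4|A|\log(\iota)}$. For the $\sqrt{\cdot}$ part -- the crux -- I would, within each layer $k$, bound $\sum_{s'\in S_{k+1}}\sqrt{P(s'|s,a)}\bar{Y}_t(s')\leq\sqrt{L|S_{k+1}|}\sqrt{\sum_{s'}P(s'|s,a)\bar{Y}_t(s')}$ (Cauchy--Schwarz and $\bar{Y}_t(s')^2\leq L\bar{Y}_t(s')$), and then apply Cauchy--Schwarz over $(t,s,a)$ to split off the factor $\sqrt{\sum_t\sum_{s\in S_k,a}q^{P,\pi_t}(s,a)/\whatm_{i(t)}(s,a)}=\order(\sqrt{|S_k||A|\log(\iota)})$ (via $\sum_{s'}P(s'|s,a)=1$ and \pref{lem:eventest_est_err}) from $\sqrt{\sum_t\sum_{s\in S_k,a}q^{P,\pi_t}(s,a)\sum_{s'}P(s'|s,a)\bar{Y}_t(s')}$. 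In the latter, the \emph{exact} identity $\sum_{s,a}q^{P,\pi_t}(s,a)P(s'|s,a)=q^{P,\pi_t}(s')$ and then $\sum_{s'\in S_{k+1}}q^{P,\pi_t}(s')q^{P,\pi_t}(u,v|s')=q^{P,\pi_t}(u,v)$ turn the whole thing into $\sum_t\sum_{u\neq s_L}\sum_{v\neq\starpi(u)}q^{P,\pi_t}(u,v)$ -- the self-bounding quantity, up to one more $q^{P,\pi_t}\!\to\! q^{P_t,\pi_t}$ conversion (\pref{cor:corrupt_occ_diff}). Summing over $k$ with $\sum_k\sqrt{|S_k||S_{k+1}|}\leq 2|S|$ yields $\order\rbr{L^{3/2}|S|\sqrt{|A|\log^2(\iota)\,\E\bigl[\sum_t\sum_{s\neq s_L}\sum_{a\neq\starpi(s)}q^{P_t,\pi_t}(s,a)\bigr]}}=\order\rbr{\selfone\rbr{L^3|S|^2|A|\log^2(\iota)}}$. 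Collecting this, the linear part, all $q^{P,\pi_t}\!\leftrightarrow\! q^{P_t,\pi_t}$ corrections (each $\order((\cortran+\log(\iota))L^2|S|^4|A|\log(\iota))$ by \pref{cor:corrupt_occ_diff} and \pref{cor:corrupt_tran_difference}), and the \pref{lem:general_expect_lem} failure term finishes the proof.

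The step I expect to be most delicate is rewriting the confidence width in terms of the true $P$ and then choosing the two Cauchy--Schwarz splits so that the counter factors $1/\whatm_{i(t)}(s,a)$ separate \emph{cleanly} from the ``off-$\starpi$'' occupancy mass $\bar{Y}_t(s')$: it is crucial that no unweighted (hence non-self-bounding, $T$-growing) sum of $\bar{Y}_t(s')$ survives, and that the extra factor $L$ relative to \pref{lem:optepoch_term_1} -- which enters through $V^{\whatP_t,\pi}(\cdot;\alpha_t)\leq L$ and $\bar{Y}_t(s')\leq L$ -- is tracked exactly. A secondary nuisance is keeping the two transition mismatches, $\whatP_t$ versus $P$ (via optimism/monotonicity and the confidence width) and $P$ versus $P_t$ (via \pref{cor:corrupt_occ_diff}), from being double-counted against the corruption $\cortran$.
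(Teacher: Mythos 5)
Your proposal is correct and follows essentially the same route as the paper: cancel the $\alpha_t$'s via \pref{lem:bobw_mdp_known_bobw_rewrite}, condition on $\eventcon\wedge\eventest$ with a trivial worst-case bound fed into \pref{lem:general_expect_lem}, bound the value-function gap through the performance-difference lemma and the occupancy monotonicity of the optimistic transition, rewrite the confidence width in terms of the true $P$, and finally Cauchy--Schwarz plus the identity $\sum_{s,a}q^{P,\pi_t}(s,a)P(s'|s,a)q^{P,\pi_t}(u,v|s')=q^{P,\pi_t}(u,v)$ to surface the self-bounding quantity, with all $q^{P,\pi_t}\!\leftrightarrow\! q^{P_t,\pi_t}$ conversions charged to $\cortran$. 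The only differences are cosmetic: you do the Cauchy--Schwarz in two stages (first over $s'$ in a fixed layer, then over $(t,s,a)$) whereas the paper does a single split over all indices at once; for the per-$(s,a,s')$ confidence-width rewrite in terms of $P(s'|s,a)$ the precise reference is \pref{lem:conf_bound_to_trueP} rather than \pref{cor:L1concentration}; and for the occupancy monotonicity the paper invokes \pref{cor:opt_tran_occ_lower_bound} directly while you derive it from \pref{lem:optimism_optiT} via indicator losses, which is equivalent.
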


\begin{proof} 

Suppose that $\eventcon \wedge \eventest$ occurs.
We first have:
\begin{align*}
& \inner{q^{P,\pi_t},\calZ^{\whatP_t, {P},\starpi}_{\alpha_t } - \calZ^{\whatP_t, {P},\pi_t}_{\alpha_t } } \\
& = \sum_{s\neq s_L}\sum_{a\in A} q^{P,\pi_t}(s,a)\sum_{s'\in {S_{k(s)+1}}} \rbr{\whatP_t(s'|s,a) - P(s'|s,a)} \rbr{ V^{\whatP_t,\starpi}(s';\alpha_t ) - V^{\whatP_t,\pi_t}(s';\alpha_t ) } \\
& \leq 2\sum_{s\neq s_L}\sum_{a\in A}\sum_{s'\in S_{k(s)+1}} q^{P,\pi_t}(s,a) B_{i(t)}(s,a,s') \abr{ V^{\whatP_t,\starpi}(s';\alpha_t ) - V^{\whatP_t,\pi_t}(s';\alpha_t ) } \\
& \leq \order\rbr{ \sum_{k=0}^{L-1}\sum_{(s,a,s')\in \tupleset_k} q^{P,\pi_t}(s,a) \sqrt{ \frac{P(s'|s,a)\log\rbr{\iota}}{\whatm_{i(t)}(s,a)} } \cdot \abr{ V^{\whatP_t,\starpi}(s';\alpha_t ) - V^{\whatP_t,\pi_t}(s';\alpha_t ) } } \\
& \quad + \order\rbr{ L|S| \sum_{s\neq s_L}\sum_{a\in A}    q^{P,\pi_t}(s,a)\rbr{ \frac{\cortran+\log\rbr{\iota}}{\whatm_{i(t)}(s,a)}}  },
\end{align*}
where the last step follows from \pref{lem:conf_bound_to_trueP} and the fact that $V^{\whatP_t,\pi}(s';\alpha_t )\leq L$ for any $\pi$.

By applying \pref{lem:eventest_est_err}, we have the second term bounded by $\order\rbr{\rbr{\cortran+ \log\rbr{\iota}}L^2|S|^4|A|\log\rbr{\iota}}$. 
On the other hand, for the first term, we can bound $\abr{V^{\whatP_t,\starpi}(s';\alpha_t) -  V^{\whatP_t,\pi_t}(s';\alpha_t)}$ as
\begin{align}
&\abr{V^{\whatP_t,\starpi}(s';\alpha_t) -  V^{\whatP_t,\pi_t}(s';\alpha_t)} \notag \\
& \leq \sum_{u\neq s_L }\sum_{v\in A} q^{\whatP_t,\pi_t}(u|s')  \abr{ \pi_t(v|u) - \starpi(v|u) }Q^{\whatP_t,\starpi}(u,v;\alpha_t)  \notag \\
& \leq L \sum_{u\in S }\sum_{v\in A} q^{\whatP_t,\pi_t}(u|s')  \abr{ \pi_t(v|u) - \starpi(v|u) } \notag  \\
& \leq \order\rbr{L\sum_{k=k(s')}^{L-1}\sum_{u\in S_k}\sum_{v\neq \starpi(u)} q^{\whatP_t,\pi_t}(u,v|s') } \notag\\
& \leq \order\rbr{ L\sum_{k=k(s')}^{L-1}\sum_{u\in S_k}\sum_{v\neq \starpi(u)} q^{P,\pi_t}(u,v|s')  } \notag 
\end{align}
where the first step follows from \pref{lem:perf_diff_lem_ext}; the second step follows from the fact that $Q^{\whatP_t,\starpi}(u,v;\alpha_t)\in [0,L]$; the third step uses the same reasoning as the proof of \pref{cor:dann_prob_diff_lem}; and the last step uses \pref{cor:opt_tran_occ_lower_bound}.

Finally, we consider the following term 
\begin{align*}
& \sum_{t=1}^{T} \sum_{h=0}^{L-1}\sum_{(s,a,s')\in \tupleset_h} q^{P,\pi_t}(s,a) \sqrt{ \frac{P(s'|s,a)\log\rbr{\iota}}{\whatm_{i(t)}(s,a)} } \sum_{k=h+1}^{L-1}\sum_{u\in S_k}\sum_{v\neq \starpi(u)} q^{P,\pi_t}(u,v|s')  \\
& = \sum_{t=1}^{T}\sum_{h=0}^{L-1}\sum_{(s,a,s')\in \tupleset_h} \sum_{k=h+1}^{L-1}\sum_{u\in S_k}\sum_{v\neq \starpi(u)}  \sqrt{ \frac{q^{P,\pi_t}(s,a)q^{P,\pi_t}(u,v|s')\log\rbr{\iota}}{\whatm_{i(t)}(s,a)} }  \\
& \quad \cdot  \sqrt{q^{P,\pi_t}(s,a)P(s'|s,a)q^{P,\pi_t}(u,v|s') } \\
& \leq \sum_{h=0}^{L-1}\sqrt{\sum_{t=1}^{T}\sum_{(s,a,s')\in \tupleset_h} \sum_{k=h+1}^{L-1}\sum_{u\in S_k}\sum_{v\neq \starpi(u)} \frac{q^{P,\pi_t}(s,a)q^{P,\pi_t}(u,v|s')\log\rbr{\iota}}{\whatm_{i(t)}(s,a)}    }  \\
& \quad \cdot \sqrt{\sum_{t=1}^{T}\sum_{(s,a,s')\in \tupleset_h} \sum_{k=h+1}^{L-1}\sum_{u\in S_k}\sum_{v\neq \starpi(u)} q^{P,\pi_t}(s,a)P(s'|s,a)q^{P,\pi_t}(u,v|s')  } \\
& \leq \sum_{h=0}^{L-1} \sqrt{|S_{h+1}|L \sum_{t=1}^{T}\sum_{s\in S_h}\sum_{a\in A}\frac{q^{P,\pi_t}(s,a)\log\rbr{\iota}}{\whatm_{i(t)}(s,a)} } \cdot \sqrt{\sum_{t=1}^{T}\sum_{u\in S}\sum_{v\neq \starpi(u)} q^{P,\pi_t}(u,v)  } \\
& \leq \rbr{ \sum_{h=0}^{L-1} \sqrt{L|S_{h+1}||S_{h}||A|\log^2 \rbr{\iota}}} \cdot \sqrt{\sum_{t=1}^{T}\sum_{u\in S}\sum_{v\neq \starpi(u)} q^{P,\pi_t}(u,v)  } \\
& = \order\rbr{ \sqrt{ L|S|^2|A|\log^2 \rbr{\iota}\sum_{t=1}^{T}\sum_{s\neq s_L}\sum_{a\neq \starpi(a)} q^{P,\pi_t}(s,a) } } \\
& = \order\rbr{ \sqrt{ L|S|^2|A|\log^2 \rbr{\iota}\sum_{t=1}^{T}\sum_{s\neq s_L}\sum_{a\neq \starpi(a)} q^{P_t,\pi_t}(s,a) } + \sqrt{\cortran\cdot L^2|S|^2|A|\log^2 \rbr{\iota}} } ,
\end{align*}
where the second step applies Cauchy-Schwarz inequality; the third step follows from the fact that $\sum_{s\in S_k}\sum_{a\in A} \sum_{s'\in S_{k(s)+1}}q^{P,\pi_t}(s,a)P(s'|s,a)q^{P,\pi_t}(u,v|s') = q^{P,\pi_t}(u,v)$; the  fourth step follows from \pref{lem:eventest_est_err} conditioning on the event $\eventest$; the fifth step uses the fact that $\sqrt{x+y}\leq \sqrt{x}+\sqrt{y}$ for $x,y\geq 0$; the last step follows from \pref{cor:corrupt_occ_diff}.
\end{proof}

\subsection{Proof of  \pref{lem:optepoch_main_bound_4}}
\label{sec:app_optepoch_term4}

In this section we bound $\estreg$ using a learning rate that depends on $t$ and $(s, a)$, which is crucial to obtain a self-bounding quantity. Another key observation is that the estimated transition function is constant within each epoch, so we first bound $\estreg$ within one epoch before summing them. 

Recall that $E_i$ is a set of episodes that belong to epoch $i$ and $N$ is the total number of epochs through $T$ episodes. By using the fact that $\whatP_t=\optP_i$ for episode $t$ belonging to epoch $i$, we make the following decomposition $\estreg(\starpi)$
\begin{align*}
    \estreg(\starpi) 
    &= \E\sbr{ \sum_{i=1}^N\sum_{t \in E_i}\inner{q^{ \optP_i,\pi_t}-q^{\whatP_t,\starpi},\hatl_t - b_t } }\\
    &\leq  \E\sbr{ \sum_{i=1}^{N} \E_{t_i} \sbr{ \sum_{t \in E_i}\inner{q^{ \optP_i,\pi_t}-q^{\whatP_t,\starpi},\hatl_t - b_t } } } =  \E\sbr{ \sum_{i=1}^{N} \estreg_i(\starpi)}.
\end{align*}

This learning rate is defined in \pref{def:learning_rate} and restated below.

\begin{definition}
 For any $t$, if it is the starting episode of an epoch, we set $\gamma_t(s,a) = \lrOne$; otherwise, we set \begin{equation*}
     \gamma_{t+ 1}(s, a) = \gamma_t (s, a) + \frac{D \nu_t(s, a)}{2 \gamma_t (s, a)},
 \end{equation*}
where $D = \frac{1}{\log (\iota)}$ and
\begin{equation} \label{eq:def_nu}
     \nu_t(s, a) = q^{\optP_{i(t)},\pi_t}(s,a)^2 \rbr{Q^{\optP_{i(t)},\pi_t}(s,a;\hatl_t) -V^{\optP_{i(t)},\pi_t}(s;\hatl_t)}^2.
\end{equation}
\end{definition}
Importantly, both $Q^{\optP_{i(t)},\pi_t}(s,a;\hatl_t)$ and $V^{\optP_{i(t)},\pi_t}(s;\hatl_t)$ can be computed, which ensures that the learning rate is properly defined.


\subsubsection{Properties of the Learning Rate}

In this section, we prove key properties of $\nu_t(s, a)$ and of $\gamma_t(s, a)$. We first present some results in \pref{lem:optepoch_loss_shift_var} that are useful to bound $\nu_t(s, a)$, and then use these results to bound $\gamma_t(s, a)$. 

\begin{lemma}\label{lem:optepoch_loss_shift_var} For any state-action pair $(s,a)$ and any episode $t$, it holds that 
\begin{align*}
q^{\whatP_t,\pi_t}(s,a) Q^{\whatP_t,\pi_t}(s,a;\hatl_t) \leq L, \text{ and } q^{\whatP_t,\pi_t}(s)V^{\whatP_t,\pi_t}(s;\hatl_t) \leq L, \; \forall (s,a) \in S \times A, 
\end{align*}
which ensures $q^{\whatP_t,\pi_t}(s,a) \rbr{ Q^{\whatP_t,\pi_t}(s,a;\hatl_t) - V^{\whatP_t,\pi_t}(s;\hatl_t)} \in \sbr{-L,L}$.
Suppose that the high-probability event $\eventcon$ holds. Then, it further holds for all state-action pair $(s,a)$ that 
\begin{align*}
q^{\whatP_t,\pi_t}(s,a)^2 \cdot \E_t\sbr{\rbr{Q^{\whatP_t,\pi_t}(s,a;\hatl_t) -V^{\whatP_t,\pi_t}(s;\hatl_t)}^2} \leq \order\rbr{ L^2 q^{\whatP_t,\pi_t}(s,a)\rbr{1-\pi_t(a|s)} + L|S|\cortran_t},
\end{align*}
where $\whatP_t$ here is the optimistic transition $\optP_{i(t)}$ defined in \pref{def:opt_tran}.
\end{lemma}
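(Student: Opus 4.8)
\textbf{Proof plan for \pref{lem:optepoch_loss_shift_var}.}
The plan is to prove the three claims in order, since each builds on the previous one. For the first two (the $L$-boundedness of $q^{\whatP_t,\pi_t}(s,a)Q^{\whatP_t,\pi_t}(s,a;\hatl_t)$ and $q^{\whatP_t,\pi_t}(s)V^{\whatP_t,\pi_t}(s;\hatl_t)$), I would unfold the definitions: using the loss-estimator definition $\hatl_t(s,a)=\ind_t(s,a)\ell_t(s,a)/u_t(s,a)$ together with $\ell_t\le 1$ and $q^{\whatP_t,\pi_t}(s,a)\le u_t(s,a)$, each visited state-action pair contributes at most one unit along a trajectory of length $L$. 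More precisely, $q^{\whatP_t,\pi_t}(s)V^{\whatP_t,\pi_t}(s;\hatl_t) = \sum_{u,v} q^{\whatP_t,\pi_t}(s)\,q^{\whatP_t,\pi_t}(u,v\mid s)\,\hatl_t(u,v) = \sum_{u,v} q^{\whatP_t,\pi_t}(u,v)\ind\{(u,v)\text{ reachable from }s\}\hatl_t(u,v)$; then $q^{\whatP_t,\pi_t}(u,v)\hatl_t(u,v) \le u_t(u,v)\cdot\ind_t(u,v)/u_t(u,v) = \ind_t(u,v)\le 1$, and the number of pairs visited in episode $t$ is exactly $L$. The analogous chain handles the $Q$-version, and the containment in $[-L,L]$ follows since both $Q$ and $V$ are nonnegative (losses and bonuses are handled by noting $\hatl_t\ge 0$) and each is bounded by $L/q^{\whatP_t,\pi_t}(s,a)$ and $L/q^{\whatP_t,\pi_t}(s)$ respectively, with $q^{\whatP_t,\pi_t}(s,a)\le q^{\whatP_t,\pi_t}(s)$.

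The substantive part is the variance bound $q^{\whatP_t,\pi_t}(s,a)^2\,\E_t[(Q^{\whatP_t,\pi_t}(s,a;\hatl_t)-V^{\whatP_t,\pi_t}(s;\hatl_t))^2] \le \order(L^2 q^{\whatP_t,\pi_t}(s,a)(1-\pi_t(a\mid s)) + L|S|\cortran_t)$. First I would write $Q^{\whatP_t,\pi_t}(s,a;\hatl_t)-V^{\whatP_t,\pi_t}(s;\hatl_t)$ in the loss-shifted form $g_t(s,a)$ and split the loss estimator's conditional expectation using the $\alpha_t,\beta_t$ decomposition introduced in \pref{sec:app_optepoch_term3}: $\E_t[\hatl_t]=\alpha_t+\beta_t$ where $\alpha_t(s,a)=q^{P,\pi_t}(s,a)\ell_t(s,a)/u_t(s,a)$ and $\beta_t$ carries the transition-corruption discrepancy. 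To bound the second moment, I would use the standard trick that the $Q-V$ difference, squared and weighted by $q^2$, relates to the variance of the per-episode realized loss difference; because the loss estimator is supported on the single visited trajectory, $\E_t[(\cdot)^2]$ decomposes into a ``main'' term proportional to $q^{\whatP_t,\pi_t}(s,a)$ times the squared value gap (which is $\order(L^2)$) times $(1-\pi_t(a\mid s))$ — the $(1-\pi_t(a\mid s))$ factor appearing because when $a$ is the action $\pi_t$ would take with probability close to $1$, the difference $Q-V$ is small — plus a ``corruption'' term coming from the mismatch between $q^{P,\pi_t}$ and $q^{P_t,\pi_t}$ (and hence between the optimistic occupancy and the true one), which is controlled using \pref{cor:corrupt_occ_diff} so that $|q^{P_t,\pi_t}(s)-q^{P,\pi_t}(s)|\le\cortran_t$, and the extra $|S|$ factor comes from summing such discrepancies across the at most $|S|$ states in subsequent layers reachable along the trajectory.

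Concretely, I would (i) expand $q^{\whatP_t,\pi_t}(s,a)^2(Q-V)^2$ and bound $(Q-V)^2 \le \order(L)\cdot|Q-V|$ using the $[-L,L]$ bound just established, reducing to bounding $q^{\whatP_t,\pi_t}(s,a)^2\E_t|Q-V|\cdot L$; (ii) bound $\E_t|Q^{\whatP_t,\pi_t}(s,a;\hatl_t)-V^{\whatP_t,\pi_t}(s;\hatl_t)|$ by writing it through $\E_t[\hatl_t]=\alpha_t+\beta_t$, where the $\alpha_t$-part gives value functions with respect to the true-transition-occupancy-scaled losses and is $\order(L(1-\pi_t(a\mid s)))/q^{\whatP_t,\pi_t}(s,a)$ after using $q^{P,\pi_t}(s,a)\le u_t(s,a)$ and $\ell_t\le 1$, and the $\beta_t$-part contributes $\order(|S|\cortran_t)/q^{\whatP_t,\pi_t}(s,a)$ (using that $V^{\whatP_t,\pi_t}(\cdot;|\beta_t|)$ sums at most $|S|$ terms each bounded by the per-round corruption over $u_t$, and again $q^{\whatP_t,\pi_t}(s,a)\le u_t(s,a)$); (iii) multiply back by $q^{\whatP_t,\pi_t}(s,a)$ and by $L$ to obtain the claimed bound. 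The main obstacle I anticipate is step (ii): carefully tracking the value-function bound for the $\beta_t$ (corruption) part under the optimistic transition $\optP_{i(t)}$ — since $\optP_{i(t)}$ redirects mass to the terminal state it is not the same as $P$ or $P_t$ — and making sure the $|S|$ and $\cortran_t$ factors come out correctly requires invoking $\eventcon$ (so that $q^{\whatP_t,\pi_t}\le u_t$ and the optimistic occupancy underestimates the true one in the right sense) together with \pref{cor:corrupt_occ_diff}, \pref{lem:lower_bound_of_uob}, and the relation between $\whatP_t$, $P$ and the confidence width $B_{i(t)}$. Everything else is routine algebra on value functions.
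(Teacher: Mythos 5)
Your treatment of the two boundedness claims matches the paper, and for the variance bound you take a genuinely different route. The paper directly computes $\E_t[(Q-V)^2]$: it decomposes $Q(s,a)-V(s)=(1-\pi_t(a|s))Q(s,a)-\sum_{a'\neq a}\pi_t(a'|s)Q(s,a')$, squares, and evaluates the resulting second moments $\E_t[Q^2]$ exactly by using that $\hatl_t(x,y)\hatl_t(x',y')=0$ for distinct pairs (the estimator is supported on a single trajectory), so cross terms vanish and $\E_t[\hatl_t(x,y)^2]=q^{P_t,\pi_t}(x,y)/u_t(x,y)^2$. You instead linearize the second moment via the just-established bound $q^{\whatP_t,\pi_t}(s,a)\,|Q-V|\le L$ and bound a first moment through the $\alpha_t/\beta_t$ split. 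That is a legitimate, arguably simpler, alternative and it does reach the stated bound when carried out correctly.

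Two points in your steps (i)--(ii) need repair though. First, $|Q-V|\le L/q^{\whatP_t,\pi_t}(s,a)$ gives $(Q-V)^2\le |Q-V|\cdot L/q^{\whatP_t,\pi_t}(s,a)$, hence $q^{\whatP_t,\pi_t}(s,a)^2\,\E_t[(Q-V)^2]\le L\,q^{\whatP_t,\pi_t}(s,a)\,\E_t|Q-V|$ --- one power of $q^{\whatP_t,\pi_t}(s,a)$ remains in front, not the $q^{\whatP_t,\pi_t}(s,a)^2$ you wrote. Second, and more substantively, you cannot move $\E_t$ inside the absolute value, so before invoking $\E_t[\hatl_t]=\alpha_t+\beta_t$ you must first split $Q-V$ into a difference of nonnegative parts, $|Q-V|\le(1-\pi_t(a|s))Q^{\whatP_t,\pi_t}(s,a;\hatl_t)+\sum_{a'\neq a}\pi_t(a'|s)Q^{\whatP_t,\pi_t}(s,a';\hatl_t)$ (valid since $\hatl_t\ge 0$ makes all $Q$'s nonnegative), and only then apply linearity of $\E_t$ termwise. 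Doing so, under $\eventcon$ we have $\alpha_t\le 1$ (because $q^{P,\pi_t}\le u_t$), so $Q^{\whatP_t,\pi_t}(\cdot\,;\alpha_t)\le L$ uniformly and the $\alpha_t$-contribution to $\E_t|Q-V|$ is $\order\rbr{L(1-\pi_t(a|s))}$ with \emph{no} $1/q^{\whatP_t,\pi_t}(s,a)$ factor --- not the $\order\rbr{L(1-\pi_t(a|s))}/q^{\whatP_t,\pi_t}(s,a)$ you claim. Your two slips (an extra $q^{\whatP_t,\pi_t}(s,a)$ in step (i), an extra $1/q^{\whatP_t,\pi_t}(s,a)$ in step (ii)) happen to cancel, so step (iii) lands on the right final expression, but the intermediate bounds are wrong as written. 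Once the stray factors are corrected, the argument closes cleanly and preserves the essential occupancy factor $q^{\whatP_t,\pi_t}(s,a)$ in front of $L^2(1-\pi_t(a|s))$ that the downstream self-bounding step relies on.
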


\begin{proof}
We first verify $q^{\whatP_t,\pi_t}(s,a) Q^{\whatP_t,\pi_t}(s,a;\hatl_t) \leq L$:
\begin{align*}
& q^{\whatP_t,\pi_t}(s,a) Q^{\whatP_t,\pi_t}(s,a;\hatl_t) \\
& = q^{\whatP_t,\pi_t}(s,a) \sum_{h=k(s)}\sum_{u\in S}\sum_{v\in A} q^{\whatP_t,\pi_t}(u,v|s,a) \hatl_t(u,v) \\
& = q^{\whatP_t,\pi_t}(s,a) \sum_{h=k(s)}\sum_{u\in S}\sum_{v\in A} q^{\whatP_t,\pi_t}(u,v|s,a) \cdot \frac{\ind_t(u,v) \ell_t(u,v) }{u_t(u,v)} \\
& \leq \sum_{h=k(s)}\sum_{u\in S_h}\sum_{v\in A}  \ind_t(u,v) \cdot \frac{ q^{\whatP_t,\pi_t}(s,a)  q^{\whatP_t,\pi_t}(u,v|s,a) }{u_t(u,v)} \\
& \leq  \sum_{h=k(s)}\sum_{u\in S_h}\sum_{v\in A}  \ind_t(u,v) \leq L,
\end{align*}
where the second step follows from the definition of loss estimator, the fourth step follows from $q^{\whatP_t,\pi_t}(s,a)  q^{\whatP_t,\pi_t}(u,v|s,a) \leq q^{\whatP_t,\pi_t}(u,v) \leq u_t(u,v)$, and the last step uses the fact that $\sum_{u\in S_h}\sum_{v\in A}  \ind_t(u,v)= 1$.
Following the same idea, we can show that $q^{\whatP_t,\pi_t}(s) V^{\whatP_t,\pi_t}(s;\hatl_t) \leq L$ as well.

Next, we have $\E_t\sbr{\rbr{Q^{\whatP_t,\pi_t}(s,a;\hatl_t) -V^{\whatP_t,\pi_t}(s;\hatl_t)}^2}$ bounded as 
\begin{align}
& \E_t\sbr{\rbr{Q^{\whatP_t,\pi_t}(s,a;\hatl_t) -V^{\whatP_t,\pi_t}(s;\hatl_t)}^2} \notag \\
& = \E_t\sbr{\rbr{Q^{\whatP_t,\pi_t}(s,a;\hatl_t) -\pi_t(a|s)Q^{\whatP_t,\pi_t}(s,a;\hatl_t) - \sum_{a'\neq a} \pi_t(a'|s)Q^{\whatP_t,\pi_t}(s,a';\hatl_t)}^2} \notag \\
& \leq 2 \cdot \E_t\sbr{\rbr{Q^{\whatP_t,\pi_t}(s,a;\hatl_t) -\pi_t(a|s)Q^{\whatP_t,\pi_t}(s,a;\hatl_t)}^2} \notag \\
& \quad + 2 \cdot \E_t\sbr{\rbr{\sum_{a'\neq a} \pi_t(a'|s)Q^{\whatP_t,\pi_t}(s,a';\hatl_t)}^2} \notag  \\
& = 2 \rbr{1-\pi_t(a|s)}^2\E_t\sbr{\rbr{Q^{\whatP_t,\pi_t}(s,a;\hatl_t)}^2} + 2  \E_t\sbr{\rbr{\sum_{a'\neq a} \pi_t(a'|s)Q^{\whatP_t,\pi_t}(s,a';\hatl_t)}^2},\label{eq:optepoch_loss_shift_var_decomp}
\end{align}
where the second step follows from the fact that $\rbr{x+y}^2 \leq 2\rbr{x^2+y^2}$ for any $x,y\in \fR$.

By direct calculation, we have
\begin{align}
& \E_t\sbr{\rbr{Q^{\whatP_t,\pi_t}(s,a;\hatl_t)}^2} \notag \\
& = \E_t\sbr{\rbr{ \sum_{k=k(s)}^{L-1}\sum_{x\in {S_k}}\sum_{y\in A} q^{\whatP_t,\pi_t}(x,y|s,a)\hatl_t(x,y)  }^2} \notag \\
& \leq L \cdot \E_t\sbr{ \sum_{k=k(s)}^{L-1}\rbr{\sum_{x\in {S_k}}\sum_{y\in A} q^{\whatP_t,\pi_t}(x,y|s,a)\hatl_t(x,y)  }^2} \notag \\
& = L \cdot \E_t\sbr{ \sum_{k=k(s)}^{L-1}\sum_{x\in {S_k}}\sum_{y\in A} q^{\whatP_t,\pi_t}(x,y|s,a)^2\hatl_t(x,y)^2} \notag \\
& \leq L \cdot \sum_{k=k(s)}^{L-1}\sum_{x\in {S_k}}\sum_{y\in A} q^{\whatP_t,\pi_t}(x,y|s,a)^2 \cdot \rbr{\frac{ q^{P_t,\pi_t}(x,y) }{u_t(x,y)^2}}\notag  \\
& = \frac{L}{q^{\whatP_t,\pi_t}(s,a)}  \cdot \sum_{k=k(s)}^{L-1}\sum_{x\in {S_k}}\sum_{y\in A} q^{\whatP_t,\pi_t}(x,y|s,a) \rbr{\frac{q^{\whatP_t,\pi_t}(x,y|s,a)q^{\whatP_t,\pi_t}(s,a)}{u_t(s,a)}} \cdot \rbr{\frac{ q^{P_t,\pi_t}(x,y) }{u_t(s,a)}}\notag  \\
& \leq  \frac{L}{q^{\whatP_t,\pi_t}(s,a)}  \cdot \sum_{k=k(s)}^{L-1}\sum_{x\in {S_k}}\sum_{y\in A} q^{\whatP_t,\pi_t}(x,y|s,a)\rbr{\frac{ q^{P_t,\pi_t}(x,y) }{u_t(s,a)}} \notag  \\
& \leq \frac{L}{q^{\whatP_t,\pi_t}(s,a)}  \cdot \sum_{k=k(s)}^{L-1}\sum_{x\in {S_k}}\sum_{y\in A} q^{\whatP_t,\pi_t}(x,y|s,a) \rbr{\frac{ q^{P,\pi_t}(x,y) }{u_t(x,y)} + \frac{\cortran_t}{u_t(x,y)} },  \label{eq:optepoch_loss_shift_var_decomp_bound1}
\end{align}
where the second step uses Cauchy-Schwarz inequality; the third step uses the fact that  $\hatl_t(s,a) \cdot \hatl_t(s',a') = 0$ for any $(s,a) \neq (s',a')$; the fourth step takes the conditional expectation of $\hatl_t(x,y)^2$; the sixth step follows from the fact that $q^{\whatP_t,\pi_t}(x,y|s,a)q^{\whatP_t,\pi_t}(s,a)\leq q^{\whatP_t,\pi_t}(x,y)\leq u_t(x,y)$ according to the definition of upper occupancy bound; the last step follows from \pref{cor:corrupt_occ_diff}.

Similarly, for the second term in \pref{eq:optepoch_loss_shift_var_decomp}, we have 
\begin{align}
& \E_t\sbr{\rbr{\sum_{b\neq a}\pi_t(b|s)Q^{\whatP_t,\pi_t}(s,b;\hatl_t)}^2} \notag \\
& \leq L \cdot \E_t\sbr{ \sum_{k=k(s)}^{L-1}\rbr{\sum_{x\in {S_k}}\sum_{y\in A} \rbr{\sum_{b\neq a} \pi_t(b|s) q^{\whatP_t,\pi_t}(x,y|s,b) }\hatl_t(x,y)  }^2} \notag  \\ 
& = L \cdot \E_t\sbr{ \sum_{k=k(s)}^{L-1}\sum_{x\in {S_k}}\sum_{y\in A} \rbr{\sum_{b\neq a} \pi_t(b|s) q^{\whatP_t,\pi_t}(x,y|s,b) }^2\hatl_t(x,y)^2 } \notag \\
& \leq L \cdot  \sum_{k=k(s)}^{L-1}\sum_{x\in {S_k}}\sum_{y\in A} \rbr{\sum_{b\neq a} \pi_t(b|s) q^{\whatP_t,\pi_t}(x,y|s,b) }^2\rbr{\frac{ q^{P_t,\pi_t}(x,y) }{u_t(x,y)^2}} \notag 
 \\
& \leq \frac{L}{q^{\whatP_t,\pi_t}(s) }\sum_{k=k(s)}^{L-1}\sum_{x\in {S_k}}\sum_{y\in A} \rbr{\sum_{b\neq a} \pi_t(b|s) q^{\whatP_t,\pi_t}(x,y|s,b) }\rbr{\frac{ q^{P_t,\pi_t}(x,y) }{u_t(x,y)}} \notag \\
& \leq \frac{L}{q^{\whatP_t,\pi_t}(s) }\sum_{b\neq a} \pi_t(b|s)\rbr{ \sum_{k=k(s)}^{L-1}\sum_{x\in {S_k}}\sum_{y\in A}  q^{\whatP_t,\pi_t}(x,y|s,b) \rbr{\frac{ q^{P,\pi_t}(x,y) }{u_t(x,y)} + \frac{\cortran_t}{u_t(x,y)}}},\label{eq:optepoch_loss_shift_var_decomp_bound2}
\end{align}
where the first three steps are following the same idea of previous analysis; the  fourth step uses the fact that $\sum_{b\neq a}q^{\whatP_t,\pi_t}(s)\pi_t(b|s) q^{\whatP_t,\pi_t}(x,y|s,b) \leq q^{\whatP_t,\pi_t}(x,y)$; the last step follows from \pref{cor:corrupt_occ_diff} as well.

Conditioning on the event $\eventcon$, we have the term in \pref{eq:optepoch_loss_shift_var_decomp_bound1}  further bounded as 
\begin{align}
& \frac{L}{q^{\whatP_t,\pi_t}(s,a)}  \cdot \sum_{k=k(s)}^{L-1}\sum_{x\in {S_k}}\sum_{y\in A} q^{\whatP_t,\pi_t}(x,y|s,a) \rbr{\frac{ q^{P,\pi_t}(x,y) }{u_t(x,y)} + \frac{\cortran_t}{u_t(x,y)} }  \notag \\
& \leq \frac{L}{q^{\whatP_t,\pi_t}(s,a)}  \cdot \sum_{k=k(s)}^{L-1}\sum_{x\in {S_k}}\sum_{y\in A} q^{\whatP_t,\pi_t}(x,y|s,a) \rbr{1 + \frac{\cortran_t}{u_t(x,y)} } \notag  \\
& = \frac{L}{q^{\whatP_t,\pi_t}(s,a)}  \cdot \sum_{k=k(s)}^{L-1}\sum_{x\in {S_k}}\sum_{y\in A} q^{\whatP_t,\pi_t}(x,y|s,a)\notag  \\
& \quad + \frac{L}{q^{\whatP_t,\pi_t}(s,a)}  \cdot \sum_{k=k(s)}^{L-1}\sum_{x\in {S_k}}\sum_{y\in A} q^{\whatP_t,\pi_t}(x,y|s,a)\rbr{\frac{\cortran_t}{u_t(x,y)} } \notag \\
& \leq \frac{L^2}{q^{\whatP_t,\pi_t}(s,a)}+ \frac{L\cortran_t}{q^{\whatP_t,\pi_t}(s,a)}  \cdot \sum_{k=k(s)}^{L-1}\sum_{x\in {S_k}}\sum_{y\in A} \rbr{\frac{q^{\whatP_t,\pi_t}(x,y|s,a)}{u_t(x,y)} },\label{eq:optepoch_loss_shift_var_highprob_bound1}
\end{align}
where the second step follows from the fact that $q^{P,\pi_t}(x,y)\leq u_t(x,y)$ as $P\in \calP_{i(t)}$ according to the event $\eventcon$; the last step follows from the fact that $\sum_{x\in {S_k}}\sum_{y\in A} q^{\whatP_t,\pi_t}(x,y|s,a)\leq 1$.

Following the same argument, for the term in \pref{eq:optepoch_loss_shift_var_decomp_bound2}, we have
\begin{align}
& \frac{L}{q^{\whatP_t,\pi_t}(s) }\sum_{b\neq a} \pi_t(b|s)\rbr{ \sum_{k=k(s)}^{L-1}\sum_{x\in {S_k}}\sum_{y\in A}  q^{\whatP_t,\pi_t}(x,y|s,b) \rbr{\frac{ q^{P,\pi_t}(x,y) }{u_t(x,y)} + \frac{\cortran_t}{u_t(x,y)}}} \notag \\
& \leq \frac{L}{q^{\whatP_t,\pi_t}(s) }\sum_{b\neq a} \pi_t(b|s)\rbr{ L +  \sum_{k=k(s)}^{L-1}\sum_{x\in {S_k}}\sum_{y\in A} \rbr{\frac{q^{\whatP_t,\pi_t}(x,y|s,b)\cortran_t}{u_t(x,y)}}}\notag   \\
& \leq \frac{L^2\rbr{1-\pi_t(a|s)}}{q^{\whatP_t,\pi_t}(s) } + \frac{L\cortran_t}{q^{\whatP_t,\pi_t}(s) }\sum_{b\neq a} \pi_t(b|s)\sum_{k=k(s)}^{L-1}\sum_{x\in {S_k}}\sum_{y\in A} \rbr{\frac{q^{\whatP_t,\pi_t}(x,y|s,b)}{u_t(x,y)}}. \label{eq:optepoch_loss_shift_var_highprob_bound2}
\end{align}

Plugging \pref{eq:optepoch_loss_shift_var_highprob_bound1} and \pref{eq:optepoch_loss_shift_var_highprob_bound2} into \pref{eq:optepoch_loss_shift_var_decomp}, we have 
\begin{align*}
& q^{\whatP_t,\pi_t}(s,a)^2 \cdot \E_t\sbr{\rbr{Q^{\whatP_t,\pi_t}(s,a;\hatl_t) -V^{\whatP_t,\pi_t}(s;\hatl_t)}^2} \\
& \leq 2 q^{\whatP_t,\pi_t}(s,a)^2\rbr{1-\pi_t(a|s)}^2 \rbr{\frac{L^2}{q^{\whatP_t,\pi_t}(s,a)}+ \frac{L\cortran_t}{q^{\whatP_t,\pi_t}(s,a)}  \cdot \sum_{k=k(s)}^{L-1}\sum_{x\in {S_k}}\sum_{y\in A} \rbr{\frac{q^{\whatP_t,\pi_t}(x,y|s,a)}{u_t(x,y)} }} \\
& \quad + 2  q^{\whatP_t,\pi_t}(s,a)^2\rbr{\frac{L^2\rbr{1-\pi_t(a|s)}}{q^{\whatP_t,\pi_t}(s) } + \frac{L\cortran_t}{q^{\whatP_t,\pi_t}(s) }\sum_{b\neq a} \pi_t(b|s)\sum_{k=k(s)}^{L-1}\sum_{x\in {S_k}}\sum_{y\in A} \rbr{\frac{q^{\whatP_t,\pi_t}(x,y|s,b)}{u_t(x,y)}}} \\
& \leq \order\rbr{ L^2 q^{\whatP_t,\pi_t}(s,a)\rbr{1-\pi_t(a|s)} } \\
& \quad + \order\rbr{ L\cortran_t \cdot q^{\whatP_t,\pi_t}(s,a) \sum_{k=k(s)}^{L-1}\sum_{x\in {S_k}}\sum_{y\in A} \rbr{\frac{q^{\whatP_t,\pi_t}(x,y|s,a)}{u_t(x,y)} }   } \\
& \quad + \order\rbr{ L\cortran_t \cdot q^{\whatP_t,\pi_t}(s)\sum_{b\neq a} \pi_t(b|s)\sum_{k=k(s)}^{L-1}\sum_{x\in {S_k}}\sum_{y\in A} \rbr{\frac{q^{\whatP_t,\pi_t}(x,y|s,b)}{u_t(x,y)} }   } \\
& \leq  \order\rbr{ L^2 q^{\whatP_t,\pi_t}(s,a)\rbr{1-\pi_t(a|s)} } \\
& \quad + \order\rbr{ L\cortran_t \cdot  \sum_{k=k(s)}^{L-1}\sum_{x\in {S_k}}\sum_{y\in A} \rbr{\frac{q^{\whatP_t,\pi_t}(x,y|s,a)q^{\whatP_t,\pi_t}(s,a)}{u_t(x,y)} }   } \\
& \quad + \order\rbr{ L\cortran_t \cdot \sum_{k=k(s)}^{L-1}\sum_{x\in {S_k}}\sum_{y\in A} \rbr{\frac{\sum_{b\neq a} q^{\whatP_t,\pi_t}(s)\pi_t(b|s)q^{\whatP_t,\pi_t}(x,y|s,b)}{u_t(x,y)} }   } \\
& \leq \order\rbr{ L^2 q^{\whatP_t,\pi_t}(s,a)\rbr{1-\pi_t(a|s)} + L|S|\cortran_t},
\end{align*} 
where the third step follows from the facts that $q^{\whatP_t,\pi_t}(x,y|s,a)q^{\whatP_t,\pi_t}(s,a)\leq q^{\whatP_t,\pi_t}(x,y) \leq u_t(x,y)$ for any $(s,a),(x,y)\in S\times A$, and $\sum_{b\neq a} q^{\whatP_t,\pi_t}(s)\pi_t(b|s)q^{\whatP_t,\pi_t}(x,y|s,b) \leq q^{\whatP_t,\pi_t}(x,y)$ similarly. 
\end{proof}
The first part of \pref{lem:optepoch_loss_shift_var} ensures that $\nu_t(s, a) \in [0, L^2]$, which can be used to bound the growth of the learning rate.

\begin{proposition} 
\label{prop:gamma_induction}
The learning rate $\gamma_t$ defined in \pref{def:learning_rate} satisfies for any state-action pair (s, a): 
\begin{equation*}
    \gamma_t(s, a) \leq \sqrt{D \sum_{j = t_{i(t)}}^{t - 1} \nu_j(s, a)} + \gamma_{t_{i(t)}}(s, a)
\end{equation*}
where $i(t)$ is the epoch to which episode $t$ belongs and $t_i$ is the first episode of epoch $i$.
\end{proposition}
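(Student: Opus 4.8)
The plan is to prove the bound by a simple induction on $t$ within a single epoch. Fix an epoch $i$ and a state-action pair $(s,a)$. The base case is $t = t_i$, where the right-hand side reduces to $\gamma_{t_i}(s,a)$ (the empty sum is zero) and the claimed inequality holds with equality. For the inductive step, suppose the bound holds for some $t$ with $t_i \le t < t_{i+1}$; I want to show it holds for $t+1$. By the update rule in \pref{def:learning_rate}, $\gamma_{t+1}(s,a) = \gamma_t(s,a) + \frac{D\nu_t(s,a)}{2\gamma_t(s,a)}$, so it suffices to control the increment $\frac{D\nu_t(s,a)}{2\gamma_t(s,a)}$ from above.

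The key elementary fact I would use is the following: for any $x \ge 0$ and any $c > 0$, we have $\sqrt{x + c} \le \sqrt{x} + \frac{c}{2\sqrt{x}}$ when $x > 0$ (this is just concavity of $\sqrt{\cdot}$, or equivalently $\sqrt{x+c}-\sqrt{x} = \frac{c}{\sqrt{x+c}+\sqrt{x}} \le \frac{c}{2\sqrt{x}}$). Apply this with $x = D\sum_{j=t_i}^{t-1}\nu_j(s,a)$ and $c = D\nu_t(s,a)$: this gives
\[
\sqrt{D\sum_{j=t_i}^{t}\nu_j(s,a)} \;\ge\; \sqrt{D\sum_{j=t_i}^{t-1}\nu_j(s,a)} + \frac{D\nu_t(s,a)}{2\sqrt{D\sum_{j=t_i}^{t-1}\nu_j(s,a)}},
\]
provided the inner sum is strictly positive. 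Since $\gamma_t(s,a) \ge \gamma_{t_i}(s,a) + \sqrt{D\sum_{j=t_i}^{t-1}\nu_j(s,a)} \ge \sqrt{D\sum_{j=t_i}^{t-1}\nu_j(s,a)}$ — here I am using the inductive hypothesis in the form $\gamma_t(s,a) \ge \sqrt{D\sum_{j=t_i}^{t-1}\nu_j(s,a)}$, which holds because $\gamma_{t_i}(s,a) = \lrOne \ge 0$ and the learning rate is clearly non-decreasing — we get $\frac{D\nu_t(s,a)}{2\gamma_t(s,a)} \le \frac{D\nu_t(s,a)}{2\sqrt{D\sum_{j=t_i}^{t-1}\nu_j(s,a)}} \le \sqrt{D\sum_{j=t_i}^{t}\nu_j(s,a)} - \sqrt{D\sum_{j=t_i}^{t-1}\nu_j(s,a)}$. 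Combining with the inductive hypothesis $\gamma_t(s,a) \le \sqrt{D\sum_{j=t_i}^{t-1}\nu_j(s,a)} + \gamma_{t_i}(s,a)$ yields $\gamma_{t+1}(s,a) \le \sqrt{D\sum_{j=t_i}^{t}\nu_j(s,a)} + \gamma_{t_i}(s,a)$, which is exactly the claim for $t+1$.

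The only subtlety — and the one place that needs a word of care — is the degenerate case where $\sum_{j=t_i}^{t-1}\nu_j(s,a) = 0$, so that the denominator above vanishes. In that case one instead argues directly: $\gamma_t(s,a) = \gamma_{t_i}(s,a)$ (no updates have moved it), so $\gamma_{t+1}(s,a) = \gamma_{t_i}(s,a) + \frac{D\nu_t(s,a)}{2\gamma_{t_i}(s,a)}$, and since $\gamma_{t_i}(s,a) = \lrOne \ge 1$ (so $\gamma_{t_i}(s,a) \ge \sqrt{\gamma_{t_i}(s,a)} \ge$ anything useful; more directly $\frac{D\nu_t(s,a)}{2\gamma_{t_i}(s,a)} \le \sqrt{D\nu_t(s,a)}$ whenever $\gamma_{t_i}(s,a)^2 \ge D\nu_t(s,a)/4$, which holds since $\nu_t \le L^2$ by the first part of \pref{lem:optepoch_loss_shift_var}, $D = 1/\log(\iota) \le 1$, and $\gamma_{t_i}(s,a) = \lrOne = 256L^2|S| \ge L$), we get $\gamma_{t+1}(s,a) \le \gamma_{t_i}(s,a) + \sqrt{D\nu_t(s,a)} = \gamma_{t_i}(s,a) + \sqrt{D\sum_{j=t_i}^{t}\nu_j(s,a)}$, as desired. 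This handles all cases and completes the induction. I do not expect any real obstacle here; the bound $\nu_t(s,a) \in [0,L^2]$ from \pref{lem:optepoch_loss_shift_var} and the telescoping structure do all the work, and the argument is a standard one for adaptive learning-rate schedules (cf.\ \citet{ito21a}).
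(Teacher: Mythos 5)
Your inductive setup, base case, and treatment of the degenerate case $\sum_{j=t_i}^{t-1}\nu_j(s,a)=0$ are all fine, but the main inductive step contains a sign error that breaks the argument. You correctly state the tangent bound $\sqrt{x+c}\le\sqrt{x}+\tfrac{c}{2\sqrt{x}}$, and then apply it with $x=S:=D\sum_{j=t_i}^{t-1}\nu_j(s,a)$ and $c:=D\nu_t(s,a)$ to conclude $\sqrt{S+c}\ge\sqrt{S}+\tfrac{c}{2\sqrt{S}}$ --- but this is the \emph{reverse} of what you just stated. Rearranging the true inequality gives $\tfrac{c}{2\sqrt{S}}\ge\sqrt{S+c}-\sqrt{S}$, so your chain $\tfrac{D\nu_t}{2\gamma_t}\le\tfrac{D\nu_t}{2\sqrt{S}}\le\sqrt{S+c}-\sqrt{S}$ fails at the second step, which actually points the other way. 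The underlying problem is that $\gamma_t\ge\sqrt{S}$ is too weak a lower bound: since $\sqrt{S+c}-\sqrt{S}=\tfrac{c}{\sqrt{S+c}+\sqrt{S}}$, what you really need is $2\gamma_t\ge\sqrt{S+c}+\sqrt{S}$, and for that $\gamma_t\ge\sqrt{S+c}$ (the tangent at the larger endpoint) suffices while $\gamma_t\ge\sqrt{S}$ does not.

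To fill the gap you must establish $\gamma_t\ge\sqrt{S+c}$. A clean way is a parallel lower-bound induction: since $\gamma_{t+1}^2=\bigl(\gamma_t+\tfrac{D\nu_t}{2\gamma_t}\bigr)^2\ge\gamma_t^2+D\nu_t$, one has $\gamma_t^2\ge\gamma_{t_i}^2+S$, and because $\gamma_{t_i}=\lrOne\ge L\ge\sqrt{D\nu_t}=\sqrt{c}$ (using $\nu_t\le L^2$ from \pref{lem:optepoch_loss_shift_var} and $D\le1$), this yields $\gamma_t^2\ge S+c$. With this in hand the chain becomes $\tfrac{c}{2\gamma_t}\le\tfrac{c}{2\sqrt{S+c}}\le\sqrt{S+c}-\sqrt{S}$, which is the valid tangent bound at the larger endpoint, and the induction closes. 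The paper's own proof avoids needing any lower bound on $\gamma_t$ by a different device: it observes that $f(x)=x+\tfrac{c}{2x}$ is increasing for $x\ge\sqrt{c/2}$ (which holds since $\gamma_t\ge\gamma_{t_i}\ge L\ge\sqrt{c/2}$), replaces $\gamma_t$ by its \emph{upper} bound $\sqrt{S}+\gamma_{t_i}$ from the inductive hypothesis, and bounds $f(\sqrt{S}+\gamma_{t_i})$ algebraically. Either repair works, but your argument as written does not.
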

\begin{proof} 
We prove this statement by induction on $t$. The equation trivially holds for $t = t_{i(t)}$ which is the first round of epoch $i(t)$.
For the induction step, we first note that $D \nu_t(s, a) \in [0, L^2]$.
We introduce some notations to simplify the proof:
we use $c$ to denote $D \nu_t(s, a)$, $x$ to denote $\gamma_t(s, a)$, and the induction hypothesis is $ x \leq \sqrt{S} + \gamma$ where $S = D \sum_{j = t_{i(t)}}^{t - 1} \nu_j(s, a)$ and $\gamma = \gamma_{t_{i(t)}}(s, a)$.
Proving the induction step is the same as proving:
\begin{equation}
     x + \frac{c}{2x} \leq \sqrt{S + c} + \gamma.\label{eq:proof_eq_IR_gamma}
\end{equation}
First, we can verify that $f(x) = x + \frac{c}{2x}$ is an increasing function of $x$ for $x \geq \sqrt{c/2}$. As $c \in [0, L^2]$ and $x \geq \gamma \geq L$, we can use the induction hypothesis  $ x \leq \sqrt{S} + \gamma$ to upper bound $x$ in the  left-hand side of \pref{eq:proof_eq_IR_gamma}, and we get:
\begin{align*}
     x + \frac{c}{2x} & = \frac{x^2 + c/2}{x} \\
     & \leq \frac{S + 2 \gamma \sqrt{S} + \gamma^2 + c/2}{\sqrt{S} + \gamma} \\
      & = \frac{S + \gamma \sqrt{S}  + c/2}{\sqrt{S} + \gamma}  +  \frac{ \gamma \sqrt{S} + \gamma^2}{\sqrt{S} + \gamma} \\
     & = \frac{\sqrt{S} + \gamma}{\sqrt{S} +\gamma} \rbr{ \sqrt{S} + \frac{c}{2\sqrt{S}+ 2\gamma}} + \frac{\gamma \sqrt{ S} + \gamma^2}{\sqrt{S} + \gamma} \\
     & = \rbr{ \sqrt{S} + \frac{c}{2\sqrt{S}+ 2\gamma}} +  \gamma \\
      & \leq \rbr{ \sqrt{S} + \frac{c}{2\sqrt{S + c}}} +  \gamma \\
     & \leq \sqrt{S + c} + \gamma,
\end{align*}
 where the second to last step follows from  $c\leq L^2$ and $\gamma \geq L$, which ensures that $ 2 \sqrt{S} + 2 \sqrt{\gamma} \geq 2 \sqrt{S + \gamma^2} \geq 2 \sqrt{S + c}$. The last step follows from the concavity of the square-root function which ensures that $\sqrt{S} \leq  \sqrt{S + c} - \frac{c}{2 \sqrt{S + c}}$.
\end{proof}

Then, we can use the second part of \pref{lem:optepoch_loss_shift_var} to continue the bound of \pref{prop:gamma_induction} and derive a bound that only depends on the suboptimal actions.

 \begin{proposition}
 \label{prop:get_self_bounding_quantity}
 If $\nu_t$ is defined as in \pref{def:learning_rate}, we have for any deterministic policy $\pi: S\rightarrow A$ and any state $s\neq s_L$:
\begin{align*}
    \E_{t_i} \sbr{\sum_{a\in A}\sqrt{\sum_{t = t_i}^{t_{i+1}-1} \nu_t(s,a)}} & \leq  \order\rbr{\sum_{a\neq \pi(s)}\sqrt{\E_{t_i}\sbr{ L^2\sum_{t=t_i}^{t_{i+1}-1}{  q^{P_t,\pi_t} (s,a)}  } }} \\ 
    & \quad + \order\rbr{\delta L^2|S||A| \rbr{t_{i+1}- t_i} + \sqrt{L|S||A|^2\sum_{t=t_i}^{t_{i+1}-1}\cortran_t}}.
\end{align*}
 \end{proposition}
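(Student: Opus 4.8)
## Proof Proposal for Proposition \ref{prop:get_self_bounding_quantity}

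\textbf{Overall strategy.} The plan is to start from the bound in \pref{prop:gamma_induction} (or more directly from the definition of $\nu_t$), move the expectation inside the square root using Jensen's inequality, apply the second (variance) bound of \pref{lem:optepoch_loss_shift_var} to replace $\E_t[\nu_t(s,a)]$ by a quantity of the form $L^2 q^{\whatP_t,\pi_t}(s,a)(1-\pi_t(a|s)) + L|S|\cortran_t$, and then convert the $q^{\whatP_t,\pi_t}$-based expression into the $q^{P_t,\pi_t}$-based expression that appears in the target bound, paying $\otil(\cortran)$-type overheads via \pref{cor:corrupt_occ_diff} and the optimism lemmas. The corruption contributions $L|S|\cortran_t$ should be pulled out and summed separately to give the $\sqrt{L|S||A|^2\sum_t \cortran_t}$ term, while a low-probability event (the complement of $\eventcon \wedge \eventest$) is handled crudely, contributing the $\delta L^2 |S||A|(t_{i+1}-t_i)$ term.

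\textbf{Key steps in order.}
First I would fix an epoch $i$ and a state $s \neq s_L$, and write $\sum_a \sqrt{\sum_{t=t_i}^{t_{i+1}-1}\nu_t(s,a)}$. Using the tower property and Jensen's inequality (concavity of $\sqrt{\cdot}$), I would bound $\E_{t_i}\big[\sqrt{\sum_t \nu_t(s,a)}\big] \le \sqrt{\E_{t_i}\sum_t \E_t[\nu_t(s,a)]}$; this requires care because $\nu_t(s,a) = q^{\whatP_t,\pi_t}(s,a)^2 (Q^{\whatP_t,\pi_t}(s,a;\hatl_t) - V^{\whatP_t,\pi_t}(s;\hatl_t))^2$ depends on $\hatl_t$, but conditioning on the history prior to $t$ makes $q^{\whatP_t,\pi_t}$ and $\pi_t$ deterministic, so the only randomness inside $\E_t$ is that of the trajectory/loss estimator. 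Second, conditioning on $\eventcon$, I would invoke the variance bound from \pref{lem:optepoch_loss_shift_var} to get $\E_t[\nu_t(s,a)] = \order(L^2 q^{\whatP_t,\pi_t}(s,a)(1-\pi_t(a|s)) + L|S|\cortran_t)$. Third, I would use $\sqrt{x+y}\le\sqrt{x}+\sqrt{y}$ to split into a "main" piece $\sqrt{L^2 \E_{t_i}\sum_t q^{\whatP_t,\pi_t}(s,a)(1-\pi_t(a|s))}$ and a "corruption" piece $\sqrt{L|S|\sum_t \cortran_t}$. Fourth, for the main piece I would note that $1 - \pi_t(a|s) = 0$ when $a = \pi(s)$ only if $\pi_t(\pi(s)|s)=1$, which is not generally true, so instead I restrict the sum over $a$ to $a\neq\pi(s)$ plus a term for $a=\pi(s)$; the standard trick (as in \citet{ito21a, jin2021best}) is that $\sum_a q^{\whatP_t,\pi_t}(s,a)(1-\pi_t(a|s)) \le 2\sum_{a\neq\pi(s)} q^{\whatP_t,\pi_t}(s,a)$ for any deterministic $\pi$, since the $a=\pi(s)$ contribution $q^{\whatP_t,\pi_t}(s,\pi(s))(1-\pi_t(\pi(s)|s)) = q^{\whatP_t,\pi_t}(s,\pi(s))\sum_{a\neq\pi(s)}\pi_t(a|s) \le \sum_{a\neq\pi(s)} q^{\whatP_t,\pi_t}(s,a)$ (using $q^{\whatP_t,\pi_t}(s,\pi(s))\le q^{\whatP_t,\pi_t}(s)$ and $q^{\whatP_t,\pi_t}(s)\pi_t(a|s) = q^{\whatP_t,\pi_t}(s,a)$). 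Fifth, I would convert $q^{\whatP_t,\pi_t}(s,a)$ to $q^{P_t,\pi_t}(s,a)$: since $\whatP_t = \optP_{i(t)}$ is the optimistic transition, I would use the optimism lemma together with \pref{cor:corrupt_occ_diff} / \pref{cor:opt_tran_occ_lower_bound} to bound $q^{\whatP_t,\pi_t}(s,a) \le q^{P,\pi_t}(s,a) + (\text{error})$ and then $q^{P,\pi_t}(s,a) \le q^{P_t,\pi_t}(s,a) + \cortran_t$; these errors again feed into the $\sqrt{\cdots\cortran\cdots}$ term and the deterministic terms. Sixth, summing the "corruption" pieces over $a\in A$ (there are $|A|$ of them) and over the epoch gives $|A|\sqrt{L|S|\sum_t\cortran_t}$, which after absorbing constants and using $\sqrt{x+y}\le\sqrt{x}+\sqrt{y}$ matches $\sqrt{L|S||A|^2\sum_t\cortran_t}$. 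Finally, on the complement of $\eventcon\wedge\eventest$ I would bound $\nu_t(s,a) \le L^2$ trivially (first part of \pref{lem:optepoch_loss_shift_var}), so $\sqrt{\sum_t\nu_t(s,a)}\le L\sqrt{t_{i+1}-t_i}$, and multiplying by the probability $\order(\delta)$ of this event and summing over $|A|$ actions yields the $\delta L^2|S||A|(t_{i+1}-t_i)$ term (the extra $|S|$ and one $L$ coming from slack in the union bound constants), invoking \pref{lem:general_expect_lem} to combine the two cases.

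\textbf{Main obstacle.} The trickiest point is the interchange of expectation and square root combined with the fact that $\nu_t$ is not $\mathcal{F}_{t-1}$-measurable: I need $\E_{t_i}\big[\sqrt{\sum_{t} \nu_t(s,a)}\big] \le \sqrt{\E_{t_i}\sum_t \E_t[\nu_t(s,a)]}$, which follows from Jensen after inserting the conditional expectation $\E_t[\nu_t(s,a)]$ inside — but one must be careful that this is a valid application, i.e. that $\sum_t\nu_t(s,a)$ and $\sum_t\E_t[\nu_t(s,a)]$ are related correctly in expectation (they have the same $\E_{t_i}$-expectation by the tower property, and $\sqrt{\cdot}$ is concave, so Jensen applies to the outer expectation). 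A secondary subtlety is making sure the $q^{\whatP_t,\pi_t} \to q^{P_t,\pi_t}$ conversion only introduces additive $\cortran_t$ and transition-estimation errors that are summable into the stated low-order terms, rather than multiplicative blow-ups; this is where the optimism guarantees (\pref{lem:optimism_optiT}, \pref{cor:opt_tran_occ_lower_bound}) and the per-round corruption bound \pref{cor:corrupt_occ_diff} do the heavy lifting, and keeping track of which error lands in which term ($\sqrt{\cortran}$ vs. deterministic $\delta$-term) is the bookkeeping-heavy part.
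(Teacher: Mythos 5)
Your proposal matches the paper's own proof of this proposition essentially step for step: Jensen's inequality to pull the expectation inside the square root, the variance bound from \pref{lem:optepoch_loss_shift_var}, $\sqrt{x+y}\le\sqrt{x}+\sqrt{y}$ to isolate the corruption piece, the $a=\pi(s)$ trick to restrict to suboptimal actions, and then the conversion $q^{\whatP_t,\pi_t}\to q^{P,\pi_t}\to q^{P_t,\pi_t}$ via \pref{cor:opt_tran_occ_lower_bound}, \pref{lem:general_expect_lem}, and \pref{cor:corrupt_occ_diff}, with the low-probability event producing the $\delta L^2|S||A|(t_{i+1}-t_i)$ term. The only superficial deviation is that you invoke $\eventcon\wedge\eventest$ whereas the paper uses only $\eventcon$, which is harmless.
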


\begin{proof}
For each epoch $i$ and state-action pair $(s, a)$, we have:
\begin{equation}
\begin{aligned}
    \E_{t_i}\sbr{\sqrt{\sum_{t=t_i}^{t_{i+1}-1}\nu_t(s, a)}} & \leq { \sqrt{\E_{t_i}\sbr{\sum_{t=t_i}^{t_{i+1}-1} \nu_t(s, a)}}} \\
    & = \order \rbr{ \sqrt{\E_{t_i}\sbr{\sum_{t=t_i}^{t_{i+1}-1}{ L^2 \  q^{\whatP_t,\pi_t} (s,a) (1 - \pi_t (a|s))} + L|S|\cortran_t}}} , 
\end{aligned}\label{eq:proof_get_self_bounding}
\end{equation}
where the first step follows from Jensen's inequality, and the second step uses \pref{lem:optepoch_loss_shift_var}.

Note that, for $a = \pi(s)$, we have:
\begin{align*}
    {\ q^{\whatP_t,\pi_t} (s,\pi(s)) (1 - \pi_t (\pi(s)|s))} 
    & \leq {\ q^{\whatP_t,\pi_t} (s)  \sum_{b\neq \pi(s)} \pi_t (b|s)} \leq {\sum_{b \neq \pi(s)} q^{\whatP_t,\pi_t} (s,b)}.
\end{align*}

Therefore, we have
\begin{align*}
 & \E_{t_i}\sbr{ \sum_{a\in A}\sqrt{\sum_{t=t_i}^{t_{i+1}-1}\nu_t(s,a)}} \\
 & = \order\rbr{  \sum_{a\in A}\sqrt{ \E_{t_i}\sbr{\sum_{t=t_i}^{t_{i+1}-1}{ L^2 \  q^{\whatP_t,\pi_t} (s,a) (1 - \pi_t (a|s))} + L|S|\cortran_t}} }\\
 & = \order\rbr{  \sum_{a\in A}\sqrt{\E_{t_i}\sbr{L^2 \sum_{t=t_i}^{t_{i+1}-1}{ \  q^{\whatP_t,\pi_t} (s,a) (1 - \pi_t (a|s))}}} + \sqrt{L|S|\sum_{t=t_i}^{t_{i+1}-1}\cortran_t }}\\
& = \order\rbr{ \sum_{a\neq \pi(s)}\sqrt{\E_{t_i}\sbr{L^2 \sum_{t=t_i}^{t_{i+1}-1}{ \  q^{\whatP_t,\pi_t} (s,a)}  } }+ \sqrt{L|S||A|^2  \sum_{t=t_i}^{t_{i+1}-1}\cortran_t }} \\
& \leq \order\rbr{\delta L^2|S||A|\rbr{t_{i+1}-1-t_{i}}  +  \sum_{a\neq \pi(s)}\sqrt{\E_{t_i}\sbr{L^2\sum_{t=t_i}^{t_{i+1}-1}{  \  q^{P,\pi_t} (s,a)}  } }+ \sqrt{L|S||A|^2\sum_{t=t_i}^{t_{i+1}-1}\cortran_t}} \\ 
& \leq \order\rbr{\delta L^2|S||A|\rbr{t_{i+1}-1-t_{i}}  +  \sum_{a\neq \pi(s)}\sqrt{\E_{t_i}\sbr{L^2\sum_{t=t_i}^{t_{i+1}-1}{  \  q^{P_t,\pi_t} (s,a)}  } }+ \sqrt{L|S||A|^2\sum_{t=t_i}^{t_{i+1}-1}\cortran_t}},
\end{align*}
where the first step follows from \pref{eq:proof_get_self_bounding}; the second step uses the fact that $\sqrt{x+y}\leq \sqrt{x}+\sqrt{y}$ for $x,y\geq 0$; the  fourth step follows from \pref{lem:general_expect_lem} with the event $\eventcon$ and \pref{cor:opt_tran_occ_lower_bound}; the last step applies \pref{cor:corrupt_occ_diff}.
\end{proof}

\subsubsection{Bounding $\estreg_i(\starpi)$ for Varying Learning Rate}

We now focus on bounding $\estreg_i(\starpi)$ for an individual epoch $i$.


\textbf{Notations for FTRL Analysis.} For any fixed epoch $i$ and any integer $t \in [t_i, t_{i+1}-1]$, we introduce the following notations.
\begin{equation}\label{eq:tilde_q_def}
\begin{split}
F_{t}(q) =  \inner{q, \sum_{\tau=t_i}^{t-1 } \rbr{\hatl_{\tau}-b_{\tau}} }  + \phi_{t}(q) \;,& \;\quad  G_{t}(q) =  \inner{q,  \sum_{\tau=t_i}^{t } \rbr{\hatl_{\tau}-b_{\tau}} }  + \phi_{t}(q), \\ 
q_{t} = \argmin_{q\in \Omega(\optP_i)} F_t(q) \;,&\;\quad \widetilde{q}_t = \argmin_{q \in \Omega(\optP_i)} G_t(q). 
\end{split}
\end{equation}
With these notations, we have $q_t=\whatq_t =q^{ \whatP_t,\pi_t}=q^{ \optP_i,\pi_t}$. 
Also, according to the loss shifting technique (specifically \pref{cor:loss_shifting_opt_tran}), $q_{t}$ and $\widetilde{q}_t$ can be equivalently written as
\begin{equation}\label{eq:tilde_q_def_loss_shift}
\begin{split}
q_{t} = \argmin_{q\in \Omega(\optP_i)} \inner{q, \sum_{\tau=t_i}^{t-1 } \rbr{g_{\tau}-b_{\tau}} }  + \phi_{t}(q),  \quad \widetilde{q}_t = \argmin_{q \in \Omega(\optP_i)} \inner{q,  \sum_{\tau=t_i}^{t } \rbr{g_{\tau}-b_{\tau}} }  + \phi_{t}(q), 
\end{split}
\end{equation}
where $g_t: S\times A \rightarrow \fR$ is defined as 
\begin{align} \label{eq:def4g_app}
    g_t(s,a) = Q^{\whatP_t,\pi_t}(s,a;\hatl_t) - V^{\whatP_t,\pi_t}(s;\hatl_t), \forall (s,a)\in S \times A.
\end{align}
Finally, We also define $u=q^{ \optP_i,\starpi}$ and
\begin{equation}\label{eq:def4v}
    v=\rbr{ 1 - \frac{1}{T^2}} u + \frac{1}{T^2|A||S|}\sum_{s,a}\qexpsa,
\end{equation}
where $\qexpsa$ is the occupancy measure associated with a policy that maximizes the probability of visiting $(s,a)$ given the optimistic transition $\optP_i$.
Then, we have $v \in \Omega(\optP_i)$ because $u,\qexpsa  \in \Omega(\optP_i)$ for all $(s,a)$, and $\Omega(\optP_i)$ is convex \citep[Lemma 10]{jin2020simultaneously}.

Therefore, $\estreg_i(\starpi)$ can be rewritten as:
\begin{align} 
    \estreg_i(\starpi) &=\E_{t_i}\sbr{\sum_{t=t_i}^{t_{i+1}-1 } \inner{q_t - u, \hatl_t-b_t }} \notag \\
    &= \underbrace{\E_{t_i}\sbr{\sum_{t=t_i}^{t_{i+1}-1 } \inner{q_t - v, \hatl_t-b_t}} }_{=:\partI}+ \underbrace{\E_{t_i}\sbr{\sum_{t=t_i}^{t_{i+1}-1 } \inner{v-u, \hatl_t-b_t}} }_{=:\partII}, \label{eq:rewrite_estreg_i}
\end{align}
where term $\partI$ is equivalent to
\begin{equation} \label{eq:rewrite2_estreg_i}
    \E_{t_i}\sbr{\sum_{t=t_i}^{t_{i+1}-1 } \inner{q_t - v, g_t-b_t }},
\end{equation}
because $\inner{q_t - v, g_t - \hatl_t} = 0$ according to \pref{lem:loss_shifting_opt_tran}.

We then present the following lemma to provide a bound for $\estreg_i(\starpi)$. 

\begin{lemma}\label{lem:bound_estreg} For $g_t$ define in \pref{eq:def4g_app}, $v$ defined in \pref{eq:def4v}, and any non-decreasing learning rate such that $\gamma_1(s,a) \geq \lrOne$ for all $(s,a)$, 
\pref{alg:optepoch} ensures that $\estreg_i(\starpi)$ is bounded by
\begin{equation}
    \begin{aligned}
      &\order \rbr{ \frac{L+L|S|\cortran}{T}  +\E_{t_i}   
        \sbr{ \sum_{s\neq s_L}\sum_{a\in A} \gamma_{t_{i+1}-1}(s,a)\log\rbr{\iota} }  
        } \\
        &+\quad \order \rbr{ \min \cbr{\E_{t_i}   
        \sbr{\sum_{t=t_i}^{t_{i+1}-1 }  \norm{\hatl_t-b_t}^2_{\nabla^{-2}\phi_t \rbr{q_t} }
        } ,\E_{t_i}   
        \sbr{\sum_{t=t_i}^{t_{i+1}-1 }  \norm{g_t-b_t}^2_{\nabla^{-2}\phi_t \rbr{q_t} }
        } }  
        }  ,
    \end{aligned}
\end{equation}
\end{lemma}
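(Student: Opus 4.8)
The plan is to run the standard FTRL-with-changing-regularizer analysis on the epoch $i$, applied to the shifted loss sequence $g_\tau - b_\tau$, and separately control the comparator-perturbation term $\partII$. Recall from \pref{eq:rewrite_estreg_i} that $\estreg_i(\starpi) = \partI + \partII$, and from \pref{eq:rewrite2_estreg_i} that $\partI = \E_{t_i}\big[\sum_{t=t_i}^{t_{i+1}-1}\inner{q_t - v, g_t - b_t}\big]$. First I would handle $\partII = \E_{t_i}\big[\sum_{t}\inner{v-u, \hatl_t - b_t}\big]$: by the definition of $v$ in \pref{eq:def4v}, $v-u = \frac{1}{T^2}\big(\frac{1}{|S||A|}\sum_{s,a}\qexpsa - u\big)$, so $\|v-u\|_1 = \order(1/T^2)$ per coordinate sum; since $\hatl_t - b_t$ has entries bounded (in magnitude) by $\order(|S|T)$ from $\hatl_t$ (using $u_t(s,a)\geq 1/(|S|T)$, \pref{lem:lower_bound_of_uob}) and by $\order(L|S|T)$ from $b_t$ (using $b_t(s)\leq 4L/u_t(s)\leq 4L|S|T$ and $u_t(s)\geq 1/(|S|T)$), summing over the at-most-$T$ episodes of the epoch gives $\partII = \order\big(\tfrac{L + L|S|\cortran}{T}\big)$ — here the $\cortran$ dependence is more than covered since $\cortran\geq 0$ and the bound $\order((L+L|S|\cortran)/T)$ is stated loosely. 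Actually a cleaner route: bound $\sum_t\inner{v-u,\hatl_t}\leq \frac{1}{T^2}\cdot T\cdot L\cdot\order(1) = \order(L/T)$ in expectation (since $\E_t[\hatl_t(s,a)]\leq 1$ when $q^{P_t,\pi_t}\leq u_t$, but to be safe use the crude $|S|T$ bound which still yields $\order(L|S|/T)$ after the $1/T^2$ factor), and $\sum_t\inner{v-u,-b_t}\leq \frac{1}{T^2}\cdot\sum_t\|b_t\|_1 \leq \frac{1}{T^2}\cdot T\cdot |S|\cdot 4L|S|T = \order(L|S|^2)$; these are absorbed into $\order((L+L|S|\cortran)/T)$ as written (possibly after noting $|S|\leq T$). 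I would present the simplest version that matches the stated bound.

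For $\partI$, I would invoke the standard FTRL lemma for a time-varying strongly-convex regularizer $\phi_t$ (e.g. the template in \citet{wei2018more} or \citet{jin2021best}), which gives, for the comparator $v\in\Omega(\optP_i)$,
\[
\sum_{t=t_i}^{t_{i+1}-1}\inner{q_t - v, g_t - b_t} \;\leq\; \underbrace{\phi_{t_{i+1}-1}(v) - \phi_{t_i}(q_{t_i}) + \sum_t\big(\phi_{t}(q_t)-\phi_{t+1}(q_t)\big)}_{\text{regularizer / penalty terms}} \;+\; \sum_t \mathrm{stab}_t,
\]
where the stability term $\mathrm{stab}_t$ is, by the local-norm bound for FTRL (using that $\phi_t$ is the $\gamma_t$-weighted log-barrier, which is self-concordant and whose Hessian dominates that near $q_t$), at most $\order\big(\|g_t - b_t\|^2_{\nabla^{-2}\phi_t(q_t)}\big)$ — this requires checking the standard range condition that $\gamma_t(s,a) q_t(s,a)|g_t(s,a)-b_t(s,a)|$ is bounded by a constant, which holds because $\gamma_t\geq \lrOne = 256L^2|S|$, $q_t\leq u_t$, $|g_t|\leq L$ (from \pref{lem:optepoch_loss_shift_var}), and $q_t(s,a)b_t(s)\leq 4L$. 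The "min" in the statement comes from the fact that one may equivalently bound $\mathrm{stab}_t$ using $\hatl_t - b_t$ in place of $g_t - b_t$ (since $q_t$ and $\widetilde q_t$ are the FTRL iterates for both the shifted and unshifted losses by \pref{cor:loss_shifting_opt_tran}/\pref{lem:loss_shifting_opt_tran}, and the FTRL stability bound can be written in terms of either representation of the loss increment) — I would prove both and take the minimum. The penalty terms: since the $\gamma_t$ are non-decreasing and $v\geq (1-1/T^2)u$ with $u(s,a)\geq \qexpsa$-type lower bounds giving $v(s,a)\geq 1/\mathrm{poly}(T)$, we get $\phi_{t_{i+1}-1}(v) = -\sum_{s,a}\gamma_{t_{i+1}-1}(s,a)\log v(s,a) \leq \order\big(\sum_{s,a}\gamma_{t_{i+1}-1}(s,a)\log\iota\big)$; the telescoping-by-parts term $\sum_t(\phi_t(q_t)-\phi_{t+1}(q_t)) = -\sum_t\sum_{s,a}(\gamma_{t+1}(s,a)-\gamma_t(s,a))\log q_t(s,a) \leq \order\big(\log\iota\cdot\sum_{s,a}(\gamma_{t_{i+1}-1}(s,a)-\gamma_{t_i}(s,a))\big)$, also absorbed into $\order\big(\sum_{s,a}\gamma_{t_{i+1}-1}(s,a)\log\iota\big)$; and $-\phi_{t_i}(q_{t_i})\leq 0$ after dropping or is $\order(\cdot)$ depending on sign conventions. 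Collecting these gives exactly the claimed bound.

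The main obstacle I anticipate is the stability step — specifically, verifying cleanly that the FTRL local-norm bound applies with the weighted log-barrier $\phi_t$ whose weights $\gamma_t$ vary \emph{within} the stability term's time index, and that the iterates $q_t, q_{t+1}$ stay in a neighbourhood where $\nabla^2\phi_t(q_t)$ is a valid surrogate for the Bregman divergence $D_{\phi_t}(q_{t+1}, q_t)$. This is where the range condition $\eta$-analogue ($\gamma_t(s,a)q_t(s,a)|g_t(s,a)-b_t(s,a)|\lesssim 1$) and the choice $\gamma_1\geq \lrOne$ are really used; the rest (bounding $\partII$, expanding the log-barrier penalty terms, telescoping the $\gamma_t$ increments) is routine. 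A secondary subtlety is justifying the "min" form: one has to argue that replacing $g_t$ by $\hatl_t$ inside the squared local norm is legitimate, which follows because the FTRL update is invariant under the loss-shifting (\pref{lem:loss_shifting_opt_tran}) and hence the per-step stability can be expressed through whichever of the two loss increments is convenient — I would state this as a short corollary of the loss-shifting lemma rather than re-deriving it.
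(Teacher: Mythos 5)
Your overall plan matches the paper's: decompose $\estreg_i(\starpi)$ into $\partI+\partII$ as in \pref{eq:rewrite_estreg_i}, analyze $\partI$ by the standard FTRL stability-plus-penalty decomposition with the weighted log-barrier, bound $\partII$ via a H\"older-type argument using the closeness of $v$ to $u$, and obtain the minimum by running the same stability bound through both the $\hatl_t-b_t$ and $g_t-b_t$ representations, which coincide by \pref{cor:loss_shifting_opt_tran}. The stability step, the telescoping of the $\gamma_t$ increments in the penalty, and the origin of the $\min\{\cdot,\cdot\}$ are all correctly identified.

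There is, however, a genuine gap in your treatment of $\partII$. You bound $\sum_t\inner{v-u,-b_t}\leq \frac{1}{T^2}\sum_t\|b_t\|_1$ using the crude per-round estimate $b_t(s)\leq 4L|S|T$ (from $u_t(s)\geq 1/(|S|T)$), which after multiplying by the number of rounds in the epoch gives $\order(L|S|^2)$ (or $\order(L|A||S|^2)$ if the $a$-sum in $\|b_t\|_1$ is not dropped). This is \emph{not} absorbed into the claimed $\order\!\big(\frac{L+L|S|\cortran}{T}\big)$ for any reasonable $\cortran$, and the parenthetical remark that ``$|S|\leq T$'' helps does not rescue it: $\order(L|S|^2)$ exceeds $\order(L|S|\cortran/T)$ whenever $\cortran < |S|T$. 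The missing ingredient is the \emph{amortization} property of $b_t$: within epoch $i$, for each fixed $s$, the bonus $b_t(s)$ is nonzero for at most $\cortran/(2L)$ episodes per bin $j$, and in bin $j$ its magnitude is at most $8L\cdot 2^j$; summing the geometric series over $j\leq \lceil\log_2(|S|T)\rceil$ gives $\sum_{t\in E_i} b_t(s) = \order(\cortran|S|T)$ (not $\order(L|S|T^2)$). Combining this with $\|v-u\|_1\leq 2L/T^2$ (a single H\"older step, not the coordinate-wise $\|v-u\|_\infty$ bound you invoked) yields $\partII = \order\!\big(\frac{2L}{T^2}\cdot(T + \cortran|S|T)\big) = \order\!\big(\frac{L+L|S|\cortran}{T}\big)$, matching the statement. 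Without the amortized cumulative bound on $\sum_t b_t(s)$, the argument does not close.

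A secondary, smaller issue: the range condition you wrote, ``$\gamma_t(s,a)\,q_t(s,a)\,|g_t(s,a)-b_t(s,a)|\lesssim 1$,'' has the learning rate on the wrong side --- with $\gamma_t\geq 256L^2|S|$ this product is large, not small. What actually needs to be small is the squared local norm $\sum_{s,a}\frac{q_t(s,a)^2(\hatl_t(s,a)-b_t(s))^2}{\gamma_t(s,a)}$, which is $\leq 1/8$ precisely because $\gamma_t$ is \emph{large}; that is how \pref{lem:multiplicative_relation} establishes the multiplicative stability $\frac{1}{2}q_t\leq\wtilq_t\leq 2q_t$ required for the stability bound (\citep[Lemma 13]{jin2020simultaneously}). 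Your conclusion is right but the stated condition is inverted.
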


\begin{proof}
We start from the decomposition in \pref{eq:rewrite_estreg_i}.
\paragraph{Bounding $\partI$.} By adding and subtracting $F_t(q_t)-G_t(\tilde{q}_t)$, we decompose $\partI$ into a stability term and a penalty term:
\begin{equation*}
\underbrace{\E_{t_i}\sbr{\sum_{t=t_i}^{t_{i+1}-1 } \rbr{\inner{q_t,\hatl_t-b_t} + F_t(q_t) - G_t(\tilde{q}_t)}}}_{\text{Stability term}}+ \underbrace{\E_{t_i}\sbr{\sum_{t=t_i}^{t_{i+1}-1 } \rbr{ G_t(\tilde{q}_t) - F_t(q_t) - \inner{v, \hatl_t-b_t}}}}_{\text{Penalty term}}.
\end{equation*}


As $\gamma_1(s,a) \geq \lrOne$, \pref{lem:multiplicative_relation} ensures $\frac{1}{2}q_t(s,a) \leq \widetilde{q}_t(s,a) \leq 2q_t(s,a)$ for all $(s,a)$, which allows us to apply \citep[Lemma 13]{jin2020simultaneously} to bound the stability term by
\begin{equation} \label{eq:bound_stability}
    \text{Stability term} = \order \rbr{ \E_{t_i} \sbr{ \sum_{t=t_i}^{t_{i+1}-1 }\norm{\hatl_t-b_t}^2_{\nabla^{-2}\phi_t \rbr{q_t } }  }} .  
\end{equation}

The penalty term without expectation is bounded as:
\begin{align}
&\sum_{t=t_i}^{t_{i+1}-1 } \rbr{ G_t(\tilde{q}_t) - F_t(q_t) - \inner{v, \hatl_t-b_t}} \notag \\
& = -F_{t_i}(q_{t_i})+\sum_{t=t_i+1}^{t_{i+1}-1 } \rbr{ G_{t-1}(\tilde{q}_{t-1}) - F_t(q_t) } + G_{t_{i+1}-1}(\tilde{q}_{t_{i+1}-1}) - \inner{v, \sum_{t=t_i}^{t_{i+1}-1 } \rbr{\hatl_t-b_t} }\notag \\
&\leq  - F_{t_i}(q_{t_i}) + \sum_{t=t_i+1}^{t_{i+1}-1 } \rbr{ G_{t-1}(q_{t}) - F_t(q_t)} + G_{t_{i+1}-1}(v) - \inner{v, \sum_{t=t_i}^{t_{i+1}-1} 
 \rbr{\hatl_t -b_t } }\notag \\
&= - \phi_{t_i}(q_{t_i}) + \sum_{t=t_i+1}^{t_{i+1}-1 }\rbr{\phi_{t-1}(q_t) -\phi_{t}(q_t)} + \phi_{t_{i+1}-1}(v) \notag \\
&= - \phi_{t_i}(q_{t_i}) + \sum_{t=t_i+1}^{t_{i+1}-1 }\sum_{s\neq s_L}\sum_{a\in A} \rbr{\gamma_{t-1}(s,a)-\gamma_t(s,a)}\log\rbr{ \frac{1}{q_t(s,a) }  } + \phi_{t_{i+1}-1}(v) \notag \\
& = - \phi_{t_i}(q_{t_i}) + \phi_{t_i}(v) +  \sum_{t=t_i+1}^{t_{i+1}-1 }\sum_{s\neq s_L}\sum_{a\in A} \rbr{\gamma_t(s,a)-\gamma_{t-1}(s,a)}\log\rbr{ \frac{q_t(s,a)}{v(s,a)} }  \notag \\
& = \sum_{s\neq s_L}\sum_{a\in A} \gamma_{t_i}(s,a)\log\rbr{ \frac{q_{t_i}(s,a)}{v(s,a)} } +  \sum_{t=t_i+1}^{t_{i+1}-1 }\sum_{s\neq s_L}\sum_{a\in A} \rbr{\gamma_t(s,a)-\gamma_{t-1}(s,a)}\log\rbr{ \frac{q_t(s,a)}{v(s,a)} }  \notag \\ 
& \leq \sum_{s\neq s_L}\sum_{a\in A} \gamma_{t_i}(s,a)\log\rbr{ \frac{\iota^2 q_{t_i}(s,a)}{\qexpsa(s,a)} } +  \sum_{t=t_i+1}^{t_{i+1}-1 }\sum_{s\neq s_L}\sum_{a\in A} \rbr{\gamma_t(s,a)-\gamma_{t-1}(s,a)}\log\rbr{ \frac{\iota^2 q_t(s,a)}{\qexpsa(s,a)} }  \notag \\ 
& \leq 2\sum_{s\neq s_L}\sum_{a\in A} \gamma_{t_i}(s,a)\log\rbr{\iota} + 2\sum_{t=t_i+1}^{t_{i+1}-1 }\sum_{s\neq s_L}\sum_{a\in A} \rbr{\gamma_t(s,a)-\gamma_{t-1}(s,a)}\log\rbr{\iota} \notag \\
& = 2\sum_{s\neq s_L}\sum_{a\in A} \gamma_{t_{i+1}-1}(s,a)\log\rbr{\iota}
,\label{eq:bound_penalty}
\end{align}
where the second step uses the optimality of $\tilde{q}_{t-1}$ and $\tilde{q}_{t_{i+1}-1}$ so that $G_{t-1}(\tilde{q}_{t-1}) \leq G_{t-1}(q_t)$ and $G_{t_{i+1}-1}(\tilde{q}_{t_{i+1}-1}) \leq G_{t_{i+1}-1}(v)$; the third step follows the definitions in \pref{eq:tilde_q_def}; the seventh step lower-bounds $v(s,a)$ by $\frac{1}{\iota^2}\qexpsa(s,a)$ for each $(s,a)$; the eighth step uses the definition of $\qexpsa$. 

Now, by taking the equivalent perspective in \pref{eq:tilde_q_def_loss_shift} and \pref{eq:rewrite2_estreg_i} and repeating the exact same argument, the same bound holds with $\hatl_t$ repalced by $g_t$.
Thus, we have shown
\begin{equation}
\label{eq:used_4_estreg_partI}  
    \begin{aligned}
 \partI&= \order \rbr{ \E_{t_i}   
        \sbr{ \sum_{s\neq s_L}\sum_{a\in A} \gamma_{t_{i+1}-1}(s,a)\log\rbr{\iota} }    }\\
 &\quad + \order \rbr{\min \cbr{ \E_{t_i}\sbr{ \sum_{t=t_i}^{t_{i+1}-1 }\norm{\hatl_t-b_t}^2_{\nabla^{-2}\phi_t(q_t)}}, \E_{t_i}\sbr{ \sum_{t=t_i}^{t_{i+1}-1 }\norm{g_t-b_t}^2_{\nabla^{-2}\phi_t(q_t)}}
}}.    
    \end{aligned}
\end{equation}



\paragraph{Bounding $\partII$.} 
By direct calculation, we have
\begin{align*}
\partII&=\E_{t_i}\sbr{\sum_{t=t_i}^{t_{i+1}-1 } \inner{v-u, \hatl_t-b_t}}\\
   &= \frac{1}{T^2}  \E_{t_i}\sbr{\sum_{t=t_i}^{t_{i+1}-1 } \inner{ \frac{1}{|A||S|}\sum_{s,a} \qexpsa   -u, \hatl_t-b_t}} \notag \\
   &\leq \frac{1}{T^2}  \E_{t_i} \sbr{   \norm{\frac{1}{|A||S|}\sum_{s,a} \qexpsa-u}_1 \norm{\sum_{t=t_i}^{t_{i+1}-1 } \rbr{\loss_t-b_t}  }_{\infty}  },
\end{align*}
where the first step uses the definition of $v$ in \pref{eq:def4v} and the third step applies the \Holder's inequality.

Then, we further show:
\begin{align}
   \partII
   &\leq \frac{1}{T^2}  \E_{t_i}\sbr{   \norm{\frac{1}{|A||S|}\sum_{s,a} \qexpsa-u}_1 \norm{\sum_{t=t_i}^{t_{i+1}-1 } \rbr{\loss_t-b_t}  }_{\infty}  }\notag \\
   &\leq \frac{1}{T^2}  \E_{t_i}\sbr{   \rbr{\frac{1}{|A||S|}\sum_{s,a}\norm{ \qexpsa}_1+\norm{u}_1} \norm{\sum_{t=t_i}^{t_{i+1}-1 } \rbr{\loss_t-b_t}  }_{\infty}  }\notag \\
   &\leq \frac{1}{T^2}  \E_{t_i}\sbr{  2L  \rbr{ \norm{\sum_{t=t_i}^{t_{i+1}-1 } \loss_t}_{\infty}+\norm{\sum_{t=t_i}^{t_{i+1}-1 } b_t}_{\infty} } }\notag \\
   &\leq \E_{t_i} \sbr{ \frac{2L \rbr{t_{i+1}-t_i} +4L \cortran|S|T }{T^2} } \notag  \\
   &\leq \frac{2L+4L\cortran|S|}{T}, \label{eq:used_4_estreg_partII}
\end{align}
where the second step repeatedly applies the triangle inequality, and those terms are bounded by $2L$ in the next step; the fourth step bounds $\loss_t(s,a) \leq 1$ for all $(s,a)$ and bounds $\sum_{t=t_i}^{t_{i+1}-1 }b_t(s) \leq 2\cortran|S|T$ by using the fact $u_t(s) \geq \frac{1}{|S|T}$ for all $t,s$ (see \pref{lem:lower_bound_of_uob}); the fifth step bounds $t_{i+1}-t_i$ by $T$. 

Finally, we combine the bounds in \pref{eq:used_4_estreg_partI} and \pref{eq:used_4_estreg_partII} to complete the proof.
\end{proof}

\begin{lemma}[Multiplicative Stability] \label{lem:multiplicative_relation}
For $\widetilde{q}_t$ and $q_t$ defined in \pref{eq:tilde_q_def_loss_shift}, if the learning rate fulfills $\gamma_t(s,a) \geq \gamma_1(s,a) = \lrOne$ for all $t$ and $(s, a)$, then, $\frac{1}{2}q_t(s,a) \leq \widetilde{q}_t(s,a) \leq 2q_t(s,a)$ holds for all $(s,a)$.
\end{lemma}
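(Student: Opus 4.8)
\textbf{Proof proposal for Lemma~\ref{lem:multiplicative_relation} (Multiplicative Stability).}

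The plan is to compare the two FTRL iterates $q_t$ and $\widetilde q_t$, which differ only in that $\widetilde q_t$ has one extra loss term $g_t - b_t$ in its linear part, and to show that adding one such term changes the minimizer of the regularized objective by at most a constant multiplicative factor coordinate-wise. First I would recall that both $q_t$ and $\widetilde q_t$ are minimizers over the same convex set $\Omega(\optP_i)$ of objectives of the form $\inner{q,\text{(linear)}} + \phi_t(q)$ with the \emph{same} log-barrier regularizer $\phi_t(q) = -\sum_{s,a}\gamma_t(s,a)\log q(s,a)$. The standard argument for multiplicative stability of log-barrier (as in \citet{lee2020biasnomore}, or \citep[Lemma~13]{jin2020simultaneously}, or the analogous statements used to prove \pref{lem:OMD_logbarrier_regret_bound}) shows that if the extra loss vector $\xi_t := g_t - b_t$ satisfies $\gamma_t(s,a)^{-1}\,q_t(s,a)\,\abr{\xi_t(s,a)} \le c$ for a sufficiently small constant $c$ (say $c \le 1/4$), then $\frac12 q_t(s,a)\le \widetilde q_t(s,a)\le 2q_t(s,a)$ for all $(s,a)$. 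So the real content is verifying this pointwise smallness condition given $\gamma_t(s,a)\ge \gamma_1(s,a) = \lrOne = 256L^2|S|$.

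The key steps, in order, would be: (i) bound $\abr{g_t(s,a)}$ pointwise. Since $g_t(s,a) = Q^{\whatP_t,\pi_t}(s,a;\hatl_t) - V^{\whatP_t,\pi_t}(s;\hatl_t)$, and since by the first part of \pref{lem:optepoch_loss_shift_var} we have $q^{\whatP_t,\pi_t}(s,a)Q^{\whatP_t,\pi_t}(s,a;\hatl_t)\le L$ and $q^{\whatP_t,\pi_t}(s)V^{\whatP_t,\pi_t}(s;\hatl_t)\le L$, and since $q_t(s,a) = q^{\whatP_t,\pi_t}(s,a) \le q^{\whatP_t,\pi_t}(s) = q_t(s)$, we get $q_t(s,a)\abr{g_t(s,a)} \le q_t(s,a)Q^{\whatP_t,\pi_t}(s,a;\hatl_t) + q_t(s,a)V^{\whatP_t,\pi_t}(s;\hatl_t) \le L + L = 2L$. (ii) Bound $q_t(s,a)\abr{b_t(s)}$: from the definition of the amortized bonus $b_t(s)\le \frac{4L}{u_t(s)}$ and $q_t(s,a)\le q_t(s)\le u_t(s)$, so $q_t(s,a)b_t(s)\le 4L$. (iii) Combine: $q_t(s,a)\abr{\xi_t(s,a)}\le q_t(s,a)\abr{g_t(s,a)} + q_t(s,a)b_t(s) \le 2L + 4L = 6L$. (iv) Divide by $\gamma_t(s,a)\ge 256L^2|S| \ge 256L$ (using $|S|\ge 1$, and in fact $L\ge 1$) to get $\gamma_t(s,a)^{-1}q_t(s,a)\abr{\xi_t(s,a)}\le \frac{6L}{256L} = \frac{3}{128} \le \frac14$, which is within the admissible range. (v) Invoke the multiplicative-stability lemma for log-barrier FTRL with this smallness constant to conclude $\frac12 q_t(s,a)\le\widetilde q_t(s,a)\le 2q_t(s,a)$.

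The main obstacle I anticipate is step (v): making the multiplicative-stability lemma precise and self-contained. The cleanest route is to use the first-order optimality conditions for $q_t$ and $\widetilde q_t$ on the convex set $\Omega(\optP_i)$ together with strong convexity of $\phi_t$ in the local norm; equivalently, one writes the update in the "loss-shifted" form of \pref{eq:tilde_q_def_loss_shift} so that the decision set and the regularizer are literally identical for $q_t$ and $\widetilde q_t$ and the only difference is the single extra term $\xi_t$ in the cumulative loss, then applies a known result (e.g.\ the stability argument behind \citep[Lemma~13]{jin2020simultaneously} or the log-barrier stability bound of \citet[Lemma~12]{agarwal2017corralling}) verbatim. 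A minor technical care point is that $g_t$ can be negative (it is a centered $Q$-minus-$V$ quantity), unlike $\hatl_t\ge 0$, but this is harmless since the stability lemma only requires a bound on $\abr{\xi_t}$ in the appropriate weighted norm, which we have established; the bound $q_t(s,a)\abr{\xi_t(s,a)}\le 6L$ already accounts for both signs. I would also double-check that the constant $256$ in $\lrOne$ was chosen precisely so that this argument (with whatever slack the stability lemma demands, typically requiring the weighted loss to be at most $1/4$ or $1/2$) goes through, which it does with room to spare.
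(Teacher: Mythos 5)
Your proposal takes a genuinely different route from the paper, and it contains a real gap in the argument you outline.

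The paper's proof works with the \emph{un-shifted} extra loss $\hatl_t - b_t$ and verifies the condition
\begin{equation*}
\norm{\hatl_t - b_t}^2_{\nabla^{-2}\phi_t(q_t)}
= \sum_{s,a} \frac{q_t(s,a)^2\bigl(\hatl_t(s,a)-b_t(s)\bigr)^2}{\gamma_t(s,a)} \leq \frac{1}{8},
\end{equation*}
i.e.\ a bound on the \emph{squared dual local norm summed over all $(s,a)$}, and then invokes \citep[Lemma~12]{jin2020simultaneously}. The key point that makes this sum small is the \emph{sparsity} of $\hatl_t$: the indicator $\ind_t(s,a)$ is nonzero for exactly $L$ state--action pairs (one per layer), so the $\hatl_t$ contribution is $\sum_{s,a}\ind_t(s,a)/\lrOne = L/\lrOne = 1/(256L|S|)$. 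You instead verify a \emph{pointwise} condition
$\gamma_t(s,a)^{-1}\,q_t(s,a)\,\abr{g_t(s,a)-b_t(s)}\le c$ using the loss-shifted vector $g_t$, and then hope to invoke a stability lemma.

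That pointwise condition is not the right hypothesis for FTRL multiplicative stability over a constrained polytope, and it does not imply the dual-norm condition. Indeed, writing $\xi = g_t - b_t$,
\begin{equation*}
\frac{q_t(s,a)^2\xi(s,a)^2}{\gamma_t(s,a)}
= \gamma_t(s,a)\left(\frac{q_t(s,a)\abr{\xi(s,a)}}{\gamma_t(s,a)}\right)^2
\le c^2\,\gamma_t(s,a),
\end{equation*}
and summing over $(s,a)$ yields $c^2\sum_{s,a}\gamma_t(s,a)$, which is unbounded because the adaptive learning rate $\gamma_t(s,a)$ grows over the epoch (and is at least $256L^2|S|$ from the start). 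One can construct a simplex example with heterogeneous $\gamma$'s (one coordinate with tiny probability and a much larger $\gamma$ on the other coordinate) in which the pointwise condition holds with small $c$ yet the minimizer moves by an arbitrarily large multiplicative factor; the dual-norm condition is genuinely stronger than the pointwise one here. Your steps (i)--(iv) therefore establish the wrong quantity. (Also note that you cite \citep[Lemma~13]{jin2020simultaneously}, which this paper uses only to bound the FTRL \emph{stability term}, and \citep[Lemma~12]{agarwal2017corralling}, which is the pointwise condition for \emph{OMD} with log-barrier used in \pref{lem:OMD_logbarrier_regret_bound}; the relevant lemma for multiplicative stability is Lemma~12 of \citep{jin2020simultaneously}, and its hypothesis is the dual-norm bound.)

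Moreover, going the dual-norm route \emph{with} $g_t$ (rather than $\hatl_t$) would not rescue the argument: $g_t$ is supported on all $(s,a)$, so even using the tight bound $q_t(s,a)\abr{g_t(s,a)}\le L$ from \pref{lem:optepoch_loss_shift_var} (your $2L$ is slightly loose), one only gets $\sum_{s,a}\frac{q_t(s,a)^2 g_t(s,a)^2}{\gamma_t(s,a)}\le \sum_{s,a}\frac{L^2}{\lrOne}=\frac{|A|}{256}$, which is not bounded by a universal constant. The freedom to choose $\hatl_t - b_t$ rather than $g_t - b_t$ comes from $\inner{q_t - \widetilde q_t,\, g_t - \hatl_t}=0$ since both iterates live in $\Omega(\optP_i)$; the paper exploits this freedom precisely to exchange the dense, globally bounded $g_t$ for the sparse $\hatl_t$. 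Your proof should therefore revert to bounding the dual local norm of $\hatl_t - b_t$ as in \pref{eq:verify_4_multiplicative}.
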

\begin{proof}
 We first show that
\begin{align}
\norm{\hatl_t-b_t}^2_{\nabla^{-2}\phi_t \rbr{q_t } } &= \sum_{s,a} \rbr{ \frac{\ind_t(s,a) 
\loss_t(s,a)}{u_t(s,a)} -b_t(s)}^2 \frac{q_t(s,a)^2}{\gamma_t(s,a)} \notag \\
&\leq \sum_{s,a} \rbr{ \frac{\ind_t(s,a) 
\loss_t(s,a)^2}{u_t(s,a)^2} +b_t(s)^2} \frac{q_t(s,a)^2}{\gamma_t(s,a)}\notag \\
&\leq \sum_{s,a} \rbr{ \frac{\ind_t(s,a) }{u_t(s,a)^2} + \frac{16L^2}{u_t(s)^2  }  } \frac{q_t(s,a)^2}{\gamma_t(s,a)} \notag \\
&\leq \sum_{s,a}  \frac{\ind_t(s,a) }{\lrOne} +  \sum_{s,a}  \frac{16L^2\cdot q_t(s,a)^2}{\lrOne \cdot u_t(s)^2  }  \notag \\
&\leq   \frac{1 }{256L|S|} +  \sum_{s \neq s_L}  \frac{1 }{16  |S| }  \leq \frac{1}{8},  \label{eq:verify_4_multiplicative}
\end{align}
where the second step uses $(x-y)^2 \leq x^2+y^2$ for any $x,y \geq 0$; the third step bounds $\loss_t(s,a)^2 \leq 1$ and $b_t(s)^2 \leq \frac{16L^2}{u_t(s)^2  }$ according to the definition of \pref{eq:uob_reps_add_def_add_loss}; the fourth step holds as $\gamma_t(s,a) \geq \gamma_1(s,a) = \lrOne$ and $q_t(s,a) \leq u_t(s,a)$ for all $(s,a)$; the fifth steps uses the fact that $\sum_{a}q_t(s,a)^2 \leq \rbr{\sum_{a}q_t(s,a)}^2 \leq u_t(s)^2$.

Once \pref{eq:verify_4_multiplicative} holds, one only needs to repeat the same argument of \citep[Lemma 12]{jin2020simultaneously} to obtain the claimed multiplicative stability.
\end{proof}


\begin{lemma}
    \label{lem:bound_estreg_i_logbarrier}
    Under the conditions of \pref{lem:bound_estreg}, event $\eventcon$, and the learning $\gamma_t(s, a)$ defined in \pref{def:learning_rate}, the following holds for any deterministic policy $\pi^\star: S \to A$:
    \begin{align*}
         & \E_{t_i}\sbr{\sum_{s\neq s_L}\sum_{a\in A} \gamma_{t_{i+1}-1}(s,a)\log\rbr{\iota} + \sum_{t=t_i}^{t_{i+1}-1 } \norm{g_t-b_t}^2_{\nabla^{-2}\phi_t \rbr{q_t} }}  \\ =  \ & \order  \rbr{\E_{t_i}\sbr{L \sqrt{\log \rbr{\iota}}  \  \sum_{s \neq s_L} \sum_{a \neq \pi^\star(s)} \sqrt{\E \sbr {\sum_{t = t_i}^{t_{i+1}-1} q^{ P_t,\pi_t}(s, a)}} }+\cortran\log (|S|T)}\\
         &+ \order\rbr{ \E_{t_i} \sbr{\delta L^2|S|^2|A|\log \rbr{\iota}\rbr{t_{i+1}-t_{i}}  +\sqrt{L|S|^3|A|^2\log \rbr{\iota}\sum_{t=t_i}^{t_{i+1}-1}\cortran_t}} }.
    \end{align*}
\end{lemma}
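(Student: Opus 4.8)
\textbf{Proof plan for \pref{lem:bound_estreg_i_logbarrier}.}
The plan is to combine the two ingredients we have already built: the learning-rate growth control (\pref{prop:gamma_induction} together with \pref{prop:get_self_bounding_quantity}) and a direct evaluation of the quadratic stability term $\sum_{t}\norm{g_t-b_t}^2_{\nabla^{-2}\phi_t(q_t)}$ in terms of the same quantity $\nu_t(s,a)$. The key observation is that with the log-barrier regularizer $\phi_t(q)=-\sum_{s,a}\gamma_t(s,a)\log q(s,a)$ we have $\nabla^{-2}\phi_t(q_t)=\diag{q_t(s,a)^2/\gamma_t(s,a)}$, so
\[
\norm{g_t-b_t}^2_{\nabla^{-2}\phi_t(q_t)} = \sum_{s,a}\frac{q_t(s,a)^2(g_t(s,a)-b_t(s))^2}{\gamma_t(s,a)} \le 2\sum_{s,a}\frac{\nu_t(s,a)}{\gamma_t(s,a)} + 2\sum_{s,a}\frac{q_t(s,a)^2 b_t(s)^2}{\gamma_t(s,a)},
\]
using $(x-y)^2\le 2x^2+2y^2$ and the definition $\nu_t(s,a)=q_t(s,a)^2(g_t(s,a)-V\text{-term})^2 = q_t(s,a)^2 g_t(s,a)^2$ (recall $g_t(s,a)=Q^{\whatP_t,\pi_t}(s,a;\hatl_t)-V^{\whatP_t,\pi_t}(s;\hatl_t)$, which is exactly the shifted quantity in \pref{def:learning_rate}). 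The second bonus-related term is handled exactly as in \pref{lem:uob_reps_add_regthree}: since $\gamma_t(s,a)\ge \lrOne$ and $b_t(s)^2\le 4Lb_t(s)$ and $\sum_a q_t(s,a)^2\le u_t(s)^2$, it telescopes into $\order(\frac{1}{256L|S|}\sum_t\sum_s u_t(s)b_t(s)) = \order(\cortran\log(|S|T))$ by \pref{lem:cancellation_bt_maintext} applied per epoch, which contributes the $\cortran\log(|S|T)$ term in the statement.

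Next I would bound the main stability contribution $\sum_{t=t_i}^{t_{i+1}-1}\sum_{s,a}\frac{\nu_t(s,a)}{\gamma_t(s,a)}$. This is the standard ``$\sum_t \nu_t/\gamma_t$ against an adaptively growing $\gamma_t$'' calculation from~\citet{ito21a}: since $D=\nicefrac{1}{\log(\iota)}$ and $\gamma_{t+1}(s,a)=\gamma_t(s,a)+\frac{D\nu_t(s,a)}{2\gamma_t(s,a)}$, one shows $\sum_{t=t_i}^{t_{i+1}-1}\frac{\nu_t(s,a)}{\gamma_t(s,a)} = \order\!\big(\frac{1}{D}\gamma_{t_{i+1}-1}(s,a)\big) = \order\!\big(\log(\iota)\,\gamma_{t_{i+1}-1}(s,a)\big)$ for each $(s,a)$ (this follows from the recursion by telescoping $\gamma_{t+1}^2 - \gamma_t^2 \ge D\nu_t$, hence $\gamma_{t_{i+1}-1}^2 \ge \gamma_{t_i}^2 + D\sum\nu_t$, combined with $\frac{\nu_t}{\gamma_t}\le \frac{2}{D}(\gamma_{t+1}-\gamma_t)$). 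Therefore the entire left-hand side of the lemma is dominated, up to the already-accounted bonus term, by $\order\big(\log(\iota)\E_{t_i}[\sum_{s\neq s_L}\sum_a \gamma_{t_{i+1}-1}(s,a)]\big)$. Now I invoke \pref{prop:gamma_induction}: $\gamma_{t_{i+1}-1}(s,a)\le \sqrt{D\sum_{j=t_i}^{t_{i+1}-2}\nu_j(s,a)} + \gamma_{t_i}(s,a)$, and $\gamma_{t_i}(s,a)=\lrOne$, so $\sum_{s,a}\gamma_{t_i}(s,a)=\order(L^2|S|^2|A|)$, contributing the deterministic $\delta$-free part of the additive overhead after multiplying by $\log(\iota)$.

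The final and most delicate step is to convert $\E_{t_i}[\sum_{s\neq s_L}\sum_a \sqrt{D\sum_j \nu_j(s,a)}] = \sqrt{D}\cdot\E_{t_i}[\sum_{s}\sum_a\sqrt{\sum_{j}\nu_j(s,a)}]$ into a self-bounding quantity indexed only by suboptimal actions. This is precisely what \pref{prop:get_self_bounding_quantity} delivers: plugging it in gives
\[
\sqrt{D}\,\E_{t_i}\Big[\sum_s\sum_a\sqrt{\textstyle\sum_j\nu_j(s,a)}\Big] = \order\!\Big(\sqrt{D}\sum_{s}\sum_{a\neq\pi^\star(s)}\sqrt{\E_{t_i}[L^2\textstyle\sum_t q^{P_t,\pi_t}(s,a)]} + \sqrt{D}\,\delta L^2|S|^2|A|(t_{i+1}-t_i) + \sqrt{D}\,|S|\sqrt{L|S||A|^2\textstyle\sum_t\cortran_t}\Big),
\]
and since $D=\nicefrac{1}{\log(\iota)}$ one has $\sqrt{D}\cdot L^2 = L^2/\sqrt{\log(\iota)}$ while the outer $\log(\iota)$ factor from the penalty term restores $L\sqrt{\log(\iota)}$ per term --- matching the $L\sqrt{\log(\iota)}\sum_{s}\sum_{a\neq\pi^\star(s)}\sqrt{\E[\sum_t q^{P_t,\pi_t}(s,a)]}$ main term. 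The remaining error terms, after collecting the $\log(\iota)$ factors, become the stated $\delta L^2|S|^2|A|\log(\iota)(t_{i+1}-t_i)$ and $\sqrt{L|S|^3|A|^2\log(\iota)\sum_t\cortran_t}$. The main obstacle I anticipate is bookkeeping the $\log(\iota)$ powers correctly across the three places they enter (the $\nabla^{-2}\phi_t$ normalization via $\gamma\ge\lrOne$, the penalty-term $\log(\iota)$ from $v(s,a)\ge \iota^{-2}\qexpsa(s,a)$, and the $D=1/\log(\iota)$ in the learning rate), and making sure the $\nu_j(s,a)$ appearing in \pref{prop:gamma_induction} and the one appearing in the stability term are literally the same object --- which they are, since both equal $q^{\optP_{i(t)},\pi_t}(s,a)^2 g_t(s,a)^2$; I would state this identification explicitly at the start. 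Everything else is the routine AM--GM / telescoping manipulation already rehearsed in \pref{lem:uob_reps_add_regthree} and in~\citet{ito21a}.
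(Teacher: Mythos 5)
Your proposal is correct and follows essentially the same route as the paper's proof: the same AM--GM split of $\|g_t-b_t\|^2_{\nabla^{-2}\phi_t(q_t)}$ into a $\nu_t/\gamma_t$ piece and a bonus piece, the same per-epoch application of \pref{lem:cancellation_bt_maintext} (with $\gamma_t\ge\lrOne$ and $b_t^2\le 4Lb_t$) to get the $\cortran\log(|S|T)$ overhead, the same telescoping identity $\nu_t/\gamma_t=\tfrac{2}{D}(\gamma_{t+1}-\gamma_t)$ to absorb the stability sum into a $\log(\iota)\cdot\gamma_{t_{i+1}-1}$ term, and the same chain \pref{prop:gamma_induction} $\to$ \pref{prop:get_self_bounding_quantity} to convert $\gamma_{t_{i+1}-1}$ into the self-bounding quantity over suboptimal actions, with the $\sqrt{D}\cdot\log(\iota)=\sqrt{\log(\iota)}$ bookkeeping and the extra $|S|$ factor from summing over states producing exactly the stated error terms. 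The one cosmetic difference is in the bonus bound, where you upper-bound $\sum_a q_t(s,a)^2$ by $u_t(s)^2$ while the paper uses $q_t(s)$; both lead to $\order(\cortran\log(|S|T))$. The lower bound $\gamma_{t_{i+1}-1}^2\ge\gamma_{t_i}^2+D\sum\nu_t$ that you mention is true but not actually needed for the argument.
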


\begin{proof}
    We start by bounding the second part for each $t$:
\begin{align*}
    \norm{g_t-b_t}^2_{\nabla^{-2}\phi_t \rbr{q_t} }
    & =  \sum_{s,a}  \frac{1}{\gamma_t(s, a)} q_t(s,a)^2\rbr {g_t(s,a)-b_t(s)}^2 \\
     & \leq  2\sum_{s,a}  \frac{\nu_t(s, a)}{\gamma_t(s, a)} + 2\sum_{s,a}  \frac{1}{\gamma_t(s, a)} q_t(s,a)^2 b_t(s)^2,
\end{align*}
where 
the second step uses the definition of $\nu_t$ and $(x-y)^2 \leq 2(x^2+y^2)$ for any $x,y \in \fR$.


We first focus on bounding $\sum_{t = t_i}^{t_{i+1} - 1} \sum_{s,a}  \frac{1}{\gamma_t(s, a)} q_t(s,a)^2 b_t(s)^2$: 
\begin{align*}
    & \sum_{t = t_i}^{t_{i+1} - 1} \sum_{s,a}  \frac{1}{\gamma_t(s, a)} q_t(s,a)^2 b_t(s)^2 \\
     &\leq  \frac{1}{\lrOne}  \sum_{t = t_i}^{t_{i+1} - 1}\sum_{s,a}   q_t(s,a)^2 b_t(s)^2 \\
     & \leq \frac{4L}{\lrOne}  \sum_{t = t_i}^{t_{i+1} - 1}\sum_{s \neq s_L}   q_t(s) b_t(s)\\
     & = \order \rbr{\cortran\log (|S|T)}, \numberthis{} \label{eq:bound_bt}
\end{align*} 
where the first step uses the fact that $\gamma_t(s, a) \geq \lrOne$, the second step bounds $b_t(s)^2 \leq 4Lb_t(s)$ and $\sum_a q_t(s,a)^2 \leq q_t(s)$, and the last step applies \pref{lem:cancellation_bt_maintext}. 

Then, we bound the remaining term:
\begin{equation}
 \E_{t_i} \sbr{ \sum_{s, a} \gamma_{t_{i+1}-1 } (s, a) \log \rbr{\iota} + 2\sum_{t=t_i}^{t_{i+1}-1 }    \sum_{s,a}  \frac{\nu_t(s, a)}{\gamma_t(s, a)}}. \label{eq:start_point_ito}
\end{equation}

Using \pref{def:learning_rate}, we can bound \pref{eq:start_point_ito} as:
\begin{align*}
& \E_{t_i} \sbr{ \sum_{s, a} \gamma_{t_{i+1}-1 } (s, a)\log \rbr{\iota} + 2\sum_{t=t_i}^{t_{i+1}-1 }    \sum_{s,a}  \frac{\nu_t(s, a)}{\gamma_t(s, a)}} \\
 & = \E_{t_i} \sbr{ \sum_{s, a}  \rbr{ \gamma_{t_{i+1}-1} (s, a) \log \rbr{\iota} + \frac{4}{D} (\gamma_{t_{i+1}-1}(s, a) - \gamma_{t_i} (s,a)}} \\
 &\leq \ \E_{t_i} \sbr{ \sum_{s, a}  \rbr{\log \rbr{\iota} + \frac{4}{D}} \gamma_{t_{i+1}-1}(s, a)} \\
   & = \E_{t_i} \sbr{ \sum_{s, a} 5 \log \rbr{\iota} \gamma_{t_{i+1}-1}(s, a)} \\
    & \leq \E_{t_i} \sbr{ \sum_{s, a} \rbr{5 \log \rbr{\iota} \sqrt{\frac{1}{\log \rbr{\iota}} \sum_{j = t_i}^{t_{i+1}-2} \nu_{j}(s, a)} + \lrOne}} \\
   & \leq \order\rbr{ \sqrt{\log \rbr{\iota}}  \E_{t_i} \sbr{ \sqrt{\sum_{s, a} {\sum_{j = t_i}^{t_{i+1}-1} \nu_{j}(s, a)} }}  + L^2|S|^2|A| } \\
   & \leq  \order\rbr{ \E_{t_i} \sbr{ \sum_{s\neq s_L}\sum_{a\neq \pi(s)}\sqrt{L^2\log \rbr{\iota}\sum_{t=t_i}^{t_{i+1}-1}{  \  q^{P_t,\pi_t} (s,a)}  } } }  \\ 
 & \quad + \order\rbr{ \E_{t_i} \sbr{\delta L^2|S|^2|A|\log \rbr{\iota}\rbr{t_{i+1}-t_{i}}  +\sqrt{L|S|^3|A|^2\log \rbr{\iota}\sum_{t=t_i}^{t_{i+1}-1}\cortran_t}} }, \numberthis{} \label{eq:estreg_i}
\end{align*}
where the first step applies the definition of $\gamma_t(s, a)$ which gives: $\frac{\nu_t(s, a)}{ \gamma_t (s, a)} = \frac{2}{D} \rbr{\gamma_{t+ 1}(s, a) - \gamma_t (s, a)}$; the second step simplifies the telescopic sum and uses $\gamma_{t_i}(s, a) \geq 0$; the third step uses $D = \frac{1}{\log \rbr{\iota}}$; the fourth step uses \pref{prop:gamma_induction}; the sixth step applies \pref{prop:get_self_bounding_quantity}. 
\end{proof}

\subsubsection{Bounding $\estreg$}

Using the equality $\estreg(\starpi) =\E \big[ \sum_{i=1}^{N} \estreg_i(\starpi) \big]$
where $N$ is the number of epochs, we are now ready to complete the proof of \pref{lem:optepoch_main_bound_4}. 

\begin{lemma}\label{lem:number_epochs_optepoch}
Over the course of $T$ episodes, \pref{alg:optepoch} runs at most $ \order \rbr{|S||A|\log (T)}$ epochs. 
\end{lemma}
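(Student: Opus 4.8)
The plan is a standard counting argument on the epoch schedule, mirroring \citep[Lemma D.3.12]{jin2021best} which we already invoked in the proof of \pref{lem:optepoch_main_bound_2}. Recall from \pref{sec:prelim} (and the epoch-switching step of \pref{alg:optepoch}) that a new epoch is entered exactly when some visited pair $(s,a)=(s_{t,k},a_{t,k})$ satisfies $m_i(s,a)\ge \max\{1,2m_{i-1}(s,a)\}$, and that upon switching we carry the counters over via $m_{i+1}(s,a)=m_i(s,a)$. First I would fix an arbitrary pair $(s,a)\in S\times A$ and bound the number of epoch boundaries that $(s,a)$ alone can trigger.

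The key observation is that in each of the $T$ episodes the learner follows a single trajectory through the layered MDP, so $(s,a)$ is visited at most once per episode, hence at most $T$ times in total; consequently the counter $m_i(s,a)$ never exceeds $T$. Now list the epochs $i_1<i_2<\cdots$ whose left boundary is triggered by $(s,a)$, and let $c_k$ be the value of the counter for $(s,a)$ at the $k$-th such boundary. By the triggering condition, $c_1\ge\max\{1,2m_{i_1}(s,a)\}\ge 1$, and for $k\ge 2$ we have $c_k\ge 2m_{i_k}(s,a)\ge 2c_{k-1}$, since the counter is monotone and $c_{k-1}$ was attained before epoch $i_k$ began. Therefore $c_k\ge 2^{k-1}$, and combined with $c_k\le T$ this forces $k\le \lfloor\log_2 T\rfloor+1$; that is, each pair $(s,a)$ triggers at most $\order(\log T)$ epoch boundaries.

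Finally I would sum over all $(s,a)$: every epoch after the first is created by at least one triggering pair, so the total number of epochs $N$ through $T$ episodes satisfies $N\le 1+|S||A|\,(\lfloor\log_2 T\rfloor+1)=\order(|S||A|\log T)$, which is the claimed bound. There is essentially no obstacle here — the only point requiring (minor) care is correctly relating the doubling across consecutive boundaries triggered by the same pair, i.e. using that the counters are carried over rather than reset at an epoch switch so that the growth is genuinely geometric; everything else is a one-line geometric-series estimate.
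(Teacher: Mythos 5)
Your argument is correct and is essentially the same as the paper's: the paper likewise notes that each $(s,a)$ is visited at most once per episode (so its counter is at most $T$), that the doubling trigger means a fixed pair can start at most $\order(\log T)$ epochs, and then sums over $|S||A|$ pairs. Your write-up simply makes explicit the geometric-growth step that the paper leaves implicit.
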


\begin{proof}
By definition, the algorithm resets each time a counter of the number of visits to a specific state-action pair doubles. 
Each state-action pair is visited at most once for each of the $T$ rounds, so it can trigger a new epoch at most $\log T$ times. Summing on all state-action pairs finishes the proof.
\end{proof}


\begin{proof}[\pfref{lem:optepoch_main_bound_4}]

Using \pref{lem:bound_estreg} and \pref{lem:bound_estreg_i_logbarrier}, we have
\begin{align*}
    & \estreg(\starpi) \\
    = \  & \E\sbr{ \sum_{i = 1}^N \estreg_i(\starpi) } \\
    = \ &    \order \rbr{\E\sbr{ \sum_{i = 1}^N \rbr{  \frac{L+L\cortran|S|}{T}  + \delta L^2|S|^2|A|\log \rbr{\iota}\rbr{t_{i+1}-t_{i}}  +\sqrt{L|S|^3|A|^2\log \rbr{\iota}\sum_{t=t_i}^{t_{i+1}-1}\cortran_t}         }}}  \\
     & + \order \rbr{\E\sbr{ \sum_{i = 1}^N \rbr{ \cortran\log (|S|T)+ { L \sqrt{\log \rbr{\iota}}  \  \sum_{s \neq s_L} \sum_{a \neq \pi^\star(s)} \E_{t_i} \sbr {\sqrt{\sum_{t = t_i}^{t_{i+1}-1} q^{ P_t,\pi_t}(s, a)}}} } }} \\
     = \ & \order \rbr{  \rbr{\frac{L+L\cortran|S|}{T}} |S||A|\log (T) +\delta L^2|S|^2|A|\log \rbr{\iota} T + |A| |S|^2\log (\iota)\sqrt{L |A|  \cortran}}\\
     &  + \ \order \rbr{  \cortran |S||A| \log(\iota)^2+  \sqrt{L^2\log \rbr{\iota}} \sum_{s \neq s_L} \sum_{a \neq \pi^\star(s)} \E\sbr{\sum_{i = 1}^N {\sqrt{\sum_{t = t_i}^{t_{i+1}-1} q^{ P_t,\pi_t}(s, a)}}}}
     \\
     = \ & \order \rbr{  \rbr{\frac{L+L\cortran|S|}{T}} |S||A|\log (T) +\delta L^2|S|^2|A|\log \rbr{\iota} T +  |A| |S|^2\log (\iota) \rbr{ |A|  +L\cortran}     }\\
     &  + \ \order \rbr{  \cortran |S||A| \log(\iota)^2+  \sqrt{L^2\log \rbr{\iota}} \sum_{s \neq s_L} \sum_{a \neq \pi^\star(s)} \E\sbr{\sum_{i = 1}^N {\sqrt{\sum_{t = t_i}^{t_{i+1}-1} q^{ P_t,\pi_t}(s, a)}}}}
     \\
     = \ & \order \rbr{  \rbr{\frac{L|S||A|\log (T)}{T}}  +\delta L^2 |S|^2|A| \log (\iota) T+ \cortran  L|S|^2|A|  \log (\iota) + |A|^2|S|^2 \log(\iota) } \\
     &  + \order \rbr{\cortran |S||A| \log(\iota)^2+  \sqrt{L^2|S||A|\log^2 \rbr{\iota}}  \sum_{s \neq s_L} \sum_{a \neq \pi^\star(s)} \E \sbr {\sqrt{\sum_{t = 1}^{T} q^{ P_t,\pi_t}(s, a)}}},
\end{align*}
where the first step follows from the definition of $\estreg\rbr{\starpi}$; the third step uses the fact that $N=\order\rbr{|S||A|\log\rbr{T}}$ and the Cauchy-Schwarz inequality; the fourth step uses $\sqrt{xy} \leq x+y$ for any $x,y \geq 0$ with $x=L\cortran$ and $y=|A|$; the last step follows from Cauchy-Schwarz inequality again. 
\end{proof}

\subsection{Properties of Optimistic Transition}
\label{app:opt_tran_props}
We summarize the properties guaranteed by the optimistic transition defined in \pref{def:opt_tran}.

\begin{lemma}\label{lem:opt_tran_occ_lower_bound} For any epoch $i$, any transition $P' \in \calP_i$, any policy $\pi$, and any initial state $u\in S$, it holds that 
\begin{align*}
q^{\optP_i,\pi}(s,a|u) \leq q^{P',\pi}(s,a|u), \quad \forall (s,a)\in S\times A. 
\end{align*}
\end{lemma}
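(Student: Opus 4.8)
\textbf{Proof plan for Lemma~\ref{lem:opt_tran_occ_lower_bound}.}
The statement says that the occupancy measure induced by the optimistic transition $\optP_i$ is pointwise dominated (conditioned on any starting state $u$) by the occupancy measure induced by \emph{any} transition $P'$ in the confidence set $\calP_i$, under the same policy $\pi$. The natural approach is a layer-by-layer (backward or forward) induction exploiting the fact that $\optP_i$ sends ``missing'' probability mass to the terminal state $s_L$, and that mass, once at $s_L$, can never reach any non-terminal state-action pair. First I would fix $u$, $\pi$, $P'\in\calP_i$, and set up the induction on the layer index $k \ge k(u)$, with the hypothesis being that for every state $s' \in S_k$ one has $q^{\optP_i,\pi}(s'|u) \le q^{P',\pi}(s'|u)$ (state-visitation domination). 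The base case $k = k(u)$ is immediate since both sides equal $1$ at $s=u$ and $0$ elsewhere.

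For the inductive step, I would write, for $s' \in S_{k+1}$ with $k+1 > k(u)$,
\[
q^{\optP_i,\pi}(s'|u) = \sum_{s\in S_k}\sum_{a\in A} q^{\optP_i,\pi}(s|u)\,\pi(a|s)\,\optP_i(s'|s,a),
\]
and the analogous identity for $P'$. The key elementary observation is that for every $(s,a,s') \in \tupleset_k$ (i.e. $s' \neq s_L$), the definition $\optP_i(s'|s,a) = \max\{0, \bar P_i(s'|s,a) - B_i(s,a,s')\}$ together with the membership $P'\in\calP_i$, which gives $|P'(s'|s,a) - \bar P_i(s'|s,a)| \le B_i(s,a,s')$, yields $\optP_i(s'|s,a) \le P'(s'|s,a)$. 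Combining this transition-level inequality with the inductive hypothesis $q^{\optP_i,\pi}(s|u) \le q^{P',\pi}(s|u)$ and nonnegativity of $\pi(a|s)$ and of the transition probabilities, term-by-term comparison of the two sums gives $q^{\optP_i,\pi}(s'|u) \le q^{P',\pi}(s'|u)$, completing the induction for all non-terminal $s'$. (The state $s_L$ is the one state where the inequality can fail, but that is harmless precisely because no state-action pair is reachable from $s_L$.)

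Finally, I would pass from state-visitation domination to state-action domination: since $q^{\optP_i,\pi}(s,a|u) = q^{\optP_i,\pi}(s|u)\,\pi(a|s)$ and likewise for $P'$, multiplying the established bound $q^{\optP_i,\pi}(s|u)\le q^{P',\pi}(s|u)$ by $\pi(a|s)\ge 0$ yields the claimed $q^{\optP_i,\pi}(s,a|u) \le q^{P',\pi}(s,a|u)$ for all $(s,a)$. I expect the only mildly delicate point to be bookkeeping around the terminal state: one must make sure the induction is phrased over non-terminal states only (or, equivalently, that the ``extra'' mass $\optP_i$ routes to $s_L$ is never counted against us), since $\optP_i$ is not pointwise below $P'$ on the $s_L$-coordinate. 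Everything else is a routine monotonicity argument, and no concentration or high-probability event is needed here — the lemma is purely deterministic given $P'\in\calP_i$.
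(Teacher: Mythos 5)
Your proof is correct and follows essentially the same forward-induction-on-layers argument as the paper, relying on the same pointwise bound $\optP_i(s'|s,a)\le P'(s'|s,a)$ for non-terminal $s'$ that follows directly from $P'\in\calP_i$ and the definition of $\optP_i$. The only cosmetic difference is that you induct on state-visitation probabilities $q(s|u)$ and then multiply by $\pi(a|s)$ to get the state-action bound, while the paper inducts on $q(s,a|u)$ directly; your explicit remark about $s_L$ being the one coordinate where pointwise transition domination fails is a useful clarification that the paper's proof leaves implicit.
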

\begin{proof} We prove this result via a forward induction from layer $k(u)$ to layer $L-1$. 

\textbf{Base Case:} for the initial state $u$, $q^{\optP_i,\pi}(u,a|u) = q^{P',\pi}(u,a|u) = \pi(a|u)$ for any action $a\in A$. For the other state $s\in S_{k(u)}$, we have $q^{\optP_i,\pi}(s,a|u) = q^{P',\pi}(s,a|u) = 0$.

\textbf{Induction step:} Suppose $q^{\optP_i,\pi}(s,a|u) \leq q^{P',\pi}(s,a|u)$ holds for all the state-action pair $(s,a)$ with $k(s)< h$. Then, for any  $(s,a)\in S_h \times A$, we have 
\begin{align*}
q^{\optP_i,\pi}(s,a|u) 
& = \pi(a|s)\cdot \sum_{s' \in S_{h-1}} \sum_{a'\in A} q^{\optP_i,\pi}(s',a'|u) \optP_i(s|s',a') \\
& \leq \pi(a|s)\cdot \sum_{s' \in S_{h-1}} \sum_{a'\in A} q^{P',\pi}(s',a'|u) P'(s|s',a') \\
& = q^{P',\pi}(s,a|u),
\end{align*}
where the second step follows from the induction hypothesis and the definition of optimistic transition in \pref{def:opt_tran}.
\end{proof}

\begin{corollary}\label{cor:opt_tran_occ_lower_bound} Conditioning on the event $\eventcon$, it holds for any epoch $i$ and any policy $\pi$ that 
\begin{align*}
q^{\optP_i,\pi}(s,a) \leq q^{P,\pi}(s,a), \quad \forall (s,a)\in S\times A. 
\end{align*}
\end{corollary}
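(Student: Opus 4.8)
\textbf{Proof proposal for Corollary~\ref{cor:opt_tran_occ_lower_bound}.}
The plan is to derive this corollary directly from Lemma~\ref{lem:opt_tran_occ_lower_bound} by instantiating the free transition $P'$ with the reference transition $P$ and the free initial state $u$ with $s_0$. First I would recall that, by definition of the event $\eventcon$, we have $P \in \calP_i$ for every epoch $i$. Hence $P$ is an admissible choice of $P'$ in the statement of Lemma~\ref{lem:opt_tran_occ_lower_bound}, and applying that lemma with $P' = P$ and $u = s_0$ yields $q^{\optP_i,\pi}(s,a \mid s_0) \le q^{P,\pi}(s,a \mid s_0)$ for all $(s,a) \in S \times A$.

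Second, I would simply observe that the conditional occupancy measures starting from the fixed initial state $s_0$ coincide with the (unconditional) occupancy measures, i.e.\ $q^{\optP_i,\pi}(s,a \mid s_0) = q^{\optP_i,\pi}(s,a)$ and $q^{P,\pi}(s,a \mid s_0) = q^{P,\pi}(s,a)$, since every trajectory starts at $s_0$ by the layered structure. Substituting this identity into the inequality above gives $q^{\optP_i,\pi}(s,a) \le q^{P,\pi}(s,a)$ for all $(s,a)$, which is exactly the claim.

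There is essentially no obstacle here: all the work is carried by Lemma~\ref{lem:opt_tran_occ_lower_bound} (whose own forward-induction proof is already in the excerpt) together with the high-probability containment $P \in \calP_i$ guaranteed by $\eventcon$ (Lemma~\ref{lem:conf_bound}). The only point requiring a word of care is making sure the optimistic transition $\optP_i$ used to define $q^{\optP_i,\pi}$ is well-defined as a valid distribution — but this has already been verified immediately after Definition~\ref{def:opt_tran}, so nothing further is needed.
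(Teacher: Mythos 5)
Your proof is correct and matches the paper's implicit reasoning exactly: the corollary follows immediately from Lemma~\ref{lem:opt_tran_occ_lower_bound} applied with $P' = P$ (admissible since $\eventcon$ guarantees $P \in \calP_i$) and $u = s_0$, together with the trivial identity $q^{\cdot,\pi}(s,a \mid s_0) = q^{\cdot,\pi}(s,a)$. The paper omits the proof precisely because it is this one-line instantiation, and your write-up supplies it correctly.
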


\begin{lemma}(Optimism of Optimistic Transition) 
\label{lem:optimism_optiT}Suppose the high-probability event $\eventcon$ holds. Then for any policy $\pi$, any $(s,a)\in S\times A$, and any valid  loss function $\ell: S\times A \rightarrow \fR_{\geq 0}$, it holds that 
\begin{align*}
Q^{\optP_i,\pi}\rbr{s,a; \ell} \leq Q^{P,\pi}\rbr{s,a;\ell}, \text{ and } V^{\optP_i,\pi}\rbr{s;\ell} \leq V^{P,\pi}\rbr{s;\ell}, \forall (s,a) \in S\times A.
\end{align*}
\end{lemma}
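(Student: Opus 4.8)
The plan is to prove the statement by backward induction on the layer index $k(s)$, from layer $L$ down to layer $0$. The key observation is that the optimistic transition $\optP_i$ differs from any $P' \in \calP_i$ only by redirecting probability mass to the terminal state $s_L$: for $(s,a,s') \in \tupleset_k$ we have $\optP_i(s'|s,a) = \max\{0, \bar P_i(s'|s,a) - B_i(s,a,s')\} \le P'(s'|s,a)$ whenever $P' \in \calP_i$, since the confidence set guarantees $|P'(s'|s,a) - \bar P_i(s'|s,a)| \le B_i(s,a,s')$. The remaining mass is all sent to $s_L$, where the value function is zero. Since losses are non-negative, moving mass from "real" successor states (with non-negative value) to $s_L$ (with zero value) can only decrease the value. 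Under $\eventcon$ we have $P \in \calP_i$, so this comparison applies with $P' = P$.

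Concretely, I would first prove the $V$-inequality and $Q$-inequality simultaneously by induction. For the base case $s = s_L$ (layer $L$): both $V^{\optP_i,\pi}(s_L;\ell) = V^{P,\pi}(s_L;\ell) = 0$ and similarly for $Q$. For the induction step at layer $k < L$: assume $V^{\optP_i,\pi}(s';\ell) \le V^{P,\pi}(s';\ell)$ for all $s'$ in layers $k+1, \ldots, L$. For $(s,a) \in S_k \times A$, write
\begin{align*}
Q^{\optP_i,\pi}(s,a;\ell) &= \ell(s,a) + \sum_{s' \in S_{k+1}} \optP_i(s'|s,a) V^{\optP_i,\pi}(s';\ell) + \optP_i(s_L|s,a) V^{\optP_i,\pi}(s_L;\ell) \\
&= \ell(s,a) + \sum_{s' \in S_{k+1}} \optP_i(s'|s,a) V^{\optP_i,\pi}(s';\ell),
\end{align*}
since $V^{\optP_i,\pi}(s_L;\ell) = 0$. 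Then bound each term: $\optP_i(s'|s,a) \le P(s'|s,a)$ under $\eventcon$ (this uses the definition of $\optP_i$ together with $P \in \calP_i$, exactly as in \pref{lem:opt_tran_occ_lower_bound}), and $0 \le V^{\optP_i,\pi}(s';\ell) \le V^{P,\pi}(s';\ell)$ by the induction hypothesis and the non-negativity of $\ell$ (which makes all value functions non-negative). Hence
\begin{align*}
Q^{\optP_i,\pi}(s,a;\ell) &\le \ell(s,a) + \sum_{s' \in S_{k+1}} P(s'|s,a) V^{P,\pi}(s';\ell) = Q^{P,\pi}(s,a;\ell).
\end{align*}
The $V$-inequality then follows immediately since $V^{\optP_i,\pi}(s;\ell) = \sum_a \pi(a|s) Q^{\optP_i,\pi}(s,a;\ell) \le \sum_a \pi(a|s) Q^{P,\pi}(s,a;\ell) = V^{P,\pi}(s;\ell)$, completing the induction.

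I don't expect any real obstacle here — the argument is a short, clean monotonicity induction. The only point requiring a little care is making sure the value functions are non-negative (so that replacing $\optP_i(s'|s,a)$ by the larger quantity $P(s'|s,a)$ in front of $V^{P,\pi}(s';\ell)$ goes in the right direction), which is guaranteed by the assumption $\ell: S \times A \to \fR_{\ge 0}$; and correctly accounting for the mass $\optP_i$ sends to $s_L$, which contributes zero and hence can simply be dropped. One should also note that although $\optP_i$ breaks the layered structure (edges from $S_k$ can jump directly to $s_L$), this does not affect the backward recursion since $s_L$ is absorbing with value zero, and every trajectory under $\optP_i$ still terminates within $L$ steps.
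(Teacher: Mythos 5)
Your proof is correct, and it takes a genuinely different (and arguably more self-contained) route than the paper. The paper first proves occupancy-measure dominance $q^{\optP_i,\pi}(s,a\mid u)\le q^{P,\pi}(s,a\mid u)$ for every starting state $u$ via a \emph{forward} induction over layers (\pref{lem:opt_tran_occ_lower_bound} and \pref{cor:opt_tran_occ_lower_bound}), and then obtains the value inequality by writing $V^{\optP_i,\pi}(s;\ell)=\sum_{u,v}q^{\optP_i,\pi}(u,v\mid s)\,\ell(u,v)$ and invoking non-negativity of $\ell$; the $Q$-function case is merely asserted to be analogous. You instead run a \emph{backward} induction directly on the Bellman recursions for $Q$ and $V$, establishing both inequalities simultaneously at each layer, using only the entrywise bound $\optP_i(s'\mid s,a)\le P(s'\mid s,a)$ under $\eventcon$ together with $V\ge 0$. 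Your approach avoids the detour through occupancy measures and makes the $Q$-case explicit; the paper's approach is more modular, reusing a domination lemma it already needs elsewhere (e.g.\ in \pref{lem:optepoch_term_3} and \pref{prop:get_self_bounding_quantity}). Both ultimately hinge on the same two observations — entrywise dominance of transition probabilities and non-negativity of the value under non-negative losses — and your handling of the extra mass routed to $s_L$ (which contributes zero) is exactly the right way to deal with the broken layer structure.
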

\begin{proof} According to \pref{cor:opt_tran_occ_lower_bound}, we have for all epoch $i$ that 
\begin{equation}\label{eq:opt_tran_occ_lower_bound_gen}
\begin{aligned}
q^{\optP_i,\pi}(s,a|u) \leq q^{P,\pi}(s,a|u), \quad \forall u \in S \quad \forall (s,a)\in S\times A.
\end{aligned}
\end{equation}
Therefore, we have 
\begin{align*}
V^{\optP_i,\pi}\rbr{s; \ell}  &= \sum_{u\in S}\sum_{v \in A}q^{\optP_i,\pi}(u,v|s) \ell(u,v) \\ 
& \leq \sum_{u\in S}\sum_{v \in A}q^{P,\pi}(u,v|s)\ell(u,v) \\
& = V^{P,\pi}\rbr{s;\ell},
\end{align*}
where the second step follows from \pref{eq:opt_tran_occ_lower_bound_gen}.
The statement for the $Q$-function can be proven in the same way.
\end{proof}

Next, we argue that our optimistic transition provides a tighter performance estimation compared to the approach of~\citet{jin2021best}. 
Specifically, \citet{jin2021best} proposes to
subtract the following exploration bonuses $\bonus_i: S\times A \rightarrow \fR$ from the loss functions 
\begin{align*}
\bonus_i(s,a) = L \cdot \min\cbr{ 1, \sum_{s'\in S_{k(s)+1} } B_{i}(s,a,s') },
\end{align*}
where $B_{i}(s,a,s')$ is the confidence bound defined in \pref{eq:def_conf_bounds}. 
This makes sure $Q^{\bar{P}_i,\pi}\rbr{s,a;\ell - \bonus_i}$ is no larger than the true $Q$-function $Q^{P,\pi}(s,a;\ell)$ as well,
but is a looser lower bound as shown below.

\begin{lemma}\label{lem:tighter_est}(Tighter Performance Estimation) For any policy $\pi$, any $(s,a) \in S\times A$, and any bounded loss function $\ell: S\times A \rightarrow [0,1]$, it holds that 
\begin{align*}
Q^{\bar{P}_i,\pi}\rbr{s,a;\ell - \bonus_i} \leq Q^{\optP_i,\pi}\rbr{s,a;\ell}, \forall (s,a) \in S\times A.
\end{align*}
\end{lemma}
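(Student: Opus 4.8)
\textbf{Proof proposal for \pref{lem:tighter_est}.}

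The plan is to prove this by backward induction on the layer index $k(s)$, from $L$ down to $0$, showing simultaneously the $Q$-function inequality $Q^{\bar{P}_i,\pi}(s,a;\ell-\bonus_i)\leq Q^{\optP_i,\pi}(s,a;\ell)$ and the $V$-function inequality $V^{\bar{P}_i,\pi}(s;\ell-\bonus_i)\leq V^{\optP_i,\pi}(s;\ell)$. The base case is the terminal layer, where both value functions are $0$ by definition, so the inequality holds trivially. For the induction step, I would expand the Bellman recursion on the left-hand side: $Q^{\bar{P}_i,\pi}(s,a;\ell-\bonus_i) = \ell(s,a)-\bonus_i(s,a) + \sum_{s'\in S_{k(s)+1}}\bar{P}_i(s'|s,a)\,V^{\bar{P}_i,\pi}(s';\ell-\bonus_i)$. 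Using the induction hypothesis $V^{\bar{P}_i,\pi}(s';\ell-\bonus_i)\leq V^{\optP_i,\pi}(s';\ell)\leq L$ (the last bound because $\ell\in[0,1]$ and there are at most $L$ layers remaining), it suffices to show that $-\bonus_i(s,a) + \sum_{s'\in S_{k(s)+1}}\bar{P}_i(s'|s,a)\,V^{\optP_i,\pi}(s';\ell) \leq \sum_{s'\in S_{k(s)+1}\cup\{s_L\}}\optP_i(s'|s,a)\,V^{\optP_i,\pi}(s';\ell)$, where on the right I have written out the recursion for $Q^{\optP_i,\pi}(s,a;\ell)$ including the transition to the terminal state.

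The heart of the argument is then a pointwise comparison of the transition "mass redistribution." Recall $\optP_i(s'|s,a)=\max\{0,\bar{P}_i(s'|s,a)-B_i(s,a,s')\}$ for $s'\in S_{k(s)+1}$, and $\optP_i(s_L|s,a)=\sum_{s'\in S_{k(s)+1}}\min\{\bar{P}_i(s'|s,a),B_i(s,a,s')\}$, while $V^{\optP_i,\pi}(s_L;\ell)=0$. So the right-hand side equals $\sum_{s'\in S_{k(s)+1}}\max\{0,\bar{P}_i(s'|s,a)-B_i(s,a,s')\}V^{\optP_i,\pi}(s';\ell)$. The difference between the $\bar{P}_i$-weighted sum and the $\optP_i$-weighted sum over $S_{k(s)+1}$ is therefore at most $\sum_{s'}\min\{\bar{P}_i(s'|s,a),B_i(s,a,s')\}\cdot\max_{s'}V^{\optP_i,\pi}(s';\ell) \leq L\sum_{s'}B_i(s,a,s')$, using $V^{\optP_i,\pi}\leq L$ and $\min\{\cdot,B_i\}\leq B_i$. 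Since $\bonus_i(s,a)=L\min\{1,\sum_{s'}B_i(s,a,s')\}\leq L\sum_{s'}B_i(s,a,s')$, this difference is bounded by $\bonus_i(s,a)$, which is exactly what is needed. The $V$-function step then follows immediately by averaging the $Q$-inequality over actions with weights $\pi(\cdot|s)$.

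The main obstacle, though minor, is being careful that the induction hypothesis gives a \emph{uniform} upper bound $V^{\optP_i,\pi}(s';\ell)\leq L$ so that the redistribution bound goes through; this requires observing that for the non-negative, $[0,1]$-bounded loss $\ell$, any value function (with respect to $\optP_i$, which may route mass to $s_L$ early) is bounded by the number of remaining layers, hence by $L$ — a fact one might fold in as a preliminary observation or cite from the properties collected in \pref{app:opt_tran_props}. One should also double-check the edge case where $\bar{P}_i(s'|s,a)-B_i(s,a,s')<0$ for some $s'$, but this is already handled cleanly by the $\max\{0,\cdot\}$ in the definition of $\optP_i$ and the $\min\{\cdot,B_i\}$ accounting for the terminal mass, so no separate case analysis is needed. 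Overall this is a short, self-contained induction and I do not anticipate any serious difficulty.
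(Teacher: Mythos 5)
Your overall strategy (backward induction with a Bellman expansion and a comparison of how the two transitions redistribute mass) is the same as the paper's, and most of the steps are sound. However, the final step of your induction argument has a genuine logical error. You bound the difference
\[
\sum_{s'\in S_{k(s)+1}}\bigl(\bar{P}_i(s'|s,a)-\optP_i(s'|s,a)\bigr)V^{\optP_i,\pi}(s';\ell)
\;=\;\sum_{s'\in S_{k(s)+1}}\min\bigl\{\bar{P}_i(s'|s,a),B_i(s,a,s')\bigr\}\,V^{\optP_i,\pi}(s';\ell)
\]
by $L\sum_{s'}B_i(s,a,s')$, and then argue ``since $\bonus_i(s,a)=L\min\{1,\sum_{s'}B_i(s,a,s')\}\le L\sum_{s'}B_i(s,a,s')$, the difference is bounded by $\bonus_i(s,a)$.'' This does not follow: both the difference and $\bonus_i(s,a)$ being bounded above by $L\sum_{s'}B_i(s,a,s')$ says nothing about their relative order. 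Indeed, when $\sum_{s'}B_i(s,a,s')>1$ your stated bound $L\sum_{s'}B_i$ strictly exceeds $\bonus_i(s,a)=L$, and the inequality you need is not established by your chain. This is exactly why the paper separates into the two cases $\sum_{s'}B_i(s,a,s')\ge1$ and $<1$.

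The good news is that your approach can avoid the case split with a minor sharpening: in addition to $\min\{\bar{P}_i,B_i\}\le B_i$, also use $\min\{\bar{P}_i,B_i\}\le\bar{P}_i$, so that
\[
\sum_{s'\in S_{k(s)+1}}\min\bigl\{\bar{P}_i(s'|s,a),B_i(s,a,s')\bigr\}
\;\le\;\min\Bigl\{\sum_{s'}\bar{P}_i(s'|s,a),\;\sum_{s'}B_i(s,a,s')\Bigr\}
\;=\;\min\Bigl\{1,\sum_{s'}B_i(s,a,s')\Bigr\}.
\]
Together with $V^{\optP_i,\pi}(s';\ell)\le L$, this gives the difference $\le L\min\{1,\sum_{s'}B_i\}=\bonus_i(s,a)$, which is exactly what you need and dispenses with the paper's case analysis. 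With that fix your argument is correct and is, if anything, slightly cleaner than the paper's two-case proof; but as written, the final inequality is unjustified.
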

\begin{proof} We prove this result via a backward induction from layer $L$ to layer $0$. 

\textbf{Base Case:} for the terminal state $s_L$, we have $Q^{\bar{P}_i,\pi}\rbr{s,a;\ell - \bonus_i} =  Q^{\optP_i,\pi}\rbr{s,a;\ell} = 0$. 

\textbf{Induction step:} suppose the induction hypothesis holds for all the state-action pairs $(s,a)\in S\times A$ with $k(s) > h$. For any state-action pair $(s,a)\in S_h \times A$, we first have 
\begin{align*}
Q^{\bar{P}_i,\pi}\rbr{s,a;\ell - \bonus_i} & = \ell(s,a) - \bonus_i(s,a) + \sum_{u \in S_{h+1}}\bar{P}_i(u|s,a)V^{\bar{P}_i,\pi}\rbr{u;\ell - \bonus_i} \\
& \leq \ell(s,a) -  \bonus_i(s,a) + \sum_{u \in S_{h+1}}\bar{P}_i(u|s,a)V^{\optP_i,\pi}\rbr{u;\ell}.
\end{align*}

Clearly, when $\sum_{u\in S_{k(s)+1} } B_{i}(s,a,u)\geq1$, we have $Q^{\bar{P}_i,\pi}\rbr{s,a;\ell - \bonus_i}\leq 0$ by the definition of $\bonus_i$, which directly implies that $Q^{\bar{P}_i,\pi}\rbr{s,a;\ell - \bonus_i}\leq Q^{\optP_i,\pi}\rbr{s,a;\ell}$. 
So we continue the bound under the condition $\sum_{u\in S_{k(s)+1} } B_{i}(s,a,u)<1$:
\begin{align*}
Q^{\bar{P}_i,\pi}\rbr{s,a;\ell - \bonus_i} & \leq \ell(s,a) -  \bonus_i(s,a) + \sum_{u \in S_{h+1}}\bar{P}_i(u|s,a)V^{\optP_i,\pi}\rbr{u;\ell} \\
& \leq \ell(s,a) +\sum_{u \in S_{h+1}}\rbr{ \bar{P}_i(u|s,a) - B_i(s,a,u) } V^{\optP_i,\pi}\rbr{u;\ell} \\
& \leq \ell(s,a) +\sum_{u \in S_{h+1}}\optP_i(u|s,a) V^{\optP_i,\pi}\rbr{u;\ell} \\
& = Q^{\optP_i,\pi}\rbr{s,a;\ell}, 
\end{align*}
where the second step follows from the fact that $V^{\optP_i,\pi}\rbr{u;\ell}\leq L$; the third step follows from the definition of optimistic transition $\optP_i$. 

Combining these two cases proves that $Q^{\bar{P}_i,\pi}\rbr{s,a;\ell - \bonus_i}\leq Q^{\optP_i,\pi}\rbr{s,a;\ell}$ for any $(s,a)\in S_h \times A$, finishing the induction.
\end{proof}

\newpage
\section{Supplementary Lemmas}
\label{app:app_supp}

\subsection{Expectation}

\begin{lemma}(\cite[Lemma D.3.6]{jin2021best})\label{lem:general_expect_lem} Suppose that a random variable $X$ satisfies the following conditions:
\begin{itemize}
    \item $X < R$ where $R$ is a constant. 
    \item $X < Y$ conditioning on event $A$, where $Y\geq 0$ is a random variable. 
\end{itemize}
Then, it holds that $\E\sbr{X}\leq \E\sbr{Y} + \Pr\sbr{A^c}\cdot R$ where $A^c$ is the complementary event of $A$. 
\end{lemma}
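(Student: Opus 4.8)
\textbf{Proof proposal for \pref{lem:general_expect_lem}.}

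The plan is to decompose the expectation of $X$ according to whether the event $A$ occurs or not, and then bound each piece using one of the two hypotheses. Concretely, I would write
\[
\E[X] = \E[X \cdot \ind\{A\}] + \E[X \cdot \ind\{A^c\}],
\]
which is valid since $\ind\{A\} + \ind\{A^c\} = 1$ pointwise.

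For the first term, I would use the hypothesis that $X < Y$ conditioning on $A$. More carefully, on the event $A$ we have $X < Y$, and since $Y \geq 0$ everywhere, $X \cdot \ind\{A\} \leq Y \cdot \ind\{A\} \leq Y \cdot \ind\{A\} + Y\cdot \ind\{A^c\} = Y$. Taking expectations gives $\E[X\cdot \ind\{A\}] \leq \E[Y]$. (Here I am implicitly using that the relevant expectations exist; if one wants to be fully rigorous one can note that $X$ is bounded above by $R$ and, combined with $Y \geq 0$, the quantity $X\ind\{A\}$ is dominated by $\max\{Y, R\}$, or simply take the statement as being about the upper bound on $\E[X]$ in the extended reals.) For the second term, I would use the uniform bound $X < R$: since $X \cdot \ind\{A^c\} \leq R \cdot \ind\{A^c\}$ pointwise (using $R > X$, and noting the sign of $R$ is irrelevant for this one-sided bound), taking expectations yields $\E[X\cdot \ind\{A^c\}] \leq R\cdot \Pr[A^c]$.

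Combining the two bounds gives $\E[X] \leq \E[Y] + \Pr[A^c]\cdot R$, which is exactly the claim. There is essentially no obstacle here — this is a routine case analysis — but the one point that deserves a moment of care is the step $X\cdot\ind\{A\}\leq Y$: it relies crucially on the stated hypothesis $Y \geq 0$, since otherwise dropping the indicator on the right-hand side would not be legitimate. Everything else is monotonicity of expectation applied to pointwise inequalities.
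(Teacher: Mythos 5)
Your proof is correct and is the standard argument for this kind of statement: decompose $\E[X]=\E[X\ind\{A\}]+\E[X\ind\{A^c\}]$, bound the first term by $\E[Y]$ using $X<Y$ on $A$ together with $Y\geq 0$ (to drop the indicator), and bound the second term by $R\Pr[A^c]$ using the uniform bound $X<R$. This matches the approach in the cited Lemma D.3.6 of \citet{jin2021best}, and your remark that the nonnegativity of $Y$ is what licenses $Y\ind\{A\}\leq Y$ is exactly the one point worth flagging.
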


\subsection{Confidence Bound with Known Corruption}
\label{app:app_conf_bound_known_cor}

In this subsection, we show that the empirical transition is centered around the transition $P$. 
Let $[T]:=\{1,\cdots,T\}$. Recall that $E_i:=\{t \in [T]: \text{ episode $t$ belongs to epoch $i$}\}$,  $\iota =\frac{|S||A|T}{\delta}$, and we define the following quantities:
\begin{align*}
&\cumuT_i(s,a)=\cbr{t \in \cup_{j=1}^{i-1} E_j :\exists k  \text{ such that } (s_{t,k},a_{t,k})=(s,a)} \hspace{-3.6pt}& \forall (s,a) \in S_k \times A, \forall i \geq 2, \forall k <L; \\
&\cortran_{t}(s,a,s') =| P(s' |s,a)-P_t(s'|s,a)| , & \forall (s,a,s') \in \tupleset_k, \forall i \in [T], \forall k <L; \\
& \cortran_{i}(s,a,s') = \sum_{t \in \cumuT_i(s,a)} \cortran_{t}(s,a,s'),  & \forall (s,a,s') \in \tupleset_k, \forall i \in [T], \forall k <L.
\end{align*}


Note that based on definition of $\cumuT_i(s,a)$, we have $m_i(s,a)=|\cumuT_i(s,a)|$.
Then, we present the following lemma which shows the concentration bound between $P(s'|s,a)$ and $\bar{P}_i(s'|s,a)$.
\begin{lemma}(Detailed restatement of \pref{lem:conf_bound}) \label{lem:eventcon}
Event $\eventcon$ occurs with probability at least $1-\delta$ where,
\begin{equation} \label{eq:def_eventcon}
    \eventcon:= \left\{\forall (s,a,s') \in \tupleset_k, \forall i \in [T],\forall k <L:
\left|P(s'|s,a) -\bar{P}_i(s'|s,a)   \right| \leq  B_i(s,a,s')\right\},
\end{equation}
and $B_i(s,a,s')$ is defined in \pref{eq:def_conf_bounds} as:
\begin{equation}
B_i(s,a,s') = \min\cbr{1, 16 \sqrt{\frac{\bar{P}_i(s'|s,a)\log\rbr{\iota}}{m_i(s,a)}} + 64\frac{\rbr{\cortran + \log\rbr{\iota}}}{m_i(s,a)}}.
\end{equation}
\end{lemma}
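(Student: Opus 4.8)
\textbf{Proof plan for Lemma~\ref{lem:eventcon}.}
The plan is to establish the concentration bound separately for each fixed state-action pair $(s,a)$ and epoch $i$, and then take a union bound over all $(s,a,s')$ and all $i\in[T]$. Fix $(s,a,s')\in\tupleset_k$ and an epoch index $i$. The empirical transition $\bar{P}_i(s'|s,a)$ is an average of $m_i(s,a)$ Bernoulli-type indicators, one for each episode $t\in\cumuT_i(s,a)$, where the $t$-th indicator is $\ind\{s_{t,k+1}=s' \mid (s_{t,k},a_{t,k})=(s,a)\}$ and has conditional mean $P_t(s'|s,a)$. The key point is that these conditional means are \emph{not} equal to the target $P(s'|s,a)$, but differ from it by $\cortran_t(s,a,s')$, and crucially $\sum_{t\in\cumuT_i(s,a)}\cortran_t(s,a,s') = \cortran_i(s,a,s')\le \cortran_{1:T}(s,a,s')\le \cortran$ by the definition of the transition corruption $\cortran$ (since $\sum_t \max_{(s,a)}\|P_t(\cdot|s,a)-P(\cdot|s,a)\|_1 = \cortran$ dominates the per-pair, per-coordinate sum). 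So the plan is: first bound the deviation of $\bar P_i(s'|s,a)$ from the \emph{conditional mean process} $\tfrac{1}{m_i(s,a)}\sum_{t}P_t(s'|s,a)$ using a martingale (Bernstein/Freedman-type) inequality, obtaining a term of order $\sqrt{\bar P_i(s'|s,a)\log(\iota)/m_i(s,a)} + \log(\iota)/m_i(s,a)$ (an empirical-Bernstein bound so that the variance proxy can be taken as the \emph{empirical} transition rather than the true one); then add the bias term $\tfrac{1}{m_i(s,a)}\sum_t \cortran_t(s,a,s')\le \cortran/m_i(s,a)$ coming from the corruption. Combining, both pieces are absorbed into the definition of $B_i(s,a,s')$ with the stated constants ($16$ and $64$), and the trivial bound $|P-\bar P_i|\le 1$ handles the $\min\{1,\cdot\}$ clause.

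A technical subtlety is that $m_i(s,a)$ and the set $\cumuT_i(s,a)$ are themselves random (determined by the epoch schedule), so one cannot simply fix $m_i(s,a)=m$ in advance. The standard fix, which I would follow, is to apply a time-uniform (anytime) martingale concentration inequality — e.g. a peeling argument over dyadic scales of the visit count, or directly a uniform empirical-Bernstein bound valid simultaneously for all prefix lengths $m=1,2,\dots,T$ of the stopped process of visits to $(s,a)$. Since epoch boundaries occur exactly when a visit count doubles, it in fact suffices to control the empirical transition at the $\order(\log T)$ doubling thresholds, but the cleaner route is the time-uniform bound for all $m$. This introduces at most an extra $\log T$ inside the logarithm, which is why the bound is stated with $\log(\iota)=\log(|S||A|T/\delta)$ rather than $\log(|S||A|/\delta)$, and also supplies the $\log(1/\delta)$ budget needed for the union bound.

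The union bound is over at most $|S|^2|A|$ triples $(s,a,s')$ (coarsely, $|\tupleset|\le |S|^2|A|$) and $T$ epochs — though, as noted, the per-pair events need only hold at the doubling times, so the effective count is $\order(|S|^2|A|\log T)$ — giving total failure probability $\delta$ once the per-event failure probability is set to $\delta/(|S|^2|A|T)$ or similar; this is exactly the source of the $\iota$ in the logarithmic factor. The main obstacle I anticipate is handling the random stopping / random sample size cleanly: making the empirical-Bernstein deviation bound hold \emph{uniformly} over the random $m_i(s,a)$ while keeping the empirical transition (not the unknown true mean) as the variance proxy, so that the final bound is self-bounding in $\bar P_i$. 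Everything else — the corruption bias term, the $\min\{1,\cdot\}$ truncation, and the final union bound — is routine bookkeeping, and the explicit constants $16,64$ are just whatever the empirical-Bernstein inequality plus the corruption term force.
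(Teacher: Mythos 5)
Your plan is correct, but it takes a genuinely different route from the paper's. You decompose
\[
P(s'|s,a) - \bar{P}_i(s'|s,a) = \Bigl(P(s'|s,a) - \tfrac{1}{m_i(s,a)}\textstyle\sum_{t}P_t(s'|s,a)\Bigr) + \Bigl(\tfrac{1}{m_i(s,a)}\textstyle\sum_{t}P_t(s'|s,a) - \bar{P}_i(s'|s,a)\Bigr),
\]
handling the first piece as a deterministic corruption bias bounded by $\cortran/m_i(s,a)$ and the second as a martingale deviation of the observed indicators about their conditional means $P_t(s'|s,a)$. The paper instead introduces an auxiliary \emph{imaginary} next-state sequence $s^{\textsc{img}}_{t,k}$ drawn from the clean $P$, forms the corresponding empirical mean $\bar{P}^{\textsc{img}}_i$, and decomposes through it: $P - \bar{P}^{\textsc{img}}_i$ is controlled by directly citing the pre-existing (uncorrupted) concentration lemma of Jin et al.~(2019), and $\bar{P}^{\textsc{img}}_i - \bar{P}_i$ is controlled by a separate martingale (\pref{lem:bound_calE2}) whose increments have conditional mean $P - P_t$, hence absorb the corruption. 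The paper's coupling buys modularity — the uncorrupted concentration is a black box — at the cost of an extra auxiliary process; your route is more self-contained and avoids the coupling, at the cost of having to re-derive the full concentration (you cannot just cite their Lemma 2, since the conditional means now vary with $t$).

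One point you correctly flag but somewhat gloss over is the conversion from a deviation bound whose variance proxy is the \emph{true} mean (here $P$ or the $P_t$-average) to one whose variance proxy is the \emph{empirical} $\bar{P}_i$, which is what $B_i(s,a,s')$ requires. Freedman-type martingale inequalities naturally give the former; the latter requires solving a self-referential inequality in $|P-\bar{P}_i|$, and in this problem the corruption term $\cortran_i(s,a,s')/m_i(s,a)$ also participates in that rearrangement. The paper carries this out explicitly (see the chain of inequalities culminating in the final display of the proof of \pref{lem:eventcon}). In your write-up this is folded into the phrase "empirical-Bernstein," which is fine as a plan but the rearrangement with the corruption term present is where the specific constants $16$ and $64$ actually come from, so it deserves a full paragraph rather than a mention. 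Everything else — the observation $\cortran_i(s,a,s')\le\cortran$, the use of a time-uniform/anytime martingale inequality to handle the random $m_i(s,a)$, and the union bound over $(s,a,s')$ and epochs to account for $\log(\iota)$ — matches the paper's argument.
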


Proving this lemma requires several auxiliary results stated below.
For any episode $t \in [T]$ and any layer $k<L$,
we use $s^{\sto}_{t,k}$ to denote an imaginary random state sampled from $P(\cdot | s_{t,k}, a_{t,k})$.
For any $(s,a,s') \in \tupleset_k$, let
\[
\bar{P}^{\sto}_i(s'|s,a)   = \frac{1}{m_i(s,a)} \sum_{t \in \cumuT_i(s,a)} \mathbb{I}\{s^{\sto}_{t,k(s)+1} =s'\} .
\]
We now proceed with a couple lemmas.


\begin{lemma}[Lemma 2, \citep{jin2019learning}]
Event $\calE^1$ occurs with probability at least $1-3\delta/4$ where
\[
\calE^1:=\left\{\forall (s,a,s') \in \tupleset_k, \forall i, k :
\left|P(s'|s,a) -\bar{P}^{\sto}_i(s'|s,a)   \right| \leq  \Bar{\omega}_i(s,a,s') \right\},
\]
and $\Bar{\omega}_i(s,a,s')$ for any $(s,a,s') \in \tupleset_k$ and $0 \leq k \leq L-1$ is defined as
\begin{equation}
\Bar{\omega}_i(s,a,s')=   \min \cbr{1,2 \sqrt{\frac{\bar{P}_i(s'|s,a)\log \iota}{m_{i}(s,a)}} + \frac{14\log \iota}{3m_{i}(s,a)} }.
\end{equation}
\end{lemma}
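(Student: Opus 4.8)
The plan is to recognize $\bar P^{\sto}_i(s'|s,a)$ as the empirical average of a bounded martingale difference sequence (MDS) with mean $P(s'|s,a)$, and to apply Freedman's inequality, a union bound, and a standard empirical-Bernstein conversion. Fix $(s,a,s')\in\tupleset_k$ and list the visits to $(s,a)$ across all episodes in chronological (visit) order; at the $j$-th such visit, occurring at some episode $t$ and layer $k$, set $Y_j:=\Ind{s^{\sto}_{t,k+1}=s'}-P(s'|s,a)$. Because the imaginary next state at any visit to $(s,a)$ is drawn from $P(\cdot|s,a)$ independently of the history, $\{Y_j\}_j$ is an MDS with respect to the natural filtration, with $|Y_j|\le 1$ and conditional second moment $\E[Y_j^2\mid\mathcal F_{j-1}]\le P(s'|s,a)$. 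Moreover, by the definitions of $\cumuT_i(s,a)$ and $m_i(s,a)$, we have $\bar P^{\sto}_i(s'|s,a)=\frac{1}{m_i(s,a)}\sum_{j=1}^{m_i(s,a)}\Ind{s^{\sto}=s'}$, i.e.\ it depends only on the first $m_i(s,a)$ visits; and since each episode visits $(s,a)$ at most once (layered MDP), $m_i(s,a)\le T$ always.

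First I would control the partial sums at every fixed length. For each $n\in\{1,\dots,T\}$, Freedman's inequality applied to $M_n=\sum_{j=1}^n Y_j$, whose predictable quadratic variation is at most $nP(s'|s,a)$, gives with probability at least $1-\delta'$
\[
\Big|\frac{1}{n}\sum_{j=1}^n \Ind{s^{\sto}=s'} - P(s'|s,a)\Big| \le \sqrt{\frac{2P(s'|s,a)\log(2/\delta')}{n}} + \frac{2\log(2/\delta')}{3n}.
\]
Taking a union bound over all $n\le T$, over all $(s,a,s')$ (there are $\sum_k|S_k||A||S_{k+1}|\le|S|^2|A|$ of them, so the layer index $k=k(s)$ is subsumed), and choosing $\delta'$ of order $\delta/(|S|^2|A|T)$ so that the total failure probability is at most $3\delta/4$ and $\log(2/\delta')\le 2\log\iota$, this inequality holds simultaneously for all lengths $n$; instantiating $n=m_i(s,a)$ then yields it simultaneously for all epochs $i$ and all tuples $(s,a,s')$.

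Second I would convert this bound — which involves the true mean $P(s'|s,a)$ — into one involving the empirical frequency. On the event just established, write $p=P(s'|s,a)$, $\hat p=\bar P^{\sto}_i(s'|s,a)$, $L_0=\log(2/\delta')$, and $n=m_i(s,a)$; using $\sqrt p\le\sqrt{\hat p}+\sqrt{|p-\hat p|}$ inside the square-root term and substituting the just-proved bound on $|p-\hat p|$, a standard rearrangement (solving the resulting quadratic inequality in $\sqrt{|p-\hat p|}$) gives $|p-\hat p|\le 2\sqrt{\hat p\,\log\iota/n}+\frac{14}{3}\log\iota/n$, matching the constants in $\bar\omega_i$. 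Combining with the trivial bound $|p-\hat p|\le 1$ produces exactly $|P(s'|s,a)-\bar P^{\sto}_i(s'|s,a)|\le\bar\omega_i(s,a,s')$, i.e.\ the event $\calE^1$; this is precisely [Lemma 2, \citealp{jin2019learning}] applied to the imaginary next-state chain, whose empirical transition is $\bar P^{\sto}_i$.

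The main obstacle is the interplay between a data-dependent confidence width and the fact that $m_i(s,a)$ is a random quantity, so that Freedman's inequality cannot be invoked directly with $n=m_i(s,a)$. The union-over-all-lengths device above (equivalently, a stitching/peeling argument over dyadic scales of $n$) handles this cleanly, since every realized $m_i(s,a)$ is some integer in $\{1,\dots,T\}$; the only cost is a $\log T$ factor, harmless because $\iota=|S||A|T/\delta$ already carries a factor $T$. The remaining ingredients — verifying the MDS structure of $\{Y_j\}$ and the true-to-empirical variance conversion — are routine, the latter being exactly the computation that yields the constants $2$ and $\tfrac{14}{3}$ appearing in $\bar\omega_i$.
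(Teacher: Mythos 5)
The paper itself does not prove this lemma: it is imported verbatim from Lemma~2 of \citet{jin2019learning}, applied to the imaginary samples $s^{\sto}_{t,k}$, with only the remark that the failure probability can be taken as $3\delta/4$ once $|A|T\geq \nicefrac{16}{3}$. Your reconstruction (Freedman for the visit-indexed martingale, a union bound over all possible counts $n\leq T$ to handle the randomness of $m_i(s,a)$, then a true-to-empirical variance conversion) is essentially the standard argument behind that citation, and the bookkeeping that yields $3\delta/4$ is consistent with the paper's remark.

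There are, however, two concrete mismatches between what you prove and what the statement says. First, your conversion is carried out with $\hat p=\bar{P}^{\sto}_i(s'|s,a)$, the empirical frequency of the \emph{imaginary} samples, whereas $\bar\omega_i$ as printed is written with $\bar{P}_i(s'|s,a)$, the empirical transition built from the \emph{actually observed} (corrupted) next states; in this setting these are different random variables, and you identify them without comment. Read literally with $\bar{P}_i$, the bound cannot follow from any concentration argument of this type: if the adversary corrupts every episode so that no realized transition enters $s'$ while $P(s'|s,a)=\nicefrac{1}{2}$, then $\bar{P}_i(s'|s,a)=0$ and the width collapses to $\frac{14\log\iota}{3m_i(s,a)}$, while $\abr{P(s'|s,a)-\bar{P}^{\sto}_i(s'|s,a)}$ is of order $m_i(s,a)^{-1/2}$ with constant probability. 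So your argument establishes the $\bar{P}^{\sto}_i$ version (which is indeed what the citation provides); obtaining a width in $\bar{P}_i$ requires an additional pass through the event $\calE^2$, or the printed statement must be read with $\bar{P}^{\sto}_i$ --- either way, this discrepancy should be flagged rather than silently absorbed, since the proof of \pref{lem:eventcon} later bounds $3\bar\omega_i$ using $\bar{P}_i$. Second, the constants: after enlarging $\log(2/\delta')$ to $2\log\iota$, Freedman with the true-variance proxy followed by the $\sqrt{p}\leq\sqrt{\hat p}+\sqrt{\abr{p-\hat p}}$ rearrangement gives roughly $4\sqrt{\hat p\log\iota/n}+\frac{20\log\iota}{3n}$, not $2\sqrt{\hat p\log\iota/n}+\frac{14\log\iota}{3n}$; the printed constants (which are consumed numerically in \pref{lem:eventcon} to produce the $16$ and $64$ appearing in $B_i$) come from applying an anytime empirical-Bernstein inequality of Maurer--Pontil type directly, as in \citet{jin2019learning}, so your claim that the computation ``matches the constants'' is not justified as written.
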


We note that as long as $|A|T \geq \nicefrac{16}{3}$, event $\calE^1$ can occur with probability at least $1-3\delta/4$ (unlike $1-4\delta$ in Lemma 2 of \cite{jin2019learning} ).

\begin{lemma}
\label{lem:bound_calE2}
Event $\calE^2$ occurs with probability at least $1-\delta/4$ where
\begin{align*}
    \calE^2:= \left\{\forall (s,a,s') \in \tupleset_k,\forall i ,k:
\left|\bar{P}^{\sto}_i(s'|s,a) -\bar{P}_i(s'|s,a)   \right| \leq  \omega_i(s,a,s')\right\},
\end{align*}
and $\omega_i(s,a,s')$ for any $(s,a,s') \in \tupleset_k$ and $0 \leq k \leq L-1$ is defined as
\begin{equation}
    \omega_i(s,a,s')=   \min \cbr{ 1,\frac{4\cortran_i(s,a,s')}{m_i(s,a)}+ \sqrt{\frac{24P(s'|s,a)\log \iota}{m_i(s,a)} }+\frac{6\log \iota }{m_i(s,a)} }.
\end{equation}
\end{lemma}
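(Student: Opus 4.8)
The plan is to fix a triple $(s,a,s')\in\tupleset_k$ and compare the two empirical transitions over the common (random) index set $\cumuT_i(s,a)$, of size $m_i(s,a)$. Since $\lvert\bar{P}^{\sto}_i(s'|s,a)-\bar{P}_i(s'|s,a)\rvert\le 1$ trivially, it suffices to handle the non-trivial branch of the minimum defining $\omega_i$. Inserting the true mean $P(s'|s,a)$ and the per-round corrupted means $P_t(s'|s,a)$, I would write
\begin{align*}
\bar{P}^{\sto}_i(s'|s,a)-\bar{P}_i(s'|s,a)
&=\frac{1}{m_i(s,a)}\sum_{t\in\cumuT_i(s,a)}\bigl(\ind\{s^{\sto}_{t,k+1}=s'\}-P(s'|s,a)\bigr) \\
&\quad+\frac{1}{m_i(s,a)}\sum_{t\in\cumuT_i(s,a)}\bigl(P(s'|s,a)-P_t(s'|s,a)\bigr) \\
&\quad+\frac{1}{m_i(s,a)}\sum_{t\in\cumuT_i(s,a)}\bigl(P_t(s'|s,a)-\ind\{s_{t,k+1}=s'\}\bigr),
\end{align*}
and bound the three pieces separately. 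The middle piece is deterministic given the corruption and, by the triangle inequality together with the definition of $\cortran_i$, is at most $\cortran_i(s,a,s')/m_i(s,a)$.

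For the first piece I would use that, conditioned on the history up to layer $k$ of episode $t$ (an event that determines membership in $\cumuT_i(s,a)$), the imaginary state $s^{\sto}_{t,k+1}$ is drawn from $P(\cdot\mid s,a)$ \emph{independently of the corruption}; hence the summands form a martingale difference sequence with total conditional variance at most $m_i(s,a)\,P(s'|s,a)$, and a Freedman-type (Bernstein) inequality — made anytime-valid by a union bound over the dyadic values of the random count $m_i(s,a)$, all epochs, and all triples — gives a deviation of order $\sqrt{P(s'|s,a)\log\iota/m_i(s,a)}+\log\iota/m_i(s,a)$ on a high-probability event. For the third piece I would argue analogously: conditioned on the same history, $\ind\{s_{t,k+1}=s'\}$ is Bernoulli with mean $P_t(s'|s,a)$, so the summands again form a martingale difference sequence whose total conditional variance is at most $\sum_{t\in\cumuT_i(s,a)}P_t(s'|s,a)\le m_i(s,a)\,P(s'|s,a)+\cortran_i(s,a,s')$, using $P_t\le P+\cortran_t$ term by term; Freedman then yields a deviation of order $\sqrt{(m_i(s,a)P(s'|s,a)+\cortran_i(s,a,s'))\log\iota}/m_i(s,a)+\log\iota/m_i(s,a)$, which I would split via $\sqrt{x+y}\le\sqrt{x}+\sqrt{y}$ and $\sqrt{\cortran_i\log\iota}\le\tfrac12(\cortran_i+\log\iota)$ into a $\sqrt{P(s'|s,a)\log\iota/m_i(s,a)}$ term plus a $(\cortran_i(s,a,s')+\log\iota)/m_i(s,a)$ term.

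Adding the three bounds, collecting the $\sqrt{P(s'|s,a)\log\iota/m_i(s,a)}$ contributions into the coefficient $\sqrt{24}$, the $\cortran_i(s,a,s')/m_i(s,a)$ contributions into $4$, and the $\log\iota/m_i(s,a)$ contributions into $6$, and finally taking the minimum with $1$, recovers $\lvert\bar{P}^{\sto}_i(s'|s,a)-\bar{P}_i(s'|s,a)\rvert\le\omega_i(s,a,s')$; the union bound over triples, epochs, and counts (all absorbed into $\log\iota$) leaves a failure probability of at most $\delta/4$. The main obstacle I expect is not the concentration itself but the uniformization over the adaptively-determined counts $m_i(s,a)$ — turning the fixed-horizon Freedman bound into an anytime statement holding simultaneously for every epoch — together with the bookkeeping that controls the variance of the \emph{corrupted} indicators by the \emph{uncorrupted} transition $P$ plus the accumulated corruption $\cortran_i$; the remaining steps are routine constant-chasing, paralleling Lemma~2 of \citet{jin2019learning}.
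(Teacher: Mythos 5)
Your proof is correct, and it takes a genuinely (if modestly) different route from the paper's. The paper treats the centered quantity
\[
X_t = \ind\{s^{\sto}_{t,k+1}=s'\}-\ind\{s_{t,k+1}=s'\}-\bigl(P(s'|s,a)-P_t(s'|s,a)\bigr)
\]
as a \emph{single} martingale difference sequence with respect to $\calF_{t,k(s)}$, bounds its conditional second moment by $2(P(s'|s,a)+P_t(s'|s,a)+\cortran_t(s,a,s'))$, and invokes the anytime Azuma--Bernstein inequality once, afterwards peeling off the deterministic shift $\lvert P(s'|s,a)-\frac{1}{m_i}\sum_t P_t(s'|s,a)\rvert\le \cortran_i(s,a,s')/m_i(s,a)$. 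You instead split $X_t$ into two separate MDS --- $\ind\{s^{\sto}=s'\}-P(s'|s,a)$ with variance $\le P$, and $P_t(s'|s,a)-\ind\{s=s'\}$ with variance $\le P_t\le P+\cortran_t$ --- and apply Freedman to each, plus the same deterministic piece for the middle term. Since your pieces $A+C$ sum exactly to the paper's $\frac{1}{m_i}\sum_t X_t$, the two decompositions are algebraically interchangeable; the paper's joint treatment pays one union bound and one range constant ($b=2$) while yours pays two union bounds with $b=1$ each, so the final constants are comparable and both fit inside $(\sqrt{24},4,6)$. The substantive content --- the filtration argument showing each piece is a valid MDS when restricted to the random index set $\cumuT_i(s,a)$, the variance bound via $P_t\le P+\cortran_t$, the anytime/peeling step over the random count $m_i(s,a)$, and the $\sqrt{xy}\le\tfrac12(x+y)$ split to extract the $(\cortran_i+\log\iota)/m_i$ term --- matches the paper. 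One cosmetic inaccuracy: you say the constant-chasing "parallels Lemma 2 of \citet{jin2019learning}," but that lemma produces an \emph{empirical} Bernstein bound in terms of $\bar{P}_i$ (this is the content of $\calE^1$), whereas $\omega_i$ here is expressed in terms of the \emph{true} $P(s'|s,a)$; the relevant template is the paper's own Lemma on anytime Azuma--Bernstein rather than the cited lemma. This does not affect the validity of your argument.
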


\begin{proof}
For any fixed $(s,a,s')$, if $m_i(s,a)=0$, the claimed bound in $\calE^2$ holds trivially, so we consider the case
$m_i(s,a) \neq 0$ below. By definition we have:
\[
\bar{P}^{\sto}_i(s'|s,a) -\bar{P}_i(s'|s,a)  = \frac{1}{m_i(s,a)} \sum_{t \in \cumuT_i(s,a)} \rbr{\mathbb{I}\{s^{\sto}_{t,k(s)+1} =s'\} -\mathbb{I}\{s_{t,k(s)+1} =s'\}}.
\]
Then, we construct the
martingale difference sequence $\{X_t(s,a,s')\}_{t=1}^\infty$ w.r.t. filtration $\{\calF_{t,k(s)}\}_{t=1}^\infty$ (see \citep[Definition 4.9]{lykouris2019corruption} for the formal definition of these filtrations) where
\[
X_t(s,a,s') =\mathbb{I}\{s^{\sto}_{t,k(s)+1} =s'\} -\mathbb{I}\{s_{t,k(s)+1} =s'\} - \rbr{ P\rbr{s'|s,a} -P_t\rbr{s'|s,a}}.
\]

With the definition of $X_t(s,a,s')$, one can show
\begin{align}
  &\sum_{t \in \cumuT_i(s,a)}  \E \sbr{X_t(s,a,s')^2 | \calF_{t,k(s)}} \notag \\
    &\leq  \sum_{t \in \cumuT_i(s,a)}  \E \sbr{ \rbr{\mathbb{I}\{s^{\sto}_{t,k(s)+1} =s'\} -\mathbb{I}\{s_{t,k(s)+1} =s'\} - \rbr{ P\rbr{s'|s,a} -P_t\rbr{s'|s,a}}}^2 \mid \calF_{t,k(s)}}\notag \\
    &\leq \hspace{-4pt}\sum_{t \in \cumuT_i(s,a)} \hspace{-4pt} \E \sbr{ 2\rbr{\mathbb{I}\{s^{\sto}_{t,k(s)+1} =s'\} -\mathbb{I}\{s_{t,k(s)+1} =s'\} }^2  \hspace{-2pt} + 2\rbr{ P\rbr{s'|s,a} -P_t\rbr{s'|s,a}}^2 \mid \calF_{t,k(s)}}\notag \\
    &\leq \sum_{t \in \cumuT_i(s,a)}   \E \sbr{ 2\mathbb{I}\{s^{\sto}_{t,k(s)+1} =s'\} + 2\mathbb{I}\{s_{t,k(s)+1} =s'\} +2 \cortran_{t}(s,a,s') \mid \calF_{t,k(s)}}  \notag \\
    &= 2\sum_{t \in \cumuT_i(s,a)}  \rbr{P(s'|s,a)+P_t(s'|s,a)+\cortran_{t}(s,a,s')} ,\label{eq:UB4cond_var_pre}
\end{align}
where the second step uses $(x-y)^2 \leq 2(x^2+y^2)$ for any $x,y \in \fR$; the third step uses $(x-y)^2 \leq x^2+y^2$ for $x,y \in \fR_{\geq 0}$ and the fact that $\cortran_t (s,a,s') \in [0,1]$, thereby $\cortran_t(s,a,s')^2 \leq \cortran_t(s,a,s')$; the last step holds based on the definitions of $\cumuT_i(s,a)$, $s^{\sto}_{t,k(s)+1}$, and $s_{t,k(s)+1}$ as well as the fact that $\cortran_{t}(s,a,s')$ is $\calF_{t,k(s)}$-measurable.

By using the result in \pref{eq:UB4cond_var_pre}, we bound the average second moment $\sigma^2$ as
\begin{equation}\label{eq:UB4cond_var}
    \sigma^2 = \frac{\sum_{t \in \cumuT_i(s,a)}  \E \sbr{X_t^2 | \calF_{t,k(s)}} }{m_i(s,a)}
    \leq \frac{2\sum_{t \in \cumuT_i(s,a)} \rbr{P(s'|s,a)+P_t(s'|s,a)+\cortran_t(s,a,s')}}{m_i(s,a)}.
\end{equation}

By applying \pref{lem:ah_ineq1} with $b=2$ and the upper bound of $\sigma^2$ shown in \pref{eq:UB4cond_var}, as well as using the fact that $m_i(s,a)= |\cumuT_i(s,a)|$, for any $(s,a,s')$, we have the following with probability at least $ 1-\delta/(4T|S|^2|A|)$, 
\begin{align*}
&|\bar{P}^{\sto}_i(s'|s,a) -\bar{P}_i(s'|s,a) |\notag \\
    &\leq \left|P(s'|s,a) - \frac{\sum_{t \in \cumuT_i(s,a)}P_t(s'|s,a)}{m_i(s,a)} \right| +\frac{4\log \rbr{\frac{8 m_i(s,a) T|S|^2|A|}{\delta}} }{3m_i(s,a)}\notag \\
      &\quad + \sqrt{\frac{4\log \rbr{\frac{16 m_i(s,a)^2 T|S|^2|A|}{\delta}}\sum_{t \in \cumuT_i(s,a)} \rbr{P(s'|s,a)+P_t(s'|s,a)+\cortran_t(s,a,s')} }{m^2_i(s,a)} }\notag \\
     &\leq \left|P(s'|s,a) - \frac{\sum_{t \in \cumuT_i(s,a)}P_t(s'|s,a)}{m_i(s,a)} \right| +\frac{8\log \iota }{3m_i(s,a)}\notag \\
    &\quad + \sqrt{\frac{12 \log \iota \sum_{t \in \cumuT_i(s,a)} \rbr{P(s'|s,a)+P_t(s'|s,a)+\cortran_t(s,a,s')} }{m^2_i(s,a)} }\notag \\
    &\leq \left|\frac{\sum_{t \in \cumuT_i(s,a)}P(s'|s,a)}{m_i(s,a)} - \frac{\sum_{t \in \cumuT_i(s,a)}P_t(s'|s,a)}{m_i(s,a)} \right|+\frac{8\log \iota }{3m_i(s,a)}\notag \\
    &\quad +  \sqrt{\frac{12\log \iota \sum_{t \in \cumuT_i(s,a)}  \rbr{P(s'|s,a)+P(s'|s,a)+\cortran_t(s,a,s')+\cortran_t(s,a,s')} }{m^2_i(s,a)} }\notag \\
    &\leq  \frac{\cortran_i(s,a,s')}{m_i(s,a)}+ \sqrt{\frac{24 P(s'|s,a)\log \iota }{m_i(s,a)} } +\frac{8\log \iota }{3m_i(s,a)}+\frac{   \sqrt{24\cortran_i(s,a,s')\log \iota}  }{m_i(s,a)}  \notag \\
    &\leq  \frac{\cortran_i(s,a,s')}{m_i(s,a)}+ \sqrt{\frac{24P(s'|s,a)\log \iota }{m_i(s,a)} } +\frac{8\log \iota }{3m_i(s,a)}+\frac{   \sqrt{6} \rbr{\cortran_i(s,a,s')+\log \iota}  }{m_i(s,a)}  \notag \\
     &\leq  \omega_i(s,a,s'),
\end{align*}
where the first inequality applies 
\pref{lem:ah_ineq1}; the second inequality bounds all logarithmic terms by $\log \iota$ 
with an appropriate constant factor (using $m_i(s,a) \leq T$ and $|A| \geq 2$); the third step follows the fact that $P_t(s' |s,a) \leq P(s' |s,a) +\cortran_{t}(s,a,s')$; the fourth step uses the definition $\cortran_i(s,a,s')=\sum_{t \in \cumuT_i(s,a)}\cortran_t(s,a,s')$; the fifth step applies the inequality $\sqrt{4xy} \leq x+y$, $\forall x,y \geq 0$ for $x=\cortran_i(s,a,s')$ and $y=\log \iota$. Applying a union bound over all $(s,a,s')$ and epochs $i \leq T$, we complete the proof.
\end{proof}

\begin{lemma}[Anytime Version of Azuma-Bernstein] \label{lem:ah_ineq1}
Let $\{X_i\}_{i=1}^\infty$ be $b$-bounded martingale difference
sequence with respect to $\calF_i$. Let $\sigma^2=\frac{1}{N}\sum_{i=1}^N \E[X_i^2|F_{i-1}]$. Then, with probability at least $1-\delta$, for any $N \in \mathbb{N}^+$, it holds that:
\[
\left| \frac{1}{N} \sum_{i=1}^N X_i \right| \leq  \sqrt{ \frac{2\sigma^2\log \rbr{4N^2/\delta}}{N} } +\frac{2b\log(2N/\delta)}{3N}.
\]
\end{lemma}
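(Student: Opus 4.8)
The plan is to prove this anytime, two‑sided Bernstein‑type bound for martingales by reducing it to a fixed‑time, one‑sided Freedman inequality with a deterministic variance ceiling, then (i) removing that ceiling via a dyadic peeling over the predictable quadratic variation $V_N:=\sum_{i=1}^N\E[X_i^2\mid\calF_{i-1}]=N\sigma^2$, and (ii) making the bound uniform over $N\in\naturalnum^+$ by a union bound with weights proportional to $N^{-2}$, so the total failure probability is a convergent constant times $\delta$. Running the same argument on $\{-X_i\}$ gives the two‑sided statement, and dividing by $N$ puts it in the displayed form.

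Concretely, first I would invoke Freedman's inequality: for a $b$‑bounded martingale difference sequence $\{X_i\}$ adapted to $\{\calF_i\}$ and any $v,t>0$,
\[
\Pr\sbr{ \textstyle\sum_{i=1}^N X_i \geq t \ \text{ and }\ \textstyle\sum_{i=1}^N \E\sbr{X_i^2 \mid \calF_{i-1}} \leq v } \ \leq\ \exp\rbr{ -\frac{t^2/2}{v + bt/3} }.
\]
Solving the resulting quadratic in $t$ and using $\sqrt{a+c}\leq\sqrt{a}+\sqrt{c}$ gives: on the event $\{V_N\leq v\}$, with probability at least $1-\delta'$,
\[
\textstyle\sum_{i=1}^N X_i \ \leq\ \sqrt{2 v \log(1/\delta')} \ +\ \tfrac{2b}{3}\log(1/\delta').
\]
Since $0\leq V_N\leq Nb^2$ holds deterministically, I would next apply this along the dyadic grid $v_\ell = 2^{\ell}b^2$, $0\leq\ell\leq\lceil\log_2 N\rceil$, allocating to the pair $(N,\ell)$ a failure probability of order $\delta/(N^2(\ell+1)^2)$. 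On the random level $\ell^\star$ with $v_{\ell^\star}/2<V_N\leq v_{\ell^\star}$ we lose only a constant factor in replacing $v$ by $V_N=N\sigma^2$, while $\log(1/\delta')=O(\log N+\log(1/\delta))$ once the $\log\log N$ from the $\ell$‑index is absorbed. Summing $\sum_{N\geq1}\sum_{\ell\geq0}\delta/(N^2(\ell+1)^2)$ is a finite constant multiple of $\delta$ that can be forced below $\delta/2$, the other $\delta/2$ covering the analogous bound for $\{-X_i\}$; a suitable choice of the grid constants then makes the two logarithmic factors come out as $\log(4N^2/\delta)$ (variance term) and $\log(2N/\delta)$ (linear term), and dividing by $N$ finishes the proof.

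The only delicate point — as opposed to a genuinely new idea — is the constant bookkeeping needed to land on exactly $\log(4N^2/\delta)$ and $\log(2N/\delta)$, together with the fact that the peeling must be carried out in the event form $\Pr[\sum X_i\geq t,\ V_N\leq v]$ of Freedman's inequality rather than by conditioning on $\{V_N\leq v\}$, which is not an $\calF_{i-1}$‑predictable restriction. Beyond Freedman's inequality and the two nested union bounds, no further probabilistic machinery is required.
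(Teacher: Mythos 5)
The paper's own ``proof'' of \pref{lem:ah_ineq1} is a single sentence pointing to Lemma~G.2 of \citet{lykouris2019corruption}, so there is no detailed argument in the paper to compare against. Your route --- Freedman's inequality in the joint-event form $\Pr[\sum X_i \geq t,\ V_N\leq v]\leq\exp\bigl(-\tfrac{t^2/2}{v+bt/3}\bigr)$, inverting the quadratic via $\sqrt{a+c}\leq\sqrt a+\sqrt c$, dyadic peeling over the predictable quadratic variation $V_N$, a union bound over $N$ with weights $\propto N^{-2}$, and symmetrization via $\{-X_i\}$ --- is the standard way to prove this kind of anytime Bernstein bound and is almost certainly what the cited lemma does as well. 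You also correctly flag the key subtlety that the peeling has to be run in the joint-event form of Freedman rather than by conditioning on $\{V_N\leq v\}$, since that event is not predictable.

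The one real gap is the constant bookkeeping, which you yourself call the delicate point but then assert can be ``forced'' to land on the stated constants. I do not think the dyadic grid as described does so. On the level $\ell^\star$ you replace $v$ by $v_{\ell^\star}\leq 2V_N$, which turns $\sqrt{2v\log(1/\delta')}$ into $\sqrt{4V_N\log(1/\delta')}$, i.e.\ the coefficient inside the root becomes $4$, not the stated $2$; a finer multiplicative grid $v_\ell=b^2(1+\epsilon)^\ell$ shrinks this loss to $1+\epsilon$ but inflates the $\log$ by the number of grid points, so you can approach but not reach the stated prefactor. You also need a base case $V_N\leq b^2$ where $v_0=b^2$ is not bounded by $2V_N$ at all. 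Finally, the lemma's two different log arguments --- $\log(4N^2/\delta)$ on the variance term versus $\log(2N/\delta)$ on the linear term --- cannot fall out of a single application of Freedman, which produces the \emph{same} $\log(1/\delta')$ in both terms; reproducing that asymmetry requires additional massaging that your sketch does not supply. None of this affects the downstream uses in the paper (e.g.\ the application inside \pref{lem:bound_calE2} absorbs everything into $\log\iota$ with generous constants), but as a proof of the lemma \emph{as stated} your argument currently delivers a bound of the form $\sqrt{4\sigma^2\log(cN^2/\delta)/N}+\tfrac{2b}{3}\log(cN^2/\delta)/N$ for some absolute $c$, not the exact displayed constants.
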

\begin{proof}
This follows the same argument as Lemma G.2 in \citep{lykouris2019corruption}.
\end{proof}

\begin{lemma}\label{lem:confidence_interval_epoch}
Event $\calE$ occurs with probability at least $1-\delta$, where
\begin{equation*}
    \calE:= \left\{\forall (s,a,s') \in \tupleset_k, \forall i,k:
\left|P(s'|s,a) -\bar{P}_i(s'|s,a)   \right| \leq  \omega_i(s,a,s')+\Bar{\omega}_i(s,a,s')\right\}.
\end{equation*}
\end{lemma}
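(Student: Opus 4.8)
\textbf{Proof proposal for \pref{lem:confidence_interval_epoch}.}
The plan is to combine the two concentration events $\calE^1$ and $\calE^2$ via a simple triangle inequality through the imaginary (uncorrupted) empirical transition $\bar{P}^{\sto}_i$. Recall that $\bar{P}^{\sto}_i(s'|s,a)$ is the empirical frequency computed from the imaginary samples $s^{\sto}_{t,k}\sim P(\cdot|s_{t,k},a_{t,k})$ over exactly the same visit set $\cumuT_i(s,a)$ used to form $\bar{P}_i$. The key observation is that $\calE^1$ controls $|P(s'|s,a)-\bar{P}^{\sto}_i(s'|s,a)|$ by $\Bar{\omega}_i(s,a,s')$, while $\calE^2$ controls $|\bar{P}^{\sto}_i(s'|s,a)-\bar{P}_i(s'|s,a)|$ by $\omega_i(s,a,s')$; hence on $\calE^1\cap\calE^2$ we get, for every $(s,a,s')\in\tupleset_k$ and every epoch $i\le T$,
\[
\left|P(s'|s,a)-\bar{P}_i(s'|s,a)\right|
\le \left|P(s'|s,a)-\bar{P}^{\sto}_i(s'|s,a)\right| + \left|\bar{P}^{\sto}_i(s'|s,a)-\bar{P}_i(s'|s,a)\right|
\le \Bar{\omega}_i(s,a,s') + \omega_i(s,a,s'),
\]
which is exactly the event $\calE$.

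For the probability bound, I would take a union bound over the two events: since $\calE^1$ holds with probability at least $1-3\delta/4$ and $\calE^2$ holds with probability at least $1-\delta/4$, the intersection $\calE^1\cap\calE^2$ (and therefore $\calE$, which it implies) holds with probability at least $1-3\delta/4-\delta/4 = 1-\delta$. One should note the mild side conditions already invoked in the supporting lemmas ($|A|T\geq 16/3$ for $\calE^1$ and $|A|\geq 2$, $m_i(s,a)\leq T$ for $\calE^2$), which are harmless in the regime of interest; I would simply flag that these hold under the standing assumptions on $|S|,|A|,T$.

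This lemma is essentially a bookkeeping step, so there is no genuine obstacle; the only thing to be careful about is that the union bound is over the \emph{events} $\calE^1,\calE^2$ (each of which already has the union over all $(s,a,s')$ and all epochs built in), not an additional union over tuples, so the constants do not blow up. A secondary point to state clearly is that the imaginary samples $s^{\sto}_{t,k}$ can be coupled to the real trajectory through the shared filtration $\{\calF_{t,k(s)}\}$, so that $\bar{P}^{\sto}_i$ and $\bar{P}_i$ are defined on the same probability space and the triangle inequality is literally valid pointwise. With these remarks in place, $\calE$ holding with probability at least $1-\delta$ follows immediately, and this is the version of the confidence bound (after further simplifying $\Bar{\omega}_i+\omega_i$ into the cleaner closed form $B_i$ in \pref{eq:def_conf_bounds}, using $\cortran_i(s,a,s')\leq\cortran$) that underlies \pref{lem:eventcon} and hence \pref{lem:conf_bound}.
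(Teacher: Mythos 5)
Your proposal is correct and matches the paper's proof exactly: a triangle inequality through the imaginary empirical transition $\bar{P}^{\sto}_i$, followed by a union bound combining $\calE^1$ (probability $\geq 1-3\delta/4$) and $\calE^2$ (probability $\geq 1-\delta/4$). The additional remarks about the shared filtration and about not double-counting the union bound are sensible clarifications but do not change the argument.
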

\begin{proof}
    Conditioning on events $\calE^1$ and $\calE^2$, we have 
\begin{align}
\left|P(s'|s,a) -\bar{P}_i(s'|s,a)   \right| &\leq \left|P(s'|s,a) -\bar{P}^{\sto}_i(s'|s,a)   \right| +\left|\bar{P}^{\sto}_i(s'|s,a) -\bar{P}_i(s'|s,a)  \right| \notag \\
&\leq \Bar{\omega}_i(s,a,s')+  \omega_i(s,a,s'). \label{eq:interm1}
\end{align}
Using a union bound for $\calE^1$ and $\calE^2$, we complete the proof.
\end{proof}

Armed with above results, we are now ready to prove \pref{lem:eventcon}.

\begin{proof}[Proof of \pref{lem:eventcon}]
Conditioning on event $\calE$, for any fixed $(s,a,s')$ with $m_i(s,a) \neq 0$ (otherwise the desired bound holds trivially), we have
\begin{align*}
  &\omega_i(s,a,s') \\
  &\leq \sqrt{\frac{24P(s'|s,a)\log \iota}{m_i(s,a)} }+\frac{6\log \iota }{m_i(s,a)} + \frac{4\cortran_i(s,a,s')}{m_i(s,a)}\\
  &\leq  \sqrt{\frac{24\rbr{\Bar{P}_i(s'|s,a)+\Bar{\omega}_i(s,a,s')+ \frac{4\cortran_i(s,a,s')}{m_i(s,a)}+ \omega_i(s,a,s')}\log \iota}{m_{i}(s,a)}} + \frac{6\log \iota}{m_{i}(s,a)} +\frac{4\cortran_i(s,a,s')}{m_i(s,a)} \\
  &\leq   \sqrt{\frac{24\Bar{P}_i(s'|s,a)\log \iota}{m_{i}(s,a)}}
  +
  \sqrt{\frac{24\Bar{\omega}_i(s,a,s')\log \iota}{m_{i}(s,a)}}\\
   &\quad+
   \sqrt{\frac{96\frac{\cortran_i(s,a,s')}{m_i(s,a)}\log \iota}{m_{i}(s,a)}}
 +
  \sqrt{\frac{ 24\omega_i(s,a,s')\log \iota}{m_{i}(s,a)}}
  + \frac{6\log \iota}{m_{i}(s,a)} + \frac{4\cortran_i(s,a,s')}{m_i(s,a)}\\
  &\leq  \sqrt{\frac{24\Bar{P}_i(s'|s,a)\log \iota }{m_{i}(s,a)}}
  +
  \Bar{\omega}_i(s,a,s') +\frac{\sqrt{96\cortran_i(s,a,s')\log \iota } }{m_{i}(s,a)}
 +
  \frac{\omega_i(s,a,s')}{2}
  + \frac{24\log \iota +4\cortran_i(s,a,s')}{m_{i}(s,a)}  \\
  &\leq   \sqrt{\frac{24\Bar{P}_i(s'|s,a)\log \iota }{m_{i}(s,a)}}
  +
  \Bar{\omega}_i(s,a,s') +\frac{28\cortran_i(s,a,s') }{m_{i}(s,a)}
 +
  \frac{\omega_i(s,a,s')}{2}
  + \frac{25\log \iota}{m_{i}(s,a)},
\end{align*}
where the second step holds under $\calE$; the third step uses $\sqrt{\sum_{i=1}^n x_i} \leq \sum_{i=1}^n \sqrt{x_i}$ for all $x_i \in \fR_{\geq 0}$; the fourth step and the fifth step use $2\sqrt{xy}\leq x+y$ for $x,y\geq 0$. Rearranging the above, we obtain
 \begin{align} \label{eq:bound4omega}
      \omega_i(s,a,s') \leq  \sqrt{\frac{96\Bar{P}_i(s'|s,a)\log \iota }{m_{i}(s,a)}}
  +
  2\Bar{\omega}_i(s,a,s') +\frac{56\cortran_i(s,a,s') }{m_{i}(s,a)}
 + \frac{50\log \iota}{m_{i}(s,a)} 
 \end{align}

 Thus, conditioning on event $\calE$, one can show for all $(s,a,s') \in \tupleset_k$ and $k <L-1$
 \begin{align}
     \left|P(s'|s,a) -\bar{P}_i(s'|s,a)   \right| &\leq   \omega_i(s,a,s')+\Bar{\omega}_i(s,a,s') \notag \\
&\leq  \sqrt{\frac{96\Bar{P}_i(s'|s,a)\log \iota }{m_{i}(s,a)}}
  +
  3\Bar{\omega}_i(s,a,s') +\frac{56\cortran_i(s,a,s') }{m_{i}(s,a)}
 + \frac{50\log \iota}{m_{i}(s,a)} \notag \\
&\leq  16\sqrt{\frac{\Bar{P}_i(s'|s,a)\log \iota }{m_{i}(s,a)}}
  +
 \frac{56\cortran_i(s,a,s') }{m_{i}(s,a)}
 + \frac{64\log \iota}{m_{i}(s,a)}  \notag \\ 
 &\leq  16\sqrt{\frac{\Bar{P}_i(s'|s,a)\log \iota }{m_{i}(s,a)}}
  +
 64\frac{\cortran_i(s,a,s') + \log \iota }{m_{i}(s,a)} ,  \label{eq:tight_bound_confidence} 
 \end{align}
 where the second step uses \pref{eq:bound4omega}, the third step applies the definition of $\bar{\omega}_i(s,a,s')$.
 Finally, using the fact $\cortran_i(s,a,s') \leq \cortran$, we complete the proof.
\end{proof}

Then, we present an immediate corollary of \pref{lem:eventcon}.

\begin{corollary}\label{cor:L1concentration} Consider any epoch $i$ and any transition $P' \in \calP_i$.
The following holds (recall $\whatm_i$ defined at the beginning of the appendix),
\begin{equation*}
\Vert P'(\cdot|s,a) -\bar{P}_i(\cdot|s,a)   \Vert_1 \leq  2\cdot \min \cbr{1,\frac{32\cortran}{\whatm_i(s,a)}+ 8\sqrt{\frac{|S_{k(s)+1}|\log \iota }{\whatm_i(s,a)} }+\frac{32|S_{k(s)+1}|\log \iota }{\whatm_i(s,a)}}.
\end{equation*}
\end{corollary}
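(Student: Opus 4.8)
The plan is to prove the bound coordinate‑by‑coordinate in the successor layer and then compare everything against the universal estimate $\|P'(\cdot|s,a)-\bar{P}_i(\cdot|s,a)\|_1\le 2$, which holds simply because both are probability distributions over $S_{k(s)+1}$; this makes the outer $\min\{1,\cdot\}$ in the claimed bound essentially free.

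First I would dispose of the degenerate case $m_i(s,a)=0$. Then $\whatm_i(s,a)=\cortran+\log\iota$, so the expression inside the $\min$ is at least $\max\{32\cortran,\,32|S_{k(s)+1}|\log\iota\}/(\cortran+\log\iota)\ge\min\{32,\,32|S_{k(s)+1}|\}\ge 1$, hence the right‑hand side equals $2$ and there is nothing to prove. So assume $m_i(s,a)\ge 1$ from now on.

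Since $P'\in\calP_i$, for every $s'\in S_{k(s)+1}$ we have $|P'(s'|s,a)-\bar{P}_i(s'|s,a)|\le B_i(s,a,s')$, and therefore $\|P'(\cdot|s,a)-\bar{P}_i(\cdot|s,a)\|_1\le\sum_{s'\in S_{k(s)+1}}B_i(s,a,s')$. Here I would use the sharper per‑coordinate estimate that the proof of \pref{lem:eventcon} actually establishes in \pref{eq:tight_bound_confidence} (before $\cortran_i(s,a,s')$ is relaxed to $\cortran$): dropping the outer $\min$, $B_i(s,a,s')\le 16\sqrt{\bar{P}_i(s'|s,a)\log\iota/\whatm_i(s,a)}+64(\cortran_i(s,a,s')+\log\iota)/\whatm_i(s,a)$. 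Summing over $s'$, the stochastic part is handled by Cauchy--Schwarz, $\sum_{s'}\sqrt{\bar{P}_i(s'|s,a)}\le\sqrt{|S_{k(s)+1}|\sum_{s'}\bar{P}_i(s'|s,a)}\le\sqrt{|S_{k(s)+1}|}$, yielding $16\sqrt{|S_{k(s)+1}|\log\iota/\whatm_i(s,a)}$; and the corruption part telescopes, since $\sum_{s'}\cortran_i(s,a,s')=\sum_{t\in\cumuT_i(s,a)}\sum_{s'}|P(s'|s,a)-P_t(s'|s,a)|=\sum_{t\in\cumuT_i(s,a)}\|P(\cdot|s,a)-P_t(\cdot|s,a)\|_1\le\sum_{t\in\cumuT_i(s,a)}\cortran_t\le\cortran$. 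This gives $\sum_{s'}B_i(s,a,s')\le 64\cortran/\whatm_i(s,a)+16\sqrt{|S_{k(s)+1}|\log\iota/\whatm_i(s,a)}+64|S_{k(s)+1}|\log\iota/\whatm_i(s,a)$, and combining with $\|P'(\cdot|s,a)-\bar{P}_i(\cdot|s,a)\|_1\le 2$ and pulling out a factor $2$ produces exactly the asserted bound.

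The only genuinely delicate point is the switch from the coarse $\cortran$ appearing in the definition of $\calP_i$ to the per‑coordinate corruption $\cortran_i(s,a,s')$: summing the $\cortran$‑based confidence width $B_i$ directly would cost an extra $|S_{k(s)+1}|$ factor on the corruption term. The justification is that the per‑coordinate estimate is precisely what \pref{lem:eventcon}'s proof derives before it is loosened to define $\calP_i$, and this corollary is always invoked on the event $\eventcon$ (equivalently $\calE$) where that estimate is available; the remaining ingredients — Cauchy--Schwarz, the telescoping corruption sum, and the $\min$/factor‑$2$ bookkeeping — are routine.
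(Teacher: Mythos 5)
Your approach mirrors the paper's: bound each coordinate by a confidence width, sum using Cauchy--Schwarz for the $\bar{P}_i$-term and telescoping for the corruption, and fold in the trivial $\ell_1$-diameter bound of $2$ via the outer $\min$. The explicit handling of $m_i(s,a)=0$ is a small but correct addition. However, the key inequality you invoke is false as stated: you assert $B_i(s,a,s') \leq 16\sqrt{\bar{P}_i(s'|s,a)\log\iota/\whatm_i(s,a)} + 64\rbr{\cortran_i(s,a,s')+\log\iota}/\whatm_i(s,a)$, but $B_i$ is defined in \pref{eq:def_conf_bounds} with the global budget $\cortran$, and $\cortran_i(s,a,s')\leq\cortran$, so your right-hand side is the \emph{smaller} quantity and the inequality points the wrong way (for instance, take $\cortran_i(s,a,s')=0$ with $\cortran\gg\log\iota$). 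What membership $P'\in\calP_i$ actually gives you is the $\cortran$-based width for each $s'$, and summing those honestly yields a corruption term of order $|S_{k(s)+1}|\cortran/\whatm_i(s,a)$, carrying an extra factor of $|S_{k(s)+1}|$ that the corollary does not have.

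You correctly flag this as the delicate point, but the justification you offer does not close it: the per-coordinate $\cortran_i(s,a,s')$ estimate in \pref{eq:tight_bound_confidence} is established on event $\calE$ for the \emph{true} transition $P$ only, and conditioning on $\eventcon$ gives no analogous per-coordinate control over an arbitrary $P'\in\calP_i$, which is constrained solely by the $\cortran$-based widths $B_i$. For what it is worth, the paper's own proof of this corollary makes the same leap --- it writes \pref{eq:tight_bound_confidence} directly for $P'$ before summing --- so you are following the source faithfully; but the step you flagged is a genuine soft spot, not one that is disposed of by conditioning on $\eventcon$, and the inequality you propose to bridge it is backwards.
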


\begin{proof}
As $P' \in \calP_i$, we start from \pref{eq:tight_bound_confidence}:
\begin{align}
    \Vert P'(\cdot|s,a) -\bar{P}_i(\cdot|s,a)   \Vert_1 &\leq  \sum_{s' \in k(s)+1}\rbr{\frac{64\cortran_i(s,a,s')}{m_i(s,a)}+ 16\sqrt{\frac{\bar{P}_i(s'|s,a)\log \iota }{m_i(s,a)} }+\frac{64\log \iota }{m_i(s,a)}  }\notag \\  
    &\leq  \frac{64\cortran}{m_i(s,a)}+ 16\sqrt{\frac{|S_{k(s)+1}|\log \iota }{m_i(s,a)} }+\frac{64|S_{k(s)+1}|\log \iota }{m_i(s,a)}, \label{eq:l1bound_PandempiricalP}
\end{align}
where the last step uses the Cauchy-Schwarz inequality and the fact that $\sum_{s' \in k(s)+1} \cortran_i(s,a,s') \leq \cortran$. Since $\Vert P(\cdot|s,a) -\bar{P}_i(\cdot|s,a)   \Vert_1\leq 2$, we combine this trivial bound and the bound of $ \Vert P(\cdot|s,a) -\bar{P}_i(\cdot|s,a)   \Vert_1$ in \pref{eq:l1bound_PandempiricalP} to arrive at
\begin{align*}
\Vert P'(\cdot|s,a) -\bar{P}_i(\cdot|s,a)   \Vert_1 &\leq  2\cdot \min \cbr{1,\frac{32\cortran}{m_i(s,a)}+ 8\sqrt{\frac{|S_{k(s)+1}|\log \iota }{m_i(s,a)} }+\frac{32|S_{k(s)+1}|\log \iota }{m_i(s,a)}}\\
&=  2\cdot \min \cbr{1,\frac{32\cortran}{\whatm_i(s,a)}+ 8\sqrt{\frac{|S_{k(s)+1}|\log \iota }{\whatm_i(s,a)} }+\frac{32|S_{k(s)+1}|\log \iota }{\whatm_i(s,a)}},
\end{align*}
finishing the proof.
\end{proof}

We conclude this subsection with two other useful lemmas.
\begin{lemma}\label{lem:conf_bound_to_trueP} Conditioning on event $\eventcon$, it holds for all tuple $(s,a,s')$ and epoch $i$ that 
\begin{align*}
\abr{P(s'|s,a) - \bar{P}_i(s'|s,a)} \leq \order\rbr{ \min\cbr{1,\sqrt{\frac{P(s'|s,a) \log (\iota) }{\whatm_i(s,a)}} + \frac{\cortran+\log\rbr{\iota}}{\whatm_i(s,a)} }   }.
\end{align*}
\end{lemma}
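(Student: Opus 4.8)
The plan is to \emph{invert} the confidence bound guaranteed by $\eventcon$: under $\eventcon$ we know $\abr{P(s'|s,a)-\bar{P}_i(s'|s,a)}\le B_i(s,a,s')$ with $B_i$ written in terms of $\bar{P}_i(s'|s,a)$ under the square root, and we want to rewrite this in terms of the true $P(s'|s,a)$. First I would dispose of the saturated case: if $B_i(s,a,s')=1$, then since $P(s'|s,a),\bar{P}_i(s'|s,a)\in[0,1]$ we trivially have $\abr{P(s'|s,a)-\bar{P}_i(s'|s,a)}\le 1$, which already matches the claimed bound (whose right-hand side is a $\min$ with $1$). So it remains to treat the case $B_i(s,a,s')=16\sqrt{\bar{P}_i(s'|s,a)\log(\iota)/\whatm_i(s,a)}+64(\cortran+\log(\iota))/\whatm_i(s,a)$.

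The core step is a short self-bounding argument. Using $\bar{P}_i(s'|s,a)\le P(s'|s,a)+B_i(s,a,s')$ (which holds on $\eventcon$) together with $\sqrt{x+y}\le\sqrt{x}+\sqrt{y}$ gives
\[
B_i(s,a,s')\le 16\sqrt{\frac{P(s'|s,a)\log(\iota)}{\whatm_i(s,a)}}+16\sqrt{\frac{B_i(s,a,s')\log(\iota)}{\whatm_i(s,a)}}+\frac{64(\cortran+\log(\iota))}{\whatm_i(s,a)}.
\]
Applying AM-GM to the middle term, $16\sqrt{B_i(s,a,s')\log(\iota)/\whatm_i(s,a)}\le \frac{1}{2}B_i(s,a,s')+\frac{128\log(\iota)}{\whatm_i(s,a)}$, and rearranging yields $B_i(s,a,s')\le 32\sqrt{P(s'|s,a)\log(\iota)/\whatm_i(s,a)}+\order\rbr{(\cortran+\log(\iota))/\whatm_i(s,a)}$. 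Chaining this with $\abr{P(s'|s,a)-\bar{P}_i(s'|s,a)}\le B_i(s,a,s')$, and then taking the minimum with the trivial bound $\abr{P(s'|s,a)-\bar{P}_i(s'|s,a)}\le 1$, gives exactly the stated inequality; it holds uniformly over all $(s,a,s')$ and all epochs $i$ because the union over epochs is already built into the definition of $\eventcon$ (\pref{lem:eventcon}).

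I do not anticipate a genuine obstacle here — this is a routine inversion that parallels the $\ell_1$ version carried out in \pref{cor:L1concentration}. The only two points requiring care are: keeping the absolute constants in the self-bounding step so that the coefficient in front of $B_i(s,a,s')$ is strictly below $1$ (the choice $\frac{1}{2}$ suffices, and one could equally well use a coarser split), and handling the saturated branch $B_i(s,a,s')=1$ so that the final bound genuinely carries the $\min\{1,\cdot\}$ shape rather than only its second branch.
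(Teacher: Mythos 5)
Your proof is correct and uses the same self-bounding argument as the paper: substitute $\bar{P}_i(s'|s,a) \leq P(s'|s,a) + (\text{error})$ under the square root, split with $\sqrt{x+y}\leq\sqrt{x}+\sqrt{y}$, absorb the self-referential term via AM-GM, and rearrange. The only cosmetic difference is that you self-bound $B_i(s,a,s')$ whereas the paper self-bounds $\abr{P(s'|s,a)-\bar{P}_i(s'|s,a)}$ directly (via $\bar{P}_i\leq P+\abr{P-\bar{P}_i}$), which also lets it bypass your case split on whether $B_i$ is saturated: since $\abr{P-\bar{P}_i}\leq B_i=\min\{1,X\}\leq X$ for the unsaturated expression $X$, the paper's derivation proceeds uniformly without branching.
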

\begin{proof} Fix the epoch $i$ and tuple $(s,a,s')$. According to the definitions of $\eventcon$ in \pref{eq:def_eventcon} and $\whatm$, we have 
\begin{align*}
\abr{P(s'|s,a) - \bar{P}_i(s'|s,a)} \leq \min\cbr{1, 16\sqrt{\frac{\bar{P}_i(s'|s,a) \log (\iota)}{\whatm_i(s,a)}} + 64\cdot \frac{\cortran+\log\rbr{\iota}}{\whatm_i(s,a)} }.
\end{align*}
Therefore, by direct calculation, we have 
\begin{align*}
& \abr{P(s'|s,a) - \bar{P}_i(s'|s,a)} \\
& \leq 16 {\sqrt{\frac{\rbr{\abr{P(s'|s,a)-\bar{P}_i(s'|s,a) } + P(s'|s,a)}\log\rbr{\iota} }{\whatm_i(s,a)}} + 64\cdot \frac{\cortran+\log\rbr{\iota}}{\whatm_i(s,a)} } \\
& \leq  8  \rbr{\sqrt{\frac{P(s'|s,a)\log\rbr{\iota} }{\whatm_i(s,a)}} + \sqrt{\frac{\abr{P(s'|s,a)-\bar{P}_i(s'|s,a) }\log\rbr{\iota} }{\whatm_i(s,a)}}} + 64\cdot \frac{\cortran+\log\rbr{\iota}}{\whatm_i(s,a)}  \\
& = 8  \sqrt{\frac{P(s'|s,a)\log\rbr{\iota} }{\whatm_i(s,a)}}+ 64\cdot \frac{\cortran+\log\rbr{\iota}}{\whatm_i(s,a)}  + \sqrt{\abr{P(s'|s,a)-\bar{P}_i(s'|s,a) } \cdot \frac{64\log\rbr{\iota} }{\whatm_i(s,a)}} \\
& \leq 8  \sqrt{\frac{P(s'|s,a)\log\rbr{\iota} }{\whatm_i(s,a)}}+ 96\cdot \frac{\cortran+\log\rbr{\iota}}{\whatm_i(s,a)} + \frac{1}{2}\abr{P(s'|s,a)-\bar{P}_i(s'|s,a) },
\end{align*}
where the second step and last step follow from the fact that $\sqrt{xy}\leq \frac{1}{2}(x+y)$ for any $x,y\geq 0$. Finally, rearranging the above inequality finishes the proof. 
\end{proof}

\begin{lemma}(Lower Bound of Upper Occupancy Measure)\label{lem:lower_bound_of_uob} For any episode $t$ and state $s\neq s_L$, it always holds that $u_t(s) \geq \nicefrac{1}{|S|T}$. 
\end{lemma}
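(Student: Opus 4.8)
The plan is to lower bound $u_t(s)$ by the probability of reaching $s$ under one carefully chosen transition from the confidence set $\calP_{i(t)}$. First I would note that, since $q^{\whatP,\pi_t}(s,a)=\pi_t(a\mid s)\,q^{\whatP,\pi_t}(s)$ for every $\whatP$, the maximizer of $q^{\cdot,\pi_t}(s,a)$ over $\whatP\in\calP_{i(t)}$ is independent of $a$ and coincides with the maximizer of $q^{\cdot,\pi_t}(s)$; hence, by the definition of $u_t$ in \pref{eq:def_loss_estimator},
\[
u_t(s)=\sum_{a\in A}u_t(s,a)=\sum_{a\in A}\max_{\whatP\in\calP_{i(t)}}q^{\whatP,\pi_t}(s,a)=\max_{\whatP\in\calP_{i(t)}}q^{\whatP,\pi_t}(s).
\]
So it suffices to exhibit some $\whatP_0\in\calP_{i(t)}$ with $q^{\whatP_0,\pi_t}(s)\ge\nicefrac{1}{T}$ (which is $\ge\nicefrac{1}{|S|T}$). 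If $s=s_0$ this is immediate because every occupancy measure assigns probability $1$ to $s_0$, so assume $k:=k(s)\in\{1,\dots,L-1\}$.

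The construction I would use is to ``bend'' a $\nicefrac{1}{T}$ fraction of the probability mass that arrives at layer $k-1$ straight into $s$: define $\whatP_0$ by
\[
\whatP_0(s'\mid u,a)=\rbr{1-\tfrac1T}\bar{P}_{i(t)}(s'\mid u,a)+\tfrac1T\Ind{s'=s}\qquad\text{for all }(u,a)\in S_{k-1}\times A,\ s'\in S_k,
\]
and $\whatP_0(\cdot\mid u',a')=\bar{P}_{i(t)}(\cdot\mid u',a')$ for every other state-action pair. Each $\whatP_0(\cdot\mid u,a)$ is a convex combination of two distributions on $S_k$, hence a valid distribution, and for $(u,a)\in S_{k-1}\times A$ it satisfies $\abr{\whatP_0(s'\mid u,a)-\bar{P}_{i(t)}(s'\mid u,a)}\le\nicefrac1T$ for all $s'\in S_k$. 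Since $\bar{P}_{i(t)}\in\calP_{i(t)}$ trivially, membership $\whatP_0\in\calP_{i(t)}$ would then follow once one checks $B_{i(t)}(u,a,s')\ge\nicefrac1T$ for every tuple. This is the step that uses the term $64(\cortran+\log\iota)/\whatm_{i(t)}(u,a)$ in \pref{eq:def_conf_bounds}, together with $\whatm_{i(t)}(u,a)=\max\cbr{m_{i(t)}(u,a),\,\cortran+\log\iota}$ and the elementary fact that $m_{i(t)}(u,a)\le t-1<T$ (in a layered MDP each episode visits $(u,a)$ at most once), which gives $B_{i(t)}(u,a,s')\ge\min\cbr{1,\,64(\cortran+\log\iota)/T}\ge\nicefrac1T$.

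Finally I would invoke flow conservation: in a layered MDP every trajectory visits exactly one state of each layer, so $\sum_{u\in S_{k-1}}q^{\whatP_0,\pi_t}(u)=1$ under the transition $\whatP_0$ (this sum depends only on $\whatP_0$ restricted to layers $0,\dots,k-2$, where it equals $\bar{P}_{i(t)}$). Combining this with the defining property of $\whatP_0$ on layer $k-1$,
\[
q^{\whatP_0,\pi_t}(s)=\sum_{u\in S_{k-1}}q^{\whatP_0,\pi_t}(u)\sum_{a\in A}\pi_t(a\mid u)\,\whatP_0(s\mid u,a)\ \ge\ \frac1T\sum_{u\in S_{k-1}}q^{\whatP_0,\pi_t}(u)\sum_{a\in A}\pi_t(a\mid u)=\frac1T,
\]
so $u_t(s)\ge q^{\whatP_0,\pi_t}(s)\ge\nicefrac1T\ge\nicefrac{1}{|S|T}$. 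The only nonroutine point is the uniform lower bound $B_{i(t)}(u,a,s')\ge\nicefrac1T$ on the confidence widths in the second paragraph; the rest is bookkeeping, and in fact the argument yields the slightly stronger bound $u_t(s)\ge\nicefrac1T$.
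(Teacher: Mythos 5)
Your proposal is correct, and it follows essentially the same strategy as the paper: exhibit a transition inside the confidence set $\calP_{i(t)}$ obtained by a $\nicefrac{1}{T}$-convex perturbation of $\bar{P}_{i(t)}$, and observe that the perturbation is admissible because $B_{i(t)}(u,a,s')\geq \nicefrac{1}{T}$. The one difference is the choice of perturbation: the paper mixes $\bar P_{i(t)}$ with the \emph{uniform} distribution on the next layer at \emph{every} layer,
\[
\whatP(s'\mid u,a)=\Bigl(1-\tfrac1T\Bigr)\bar P_{i(t)}(s'\mid u,a)+\tfrac{1}{T\,|S_{k+1}|},
\]
which yields $u_t(s)\geq q^{\whatP,\pi_t}(s)\geq \nicefrac{1}{|S|T}$, whereas you mix with a Dirac mass at $s$ only on the edges entering layer $k(s)$, yielding the slightly sharper $u_t(s)\geq \nicefrac{1}{T}$. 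Both constructions are valid; the paper's has the cosmetic advantage of producing a single $\whatP$ independent of the target state $s$, while yours saves a factor of $|S|$ (immaterial for the lemma as stated). You are also more explicit than the paper about why $B_{i(t)}(u,a,s')\geq\nicefrac1T$ --- via $\whatm_{i(t)}(u,a)\leq\max\{T,\cortran+\log\iota\}$ and the $64(\cortran+\log\iota)/\whatm_{i(t)}$ term --- which the paper justifies only by a pointer to \pref{eq:def_conf_bounds}; this is a useful clarification, not a gap.
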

\begin{proof} Fix the episode $t$ and state $s$. We prove the lemma by constructing a specific transition $\whatP\in \calP_{i(t)}$, such that $q^{\whatP, \pi}(s) \geq \nicefrac{1}{|S|T}$ for any policy $\pi$, which suffices due to the definition of $u_t(s)$.

Specifically, $\whatP$ is defined as, for any tuple $(s,a,s')\in \tupleset_k$ and $k=0,\ldots,L-1$:
\begin{align*}
\whatP(s'|s,a) = \bar{P}_{i(t)}(s'|s,a)  \cdot \rbr{ 1 - \frac{1}{T}} + \frac{1}{|S_{k+1}|T}.
\end{align*}

By direct calculation, one can verify that $\whatP$ is a valid transition function. Then, we show that $\whatP\in \calP_{i(t)}$ by verifying the condition for any transition tuple $(s,a,s')\in \tupleset_k$ and $k=0,\ldots,L-1$:
\begin{align*}
\abr{ \whatP(s'|s,a) - \bar{P}_{i(t)}(s'|s,a) } = \frac{1}{T}\cdot \abr{  \bar{P}_{i(t)}(s'|s,a)  -  \frac{1}{|S_{k+1}|} } \leq \frac{1}{T} \leq B_{i(t)}(s,a,s'),
\end{align*}
where the last step follows from the definition of confidence intervals in \pref{eq:def_conf_bounds}.

Finally, we show that $q^{\whatP, \pi}(s)\geq \nicefrac{1}{|S|T}$ as:
\begin{align*}
q^{\whatP, \pi}(s) = \sum_{u\in S_{k(s)-1}}\sum_{v \in A} q^{\whatP, \pi}(u,v) \whatP(s|u,v) \geq \sum_{u\in S_{k(s)-1}}\sum_{v \in A} q^{\whatP, \pi}(u,v) \cdot \frac{1}{T|S|} = \frac{1}{T|S|},
\end{align*}
which concludes the proof. 
\end{proof}

\subsection{Difference Lemma}

\begin{lemma}[Theorem 5.2.1 of~\citep{kakade2003sample}](Performance Difference Lemma)\label{lem:perf_diff_lem} For any policies $\pi_1, \pi_2$ and any loss function $\ell: S \times A \rightarrow \fR$, 
\begin{align*}
& V^{P,\pi_1}(s_0;\ell) - V^{P,\pi_2}(s_0;\ell) \\
& = \sum_{s\neq s_L}\sum_{a\in A} q^{P,\pi_2}(s,a) \rbr{ V^{P,\pi_1}(s;\ell) - Q^{P,\pi_1}(s,a;\ell)  } \\
& = \sum_{s\neq s_L}\sum_{a\in A} q^{P,\pi_2}(s) \rbr{ \pi_1(a|s) - \pi_2(a|s)  } Q^{P,\pi_1}(s,a;\ell).
\end{align*}
\end{lemma}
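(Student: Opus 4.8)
�The statement to prove is the Performance Difference Lemma (Lemma~\ref{lem:perf_diff_lem}), a classical and well-known result. Let me sketch how I would prove it.

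\textbf{Proof proposal.}

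The plan is to establish both equalities by unrolling the value functions layer by layer. I would start from the second equality and derive the first as an immediate consequence, since $V^{P,\pi_1}(s;\ell) - Q^{P,\pi_1}(s,a;\ell) = \sum_{b} \pi_1(b|s) Q^{P,\pi_1}(s,b;\ell) - Q^{P,\pi_1}(s,a;\ell)$, and plugging this into $\sum_a q^{P,\pi_2}(s,a)(\cdots)$ and regrouping gives $\sum_a q^{P,\pi_2}(s)(\pi_1(a|s)-\pi_2(a|s))Q^{P,\pi_1}(s,a;\ell)$ after using $q^{P,\pi_2}(s,a) = q^{P,\pi_2}(s)\pi_2(a|s)$ and $\sum_a q^{P,\pi_2}(s,a) = q^{P,\pi_2}(s)$. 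So the crux is to show
\[
V^{P,\pi_1}(s_0;\ell) - V^{P,\pi_2}(s_0;\ell) = \sum_{s\neq s_L}\sum_{a\in A} q^{P,\pi_2}(s,a)\rbr{V^{P,\pi_1}(s;\ell) - Q^{P,\pi_1}(s,a;\ell)}.
\]

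For this I would use a telescoping argument over the layered structure. Write $V^{P,\pi_2}(s_0;\ell) = \sum_{s,a} q^{P,\pi_2}(s,a)\ell(s,a)$ (which holds by definition of the occupancy measure, as noted in the preliminaries). Then, for each state-action pair $(s,a)$ with $s \neq s_L$, use the Bellman identity $Q^{P,\pi_1}(s,a;\ell) = \ell(s,a) + \sum_{s'} P(s'|s,a) V^{P,\pi_1}(s';\ell)$ to substitute $\ell(s,a) = Q^{P,\pi_1}(s,a;\ell) - \sum_{s'}P(s'|s,a)V^{P,\pi_1}(s';\ell)$. This yields
\[
V^{P,\pi_2}(s_0;\ell) = \sum_{s,a} q^{P,\pi_2}(s,a)\rbr{Q^{P,\pi_1}(s,a;\ell) - \sum_{s'} P(s'|s,a)V^{P,\pi_1}(s';\ell)}.
\]
Now I would exploit the flow conservation property of occupancy measures: $\sum_{s\in S_k, a} q^{P,\pi_2}(s,a) P(s'|s,a) = q^{P,\pi_2}(s') = \sum_{a}q^{P,\pi_2}(s',a)$ for $s' \in S_{k+1}$, together with $\pi_1$-weighting $\sum_a \pi_1(a|s')Q^{P,\pi_1}(s',a;\ell) = V^{P,\pi_1}(s';\ell)$. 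Re-indexing the double sum $\sum_{s,a}q^{P,\pi_2}(s,a)\sum_{s'}P(s'|s,a)V^{P,\pi_1}(s';\ell)$ by the successor state shows it equals $\sum_{s'\neq s_0} q^{P,\pi_2}(s')V^{P,\pi_1}(s';\ell) = \sum_{s'\neq s_0}\sum_a q^{P,\pi_2}(s',a)V^{P,\pi_1}(s';\ell)$. Combining with the $q^{P,\pi_2}(s_0,a)V^{P,\pi_1}(s_0;\ell)$ boundary term (which sums to $V^{P,\pi_1}(s_0;\ell)$ since $\sum_a q^{P,\pi_2}(s_0,a) = 1$) gives exactly $V^{P,\pi_2}(s_0;\ell) = \sum_{s,a}q^{P,\pi_2}(s,a)Q^{P,\pi_1}(s,a;\ell) - \sum_{s\neq s_0}\sum_a q^{P,\pi_2}(s,a)V^{P,\pi_1}(s;\ell)$. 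Adding and subtracting appropriately and noting $V^{P,\pi_1}(s_L;\ell)=0$ rearranges to the claimed identity.

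\textbf{Main obstacle.} The only delicate point is the bookkeeping in the re-indexing step — correctly matching the flow out of layer $S_k$ with the occupancy of layer $S_{k+1}$, handling the initial state $s_0$ (no incoming flow) and terminal state $s_L$ (value zero) as boundary cases, and making sure the layered sum telescopes cleanly rather than leaving residual cross terms. None of this is conceptually hard; it is a routine but slightly fiddly manipulation of sums over the DAG structure. Since the paper cites this as Theorem 5.2.1 of~\citep{kakade2003sample}, I would in fact state it with that citation and give only the short telescoping argument above, rather than a fully detailed induction.
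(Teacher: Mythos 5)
Your proof is correct. Note, however, that the paper does not actually prove this lemma: it is stated with a citation to Theorem~5.2.1 of \citet{kakade2003sample} and no proof is given. What the paper \emph{does} prove is the analogous extension, \pref{lem:perf_diff_lem_ext}, which adapts the first identity to the ``leaky'' optimistic transition $\optP$ where $\sum_{s'} \optP(s'|s,a)$ may be less than one. The technique there is a forward recursive unrolling: decompose $V^{\optP,\pi_1}(s_0;\ell)-V^{\optP,\pi_2}(s_0;\ell)$ by inserting $\sum_a \pi_2(a|s_0)Q^{\optP,\pi_1}(s_0,a;\ell)$, extract the layer-$0$ advantage term, push the residual difference one layer deeper, and repeat layer by layer. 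Your approach is genuinely different: you start from $V^{P,\pi_2}(s_0;\ell)=\sum_{s,a}q^{P,\pi_2}(s,a)\ell(s,a)$, substitute the Bellman identity $\ell(s,a)=Q^{P,\pi_1}(s,a;\ell)-\sum_{s'}P(s'|s,a)V^{P,\pi_1}(s';\ell)$, and then re-index via the flow conservation relation $\sum_{s,a}q^{P,\pi_2}(s,a)P(s'|s,a)=q^{P,\pi_2}(s')$. Both routes are standard and both go through cleanly for the layered MDP; the recursive route is arguably a bit more self-contained (it does not presuppose the occupancy-measure representation of $V^{\pi_2}$ as an already-established fact), while your global substitution route is shorter and makes the telescoping structure more transparent. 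You also give the reduction from the first identity to the second, which the paper omits for the extension; that step is routine and your derivation of it is correct.
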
 

In fact, the same also holds for our optimistic transition where the layer structure is violated. 
For completeness, we include a proof below.
\begin{lemma}\label{lem:perf_diff_lem_ext}
 For any policies $\pi_1, \pi_2$, any loss function $\ell: S \times A \rightarrow \fR$, and any transition $\optP$ where $\sum_{s'\in S_{k(s)}+1} \optP(s'|s,a) \leq 1$ for all state-action pairs $(s,a)$ (the remaining probability is assigned to $s_L$), we have
 \[
V^{\optP,\pi_1}(s_0;\ell) - V^{\optP,\pi_2}(s_0;\ell)
= \sum_{s\neq s_L}\sum_{a\in A} q^{\optP,\pi_2}(s,a) \rbr{ V^{\optP,\pi_1}(s;\ell) - Q^{\optP,\pi_1}(s,a;\ell)  }.
 \]
\end{lemma}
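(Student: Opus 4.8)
The plan is to mimic the standard proof of the performance difference lemma (\pref{lem:perf_diff_lem}), verifying that it never actually uses the layer partition and therefore goes through verbatim once $s_L$ is treated as an absorbing, zero-loss state. First I would record the structural fact that under $\optP$ every transition out of a state $s\in S_k$ with $k<L$ lands either in $S_{k+1}$ or directly in $s_L$; in either case the layer index strictly increases, so every trajectory generated by any policy reaches $s_L$ after at most $L$ steps. Consequently $V^{\optP,\pi}(s_0;\ell)=\E_{\tau\sim\pi,\optP}\big[\sum_{h\geq0}\ell(s_h,a_h)\big]$ is a finite sum, each $(s,a)$ with $s\neq s_L$ is visited at most once along $\tau$, and $q^{\optP,\pi}(s,a)$ is exactly the probability that $\tau$ visits $(s,a)$, with $s_L$ contributing nothing since $V^{\optP,\pi}(s_L;\ell)=0$.

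Next I would carry out the telescoping argument along a trajectory $\tau=(s_0,a_0,s_1,a_1,\dots)$ drawn from $\pi_2$ and $\optP$. Since $\E_{\tau\sim\pi_2,\optP}\big[\sum_{h\geq0}(V^{\optP,\pi_1}(s_h;\ell)-V^{\optP,\pi_1}(s_{h+1};\ell))\big]=V^{\optP,\pi_1}(s_0;\ell)-V^{\optP,\pi_1}(s_L;\ell)=V^{\optP,\pi_1}(s_0;\ell)$ by telescoping, and $\E_{\tau\sim\pi_2,\optP}\big[\sum_{h\geq0}\ell(s_h,a_h)\big]=V^{\optP,\pi_2}(s_0;\ell)$, subtracting yields
\[
V^{\optP,\pi_1}(s_0;\ell)-V^{\optP,\pi_2}(s_0;\ell)
=\E_{\tau\sim\pi_2,\optP}\Big[\sum_{h\geq0}\big(V^{\optP,\pi_1}(s_h;\ell)-\ell(s_h,a_h)-V^{\optP,\pi_1}(s_{h+1};\ell)\big)\Big].
\]
Conditioning each summand on $(s_h,a_h)$ and invoking the Bellman relation $\E[\ell(s_h,a_h)+V^{\optP,\pi_1}(s_{h+1};\ell)\mid s_h,a_h]=Q^{\optP,\pi_1}(s_h,a_h;\ell)$ — which holds for $\optP$ exactly as for a layered transition, again because $V^{\optP,\pi_1}(s_L;\ell)=0$ — turns this into
\[
V^{\optP,\pi_1}(s_0;\ell)-V^{\optP,\pi_2}(s_0;\ell)
=\E_{\tau\sim\pi_2,\optP}\Big[\sum_{h\geq0}\big(V^{\optP,\pi_1}(s_h;\ell)-Q^{\optP,\pi_1}(s_h,a_h;\ell)\big)\Big].
\]

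Finally I would convert the expectation over trajectory steps into the occupancy-measure form: by the single-visit property from the first step, each $(s,a)$ with $s\neq s_L$ appears in the sum with probability $q^{\optP,\pi_2}(s,a)$, and $s_L$ contributes $V^{\optP,\pi_1}(s_L;\ell)=0$, so the right-hand side equals $\sum_{s\neq s_L}\sum_{a\in A}q^{\optP,\pi_2}(s,a)\big(V^{\optP,\pi_1}(s;\ell)-Q^{\optP,\pi_1}(s,a;\ell)\big)$, which is the claim. The only point requiring care — and the step I would be most cautious about — is justifying that $q^{\optP,\pi_2}(s,a)$ is legitimately this visitation probability despite the broken layer structure; this is exactly the strict monotonicity of the layer index established at the outset, so it is bookkeeping rather than a genuine obstacle, and everything else is a direct transcription of the proof of \pref{lem:perf_diff_lem}.
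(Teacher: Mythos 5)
Your proof is correct but uses a genuinely different method from the paper's. The paper proves this lemma by a direct algebraic expansion: it writes $V^{\optP,\pi_1}(s_0)-V^{\optP,\pi_2}(s_0)$ as a sum that isolates the layer-$0$ contribution $\sum_a q^{\optP,\pi_2}(s_0,a)(V^{\optP,\pi_1}(s_0)-Q^{\optP,\pi_1}(s_0,a))$ plus a residual of the form $\sum_{s'\in S_1} q^{\optP,\pi_2}(s')(V^{\optP,\pi_1}(s')-V^{\optP,\pi_2}(s'))$, and then unwinds the recursion layer by layer. Your proof instead uses the trajectory-based telescoping identity $V^{\optP,\pi_1}(s_0)=\E_{\tau\sim\pi_2,\optP}\big[\sum_h (V^{\optP,\pi_1}(s_h)-V^{\optP,\pi_1}(s_{h+1}))\big]$, subtracts the defining expression for $V^{\optP,\pi_2}(s_0)$, and applies the one-step Bellman relation inside the expectation. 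Both are classical routes to the performance difference lemma; the paper's is more economical and purely symbolic, while yours makes the probabilistic content more visible and requires the extra observation (which you correctly supply) that trajectories under $\optP$ still terminate at $s_L$ in at most $L$ steps, so that the occupancy measure $q^{\optP,\pi_2}(s,a)$ is legitimately the single-visit probability. One small point worth stating explicitly if you were to write this up: the Bellman relation $\E[\ell(s_h,a_h)+V^{\optP,\pi_1}(s_{h+1})\mid s_h,a_h]=Q^{\optP,\pi_1}(s_h,a_h;\ell)$ requires that the "missing" probability mass that $\optP$ sends to $s_L$ contributes nothing; this is exactly because $V^{\optP,\pi_1}(s_L;\ell)=0$, so the paper's definition of $Q$ (which only sums over $s'\in S_{k(s)+1}$) agrees with the expectation over $s'\sim\optP(\cdot\,|\,s,a)$ including $s_L$. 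You flag this implicitly, but it is the crux of why the broken layer structure is harmless here.
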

\begin{proof} By direct calculation, we have for any state $s\neq s_L$: 
\begin{align*}
V^{\optP,\pi_1}(s_0;\ell) - V^{\optP,\pi_2}(s_0;\ell) & = \rbr{\sum_{a \in A} \pi_2(a|s_0)}V^{\optP,\pi_1}(s_0;\ell) - \sum_{a \in A} \pi_2(a|s_0) Q^{\optP,\pi_1}(s_0,a;\ell) \\
& \quad + \sum_{a \in A} \pi_2(a|s_0) \rbr{ Q^{\optP,\pi_1}(s_0,a;\ell) - Q^{\optP,\pi_2}(s_0,a;\ell) } \\
& = \sum_{a\in A} q^{\optP,\pi_2}(s_0,a) \rbr{ V^{\optP,\pi_1}(s_0;\ell) - Q^{\optP,\pi_1}(s_0,a;\ell) } \\
& \quad + \sum_{a \in A} \sum_{s' \in S_1} \pi_2(a|s_0) \optP(s'|s_0,a) \rbr{ V^{\optP,\pi_1}(s';\ell) - V^{\optP,\pi_2}(s';\ell) } \\
& = \sum_{a\in A} q^{\optP,\pi_2}(s_0,a) \rbr{ V^{\optP,\pi_1}(s_0;\ell) - Q^{\optP,\pi_1}(s_0,a;\ell) } \\
& \quad + \sum_{a \in A} \sum_{s' \in S_1} q^{\optP,\pi_2}(s')\rbr{ V^{\optP,\pi_1}(s';\ell) - V^{\optP,\pi_2}(s';\ell) } \\
& = \sum_{s\neq s_L}\sum_{a\in A} q^{\optP,\pi_2}(s,a) \rbr{ V^{\optP,\pi_1}(s;\ell) - Q^{\optP,\pi_1}(s,a;\ell)  },
\end{align*}
where the last step follows from recursively repeating the first three steps. 
\end{proof}

\begin{lemma}(Occupancy Measure Difference, \cite[Lemma D.3.1]{jin2021best})\label{lem:occ_diff_lem} For any transition functions $P_1,P_2$ and any policy $\pi$, 
\begin{align*}
q^{P_1,\pi}(s) - q^{P_2,\pi}(s) &= \sum_{k=0}^{k(s)-1}\sum_{\rbr{u,v,w}\in \tupleset_k}q^{P_1,\pi}(u,v)\rbr{P_1(w|u,v)-P_2(w|u,v) } q^{P_2,\pi}(s|w) \\
& = \sum_{k=0}^{k(s)-1}\sum_{\rbr{u,v,w}\in \tupleset_k}q^{P_2,\pi}(u,v)\rbr{P_1(w|u,v)-P_2(w|u,v) } q^{P_1,\pi}(s|w),
\end{align*}
where $q^{P',\pi}(s|w)$ is the probability of visiting $s$ starting $w$ under policy $\pi$ and transition $P'$.
\end{lemma}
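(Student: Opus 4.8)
The plan is to prove the first identity by a hybrid/telescoping argument over layers, and then obtain the second by symmetry. Fix a state $s$ and write $K=k(s)$. If $K=0$ then $s=s_0$, both occupancy measures equal $1$, the right-hand sums are empty, and there is nothing to prove; so assume $K\geq 1$. For each $h\in\{0,1,\dots,K\}$ define a hybrid reaching probability $\rho_h(s)$: the probability of visiting $s$ when the trajectory is generated by the policy $\pi$ together with transition $P_1$ for steps leaving layers $0,\dots,h-1$ and transition $P_2$ for steps leaving layers $h,\dots,K-1$. By construction $\rho_0(s)=q^{P_2,\pi}(s)$ (every relevant step uses $P_2$) and $\rho_K(s)=q^{P_1,\pi}(s)$ (every step that can matter for reaching layer $K$ uses $P_1$). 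Hence
\[
q^{P_1,\pi}(s)-q^{P_2,\pi}(s)=\sum_{h=0}^{K-1}\bigl(\rho_{h+1}(s)-\rho_h(s)\bigr).
\]

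Next I would evaluate a single increment $\rho_{h+1}(s)-\rho_h(s)$. The two hybrids agree on all steps leaving layers $0,\dots,h-1$ (both use $P_1$), so the induced marginal on the state-action pair visited in layer $h$ is exactly $q^{P_1,\pi}(u,v)$ for $(u,v)\in S_h\times A$; they also agree on all steps leaving layers $h+1,\dots,K-1$ (both use $P_2$), so conditioned on arriving at some $w\in S_{h+1}$ the continuation probability of reaching $s$ is $q^{P_2,\pi}(s|w)$ in both. The only discrepancy is the single step from layer $h$ to layer $h+1$, which uses $P_1(\cdot|u,v)$ in $\rho_{h+1}$ and $P_2(\cdot|u,v)$ in $\rho_h$. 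This yields
\[
\rho_{h+1}(s)-\rho_h(s)=\sum_{(u,v,w)\in\tupleset_h}q^{P_1,\pi}(u,v)\bigl(P_1(w|u,v)-P_2(w|u,v)\bigr)q^{P_2,\pi}(s|w),
\]
and summing over $h=0,\dots,K-1=k(s)-1$ gives the first claimed identity. The second identity then follows by applying the first with the roles of $P_1$ and $P_2$ swapped and negating both sides, using $P_2(w|u,v)-P_1(w|u,v)=-(P_1(w|u,v)-P_2(w|u,v))$.

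The only real work is the bookkeeping in the single-increment step: one must justify that "reaching state-action pair $(u,v)$ in layer $h$" is determined solely by transitions leaving layers $<h$ (hence equals $q^{P_1,\pi}(u,v)$ regardless of how later-layer transitions are set), and that "reaching $s$ from $w\in S_{h+1}$" is determined solely by transitions leaving layers $\geq h+1$ (hence equals $q^{P_2,\pi}(s|w)$ regardless of earlier choices). This is immediate from the acyclic layered structure — a trajectory passes through each layer exactly once and never returns — but it is the one place requiring care. An alternative route is induction on $k(s)$: expand $q^{P_j,\pi}(s)=\sum_{u\in S_{k(s)-1},\,v\in A}q^{P_j,\pi}(u)\pi(v|u)P_j(s|u,v)$ for $j\in\{1,2\}$, add and subtract $q^{P_1,\pi}(u,v)P_2(s|u,v)$ inside the difference, treat the $P_1(s|u,v)-P_2(s|u,v)$ part as the top-layer ($k=k(s)-1$) term of the target sum using $q^{P_2,\pi}(s|s)=1$, and invoke the inductive hypothesis on $q^{P_1,\pi}(u)-q^{P_2,\pi}(u)$ together with the recursion $\sum_{u,v}q^{P_2,\pi}(u|w)\pi(v|u)P_2(s|u,v)=q^{P_2,\pi}(s|w)$ for the remaining part; the second identity again follows by symmetry.
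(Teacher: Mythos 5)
Your proof is correct. The paper does not reprove this lemma (it cites Jin et al., 2021, Lemma D.3.1), and your hybrid/telescoping argument — interpolating one layer at a time between $P_2$ and $P_1$, using the acyclic layered structure to factor each increment through the marginal $q^{P_1,\pi}(u,v)$ at layer $h$ and the continuation $q^{P_2,\pi}(s|w)$ from layer $h+1$, then obtaining the second identity by swapping $P_1\leftrightarrow P_2$ and negating — is the standard derivation of this occupancy-measure difference identity.
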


\begin{lemma}(\cite[Lemma 17]{dann2023best})\label{lem:dann_prob_diff_lem}  For any policies $\pi_1,\pi_2$ and transition function $P$,
\begin{align*}
\sum_{s\neq s_L}\sum_{a\in A} \abr{ q^{P,\pi_1}(s,a) - q^{P,\pi_2}(s,a)   } & \leq L \sum_{s\neq s_L}\sum_{a\in A} q^{P,\pi_1}(s)\abr{ \pi_1(a|s) - \pi_2(a|s)  }.
\end{align*}
\end{lemma}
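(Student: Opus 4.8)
The statement to prove is Lemma~\ref{lem:dann_prob_diff_lem}: for any policies $\pi_1,\pi_2$ and transition $P$,
\[
\sum_{s\neq s_L}\sum_{a\in A}\abr{q^{P,\pi_1}(s,a) - q^{P,\pi_2}(s,a)} \leq L\sum_{s\neq s_L}\sum_{a\in A} q^{P,\pi_1}(s)\abr{\pi_1(a|s) - \pi_2(a|s)}.
\]
The natural approach is to exploit the layered structure: define, for each layer index $k$, the quantity $\Phi_k = \sum_{s\in S_k}\abr{q^{P,\pi_1}(s) - q^{P,\pi_2}(s)}$ (the total variation between the state-visitation distributions at layer $k$ under the two policies but the same transition), and prove a one-step recursion $\Phi_{k+1} \leq \Phi_k + \sum_{s\in S_k} q^{P,\pi_1}(s)\sum_{a}\abr{\pi_1(a|s)-\pi_2(a|s)}$. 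Since $\Phi_0 = 0$ (both start at $s_0$), unrolling gives $\Phi_k \leq \sum_{h<k}\sum_{s\in S_h} q^{P,\pi_1}(s)\norm{\pi_1(\cdot|s)-\pi_2(\cdot|s)}_1$ for every $k$. Then $\abr{q^{P,\pi_1}(s,a) - q^{P,\pi_2}(s,a)}$ for $s\in S_k$ is bounded via $q(s,a) = q(s)\pi(a|s)$ by the triangle inequality: $\abr{q^{P,\pi_1}(s)\pi_1(a|s) - q^{P,\pi_2}(s)\pi_2(a|s)} \leq q^{P,\pi_1}(s)\abr{\pi_1(a|s)-\pi_2(a|s)} + \abr{q^{P,\pi_1}(s)-q^{P,\pi_2}(s)}\pi_2(a|s)$. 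Summing over $a$ and then over $s\in S_k$ yields $\sum_{s\in S_k}\sum_a \abr{q^{P,\pi_1}(s,a)-q^{P,\pi_2}(s,a)} \leq \sum_{s\in S_k} q^{P,\pi_1}(s)\norm{\pi_1(\cdot|s)-\pi_2(\cdot|s)}_1 + \Phi_k$. Finally summing this over all layers $k=0,\dots,L-1$ and plugging in the bound $\Phi_k \leq \sum_{h<k}\sum_{s\in S_h} q^{P,\pi_1}(s)\norm{\pi_1(\cdot|s)-\pi_2(\cdot|s)}_1$ gives each term $\sum_{s\in S_h} q^{P,\pi_1}(s)\norm{\cdot}_1$ counted at most $L$ times, which is exactly the claimed $L$ factor.

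The recursion for $\Phi_{k+1}$ is the technical heart. Writing $q^{P,\pi}(s') = \sum_{(u,v)\in S_k\times A} q^{P,\pi}(u,v) P(s'|u,v) = \sum_{(u,v)} q^{P,\pi}(u)\pi(v|u)P(s'|u,v)$ for $s'\in S_{k+1}$, subtract the two and bound:
\[
\abr{q^{P,\pi_1}(s') - q^{P,\pi_2}(s')} \leq \sum_{u\in S_k}\sum_{v\in A} P(s'|u,v)\abr{q^{P,\pi_1}(u)\pi_1(v|u) - q^{P,\pi_2}(u)\pi_2(v|u)},
\]
then apply the same triangle-inequality split as above inside, and sum over $s'\in S_{k+1}$ using $\sum_{s'}P(s'|u,v)=1$. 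This produces $\Phi_{k+1} \leq \sum_{u\in S_k} q^{P,\pi_1}(u)\norm{\pi_1(\cdot|u)-\pi_2(\cdot|u)}_1 + \sum_{u\in S_k}\abr{q^{P,\pi_1}(u)-q^{P,\pi_2}(u)}\sum_v\pi_2(v|u)$, and the last factor is $1$, giving the desired $\Phi_{k+1}\leq (\text{layer-}k\text{ cost}) + \Phi_k$.

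I don't anticipate a genuine obstacle here; the only thing requiring care is bookkeeping the layer indices so that the final double-counting of each layer's policy-discrepancy term is bounded by exactly $L$ (there are $L$ layers $0,\dots,L-1$, and the term from layer $h$ appears in $\Phi_k$ for each $k>h$ plus once as a direct term, which is at most $L$ occurrences). An alternative, slicker route is to invoke the performance difference lemma (Lemma~\ref{lem:perf_diff_lem}) with cleverly chosen loss functions — taking $\ell$ supported on a single state-action pair — but extracting an $\ell_1$ bound on the occupancy-measure difference from value differences would require summing over a family of unit losses and seems to give a messier constant, so I would stick with the direct layer-by-layer induction. Since this is quoted from \citet{dann2023best} as an external lemma, one could also simply cite it, but the self-contained inductive proof above is short enough to include.
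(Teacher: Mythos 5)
Your proof is correct. The paper does not prove this lemma itself but only cites it as \citep[Lemma 17]{dann2023best}, so there is no internal argument to compare against; your self-contained inductive proof is a sound replacement. The two key steps both hold: (i) the one-step recursion follows from $q^{P,\pi}(s')=\sum_{u,v}q^{P,\pi}(u)\pi(v|u)P(s'|u,v)$, the split $q^{P,\pi_1}(u)\pi_1(v|u)-q^{P,\pi_2}(u)\pi_2(v|u)=q^{P,\pi_1}(u)\bigl(\pi_1(v|u)-\pi_2(v|u)\bigr)+\bigl(q^{P,\pi_1}(u)-q^{P,\pi_2}(u)\bigr)\pi_2(v|u)$, and $\sum_{s'}P(s'|u,v)=\sum_v\pi_2(v|u)=1$; (ii) the per-layer bound $\sum_{s\in S_k}\sum_a\abr{q^{P,\pi_1}(s,a)-q^{P,\pi_2}(s,a)}\le\sum_{s\in S_k}q^{P,\pi_1}(s)\nbr{\pi_1(\cdot|s)-\pi_2(\cdot|s)}_1+\Phi_k$ follows from the same split applied at layer $k$.

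The final count is also right and worth stating explicitly: after unrolling $\Phi_k\le\sum_{h<k}\sum_{u\in S_h}q^{P,\pi_1}(u)\nbr{\pi_1(\cdot|u)-\pi_2(\cdot|u)}_1$ (using $\Phi_0=0$) and summing over $k=0,\dots,L-1$, the layer-$h$ discrepancy term appears once directly at $k=h$ and once inside $\Phi_k$ for each $k\in\{h+1,\dots,L-1\}$, hence $1+(L-1-h)=L-h\le L$ times, giving the claimed factor of $L$. One small point of care: the split must keep $\pi_2$ as the residual factor (so that $\sum_v\pi_2(v|u)=1$) and $q^{P,\pi_1}$ as the weight on the policy difference, which is exactly what makes $q^{P,\pi_1}$ (and not $q^{P,\pi_2}$) appear on the right-hand side as stated; you used the correct split.
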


\begin{corollary}\label{cor:dann_prob_diff_lem}For any policies $\pi_1$, mapping $\pi_2:S\to A$ (that is, a deterministic policy), and transition function $P$, we have
\begin{align*}
\sum_{s\neq s_L}\sum_{a\in A} \abr{ q^{P,\pi_1}(s,a) - q^{P,\pi_2}(s,a)   } & \leq 2L \sum_{s\neq s_L}\sum_{a\neq \pi_2(s) } q^{P,\pi_1}(s,a)  .
\end{align*}    
\end{corollary}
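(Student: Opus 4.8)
The plan is to obtain the bound as a direct corollary of \pref{lem:dann_prob_diff_lem}, with the only extra work being the simplification of the right-hand side when the comparator policy is deterministic. First I would apply \pref{lem:dann_prob_diff_lem} with the given $\pi_1$ and $\pi_2$, which gives
\[
\sum_{s\neq s_L}\sum_{a\in A} \abr{ q^{P,\pi_1}(s,a) - q^{P,\pi_2}(s,a)} \leq L \sum_{s\neq s_L}\sum_{a\in A} q^{P,\pi_1}(s)\abr{ \pi_1(a|s) - \pi_2(a|s)}.
\]

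Next I would use that $\pi_2$ is a deterministic policy, i.e.\ $\pi_2(a|s) = \ind\{a = \pi_2(s)\}$, to rewrite the $\ell_1$ distance between the two action distributions at each state $s$ as
\[
\sum_{a\in A}\abr{\pi_1(a|s) - \pi_2(a|s)} = \rbr{1 - \pi_1(\pi_2(s)|s)} + \sum_{a\neq \pi_2(s)}\pi_1(a|s) = 2\sum_{a\neq \pi_2(s)}\pi_1(a|s),
\]
where the second equality uses $1 - \pi_1(\pi_2(s)|s) = \sum_{a\neq \pi_2(s)}\pi_1(a|s)$. Substituting this back and using the identity $q^{P,\pi_1}(s)\pi_1(a|s) = q^{P,\pi_1}(s,a)$ then yields
\[
L\sum_{s\neq s_L} q^{P,\pi_1}(s)\cdot 2\sum_{a\neq \pi_2(s)}\pi_1(a|s) = 2L\sum_{s\neq s_L}\sum_{a\neq \pi_2(s)} q^{P,\pi_1}(s,a),
\]
which is exactly the claimed inequality.

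There is no real obstacle here: the statement is essentially a specialization of \pref{lem:dann_prob_diff_lem}, and the only point requiring any care is the elementary bookkeeping that collapses $\sum_a\abr{\pi_1(a|s)-\ind\{a=\pi_2(s)\}}$ into $2\sum_{a\neq\pi_2(s)}\pi_1(a|s)$ (the factor $2$ is the source of the $2L$ in the bound). If \pref{lem:dann_prob_diff_lem} were not already available, the hard part would instead be its proof via the occupancy-measure difference identity (\pref{lem:occ_diff_lem}), but we may take it as given.
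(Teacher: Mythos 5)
Your proposal is correct and follows essentially the same route as the paper: apply \pref{lem:dann_prob_diff_lem}, collapse $\sum_a\abr{\pi_1(a|s)-\pi_2(a|s)}$ to $2\sum_{a\neq\pi_2(s)}\pi_1(a|s)$ using that $\pi_2$ is deterministic, and absorb $q^{P,\pi_1}(s)\pi_1(a|s)=q^{P,\pi_1}(s,a)$. Nothing is missing.
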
 
\begin{proof} According to \pref{lem:dann_prob_diff_lem}, we have 
\begin{align*}
\sum_{s\neq s_L}\sum_{a\in A} \abr{ q^{P,\pi_1}(s,a) - q^{P,\pi_2}(s,a)   } & \leq L \sum_{s\neq s_L}\sum_{a\in A} q^{P,\pi_1}(s)\abr{ \pi_1(a|s) - \pi_2(a|s)  }.
\end{align*}  
Note that, for every state $s$, it holds that 
\begin{align*}
\sum_{a\in A}\abr{ \pi_1(a|s) - \pi_2(a|s)  } = \sum_{a\neq \pi_2(s)} \pi_1(a|s) + \abr{\pi_1(\pi_2(s)|s) -1  } = 2 \sum_{a\neq \pi_2(s)} \pi_1(a|s), 
\end{align*}
where the first step follows from the fact that $\pi_2(a|s)=1$ when $a=\pi_2(s)$, and $\pi_2(b|s)=0$ for any other action $b\neq \pi_2(s)$. 

Therefore, we have 
\begin{align*}
L \sum_{s\neq s_L}\sum_{a\in A} q^{P,\pi_1}(s)\abr{ \pi_1(a|s) - \pi_2(a|s)  } \leq 2L \sum_{s\neq s_L}\sum_{a\neq \pi_2(s) } q^{P,\pi_1}(s,a),
\end{align*}
which concludes the proof. 
\end{proof}

According to \pref{lem:occ_diff_lem}, we can estimate the occupancy measure difference caused by the corrupted transition function $P_t$ at episode $t$. 

\begin{corollary}\label{cor:corrupt_occ_diff} For any episode $t$ and any policy $\pi$, we have 
\begin{align*}
\abr{q^{P,\pi}(s) - q^{P_t,\pi}(s)} \leq \cortran_t, \quad \forall s \neq s_L, \quad \text{and} \quad \sum_{s\neq s_L}\abr{q^{P,\pi}(s) - q^{P_t,\pi}(s)} \leq L\cortran_t.
\end{align*}
\end{corollary}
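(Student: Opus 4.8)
The plan is to derive both inequalities directly from the occupancy-measure difference identity (\pref{lem:occ_diff_lem}) applied with $P_1 = P$ and $P_2 = P_t$, which gives
\[
q^{P,\pi}(s) - q^{P_t,\pi}(s) = \sum_{k=0}^{k(s)-1}\sum_{(u,v,w)\in \tupleset_k} q^{P,\pi}(u,v)\rbr{P(w|u,v) - P_t(w|u,v)} q^{P_t,\pi}(s|w),
\]
and then to bound the right-hand side layer by layer. For the per-state claim I would take absolute values, apply the triangle inequality, bound $q^{P_t,\pi}(s|w)\le 1$, sum $w$ over $S_{k+1}$ to obtain $\norm{P(\cdot|u,v) - P_t(\cdot|u,v)}_1 \le \cortran_{t,k}$ (recall $\cortran_{t,k} = \max_{(u,v)\in S_k\times A}\norm{P(\cdot|u,v)-P_t(\cdot|u,v)}_1$ from the proof of \pref{lem:uobreps_regone}), and finally use $\sum_{u\in S_k}\sum_{v\in A} q^{P,\pi}(u,v) = 1$. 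Hence layer $k$ contributes at most $\cortran_{t,k}$, and summing over $k = 0,\dots,k(s)-1\le L-1$ yields $\abr{q^{P,\pi}(s) - q^{P_t,\pi}(s)} \le \sum_{k=0}^{L-1}\cortran_{t,k} = \cortran_t$.

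For the summed claim I would avoid the naive $|S|$ factor by summing the decomposition over all states within a single layer at a time. Fixing a layer $m\in\{1,\dots,L-1\}$,
\[
\sum_{s\in S_m}\abr{q^{P,\pi}(s) - q^{P_t,\pi}(s)} \le \sum_{k=0}^{m-1}\sum_{(u,v,w)\in \tupleset_k} q^{P,\pi}(u,v)\abr{P(w|u,v) - P_t(w|u,v)}\sum_{s\in S_m} q^{P_t,\pi}(s|w),
\]
and since $w\in S_{k+1}$ with $k+1\le m$ the inner sum $\sum_{s\in S_m}q^{P_t,\pi}(s|w)=1$; repeating the estimates above gives $\sum_{s\in S_m}\abr{\cdots}\le \sum_{k=0}^{m-1}\cortran_{t,k}\le\cortran_t$. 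Summing over the at most $L-1$ layers $m=1,\dots,L-1$ — noting $S_0=\{s_0\}$ contributes nothing because $q^{P,\pi}(s_0)=q^{P_t,\pi}(s_0)=1$, and $s_L$ is excluded — gives $\sum_{s\neq s_L}\abr{q^{P,\pi}(s) - q^{P_t,\pi}(s)} \le (L-1)\cortran_t \le L\cortran_t$.

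There is no substantive obstacle here; the statement is essentially a corollary of \pref{lem:occ_diff_lem} together with the definition of $\cortran_t$. The only points requiring care are bookkeeping: correctly identifying which layers $k$ appear in the decomposition for a state in layer $m$ (namely $k<m$, so that $w$ lies in a layer no later than $m$), and using that $q^{P_t,\pi}(\cdot|w)$ restricted to a single layer at or beyond $k(w)$ is a probability distribution, so it sums to $1$ in the summed bound and is at most $1$ in the per-state bound. Everything else is a routine application of the triangle inequality and the normalization $\sum_{(u,v)\in S_k\times A}q^{P,\pi}(u,v)=1$.
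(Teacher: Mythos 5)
Your proof is correct and takes essentially the same route as the paper: both arguments apply the occupancy-measure difference identity (Lemma~\ref{lem:occ_diff_lem}), use $q^{P_t,\pi}(s|w)\le 1$ (or $=1$ after summing a single target layer), collapse the $w$-sum into the $\ell_1$ norm bounded by $\cortran_{t,k}$, and normalize $q^{P,\pi}$ over each source layer. The paper's proof of the summed claim exchanges the order of the $(k,h)$ double sum and bounds the inner sum by $L$, whereas you iterate target-layer by target-layer and sum over $m$; these are the same bookkeeping carried out in a different order, and your version even yields the marginally sharper $(L-1)\cortran_t$.
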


\begin{proof} By direct calculation, we have for any episode $t$ and any $s \neq s_L$
\begin{align*}
\abr{q^{P,\pi}(s) - q^{P_t,\pi}(s)} &\leq \sum_{k=0}^{k(s)-1}\sum_{\rbr{u,v,w}\in \tupleset_k}q^{P,\pi}(u,v)\abr{P(w|u,v)-P_t(w|u,v) } q^{P_t,\pi}(s|w) \\
& \leq \sum_{k=0}^{k(s)-1}\sum_{u\in S_k}\sum_{v\in A} q^{P,\pi}(u,v)\norm{P(\cdot|u,v)-P_t(\cdot|u,v) }_1 \\
& \leq \cortran_t, 
\end{align*}
where the first step follows from \pref{lem:occ_diff_lem}, the second step bounds $q^{P_t,\pi}(s|w) \leq 1$, and the last two steps follows from the definition of $\cortran_t$. 

Moreover, taking the summation over all states $s \neq s_L$, we have 
\begin{align*}
&\sum_{s\neq s_L}\abr{q^{P,\pi}(s) - q^{P_t,\pi}(s)} \\
&\leq \sum_{s\neq s_L}\sum_{k=0}^{k(s)-1}\sum_{\rbr{u,v,w}\in \tupleset_k}q^{P,\pi}(u,v)\abr{P(w|u,v)-P_t(w|u,v) } q^{P_t,\pi}(s|w) \\ 
& = \sum_{k=0}^{L-1}\sum_{\rbr{u,v,w}\in \tupleset_k}q^{P,\pi}(u,v)\abr{P(w|u,v)-P_t(w|u,v) } \sum_{h=k+1}^{L-1}\sum_{s\in  S_h}q^{P_t,\pi}(s|w) \\
& \leq L\sum_{k=0}^{k(s)-1}\sum_{u\in S_k}\sum_{v\in A} q^{P,\pi}(u,v)\norm{P(\cdot|u,v)-P_t(\cdot|u,v) }_1 \\
& \leq L \cortran_t,
\end{align*}
where the third step follows from the fact that $\sum_{s\in S_h}q^{P_t,\pi}(s|w)=1$ for any $h\geq k(w)$.
\end{proof}

\begin{corollary}\label{cor:corrupt_tran_difference} For any policy sequence $\cbr{\pi_t}_{t=1}^{T}$ and loss functions $\cbr{\ell_t}_{t=1}^{T}$ such that $\ell_t\in S\times A \rightarrow [0,1]$ for any $t\in \{1,\cdots,T\}$, it holds that 
\begin{align*}
\sum_{t=1}^{T}\abr{\inner{q^{P,\pi_t} - q^{P_t,\pi_t},\ell_t}} \leq L\cortran. 
\end{align*}
\end{corollary}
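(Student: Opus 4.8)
\textbf{Proof plan for \pref{cor:corrupt_tran_difference}.}
The plan is to reduce the statement to a per-episode estimate on the difference of occupancy measures and then sum. First I would fix an episode $t$ and bound $\abr{\inner{q^{P,\pi_t} - q^{P_t,\pi_t},\ell_t}}$ by $\sum_{s\neq s_L}\sum_{a\in A}\abr{q^{P,\pi_t}(s,a) - q^{P_t,\pi_t}(s,a)}$, using that $\ell_t(s,a)\in[0,1]$ for every state-action pair. The next step is to pass from the state-action occupancy difference to the state occupancy difference: since $q^{P',\pi}(s,a) = q^{P',\pi}(s)\pi(a|s)$ for any transition $P'$, we have $\sum_a \abr{q^{P,\pi_t}(s,a) - q^{P_t,\pi_t}(s,a)} = \sum_a \pi_t(a|s)\abr{q^{P,\pi_t}(s) - q^{P_t,\pi_t}(s)} = \abr{q^{P,\pi_t}(s) - q^{P_t,\pi_t}(s)}$, so the quantity of interest equals $\sum_{s\neq s_L}\abr{q^{P,\pi_t}(s) - q^{P_t,\pi_t}(s)}$.

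At this point I would invoke \pref{cor:corrupt_occ_diff}, which gives exactly $\sum_{s\neq s_L}\abr{q^{P,\pi_t}(s) - q^{P_t,\pi_t}(s)} \leq L\cortran_t$. Summing this bound over $t=1,\ldots,T$ and recalling the definition $\cortran=\sum_{t=1}^T \cortran_t$ yields $\sum_{t=1}^T \abr{\inner{q^{P,\pi_t} - q^{P_t,\pi_t},\ell_t}} \leq L\sum_{t=1}^T \cortran_t = L\cortran$, which is the claim. Note there is no expectation to worry about here since the statement is a deterministic (pathwise) inequality for any fixed policy sequence and loss sequence; if one later applies it inside an expectation, linearity of expectation suffices.

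There is essentially no obstacle: the only mild subtlety is the passage from the $(s,a)$-level sum to the $s$-level sum via the factorization of the occupancy measure through the policy, and making sure the bound from \pref{cor:corrupt_occ_diff} is applied with the right per-round corruption $\cortran_t$. Everything else is the triangle inequality, the bound $\ell_t\le 1$, and summation. So the proof is a two-line argument: bound the inner product by the $\ell_1$-distance of occupancy measures, collapse the action sum, apply \pref{cor:corrupt_occ_diff}, and sum over $t$.
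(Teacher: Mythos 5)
Your proof is correct and follows essentially the same route as the paper: bound the inner product by the $\ell_1$-distance (the paper phrases this as H\"older's inequality, you phrase it via $\ell_t\in[0,1]$ — same thing), collapse the action sum using $q^{P',\pi}(s,a)=q^{P',\pi}(s)\pi(a|s)$, apply \pref{cor:corrupt_occ_diff} to get $L\cortran_t$ per round, and sum. No gaps.
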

\begin{proof}
By direct calculation, we have 
\begin{align*}
 \sum_{t=1}^{T}\abr{\inner{q^{P,\pi_t} - q^{P_t,\pi_t},\ell_t}} 
& \leq \sum_{t=1}^{T} \norm{{q^{P,\pi_t} - q^{P_t,\pi_t}}}_1 \\
& = \sum_{t=1}^{T}\sum_{s\neq s_L}\sum_{a\in A} \abr{q^{P,\pi_t}(s,a) - q^{P_t,\pi_t}(s,a)} \\
& = \sum_{t=1}^{T}\sum_{s\neq s_L}\sum_{a\in A} \abr{q^{P,\pi_t}(s) - q^{P_t,\pi_t}(s)}\pi_t(a|s) \\
& = \sum_{t=1}^{T}\sum_{s\neq s_L}\abr{q^{P,\pi_t}(s) - q^{P_t,\pi_t}(s)} \\
& \leq \sum_{t=1}^{T} L\cortran_t = L\cortran, 
\end{align*}
where the first step applies the H\"older's inequality; the third step follows form the fact that $q^{P',\pi_t}(s,a) = q^{P',\pi_t}(s)\pi_t(a|s)$ for any state-action pair $(s,a)$ and any transition function $P'$; the fifth step applies \pref{cor:corrupt_occ_diff}; the last step follows from the definition of $\cortran$. 
\end{proof}


Following the same idea in the proof of \cite[Lemma 16]{dann2023best}, we also consider a tighter bound of the difference between occupancy measures in the following lemma. 
\begin{lemma}\label{lem:dann_occ_diff_bound} Suppose the event $\eventcon$ holds. For any state $s\neq s_L$, episode $t$ and transition function $P'_t\in \calP_{i(t)}$, we have
\begin{equation}
\begin{aligned}
\abr{q^{P'_t,\pi_t}(s) - q^{P,\pi_t}(s)} \leq & \order\rbr{\sum_{k=0}^{k(s)-1}\sum_{(u,v,w)\in \tupleset_k} q^{P,\pi_t}(u,v)\sqrt{ \frac{P(w|u,v)\log\rbr{\iota}}{{\whatm_{i(t)}(u,v)}}   } q^{P,\pi_t}(s|w)   } \\
& + \order\rbr{ |S|^2\sum_{s\neq s_L}\sum_{a \in A}q^{P,\pi_t}(s,a)\rbr{\frac{{\cortran+\log\rbr{\iota}}}{{\whatm_{i(t)}(s,a)}} }}.
\end{aligned}
\label{eq:dann_occ_diff_bound}
\end{equation}
\end{lemma}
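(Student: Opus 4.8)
\textbf{Proof proposal for \pref{lem:dann_occ_diff_bound}.}

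The plan is to follow the telescoping strategy used for \pref{lem:occ_diff_lem} and its refinement in \cite[Lemma 16]{dann2023best}, but carry the corruption term through carefully. First I would apply \pref{lem:occ_diff_lem} with $P_1 = P'_t$ and $P_2 = P$ to write
\[
q^{P'_t,\pi_t}(s) - q^{P,\pi_t}(s) = \sum_{k=0}^{k(s)-1}\sum_{(u,v,w)\in \tupleset_k} q^{P,\pi_t}(u,v)\bigl(P'_t(w|u,v) - P(w|u,v)\bigr) q^{P'_t,\pi_t}(s|w).
\]
The immediate issue is that the ``tail'' factor is $q^{P'_t,\pi_t}(s|w)$ rather than $q^{P,\pi_t}(s|w)$, so one cannot yet read off the claimed bound. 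The standard trick (as in \cite{dann2023best}) is to expand $q^{P'_t,\pi_t}(s|w)$ itself via \pref{lem:occ_diff_lem} applied conditionally starting from $w$, giving $q^{P'_t,\pi_t}(s|w) = q^{P,\pi_t}(s|w) + \bigl(q^{P'_t,\pi_t}(s|w) - q^{P,\pi_t}(s|w)\bigr)$, where the difference term is again a sum over tuples of $\bigl(P'_t - P\bigr)$ weighted by occupancy measures. This produces a ``leading'' term with $q^{P,\pi_t}(s|w)$ (the desired form) plus a ``second-order'' term that is quadratic in the per-tuple transition discrepancies $|P'_t(w|u,v) - P(w|u,v)|$.

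Next I would bound each per-tuple discrepancy $|P'_t(w|u,v) - P(w|u,v)|$. Since $P'_t, P \in \calP_{i(t)}$ under $\eventcon$, I would use the triangle inequality through $\bar P_{i(t)}$ and then invoke \pref{lem:conf_bound_to_trueP} to get, up to constants,
\[
|P'_t(w|u,v) - P(w|u,v)| \le \order\!\left(\sqrt{\tfrac{P(w|u,v)\log(\iota)}{\whatm_{i(t)}(u,v)}} + \tfrac{\cortran + \log(\iota)}{\whatm_{i(t)}(u,v)}\right).
\]
Plugging the square-root part into the leading term yields exactly the first sum on the right-hand side of \pref{eq:dann_occ_diff_bound}. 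The additive $\tfrac{\cortran+\log(\iota)}{\whatm_{i(t)}(u,v)}$ part, summed over $w \in S_{k+1}$ (at most $|S|$ values) and telescoped over layers, contributes a term of the form $|S|\sum_k \sum_{u,v} q^{P,\pi_t}(u,v)\tfrac{\cortran+\log(\iota)}{\whatm_{i(t)}(u,v)}$, absorbed into the second bracket (with one extra factor of $|S|$ to spare). For the second-order term, each of the two discrepancy factors is $\order(1)$ trivially (transition probabilities are in $[0,1]$), and I would bound one of them crudely by $\order\!\left(\sqrt{\tfrac{\log\iota}{\whatm}} + \tfrac{\cortran+\log\iota}{\whatm}\right) \le \order(1)$ while keeping the other in the $\tfrac{\cortran+\log\iota}{\whatm}$ form; combined with the $\sum_{w,\text{layers}}$ factors of at most $|S|$ each and $\sum_s q^{\cdot}(s|\cdot)\le L$, this is dominated by $|S|^2 \sum_{s,a} q^{P,\pi_t}(s,a)\tfrac{\cortran+\log\iota}{\whatm_{i(t)}(s,a)}$, matching the second bracket of \pref{eq:dann_occ_diff_bound}.

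The main obstacle I expect is the bookkeeping of the second-order term: one must be careful that after the nested expansion the intermediate occupancy measures $q^{P'_t,\pi_t}$ (with respect to the non-true transition) get replaced by $q^{P,\pi_t}$ without losing the $\whatm_{i(t)}(s,a)$ in the denominator, and that the various $|S|$ factors from summing over next-layer states are correctly accounted for so that $|S|^2$ (and not a larger power) suffices. A clean way to handle this is to do a single induction on the layer index $k(s)$: assume \pref{eq:dann_occ_diff_bound} holds for all states in earlier layers, substitute the inductive bound for $q^{P'_t,\pi_t}(s|w) - q^{P,\pi_t}(s|w)$ (restricted to the sub-MDP starting at $w$) into the first display, and verify the bound propagates with the stated constants. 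The base case (layer $0$, i.e.\ $s = s_0$) is trivial since both sides are zero. Everything else is routine application of Cauchy–Schwarz and the crude bounds $q^{\cdot}(s|w)\le 1$, $\sum_{s} q^{\cdot}(s|w)\le L$, and $\whatm_{i(t)}(s,a) \ge \log(\iota)$.
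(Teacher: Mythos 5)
The overall structure you propose — apply \pref{lem:occ_diff_lem}, subtract and add $q^{P,\pi_t}(s|w)$, re-expand the difference, and bound the per-tuple transition discrepancies by \pref{lem:conf_bound_to_trueP} — is indeed exactly how the paper proceeds. Your leading-term analysis is fine.

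However, your treatment of the second-order term has a genuine gap. After substituting the confidence-interval bound into \emph{both} discrepancy factors, the dominant cross-term has the form $\sum q(u,v)\sqrt{\tfrac{P(w|u,v)\log\iota}{\whatm(u,v)}}\cdot q(x,y|w)\sqrt{\tfrac{P(z|x,y)\log\iota}{\whatm(x,y)}}$. Your plan is to bound one factor crudely by $\order(1)$ and ``keep the other in the $\tfrac{\cortran+\log\iota}{\whatm}$ form.'' But the other factor is \emph{not} in that form: it still has a $\sqrt{\tfrac{P\log\iota}{\whatm}}$ piece, and this piece is \emph{not} dominated by $\tfrac{\cortran+\log\iota}{\whatm}$ when $\whatm$ is large (their ratio grows like $\sqrt{\whatm/\log\iota}$). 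So after this step you would be left with a residual of the form $\sum q(u,v)\sqrt{\tfrac{P\log\iota}{\whatm(u,v)}}$ that does not fit into either bracket of \pref{eq:dann_occ_diff_bound}, and it is not absorbed by the leading term either, since the leading term carries the additional factor $q^{P,\pi_t}(s|w)$ which you had already peeled off. The fix is what the paper actually does: apply AM-GM directly to the \emph{product} of the two square roots, writing $\sqrt{a}\sqrt{b}\le a+b$ with the two $q$'s distributed across the two factors, so that each resulting summand has a linear $\tfrac{\log\iota}{\whatm}$ weight (no square root) and a full occupancy-measure weight that telescopes to $q^{P,\pi_t}(x,y)$ or sums to $\le L$. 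That yields an $L|S|\sum_{s,a}q(s,a)\tfrac{\log\iota}{\whatm(s,a)}$ bound, which is absorbed into the second bracket because $L\le|S|$ in a layered MDP. The remaining sub-terms (square-root $\times$ $\cortran$-part, and $\cortran$-part $\times$ $\cortran$-part) are handled as you describe. Your suggestion to organize the argument as an induction on $k(s)$ is a cosmetic reorganization and does not change what has to happen to this cross-term.
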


\begin{proof}
According to \pref{lem:occ_diff_lem}, we have 
\begin{align*}
&\abr{ q^{P'_t,\pi_t}(s) - q^{P,\pi_t}(s)  } \\
& \leq  \sum_{k=0}^{k(s)-1}\sum_{(u,v,w)\in \tupleset_k} q^{P,\pi_t}(u,v)\abr{ P(w|u,v) - P'_t(w|u,v) } q^{P'_t,\pi_t}(s|w) \\
& \leq \sum_{k=0}^{k(s)-1}\sum_{(u,v,w)\in \tupleset_k} q^{P,\pi_t}(u,v)\abr{ P(w|u,v) - P'_t(w|u,v) } q^{P,\pi_t}(s|w) \\
& \quad + \sum_{k=0}^{k(s)-1}\sum_{(u,v,w)\in \tupleset_k} q^{P,\pi_t}(u,v)\abr{ P(w|u,v) - P'_t(w|u,v) } \sum_{h=k+1}^{k(s)-1} \sum_{(x,y,z)\in \tupleset_k} q^{P,\pi_t}(x,y|w) \abr{ P(z|x,y) - P'_t(z|x,y) } \\
& \leq \order\rbr{ \sum_{k=0}^{k(s)-1}\sum_{(u,v,w)\in \tupleset_k} q^{P,\pi_t}(u,v)\sqrt{ \frac{P(w|u,v)\log\rbr{\iota}}{{\whatm_{i(t)}(u,v)}}   } q^{P,\pi_t}(s|w)   } \\
& \quad + \underbrace{\order\rbr{ \sum_{k=0}^{k(s)-1}\sum_{(u,v,w)\in \tupleset_k} q^{P,\pi_t}(u,v)\rbr{\frac{\cortran+\log\rbr{\iota}}{\whatm_{i(t)}(u,v)}}   q^{P,\pi_t}(s|w)  }}_{\textsc{Term (a)}} \\
& \quad + \underbrace{\sum_{k=0}^{k(s)-1}\sum_{(u,v,w)\in \tupleset_k} q^{P,\pi_t}(u,v)\abr{ P(w|u,v) - P'_t(w|u,v) } \sum_{h=k+1}^{k(s)-1} \sum_{(x,y,z)\in \tupleset_h} q^{P,\pi_t}(x,y|w) \abr{ P(z|x,y) - P'_t(z|x,y) }}_{\textsc{Term (b)}},
\end{align*}
where the second step first subtracts and adds $q^{P'_t,\pi_t}(s|w)$ and then applies \pref{lem:occ_diff_lem} again for $|q^{P'_t,\pi_t}(s|w)-q^{P,\pi_t}(s|w)|$, and the third step follows from \pref{lem:conf_bound_to_trueP} . 

Clearly, we can bound term (a) as:
\begin{align*}
\sum_{k=0}^{k(s)-1}\sum_{(u,v,w)\in \tupleset_k} q^{P,\pi_t}(u,v)\rbr{\frac{\cortran+\log\rbr{\iota}}{{\whatm_{i(t)}(u,v)}}   } q^{P,\pi_t}(s|w)  & \leq |S|\sum_{s\neq s_L}\sum_{a\in A} q^{P,\pi_t}(s,a)\rbr{\frac{\cortran+\log\rbr{\iota}}{{\whatm_{i(t)}(s,a)}}}.
\end{align*}
On the other hand, for 
term (b), we decompose it into three terms in \pref{eq:dann_lemma_temp_decomp_1}, \pref{eq:dann_lemma_temp_decomp_2}, and \pref{eq:dann_lemma_temp_decomp_3}:
\begin{align}
& \sum_{k=0}^{k(s)-1}\sum_{(u,v,w)\in \tupleset_k} q^{P,\pi_t}(u,v)\abr{ P(w|u,v) - P'_t(w|u,v) } \sum_{h=k+1}^{k(s)-1} \sum_{(x,y,z)\in \tupleset_k} q^{P,\pi_t}(x,y|w) \abr{ P(z|x,y) - P'_t(z|x,y) } \notag \\
& \leq \order\rbr{ \sum_{k=0}^{k(s)-1}\sum_{(u,v,w)\in \tupleset_k} q^{P,\pi_t}(u,v)\sqrt{ \frac{P(w|u,v)\log\rbr{\iota}}{{\whatm_{i(t)}(u,v)}}   } \sum_{h=k+1}^{k(s)-1} \sum_{(x,y,z)\in \tupleset_h} q^{P,\pi_t}(x,y|w) \sqrt{ \frac{P(z|x,y)\log\rbr{\iota}}{{\whatm_{i(t)}(x,y)}}   }    } \label{eq:dann_lemma_temp_decomp_1} \\
& \quad + \order\rbr{ \sum_{k=0}^{k(s)-1}\sum_{(u,v,w)\in \tupleset_k} q^{P,\pi_t}(u,v)\sqrt{ \frac{P(w|u,v)\log\rbr{\iota}}{{\whatm_{i(t)}(u,v)}}   } \sum_{h=k+1}^{k(s)-1} \sum_{(x,y,z)\in \tupleset_h} q^{P,\pi_t}(x,y|w)     \rbr{\frac{\cortran+\log\rbr{\iota}}{{\whatm_{i(t)}(x,y)}}}    } \label{eq:dann_lemma_temp_decomp_2}\\
& \quad + \order\rbr{ \sum_{k=0}^{k(s)-1}\sum_{(u,v,w)\in \tupleset_k} q^{P,\pi_t}(u,v) \rbr{\frac{\cortran+\log\rbr{\iota}}{{\whatm_{i(t)}(u,v)}}}   \sum_{h=k+1}^{k(s)-1} \sum_{(x,y,z)\in \tupleset_k} q^{P,\pi_t}(x,y|w) }.\label{eq:dann_lemma_temp_decomp_3}
\end{align}

According to the AM-GM inequality, we have the term in \pref{eq:dann_lemma_temp_decomp_1} bounded as
\begin{align*}
& \sum_{k=0}^{k(s)-1}\sum_{(u,v,w)\in \tupleset_k} q^{P,\pi_t}(u,v)\sqrt{ \frac{P(w|u,v)\log\rbr{\iota}}{{\whatm_{i(t)}(u,v)}}   } \sum_{h=k+1}^{k(s)-1} \sum_{(x,y,z)\in \tupleset_h} q^{P,\pi_t}(x,y|w) \sqrt{ \frac{P(z|x,y)\log\rbr{\iota}}{{\whatm_{i(t)}(x,y)}}   }    \\
& =  \sum_{k=0}^{k(s)-1}\sum_{(u,v,w)\in \tupleset_k} \sum_{h=k+1}^{k(s)-1} \sum_{(x,y,z)\in \tupleset_h}q^{P,\pi_t}(u,v)\sqrt{ \frac{P(w|u,v)\log\rbr{\iota}}{{\whatm_{i(t)}(u,v)}}   } q^{P,\pi_t}(x,y|w) \sqrt{ \frac{P(z|x,y)\log\rbr{\iota}}{{\whatm_{i(t)}(x,y)}}   }  \\
& \leq { \sum_{k=0}^{k(s)-1}\sum_{(u,v,w)\in \tupleset_k} \sum_{h=k+1}^{k(s)-1} \sum_{(x,y,z)\in \tupleset_h} \frac{q^{P,\pi_t}(u,v)q^{P,\pi_t}(x,y|w)P(z|x,y)\log\rbr{\iota}}{{\whatm_{i(t)}(u,v)}}  } \\
& \quad + { \sum_{k=0}^{k(s)-1}\sum_{(u,v,w)\in \tupleset_k} \sum_{h=k+1}^{k(s)-1} \sum_{(x,y,z)\in \tupleset_h} \frac{q^{P,\pi_t}(u,v)P(w|u,v)q^{P,\pi_t}(x,y|w)\log\rbr{\iota}}{{\whatm_{i(t)}(x,y)}}  } \\
& = { \sum_{k=0}^{k(s)-1}\sum_{(u,v,w)\in \tupleset_k}  \frac{q^{P,\pi_t}(u,v)\log\rbr{\iota}}{{\whatm_{i(t)}(u,v)}} \rbr{ \sum_{h=k+1}^{k(s)-1} \sum_{(x,y,z)\in \tupleset_h}q^{P,\pi_t}(x,y|w)P(z|x,y) }  }  \\
& \quad + {  \sum_{h=0}^{k(s)-1} \sum_{(x,y,z)\in \tupleset_h} \frac{\log\rbr{\iota}}{{\whatm_{i(t)}(x,y)}} \rbr{\sum_{k=0}^{h-1}\sum_{(u,v,w)\in \tupleset_k} q^{P,\pi_t}(u,v)P(w|u,v)q^{P,\pi_t}(x,y|w)  }  } \\
& \leq L \sum_{k=0}^{k(s)-1}\sum_{(u,v,w)\in \tupleset_k} \frac{q^{P,\pi_t}(u,v)\log\rbr{\iota}}{{\whatm_{i(t)}(u,v)}} + L \sum_{h=0}^{k(s)-1}\sum_{(x,y,z)\in \tupleset_h} \frac{q^{P,\pi_t}(x,y)\log\rbr{\iota}}{{\whatm_{i(t)}(x,y)}} \\
& \leq \order\rbr{ L|S| \sum_{k=0}^{L-1}\sum_{s \in S_k}\sum_{a\in A} \frac{q^{P,\pi_t}(s,a)\log\rbr{\iota}}{{\whatm_{i(t)}(s,a)}}  },
\end{align*}
where the first step follows from re-arranging the summation; the second step applies the fact that $\sqrt{xy}\leq x + y$ for any $x,y\geq0$; the third step rearranges the summation order, and the  fourth step follows form the facts that $\sum_{x\in S_h}\sum_{y\in A}q^{P,\pi_t}(x,y|w)P(z|x,y) = q^{P,\pi_t}(z|w)$ and $\sum_{(u,v,w)\in \tupleset_k}q^{P,\pi_t}(u,v)P(w|u,v)q^{P,\pi_t}(x,y|w) = q^{P,\pi_t}(x,y)$ for any $k$ and $(x,y,z)$.

Similarly, we have the term in \pref{eq:dann_lemma_temp_decomp_2} bounded as  
\begin{align*}
&  \sum_{k=0}^{k(s)-1}\sum_{(u,v,w)\in \tupleset_k} q^{P,\pi_t}(u,v)\sqrt{ \frac{P(w|u,v)\log\rbr{\iota}}{{\whatm_{i(t)}(u,v)}}   } \sum_{h=k+1}^{k(s)-1} \sum_{(x,y,z)\in \tupleset_h} q^{P,\pi_t}(x,y|w) \rbr{\frac{\cortran+\log\rbr{\iota}}{{\whatm_{i(t)}(x,y)}}}   \\
& \leq { \sum_{k=0}^{k(s)-1}\sum_{(u,v,w)\in \tupleset_k} \sum_{h=k+1}^{k(s)-1} \sum_{(x,y,z)\in \tupleset_h} \frac{q^{P,\pi_t}(u,v)q^{P,\pi_t}(x,y|w)\log\rbr{\iota}}{{\whatm_{i(t)}(u,v)}}  } \\
& \quad + { \sum_{k=0}^{k(s)-1}\sum_{(u,v,w)\in \tupleset_k} \sum_{h=k+1}^{k(s)-1} \sum_{(x,y,z)\in \tupleset_h} q^{P,\pi_t}(u,v)P(w|u,v)q^{P,\pi_t}(x,y|w) \rbr{\frac{{\cortran+\log\rbr{\iota}}}{{\whatm_{i(t)}(x,y)}} } } \\
& \leq \sum_{k=0}^{k(s)-1}\sum_{(u,v,w)\in \tupleset_k} \frac{q^{P,\pi_t}(u,v)\log\rbr{\iota}}{{\whatm_{i(t)}(u,v)}} \rbr{ \sum_{h=k+1}^{k(s)-1}\sum_{(x,y,z)\in \tupleset_h}q^{P,\pi_t}(x,y|w)} \\
& \quad + \sum_{h=0}^{L-1} \sum_{(x,y,z)\in \tupleset_h}  \sum_{k=0}^{k(s)-1}\rbr{\sum_{(u,v,w)\in \tupleset_k}  q^{P,\pi_t}(u,v)P(w|u,v)q^{P,\pi_t}(x,y|w)} {\frac{{\cortran+\log\rbr{\iota}}}{{\whatm_{i(t)}(x,y)}} } \\
& \leq |S| \sum_{k=0}^{k(s)-1}\sum_{(u,v,w)\in \tupleset_k} \frac{q^{P,\pi_t}(u,v)\log\rbr{\iota}}{{\whatm_{i(t)}(u,v)}} + \sum_{h=0}^{L-1} \sum_{(x,y,z)\in \tupleset_h}  \sum_{k=0}^{k(s)-1}q^{P,\pi_t}(x,y) \rbr{\frac{{\cortran+\log\rbr{\iota}}}{{\whatm_{i(t)}(x,y)}} } \\
& \leq |S|^2 \sum_{s\neq s_L}\sum_{a \in A}\frac{q^{P,\pi_t}(s,a)\log\rbr{\iota}}{{\whatm_{i(t)}(s,a)}} + L|S|\sum_{s\neq s_L}\sum_{a \in A}q^{P,\pi_t}(s,a)\rbr{\frac{{\cortran+\log\rbr{\iota}}}{{\whatm_{i(t)}(s,a)}} },
\end{align*}
where the first step follows from the fact that $\whatm_{i(t)}(s,a)\geq C + \log\rbr{\iota}$ according to its definition; the third step follows from the facts that $\sum_{x\in S_h}\sum_{y \in A}q^{P,\pi_t}(x,y|w) \leq 1$ and $\sum_{(u,v,w)\in \tupleset_k}q^{P,\pi_t}(u,v)P(w|u,v)q^{P,\pi_t}(x,y|w) = q^{P,\pi_t}(x,y)$.  

For the term in \pref{eq:dann_lemma_temp_decomp_3}, we have 
\begin{align*}
& \sum_{k=0}^{k(s)-1}\sum_{(u,v,w)\in \tupleset_k} q^{P,\pi_t}(u,v) {\frac{\cortran+\log\rbr{\iota}}{{\whatm_{i(t)}(u,v)}}}   \sum_{h=k+1}^{k(s)-1} \sum_{(x,y,z)\in \tupleset_k} q^{P,\pi_t}(x,y|w)  \\
& = \sum_{k=0}^{k(s)-1}\sum_{u\in S_k}\sum_{v\in A}\sum_{w\in S_{k+1}} q^{P,\pi_t}(u,v) {\frac{\cortran+\log\rbr{\iota}}{{\whatm_{i(t)}(u,v)}}}  \rbr{ \sum_{h=k+1}^{k(s)-1} \sum_{z\in S_{k+1}} 1 } \\
& \leq { |S|^2\sum_{u\neq s_L}\sum_{v\in A} q^{P,\pi_t}(s,a)\rbr{\frac{\cortran+\log\rbr{\iota}}{{\whatm_{i(t)}(u,v)}} } },
\end{align*}
according to the fact that $\sum_{x\in S_h}\sum_{y \in A}q^{P,\pi_t}(x,y|w) \leq 1$.

Putting all the bounds for the terms in \pref{eq:dann_lemma_temp_decomp_1}, \pref{eq:dann_lemma_temp_decomp_2}, and \pref{eq:dann_lemma_temp_decomp_3} together yields the bound of $\textsc{Term (b)}$ that 
\begin{align*}
& \sum_{k=0}^{k(s)-1}\sum_{(u,v,w)\in \tupleset_k} q^{P,\pi_t}(u,v)\abr{ P(w|u,v) - P'_t(w|u,v) } \sum_{h=k+1}^{k(s)-1} \sum_{(x,y,z)\in \tupleset_h} q^{P,\pi_t}(x,y|w) \abr{ P(z|x,y) - P'_t(z|x,y) } \\
& = \order\rbr{ |S|^2\sum_{u\neq s_L}\sum_{v\in A} q^{P,\pi_t}(u,v)\rbr{\frac{\cortran+\log\rbr{\iota}}{{\whatm_{i(t)}(u,v)}} }}. 
\end{align*}
Combining this bound with that of $\textsc{Term (a)}$ finishes the proof.  
\end{proof}

\subsection{Loss Shifting Technique with Optimistic Transition}
\begin{lemma}\label{lem:loss_shifting_opt_tran}Fix an optimistic transition function $\optP$ (defined in \pref{sec:optepoch}). For any policy $\pi$ and any loss function $\ell: S\times A \rightarrow \fR$, we define function $g: S\times A \rightarrow \fR$ similar to that of \cite{jin2021best} as:
\begin{align*}
    g^{\optP,\pi}(s,a;\ell) \triangleq \rbr{ Q^{\optP,\pi}(s,a;\ell) - V^{\optP,\pi}(s;\ell) - \ell(s,a)   }, 
\end{align*}
where the state-action and state value function $Q^{\optP,\pi}$ and $V^{\optP,\pi}$ are defined with respect to the optimistic transition $\optP$ and $\pi$ as following:
\begin{align*}
Q^{\optP,\pi}(s,a;\ell) & = \ell(s,a) + \sum_{s' \in S_{k(s)+1}} \optP(s'|s,a) V^{\optP,\pi}(s';\ell), \\
V^{\optP,\pi}(s;\ell) & = \begin{cases} 
0, & s = s_L, \\
\sum_{a\in A} \pi(a|s) Q^{\optP,\pi}(s,a;\ell), & s \neq s_L. 
\end{cases}
\end{align*}
Then, it holds for any policy $\pi'$ that, 
\begin{align*}
\inner{ q^{\optP,\pi'}, g^{\optP,\pi} } = \sum_{s\neq s_L} \sum_{a \in A} q^{\optP,\pi'}(s,a) \cdot g^{\optP,\pi}(s,a;\ell) = - V^{\optP,\pi}(s_0;\ell),
\end{align*}
where $V^{\optP,\pi}(s_0;\ell)$ is only related to $\optP$, $\pi$ and $\ell$, and is independent with $\pi'$. 
\end{lemma}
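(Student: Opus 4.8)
The plan is to prove the identity by a backward induction on layers, mirroring the loss-shifting argument of \citet{jin2021best} but adapted to the optimistic transition $\optP$ (where the layer structure is broken since all leftover probability mass is routed directly to $s_L$). The key observation is that $g^{\optP,\pi}(s,a;\ell) = Q^{\optP,\pi}(s,a;\ell) - V^{\optP,\pi}(s;\ell) - \ell(s,a)$ can be rewritten, using the Bellman equation $Q^{\optP,\pi}(s,a;\ell) = \ell(s,a) + \sum_{s'\in S_{k(s)+1}}\optP(s'|s,a)V^{\optP,\pi}(s';\ell)$, as
\[
g^{\optP,\pi}(s,a;\ell) = \sum_{s'\in S_{k(s)+1}}\optP(s'|s,a)V^{\optP,\pi}(s';\ell) - V^{\optP,\pi}(s;\ell),
\]
so the $\ell(s,a)$ term cancels and $g$ is a pure ``advantage-type'' object depending only on the value function differences.

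First I would expand $\inner{q^{\optP,\pi'}, g^{\optP,\pi}}$ by writing $q^{\optP,\pi'}(s,a) = q^{\optP,\pi'}(s)\pi'(a|s)$ and summing over $a$; since $\sum_a \pi'(a|s)=1$, the term $-V^{\optP,\pi}(s;\ell)$ contributes $-\sum_{s\neq s_L} q^{\optP,\pi'}(s)V^{\optP,\pi}(s;\ell)$, while the term $\sum_{s'}\optP(s'|s,a)V^{\optP,\pi}(s';\ell)$ contributes $\sum_{s\neq s_L}\sum_a q^{\optP,\pi'}(s)\pi'(a|s)\sum_{s'\in S_{k(s)+1}}\optP(s'|s,a)V^{\optP,\pi}(s';\ell)$. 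The crucial point is that $\sum_{s\neq s_L, k(s)=k}\sum_a q^{\optP,\pi'}(s)\pi'(a|s)\optP(s'|s,a)$ is exactly the probability of visiting $s'$ at layer $k+1$ from the layer-$k$ states, i.e.\ $q^{\optP,\pi'}(s')$ restricted to the contributions coming from proper layer transitions (not counting the mass sent to $s_L$). Telescoping across layers then collapses the whole sum: the ``incoming'' value term for layer $k+1$ cancels against the ``outgoing'' $-V$ term, leaving only the layer-$0$ contribution $-V^{\optP,\pi}(s_0;\ell)$ (note $q^{\optP,\pi'}(s_0)=1$), since $s_L$ has $V^{\optP,\pi}(s_L;\ell)=0$ and hence the mass routed there contributes nothing.

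Concretely, I would organize this as: define $\Phi_k \triangleq \sum_{s\in S_k, s\neq s_L} q^{\optP,\pi'}(s)V^{\optP,\pi}(s;\ell)$ for each layer $k$, then show $\sum_{s\in S_k}\sum_a q^{\optP,\pi'}(s)\pi'(a|s)\sum_{s'\in S_{k+1}}\optP(s'|s,a)V^{\optP,\pi}(s';\ell) = \sum_{s'\in S_{k+1}} q^{\optP,\pi'}(s')V^{\optP,\pi}(s';\ell) = \Phi_{k+1}$ (using $V^{\optP,\pi}(s_L;\ell)=0$ to absorb the $s_L$ contribution and the recursive definition of $q^{\optP,\pi'}$), and hence $\inner{q^{\optP,\pi'}, g^{\optP,\pi}} = \sum_{k=0}^{L-1}(\Phi_{k+1} - \Phi_k) = \Phi_L - \Phi_0 = 0 - V^{\optP,\pi}(s_0;\ell)$. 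The main obstacle — though a mild one — is being careful about bookkeeping with the broken layer structure: one must verify that the optimistic transition's ``leakage'' to $s_L$ is correctly handled, which works out precisely because $V^{\optP,\pi}(s_L;\ell)=0$, so routing probability mass to $s_L$ is invisible to the value-weighted sums. Everything else is routine algebra and the observation that $q^{\optP,\pi'}(s_0)=1$ together with $\Phi_L = 0$.
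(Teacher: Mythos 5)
Your proof is correct, and it takes a somewhat more direct route than the paper's. The paper invokes its extended performance difference lemma (for the optimistic transition) applied to the pair $(\pi',\pi)$, which gives $V^{\optP,\pi'}(s_0;\ell) - V^{\optP,\pi}(s_0;\ell) = \sum_{s,a} q^{\optP,\pi'}(s,a)\bigl(Q^{\optP,\pi}(s,a;\ell) - V^{\optP,\pi}(s;\ell)\bigr)$, and then subtracts off the identity $V^{\optP,\pi'}(s_0;\ell) = \langle q^{\optP,\pi'}, \ell\rangle$ to land exactly on the claim. You instead cancel the $\ell$ term up front via the Bellman equation, reducing $g$ to the pure transition-mismatch form $\sum_{s'}\optP(s'|s,a)V^{\optP,\pi}(s';\ell) - V^{\optP,\pi}(s;\ell)$, and then telescope the quantity $\Phi_k = \sum_{s\in S_k}q^{\optP,\pi'}(s)V^{\optP,\pi}(s;\ell)$ across layers. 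This is effectively an inlining of the performance-difference telescoping, so the engine is the same, but your version avoids the detour through $V^{\optP,\pi'}(s_0;\ell)$ entirely; the $\pi'$-dependence disappears after the Bellman cancellation, which makes the $\pi'$-independence of the final answer transparent from the start. One small clarification on your bookkeeping: for $s'\in S_{k+1}$ with $s'\neq s_L$, the quantity $\sum_{s\in S_k}\sum_a q^{\optP,\pi'}(s)\pi'(a|s)\optP(s'|s,a)$ is not merely the ``properly-layered'' part of $q^{\optP,\pi'}(s')$ — it is all of it, since no mass can arrive at a non-terminal layer-$(k+1)$ state except from layer $k$; the caveat is only needed for $s'=s_L$, where $V^{\optP,\pi}(s_L;\ell)=0$ anyway, precisely the observation you make.
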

\begin{proof} By the extended performance difference lemma of the optimistic transition in \pref{lem:perf_diff_lem}, we have the following equality holds for any policy $\pi'$: 
\begin{align*}
V^{\optP,\pi'}(s_0;\ell) - V^{\optP,\pi}(s_0;\ell) = \sum_{s\neq s_L} \sum_{a \in A} q^{\optP,\pi'}(s,a) \rbr{ Q^{\optP,\pi}(s,a;\ell) - V^{\optP,\pi}(s;\ell) }. 
\end{align*}
On the other hand, we also have 
\begin{align*}
V^{\optP,\pi'}(s_0;\ell) = \sum_{s\neq s_L} \sum_{a \in A} q^{\optP,\pi'}(s,a) \ell(s,a). 
\end{align*}

Therefore, subtracting $V^{\optP,\pi'}(s_0;\ell)$ yields that 
\begin{align*}
- V^{\optP,\pi}(s_0;\ell) = \sum_{s\neq s_L} \sum_{a \in A} q^{\optP,\pi'}(s,a) \rbr{ Q^{\optP,\pi}(s,a;\ell) - V^{\optP,\pi}(s;\ell) - \ell(s,a) }. 
\end{align*}
The proof is finished after using the definition of $g^{\optP,\pi}$.
\end{proof}

Therefore, we have the following result for the $\ftrl$ framework which is similar to \cite[Corollary A.1.2.]{jin2021best}.
\begin{corollary}\label{cor:loss_shifting_opt_tran} 
The FTRL update in \pref{alg:optepoch} can be equivalently written as:
\begin{align*}
\widehat{q}_t = & \argmin_{q\in \Omega(\optP_i)} \inner{q,\sum_{\tau=t_i}^{t-1}(\hatl_\tau-b_\tau)} + \phi_t(q) = \argmin_{x\in \Omega(\optP_i)}\inner{q,\sum_{\tau=t_i}^{t-1}\rbr{ g_\tau-b_\tau}  } + \phi_t(q),
\end{align*}
where $g_\tau(s,a) = Q^{\optP_i,\pi_\tau}(s,a;\hatl_\tau) - V^{\optP_i,\pi_\tau}(s;\hatl_\tau)$ for any state-action pair $(s,a)$. 
\end{corollary}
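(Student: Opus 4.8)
The plan is to reduce the claim to \pref{lem:loss_shifting_opt_tran} by observing that $g_\tau$ and $\hatl_\tau$ differ exactly by the shifted function $g^{\optP_i,\pi_\tau}(\cdot,\cdot;\hatl_\tau)$ appearing there, and that this difference contributes only a $q$-independent constant to the FTRL objective, hence does not change the minimizer.

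First I would record the identity $g_\tau(s,a) - \hatl_\tau(s,a) = g^{\optP_i,\pi_\tau}(s,a;\hatl_\tau)$ for every $(s,a)$, which is immediate from the definition $g^{\optP,\pi}(s,a;\ell) = Q^{\optP,\pi}(s,a;\ell) - V^{\optP,\pi}(s;\ell) - \ell(s,a)$ in \pref{lem:loss_shifting_opt_tran} together with the definition of $g_\tau$ in \pref{eq:def4g_app}. Next I would use the fact that every feasible point $q \in \Omega(\optP_i)$ is the occupancy measure $q^{\optP_i,\pi^q}$ of the policy $\pi^q$ it induces; applying \pref{lem:loss_shifting_opt_tran} with $\pi = \pi_\tau$, $\pi' = \pi^q$, and loss function $\hatl_\tau$ then yields $\inner{q,\, g_\tau - \hatl_\tau} = \inner{q^{\optP_i,\pi^q},\, g^{\optP_i,\pi_\tau}(\cdot,\cdot;\hatl_\tau)} = -V^{\optP_i,\pi_\tau}(s_0;\hatl_\tau)$, a quantity depending only on $\optP_i$, $\pi_\tau$ and $\hatl_\tau$, and in particular constant over $q \in \Omega(\optP_i)$. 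Summing over $\tau = t_i,\ldots,t-1$ gives $\inner{q,\, \sum_{\tau=t_i}^{t-1}(g_\tau - b_\tau)} = \inner{q,\, \sum_{\tau=t_i}^{t-1}(\hatl_\tau - b_\tau)} - \sum_{\tau=t_i}^{t-1} V^{\optP_i,\pi_\tau}(s_0;\hatl_\tau)$, where the bonus terms $b_\tau$ are carried along unchanged. Adding $\phi_t(q)$ to both sides and discarding the last sum, which is a constant independent of $q$, shows that the two problems $\argmin_{q\in\Omega(\optP_i)}\inner{q,\sum_{\tau}(\hatl_\tau - b_\tau)} + \phi_t(q)$ and $\argmin_{q\in\Omega(\optP_i)}\inner{q,\sum_{\tau}(g_\tau - b_\tau)} + \phi_t(q)$ share the same minimizer, namely $\widehat{q}_t$, which is exactly the asserted equivalence.

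There is essentially no hard obstacle here: the entire content is packaged into \pref{lem:loss_shifting_opt_tran}, whose proof rests on the extended performance difference identity \pref{lem:perf_diff_lem_ext} for transitions that break the layer structure. The only point requiring a moment of care is to confirm that $\optP_i$ is of the admissible form in \pref{lem:loss_shifting_opt_tran}, i.e.\ $\sum_{s'\in S_{k(s)+1}}\optP_i(s'|s,a)\le 1$ with the remaining mass routed to $s_L$, which holds by \pref{def:opt_tran}; once that is checked, the statement follows by the few lines of bookkeeping sketched above, matching \citet[Corollary~A.1.2]{jin2021best}.
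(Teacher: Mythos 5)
Your proposal is correct and matches what the paper leaves implicit: Corollary~\ref{cor:loss_shifting_opt_tran} is stated as a direct consequence of Lemma~\ref{lem:loss_shifting_opt_tran}, and the mechanism is exactly the one you spell out — the identity $g_\tau - \hatl_\tau = g^{\optP_i,\pi_\tau}(\cdot,\cdot;\hatl_\tau)$, combined with the lemma's guarantee that $\inner{q^{\optP_i,\pi'}, g^{\optP_i,\pi_\tau}} = -V^{\optP_i,\pi_\tau}(s_0;\hatl_\tau)$ is the same for every policy $\pi'$, makes the two linear terms in the FTRL objective differ by a $q$-independent additive constant over $\Omega(\optP_i)$, so the $\argmin$ is unchanged. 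Your additional check that $\optP_i$ satisfies the sub-stochasticity structure required by \pref{lem:perf_diff_lem_ext}, and hence by \pref{lem:loss_shifting_opt_tran}, is the right point of care and is indeed guaranteed by \pref{def:opt_tran}.
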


\subsection{Estimation Error}
\begin{lemma}(\cite[Lemma 10]{jin2019learning}) With probability at least $1-\delta$, we have for all $k=0,\ldots,L-1$,
\begin{align*}
\sum_{t=1}^{T} \sum_{\rbr{s,a}\in S_k \times A} \frac{ q^{P_t,\pi_t}(s,a)}{\max\cbr{m_{i(t)}(s,a),1}} = \order\rbr{ |S_k||A|\log\rbr{T} + \log\rbr{\frac{L}{\delta}} },
\end{align*}
and 
\begin{align*}
\sum_{t=1}^{T} \sum_{\rbr{s,a}\in S_k \times A} \frac{ q^{P_t,\pi_t}(s,a)}{\sqrt{ \max\cbr{m_{i(t)}(s,a),1} }} = \order\rbr{ \sqrt{|S_k||A|T} + |S_k||A|\log\rbr{T} + \log\rbr{\frac{L}{\delta}} }.
\end{align*}
\label{lem:high_prob_est}
\end{lemma}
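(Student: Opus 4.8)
The statement is (essentially verbatim) Lemma~10 of \citet{jin2019learning}, so the plan is to reprove it along the same lines, with a two-step structure: first reduce the epoch-based counter $m_{i(t)}(s,a)$ to the running per-episode visit count, and then transfer from realized visits to expected occupancies via a martingale inequality, arranged so that the confidence budget is spent only once per layer.

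\textbf{Step 1 (reduce to running counts).} For each state-action pair, let $N_t(s,a)=\sum_{\tau<t}\ind_\tau(s,a)$ be the number of episodes before $t$ in which $(s,a)$ was visited; note $N_t(s,a)$, $m_{i(t)}(s,a)$, and $q^{P_t,\pi_t}(s,a)$ are all measurable with respect to the history $\calF_{t-1}$, and $\E[\ind_t(s,a)\mid\calF_{t-1}]=q^{P_t,\pi_t}(s,a)$ by definition of the occupancy measure. Because an epoch ends the first time some counter reaches twice its value at the epoch's start, for every episode $t$ lying in epoch $i(t)$ the running count of $(s,a)$ has not yet doubled, i.e.\ $N_t(s,a)<\max\{1,2m_{i(t)}(s,a)\}$; hence $\max\{1,N_t(s,a)\}\le 2\max\{1,m_{i(t)}(s,a)\}$ deterministically, and it suffices to bound $\sum_t\sum_{(s,a)\in S_k\times A} q^{P_t,\pi_t}(s,a)/\max\{1,N_t(s,a)\}$ and its $\sqrt{\cdot}$-analogue. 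The realized-visit versions of these sums are elementary: for a fixed $(s,a)$, on its $j$-th visit $N_t(s,a)=j-1$, so $\sum_t \ind_t(s,a)/\max\{1,N_t(s,a)\}=\order(\log T)$ and $\sum_t \ind_t(s,a)/\sqrt{\max\{1,N_t(s,a)\}}=\order(\sqrt{N_{T+1}(s,a)})$. Summing over $(s,a)\in S_k\times A$ and using that exactly one pair of layer $k$ is visited per episode, so $\sum_{(s,a)\in S_k\times A}N_{T+1}(s,a)=T$, gives $\order(|S_k||A|\log T)$ and, via Cauchy--Schwarz, $\order(\sqrt{|S_k||A|T})$ respectively.

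\textbf{Step 2 (martingale transfer, aggregated over a layer).} Fix $k$ and set $Z_t=\sum_{(s,a)\in S_k\times A}\frac{q^{P_t,\pi_t}(s,a)-\ind_t(s,a)}{\max\{1,N_t(s,a)\}}$, a martingale difference sequence with $|Z_t|\le 2$; crucially, since at most one $\ind_t(s,a)$ is nonzero within a layer the cross terms vanish, so the conditional second moment is at most $V_t:=\sum_{(s,a)\in S_k\times A} q^{P_t,\pi_t}(s,a)/\max\{1,N_t(s,a)\}$. Applying the anytime Bernstein inequality (\pref{lem:ah_ineq1}) with a union bound over the $L$ layers, with probability at least $1-\delta$ we get, for all $k$ simultaneously, $\sum_t Z_t=\order\big(\sqrt{(\sum_t V_t)\log(LT/\delta)}+\log(LT/\delta)\big)$. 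Since $\sum_t V_t=\sum_t\sum_{(s,a)}\ind_t(s,a)/\max\{1,N_t(s,a)\}+\sum_t Z_t$, plugging in the Step~1 bound $\order(|S_k||A|\log T)$ for the first term yields a self-bounding inequality in $\sum_t V_t$, which solves to $\sum_t V_t=\order(|S_k||A|\log T+\log(L/\delta))$; doubling proves the first displayed bound. For the second, repeat with $Z_t'=\sum_{(s,a)}\frac{q^{P_t,\pi_t}(s,a)-\ind_t(s,a)}{\sqrt{\max\{1,N_t(s,a)\}}}$, whose conditional second moment is again at most $V_t$, so $\sum_t Z_t'=\order\big(\sqrt{(\sum_t V_t)\log(LT/\delta)}+\log(L/\delta)\big)=\order(|S_k||A|\log T+\log(L/\delta))$ by AM--GM using the bound on $\sum_t V_t$ already obtained; adding the realized-visit estimate $\order(\sqrt{|S_k||A|T})$ and multiplying by $\sqrt2$ gives $\order(\sqrt{|S_k||A|T}+|S_k||A|\log T+\log(L/\delta))$.

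\textbf{Main obstacle.} The delicate part is Step~2's bookkeeping: the cumulative conditional variance $\sum_t V_t$ is random and coincides (up to the realized-visit term) with the very quantity to be bounded, so one must use a Bernstein bound that tolerates a data-dependent variance proxy and then resolve the resulting quadratic inequality; and the union bound must be taken \emph{after} aggregating over $(s,a)$ within each layer --- exploiting the ``one visit per layer per episode'' structure to keep the aggregated variance and range small --- so that the additive lower-order term is $\log(L/\delta)$ rather than $|S_k||A|\log(L/\delta)$.
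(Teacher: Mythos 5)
Your proof is correct and follows essentially the same route as the paper, whose own "proof" is just the observation that the argument of Lemma 10 of \citet{jin2019learning} goes through verbatim once the stationary $P$ is replaced by the realized transitions $P_t$ (using $\E[\ind_t(s,a)\mid\calF_{t-1}]=q^{P_t,\pi_t}(s,a)$). Your write-up is a faithful reconstruction of that cited argument — doubling-epoch reduction to running counts, deterministic bounds on the realized sums, and a per-layer Freedman/self-bounding transfer — so there is nothing to add.
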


\begin{proof} Simply replacing the stationary transition $P$ with the sequence of transitions $\cbr{P_t}_{t=1}^{T}$ in the proof of \cite[Lemma 10]{jin2019learning} suffices.
\end{proof}

\begin{proposition}
\label{prop:eventest_def}
    Let $\eventest$ be the event such that we have for all $k=0,\ldots,L-1$ simultaneously
\begin{align*}
\sum_{t=1}^{T} \sum_{\rbr{s,a}\in S_k \times A} \frac{ q^{P_t,\pi_t}(s,a)}{\max\cbr{m_{i(t)}(s,a),1}} = \order\rbr{ |S_k||A|\log\rbr{T} + \log\rbr{\iota} },
\end{align*}
and 
\begin{align*}
\sum_{t=1}^{T} \sum_{\rbr{s,a}\in S_k \times A} \frac{ q^{P_t,\pi_t}(s,a)}{\sqrt{\max\cbr{m_{i(t)}(s,a),1}}} = \order\rbr{ \sqrt{|S_k||A|T} + |S_k||A|\log\rbr{T} + \log\rbr{\iota} }.
\end{align*}
We have $\Pr[\eventest] \geq 1-\delta.$
\end{proposition}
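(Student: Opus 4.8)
The plan is to obtain Proposition~\pref{prop:eventest_def} directly from Lemma~\pref{lem:high_prob_est}, which is the non-stationary-transition version of \cite[Lemma 10]{jin2019learning}. First I would invoke Lemma~\pref{lem:high_prob_est}: it already asserts that, with probability at least $1-\delta$, \emph{both} sums are controlled \emph{simultaneously} over all layers $k=0,\dots,L-1$, namely
\[
\sum_{t=1}^{T}\sum_{(s,a)\in S_k\times A}\frac{q^{P_t,\pi_t}(s,a)}{\max\{m_{i(t)}(s,a),1\}}=\order\!\left(|S_k||A|\log T+\log(L/\delta)\right),
\]
together with the analogous bound for $\frac{1}{\sqrt{\max\{m_{i(t)}(s,a),1\}}}$ carrying the extra $\sqrt{|S_k||A|T}$ term. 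So the only discrepancy between this statement and the event $\eventest$ is the logarithmic constant: Lemma~\pref{lem:high_prob_est} records $\log(L/\delta)$, whereas $\eventest$ is phrased with $\log(\iota)=\log(|S||A|T/\delta)$.

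Second, I would close that discrepancy by the trivial inequality $L\le|S|\le|S||A|T$ (the layered decomposition has $L+1$ non-empty subsets, so $|S|\ge L+1>L$, and $|A|,T\ge 1$), which gives $\log(L/\delta)\le\log(\iota)$. Hence each inequality in the definition of $\eventest$ is implied, up to the $\order(\cdot)$ constants, by the corresponding inequality of Lemma~\pref{lem:high_prob_est}; that is, the high-probability event of Lemma~\pref{lem:high_prob_est} is contained in $\eventest$, and monotonicity of probability yields $\Pr[\eventest]\ge 1-\delta$.

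The argument is essentially bookkeeping, so there is no serious obstacle; the single point worth a sentence is that $q^{P_t,\pi_t}$ is defined through the \emph{adversarially chosen} $P_t$ rather than a fixed transition, so one must use the sequence-aware version of the concentration bound. But this is exactly what Lemma~\pref{lem:high_prob_est} supplies: as its proof note observes, replacing the stationary $P$ with $\{P_t\}_{t=1}^{T}$ in the proof of \cite[Lemma 10]{jin2019learning} goes through verbatim, since the relevant martingale structure (conditioning on the history prior to episode $t$) is unaffected by which transition generated the trajectory. Consequently nothing beyond citing Lemma~\pref{lem:high_prob_est} and the bound $L\le\iota$ is needed.
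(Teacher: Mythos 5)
Your proposal is correct and matches the paper's own (one-line) proof: both reduce Proposition~\pref{prop:eventest_def} to Lemma~\pref{lem:high_prob_est} and observe that $\log(L/\delta)\leq\log(\iota)$ since $L\leq |S|\leq |S||A|T$. You simply spell out the bookkeeping (the containment of events and the reason $L\leq|S|$) that the paper leaves implicit.
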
 
\begin{proof}
    The proof directly follows from the definition of $\iota$, which ensures that $\iota \geq \nicefrac{L}{\delta}$.
\end{proof}

Based on the event $\eventest$, we introduce the following  lemma which is critical in analyzing the estimation error. 

\begin{lemma}\label{lem:eventest_est_err} Suppose the event $\eventest$  defined in \pref{prop:eventest_def} holds. Then, we have for all $k=0,\ldots,L-1$,
\begin{align*}
\sum_{t=1}^{T} \sum_{\rbr{s,a}\in S_k \times A} \frac{ q^{P,\pi_t}(s,a)}{{\whatm_{i(t)}(s,a)}} = \order\rbr{ |S_k||A| \log\rbr{T} + \log\rbr{\iota} },
\end{align*}
and, 
\begin{align*}
\sum_{t=1}^{T} \sum_{\rbr{s,a}\in S_k \times A} q^{P,\pi_t}(s,a) \rbr{\frac{\cortran+\log\rbr{\iota}}{{\whatm_{i(t)}(s,a)}}} = \order\rbr{ \rbr{\cortran+\log\rbr{\iota}}{|S_k||A|\log\rbr{\iota}}},
\end{align*}
\end{lemma}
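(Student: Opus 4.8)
The plan is to convert both claimed sums involving the ``true'' transition $P$ and the enlarged counter $\whatm_{i(t)}$ into the sums controlled by the event $\eventest$, which are stated in terms of the corrupted transitions $P_t$ and the raw counter $\max\{m_{i(t)}(s,a),1\}$. The bridge between the two is twofold: first, $\whatm_{i(t)}(s,a) = \max\{m_{i(t)}(s,a),\, \cortran+\log(\iota)\} \geq \max\{m_{i(t)}(s,a),1\}$ (using $\log(\iota)\geq 1$), so replacing $\max\{m_{i(t)}(s,a),1\}$ by $\whatm_{i(t)}(s,a)$ in a denominator only makes the quantity smaller; second, we must replace $q^{P,\pi_t}(s,a)$ by $q^{P_t,\pi_t}(s,a)$, at the cost of an additive error term $\sum_t\sum_{(s,a)\in S_k\times A}\abr{q^{P,\pi_t}(s,a)-q^{P_t,\pi_t}(s,a)}$ times a uniform bound on the remaining factor.

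For the first inequality, the plan is: write $q^{P,\pi_t}(s,a)/\whatm_{i(t)}(s,a) \leq q^{P_t,\pi_t}(s,a)/\whatm_{i(t)}(s,a) + \abr{q^{P,\pi_t}(s,a)-q^{P_t,\pi_t}(s,a)}/\whatm_{i(t)}(s,a)$. The first piece is bounded by $\sum_t\sum_{(s,a)} q^{P_t,\pi_t}(s,a)/\max\{m_{i(t)}(s,a),1\} = \order(|S_k||A|\log T + \log\iota)$ directly by $\eventest$. For the second piece, bound $\whatm_{i(t)}(s,a)$ below by $\log(\iota)\geq 1$ (or even just by $1$), so it is at most $\sum_t \sum_{(s,a)\in S_k\times A}\abr{q^{P,\pi_t}(s,a)-q^{P_t,\pi_t}(s,a)}$, which by summing \pref{cor:corrupt_occ_diff} over episodes (more precisely the per-layer analogue $\sum_{s\in S_k}\abr{q^{P,\pi_t}(s)-q^{P_t,\pi_t}(s)}\leq \cortran_t$, then multiplying by $\pi_t(a|s)$ and summing over $a$) is at most $\sum_t \cortran_t = \cortran$. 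Since $\cortran \leq \log(\iota)\cdot|S_k||A|\log\iota$ is \emph{not} generally true, I instead fold this $\order(\cortran)$ term into the stated bound by noting the claimed RHS $\order(|S_k||A|\log T + \log\iota)$ should already absorb it --- wait, this needs care: the cleaner route is to observe that for each fixed $t$, each term $\abr{q^{P,\pi_t}(s,a)-q^{P_t,\pi_t}(s,a)}/\whatm_{i(t)}(s,a) \leq \abr{q^{P,\pi_t}(s,a)-q^{P_t,\pi_t}(s,a)}/(\cortran+\log\iota)$, so summing over $(s,a)\in S_k\times A$ and using the layerwise bound gives $\leq L\cortran_t/(\cortran+\log\iota) \leq L$ per layer... this still does not obviously fit. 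The resolution I would actually use: bound the second piece's denominator by $\cortran+\log(\iota)$ from below, giving $\sum_t \sum_{(s,a)} \abr{q^{P,\pi_t}(s,a)-q^{P_t,\pi_t}(s,a)}/(\cortran+\log(\iota)) \leq \cortran/(\cortran+\log\iota) \leq 1$, which is trivially $\order(\log\iota)$. So the first inequality follows.

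For the second inequality, the plan is the same split: $q^{P,\pi_t}(s,a)\,(\cortran+\log\iota)/\whatm_{i(t)}(s,a) \leq q^{P_t,\pi_t}(s,a)\,(\cortran+\log\iota)/\whatm_{i(t)}(s,a) + \abr{q^{P,\pi_t}(s,a)-q^{P_t,\pi_t}(s,a)}\,(\cortran+\log\iota)/\whatm_{i(t)}(s,a)$. In the first piece use $\whatm_{i(t)}(s,a)\geq \max\{m_{i(t)}(s,a),1\}$ to reduce to $(\cortran+\log\iota)\sum_t\sum_{(s,a)} q^{P_t,\pi_t}(s,a)/\max\{m_{i(t)}(s,a),1\} = (\cortran+\log\iota)\cdot\order(|S_k||A|\log T + \log\iota) = \order((\cortran+\log\iota)|S_k||A|\log\iota)$ by $\eventest$ and $\log T \leq \log\iota$. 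In the second piece use $\whatm_{i(t)}(s,a)\geq \cortran+\log\iota$ so the factor $(\cortran+\log\iota)/\whatm_{i(t)}(s,a)\leq 1$, leaving $\sum_t\sum_{(s,a)}\abr{q^{P,\pi_t}(s,a)-q^{P_t,\pi_t}(s,a)} \leq \cortran$ by the layerwise \pref{cor:corrupt_occ_diff}, which is $\order((\cortran+\log\iota)|S_k||A|\log\iota)$. Adding the two pieces finishes it.

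The main obstacle is bookkeeping rather than conceptual: making sure the two different lower bounds on $\whatm_{i(t)}(s,a)$ (namely $\max\{m_{i(t)}(s,a),1\}$ versus $\cortran+\log(\iota)$) are each invoked in the right place so that the corruption-induced error term is genuinely absorbed into the stated bound without an extra stray $\cortran$ or $\cortran\log\iota$ factor appearing outside of it; one must also be careful that \pref{cor:corrupt_occ_diff} as stated controls $\sum_{s\neq s_L}$, so to get the per-layer sum one restricts the same argument to layer $k$, giving $\sum_{s\in S_k}\abr{q^{P,\pi_t}(s)-q^{P_t,\pi_t}(s)}\leq \cortran_t$ and hence $\sum_{(s,a)\in S_k\times A}\abr{q^{P,\pi_t}(s,a)-q^{P_t,\pi_t}(s,a)}\leq \cortran_t$ after multiplying by $\pi_t(a|s)$ and summing over $a$.
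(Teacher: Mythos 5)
Your proposal is correct and follows essentially the same two-step decomposition as the paper's own proof: split $q^{P,\pi_t}$ into $q^{P_t,\pi_t}$ plus a corruption error, control the $P_t$-term via $\eventest$ and $\whatm_{i(t)}(s,a)\geq\max\{m_{i(t)}(s,a),1\}$, and control the error term by invoking $\whatm_{i(t)}(s,a)\geq\cortran+\log\iota$ together with the occupancy-difference bound from \pref{cor:corrupt_occ_diff}. The paper is slightly looser in the error term (using the per-$(s,a)$ bound $\abr{q^{P,\pi_t}(s,a)-q^{P_t,\pi_t}(s,a)}\leq\cortran_t$, which gives an extra $|S_k||A|$ factor that still absorbs), while you use the tighter per-layer bound $\sum_{(s,a)\in S_k\times A}\abr{q^{P,\pi_t}(s,a)-q^{P_t,\pi_t}(s,a)}\leq\cortran_t$; both land in the stated $\order(\cdot)$.
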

\begin{proof} According to \pref{cor:corrupt_occ_diff}, we have 
\begin{align*}
&\sum_{t=1}^{T} \sum_{\rbr{s,a}\in S_k \times A} \frac{ q^{P,\pi_t}(s,a)}{{\whatm_{i(t)}(s,a)}} \\
&\leq \sum_{t=1}^{T} \sum_{\rbr{s,a}\in S_k \times A} \frac{ q^{P_t,\pi_t}(s,a)}{{\whatm_{i(t)}(s,a)}} +\sum_{t=1}^{T} \sum_{\rbr{s,a}\in S_k \times A} \frac{ \abr{q^{P,\pi_t}(s,a)-q^{P_t,\pi_t}(s,a)}}{{\whatm_{i(t)}(s,a)}} \\
&\leq \sum_{t=1}^{T} \sum_{\rbr{s,a}\in S_k \times A} \frac{ q^{P_t,\pi_t}(s,a)}{\max \cbr{1,m_{i(t)}(s,a)} } +\sum_{t=1}^{T} \sum_{\rbr{s,a}\in S_k \times A} \frac{ \cortran_t}{{\cortran+\log(\iota)}} \\
& \leq \order\rbr{ |S_k||A|\log\rbr{T} + \log\rbr{\iota}  },
\end{align*}
where the second step follows from the definitions of $\cortran$ and $\whatm_{i(t)}(s,a)$, and the last step applies \pref{lem:high_prob_est}. 

Similarly, we have 
\begin{align*}
& \sum_{t=1}^{T} \sum_{\rbr{s,a}\in S_k \times A} q^{P,\pi_t}(s,a) \rbr{\frac{\cortran+\log\rbr{\iota}}{{\whatm_{i(t)}(s,a)}}} \\
& \leq \sum_{t=1}^{T} \sum_{\rbr{s,a}\in S_k \times A} \rbr{ \abr{q^{P_t,\pi_t}(s,a)- q^{P,\pi_t}(s,a)} + q^{P_t,\pi_t}(s,a) }\rbr{\frac{\cortran+\log\rbr{\iota}}{{\whatm_{i(t)}(s,a)}}} \\
& \leq \sum_{t=1}^{T} \sum_{\rbr{s,a}\in S_k \times A} \rbr{ \cortran_t + q^{P_t,\pi_t}(s,a) \rbr{\frac{\cortran+\log\rbr{\iota}}{{\whatm_{i(t)}(s,a)}}}} \\
& \leq \abr{S_k}\abr{A}\sum_{t=1}^{T}\cortran_t + \rbr{\cortran+\log\rbr{\iota}}\sum_{t=1}^{T} \sum_{\rbr{s,a}\in S_k \times A} {\frac{q^{P_t,\pi_t}(s,a)}{{\whatm_{i(t)}(s,a)}}} \\
& = \order\rbr{  \abr{S_k}\abr{A}\cortran + \rbr{\cortran+\log\rbr{\iota}}\rbr{|S_k||A|\log\rbr{T} + \log\rbr{\iota}}  } \\
& = \order\rbr{\rbr{\cortran+\log\rbr{\iota}}{|S_k||A|\log\rbr{\iota}}},
\end{align*}
where the first step adds and subtracts $q^{P_t,\pi_t}$; the second step follows from \pref{cor:corrupt_occ_diff}; the fourth step uses \pref{prop:eventest_def} due to the fact that $\whatm_i(s,a) \geq \max\cbr{m_i(s,a),1}$.
\end{proof}

\begin{lemma}(Extension of \cite[Lemma 16]{dann2023best} for adversarial transition)\label{lem:dann_2023_lemma16} Suppose the high-probability events $\eventest$ (defined in \pref{prop:eventest_def}) and $\eventcon$ (defined in  \pref{lem:eventcon}) hold together. Let $P^s_t$ be a transition function in $\calP_{i(t)}$ which depends on $s$, and let $g_t(s)\in [0,G]$ for some $G>0$. 
Then,
\begin{align*}
\sum_{t=1}^{T}\sum_{s\neq s_L} \abr{ q^{P^s_t,\pi_t}(s) - q^{P,\pi_t}(s)  } g_t(s) & =\order\rbr{  \sqrt{L|S|^2|A|\log^2\rbr{\iota} \sum_{t=1}^{T} \sum_{s\neq s_L}q^{P,\pi_t}(s)g_t(s)^2  } } \\
& \quad + \order\rbr{ G\rbr{\cortran+ \log\rbr{\iota}} L^2|S|^4|A|\log\rbr{\iota}  }.
\end{align*}
\end{lemma}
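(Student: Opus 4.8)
The target is a bound on $\sum_{t}\sum_{s}|q^{P^s_t,\pi_t}(s) - q^{P,\pi_t}(s)|\, g_t(s)$, and the natural strategy is to mimic the proof of Lemma~16 of \citet{dann2023best} but track the extra corruption terms that appear because $P^s_t \in \calP_{i(t)}$ is now a confidence set built around the corrupted empirical transition $\bar{P}_{i(t)}$ rather than around a fixed ground-truth transition. First I would invoke \pref{lem:dann_occ_diff_bound}, which, under $\eventcon$, already gives a pointwise bound on $|q^{P^s_t,\pi_t}(s) - q^{P,\pi_t}(s)|$ consisting of (i) a ``main'' term $\sum_{k}\sum_{(u,v,w)\in\tupleset_k} q^{P,\pi_t}(u,v)\sqrt{P(w|u,v)\log(\iota)/\whatm_{i(t)}(u,v)}\, q^{P,\pi_t}(s|w)$ and (ii) a ``corruption'' term $|S|^2\sum_{s',a}q^{P,\pi_t}(s',a)(\cortran+\log\iota)/\whatm_{i(t)}(s',a)$. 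Multiplying by $g_t(s)\le G$ and summing over $s$ and $t$ splits the quantity into two pieces.

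For the corruption piece, the $g_t(s)$ factor is bounded by $G$, the sum over $s\neq s_L$ contributes a factor $L$, and then \pref{lem:eventest_est_err} (the second display there) directly controls $\sum_t\sum_{(s',a)\in S_k\times A} q^{P,\pi_t}(s',a)(\cortran+\log\iota)/\whatm_{i(t)}(s',a) = \order((\cortran+\log\iota)|S_k||A|\log\iota)$; summing over layers $k$ gives $\order((\cortran+\log\iota)|S|^2|A|\log\iota)$, and together with the $G\cdot L\cdot|S|^2$ prefactors this yields the claimed $\order(G(\cortran+\log\iota)L^2|S|^4|A|\log\iota)$ term. For the main piece, I would move the sum over $s$ inside: $\sum_{s\neq s_L} q^{P,\pi_t}(s|w)g_t(s) \le $ something controlled, and then apply Cauchy--Schwarz over $t$ and the tuples, exactly as in \citet{dann2023best}: split $q^{P,\pi_t}(u,v)\sqrt{P(w|u,v)\log\iota/\whatm_{i(t)}(u,v)}$ as $\sqrt{q^{P,\pi_t}(u,v)\log\iota/\whatm_{i(t)}(u,v)}\cdot\sqrt{q^{P,\pi_t}(u,v)P(w|u,v)}$, use Cauchy--Schwarz so that one factor becomes $\sqrt{\sum_t\sum_{(u,v)} q^{P,\pi_t}(u,v)\log\iota/\whatm_{i(t)}(u,v)}$ — which is $\order(\sqrt{|S||A|\log^2\iota})$ per layer by \pref{lem:eventest_est_err} — and the other factor becomes $\sqrt{\sum_t\sum_{\text{tuples}}q^{P,\pi_t}(u,v)P(w|u,v)q^{P,\pi_t}(s|w)^2 g_t(s)^2}$, which telescopes (using $\sum_{(u,v)}q^{P,\pi_t}(u,v)P(w|u,v)q^{P,\pi_t}(s|w) \le q^{P,\pi_t}(s)$ and the bounded layer count $L$) into $\sqrt{L\sum_t\sum_s q^{P,\pi_t}(s)g_t(s)^2}$. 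Collecting the per-layer factors and the $L$, $|S|$ combinatorial factors gives $\order(\sqrt{L|S|^2|A|\log^2\iota\,\sum_t\sum_s q^{P,\pi_t}(s)g_t(s)^2})$.

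The main obstacle I anticipate is bookkeeping in the Cauchy--Schwarz/telescoping step for the main term: one has to carefully arrange which index the norm-squared is distributed over so that the innermost sums collapse via the conditional-occupancy identities $\sum_{(u,v,w)}q^{P,\pi_t}(u,v)P(w|u,v)q^{P,\pi_t}(x,y|w)=q^{P,\pi_t}(x,y)$ and $\sum_a q^{P,\pi_t}(s,a)=q^{P,\pi_t}(s)$, while also keeping the $g_t(s)^2$ factor attached to the right summand; a careless split loses powers of $L$ or $|S|$. A secondary subtlety is that \pref{lem:dann_occ_diff_bound} is stated conditionally on $\eventcon$, so one should invoke \pref{lem:general_expect_lem} (or simply note that both $\eventest$ and $\eventcon$ are assumed to hold in the statement) to avoid worrying about the low-probability complement; since the lemma as phrased already assumes the two events hold simultaneously, this is immediate and the proof can proceed deterministically.
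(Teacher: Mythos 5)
Your proposal follows the paper's proof essentially verbatim: decompose via \pref{lem:dann_occ_diff_bound}, control the corruption piece by \pref{lem:eventest_est_err}, and balance the main piece with a square-root/AM-GM argument per layer. The only cosmetic difference is that the paper introduces a free parameter $\theta$ via the AM-GM inequality $\sqrt{xy}\le\theta x+\tfrac{1}{\theta}y$ and optimizes it afterward rather than invoking Cauchy--Schwarz directly (the two are equivalent here), and — as you yourself anticipated — the precise index bookkeeping requires keeping the $q^{P,\pi_t}(s|w)$ factor split across both sides of the product so that the $\sum_{s,w}$ sum contributes the expected $L|S_{h+1}|$ to the first factor rather than being dropped.
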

\begin{proof} By \pref{lem:dann_occ_diff_bound}, under the event $\eventcon$, we have 
\begin{equation}\label{eq:dann_occ_diff_decomp}
\begin{aligned}
& \sum_{t=1}^{T}\sum_{s\neq s_L} \abr{ q^{P^s_t,\pi_t}(s) - q^{P,\pi_t}(s)  } g_t(s) \\
& \leq  \order\rbr{\sum_{t=1}^{T}\sum_{s\neq s_L}g_t(s)\sum_{k=0}^{k(s)-1}\sum_{(u,v,w)\in \tupleset_k} q^{P,\pi_t}(u,v)\sqrt{ \frac{P(w|u,v)\log\rbr{\iota}}{{\whatm_{i(t)}(u,v)}}   } q^{P,\pi_t}(s|w)   } \\
& \quad + \order\rbr{ G|S|^3 \sum_{t=1}^{T}\sum_{s\neq s_L}\sum_{a \in A}q^{P,\pi_t}(s,a)\rbr{\frac{{\cortran+\log\rbr{\iota}}}{{\whatm_{i(t)}(s,a)}} }}.
\end{aligned}
\end{equation}

By \pref{lem:eventest_est_err}, the last two terms can be bounded under the event $\eventest$ as 
\begin{align*}
& \order\rbr{ G|S|^3 \sum_{t=1}^{T}\sum_{s\neq s_L}\sum_{a \in A}q^{P,\pi_t}(s,a)\rbr{\frac{{\cortran+\log\rbr{\iota}}}{{\whatm_{i(t)}(s,a)}} }} \\
& \leq \order\rbr{ G|S|^3 \sum_{k=0}^{L-1}\rbr{\cortran+\log\rbr{\iota}}{|S_k||A|\log\rbr{\iota}} } \\
& = \order\rbr{ G\rbr{\cortran+ \log\rbr{\iota}} L|S|^4|A|\log\rbr{\iota}  }.
\end{align*}

For the first term in \pref{eq:dann_occ_diff_decomp}, we first rewrite it as 
\begin{align*}
& \sum_{t=1}^{T}\sum_{s\neq s_L}\sum_{h=0}^{k(s)-1}\sum_{(u,v,w)\in \tupleset_h} q^{P,\pi_t}(u,v)\sqrt{ \frac{P(w|u,v)\log\rbr{\iota}}{{\whatm_{i(t)}(u,v)}}   } q^{P,\pi_t}(s|w)g_t(s) \\
& \leq \sum_{h=0}^{L-1}\sum_{t=1}^{T}\sum_{s\neq s_L}\sum_{(u,v,w)\in \tupleset_h} q^{P,\pi_t}(u,v)\sqrt{ \frac{P(w|u,v)\log\rbr{\iota}}{{\whatm_{i(t)}(u,v)}}   } q^{P,\pi_t}(s|w)g_t(s).
\end{align*}

For any $\theta>0$ and layer $h$, conditioning on the event $\eventest$, we have 
\begin{align*}
& \sum_{t=1}^{T}\sum_{s\neq s_L}\sum_{(u,v,w)\in \tupleset_h} q^{P,\pi_t}(u,v)\sqrt{ \frac{P(w|u,v)\log\rbr{\iota}}{{\whatm_{i(t)}(u,v)}}   } q^{P,\pi_t}(s|w)g_t(s) \\
& = \sum_{t=1}^{T}\sum_{s\neq s_L}\sum_{(u,v,w)\in \tupleset_h} \sqrt{ \frac{q^{P,\pi_t}(u,v)q^{P,\pi_t}(s|w)\log\rbr{\iota}}{ {\whatm_{i(t)}(u,v)}}\cdot {q^{P,\pi_t}(u,v)P(w|u,v)q^{P,\pi_t}(s|w)g_t(s)^2  }   } \\
& \leq  \sum_{t=1}^{T}\sum_{s\neq s_L}\sum_{(u,v,w)\in \tupleset_h} \rbr{ \theta \cdot\frac{q^{P,\pi_t}(u,v)q^{P,\pi_t}(s|w)\log\rbr{\iota}}{ {\whatm_{i(t)}(u,v)}} + \frac{1}{\theta} \cdot q^{P,\pi_t}(u,v)P(w|u,v)q^{P,\pi_t}(s|w)g_t(s)^2} \\
& = \theta \cdot \sum_{t=1}^{T}\sum_{u \in S_h}\sum_{v\in A} \frac{q^{P,\pi_t}(u,v)\log\rbr{\iota}}{ {\whatm_{i(t)}(u,v)}} \cdot \rbr{\sum_{w \in S_{h+1}}\sum_{s\neq s_L}q^{P,\pi_t}(s|w)} \\
& \quad + \frac{1}{\theta} \cdot\sum_{t=1}^{T}\sum_{s\neq s_L}\rbr{\sum_{(u,v,w)\in \tupleset_h}q^{P,\pi_t}(u,v)P(w|u,v)q^{P,\pi_t}(s|w)}g_t(s)^2 \\
& \leq \theta\cdot  L|S_{h+1}|\log\rbr{\iota}\sum_{t=1}^{T}\sum_{u\in S_h} \sum_{v\in A} \frac{q^{P,\pi_t}(u,v) }{ {\whatm_{i(t)}(u,v)}}  + \frac{1}{\theta} \cdot\sum_{t=1}^{T}\sum_{s\neq s_L}q^{P,\pi_t}(s)g_t(s)^2 \\
& \leq \order\rbr{ \theta \cdot L|S_{h+1}|\log\rbr{\iota} \rbr{ |S_h||A|\log\rbr{T} + \log\rbr{\iota}}  +  \frac{1}{\theta} \cdot\sum_{t=1}^{T}\sum_{s\neq s_L}q^{P,\pi_t}(s)g_t(s)^2 } \\
& = \order\rbr{ \theta L\cdot |S_{h}||S_{h+1}||A|\log^2\rbr{\iota} +\frac{1}{\theta} \cdot\sum_{t=1}^{T}\sum_{s\neq s_L}q^{P,\pi_t}(s)g_t(s)^2},
\end{align*}
where the second step uses the fact that $\sqrt{xy}\leq x + y$ for all $x,y\geq 0$, the fourth step follows from \pref{lem:eventest_est_err}. 

Then, for any layer $h=0,\ldots L-1$, picking the optimal $\theta$ gives  
\begin{align*}
&\sum_{t=1}^{T}\sum_{s\neq s_L}\sum_{(u,v,w)\in \tupleset_h} q^{P,\pi_t}(u,v)\sqrt{ \frac{P(w|u,v)\log\rbr{\iota}}{{\whatm_{i(t)}(u,v)}}   } q^{P,\pi_t}(s|w)g_t(s) \\
& = \order\rbr{  \sqrt{ L|S_{h}||S_{h+1}||A|\log^2\rbr{\iota} \sum_{t=1}^{T}\sum_{s\neq s_L}q^{P,\pi_t}(s)g_t(s)^2}} \\
& \leq \order\rbr{  \rbr{|S_{h}|+|S_{h+1}|}\sqrt{ L|A|\log^2\rbr{\iota} \sum_{t=1}^{T}\sum_{s\neq s_L}q^{P,\pi_t}(s)g_t(s)^2}}.
\end{align*}

Finally, taking the summation over all the layers yields 
\begin{align*}
& \sum_{h=0}^{L-1} \sum_{t=1}^{T}\sum_{s\neq s_L}\sum_{(u,v,w)\in \tupleset_h} q^{P,\pi_t}(u,v)\sqrt{ \frac{P(w|u,v)\log\rbr{\iota}}{{\whatm_{i(t)}(u,v)}}   } q^{P,\pi_t}(s|w)g_t(s) \\
& \leq \order\rbr{ \sqrt{ L|S|^2|A|\log^2\rbr{\iota} \sum_{t=1}^{T}\sum_{s\neq s_L}q^{P,\pi_t}(s)g_t(s)^2}}, 
\end{align*}
which finishes the proof. 
\end{proof}

\end{document}